\documentclass[11pt,a4paper]{article}
\usepackage[margin=0.75in]{geometry}
\usepackage[utf8]{inputenc}
\usepackage{cmap}
\usepackage[T1]{fontenc}
\usepackage{lmodern}
\usepackage[square, numbers]{natbib}     
\usepackage[colorlinks=true]{hyperref}
\usepackage{amssymb, amsthm, amsmath, amsfonts}
\usepackage{authblk, dsfont, mathrsfs, color, bm, enumitem, mathtools}
\usepackage{subfigure, graphicx, transparent}
\usepackage{dirtytalk}
\graphicspath{{figures/}}
\usepackage[dvipsnames]{xcolor}
\usepackage{comment}
\hypersetup{colorlinks, linkcolor={red!80!black}, citecolor={blue!80!black}, urlcolor={ForestGreen}}
\linespread{1.25}

\theoremstyle{definition}
\newtheorem{theorem}{Theorem}[section]

\newtheorem{assumption}[theorem]{Assumption}
\newtheorem{proposition}[theorem]{Proposition}
\newtheorem{corollary}[theorem]{Corollary}
\newtheorem{lemma}[theorem]{Lemma}
\newtheorem{remark}[theorem]{Remark}
\newtheorem{example}[theorem]{Example}
\newtheorem{fact}[theorem]{Fact}


\DeclareMathOperator*{\argmin}{arg\,min}

\DeclareUnicodeCharacter{211D}{\ensuremath{\mathbb R}}

\DeclarePairedDelimiterX{\inner}[2]{\langle}{\rangle}{#1, #2}

\title{Wasserstein Distributionally Robust Estimation in High Dimensions: Performance Analysis and Optimal Hyperparameter Tuning}
\author[1]{Liviu Aolaritei}
\author[2]{Soroosh Shafiee}
\author[1]{Florian Dörfler}

\affil[1]{Automatic Control Lab, ETH Z\"urich, Switzerland \protect\\ \texttt{\{aliviu,dorfler\}@ethz.ch}}
\affil[2]{Operations Research and Information Engineering, Cornell University, USA \texttt{shafiee@cornell.edu}}

\date{}

\begin{document}
\maketitle

\begin{abstract}
Distributionally robust optimization (DRO) has become a powerful framework for estimation under uncertainty, offering strong out-of-sample performance and principled regularization. In this paper, we propose a DRO-based method for linear regression and address a central question: how to optimally choose the robustness radius, which controls the trade-off between robustness and accuracy. Focusing on high-dimensional settings where the dimension and the number of samples are both large and comparable in size, we employ tools from high-dimensional asymptotic statistics to precisely characterize the estimation error of the resulting estimator. Remarkably, this error can be recovered by solving a simple convex-concave optimization problem involving only four scalar variables. This characterization enables efficient selection of the radius that minimizes the estimation error. In doing so, it achieves the same effect as cross-validation, but at a fraction of the computational cost. Numerical experiments confirm that our theoretical predictions closely match empirical performance and that the optimal radius selected through our method aligns with that chosen by cross-validation, highlighting both the accuracy and the practical benefits of our approach.
\end{abstract}

\section{Introduction}
\label{sec:intro}

We consider the problem of estimating an unknown parameter $\theta_0 \in \Theta \subseteq \mathbb R^d$ which describes the relationship between two random variables $X$ and $Y$ through the noisy linear model $Y = \theta_0^\top X + Z$. Here, $(X,Y)$ is distributed according to an unknown probability distribution $\mathbb P$ that is only indirectly observable through a set of $n$ independent training samples $\{(x_i,y_i)\}_{i=1}^n$, and $Z$ represents the measurement noise distributed according to an unknown distribution $\mathbb P_Z$. A popular solution to this problem is then to approximate $\mathbb P$ using the $n$ i.i.d.\ samples as $\mathbb P \approx \widehat{\mathbb P}_n = \frac{1}{n} \sum_{i=1}^{n} \delta_{(x_i,y_i)}$, leading to an estimate of the unknown $\theta_0$ via the following empirical risk minimization (ERM) problem
\begin{align*}
     \min_{\theta \in \Theta}\; \mathbb E_{(X,Y)\sim\widehat{\mathbb P}_n} \left[ L(Y-\theta^\top X) \right] = \min_{\theta \in \Theta}\;  \frac{1}{n}\sum_{i=1}^{n}L(y_i - \theta^\top x_i),
\end{align*}
for some loss function $L:\mathbb R \to \mathbb R_{>0}$. However, $\widehat {\mathbb P}_n$ invariably differs from~$\mathbb P$, and optimizing in view of~$\widehat {\mathbb P}_n$ instead of~$\mathbb P$ may lead to estimators that display a poor performance on test data. For example, it is well known that standard machine learning models trained in view of the empirical distribution $\widehat {\mathbb P}_n$ can be easily fooled by adversarial examples, that is, test samples subject to seemingly negligible noise that cause these models to make a wrong prediction. Even worse, the decision problem at hand could suffer from a \emph{distribution shift}, that is, the training data may originate from a distribution other than~$\mathbb P$, under which estimators are evaluated.

A promising strategy to mitigate these issues would be to minimize the worst-case expected loss with respect to all distributions in some neighborhood of~$\widehat{\mathbb P}_n$. There is ample evidence that, for many natural choices of the neighborhood of~$\widehat{\mathbb P}_n$, this distributionally robust approach leads to tractable optimization models and provides a simple means to derive powerful generalization bounds \citep{esfahani2018data, lam2019recovering, blanchet2019confidence, van2021data, duchi2020learning, gao2020finite, duchi2021statistics,shafieezadeh2023new}. Specialized distributionally robust approaches may even enable generalization in the face of domain shifts \citep{farnia2016minimax, volpi2018generalizing,  lee2018minimax, duchi2020learning} or may make the training of deep neural networks more resilient against adversarial attacks~\citep{sinha2018certifying, wang2019convergence, tu2019theoretical, kwon2020principled}. 

Motivated by this, in this paper we propose a distributionally robust approach to estimate the unknown parameter $\theta_0$, which comes in the form of a minimax optimization problem where the maximization robustifies against perturbations (in the probability space) around the empirical distribution $\widehat{\mathbb P}_n$. These perturbations are captured through an ambiguity set $\mathbb B_{\varepsilon}(\widehat{\mathbb P}_n)$, that is, an $\varepsilon$-neighborhood of $\widehat{\mathbb P}_n$ with respect to an optimal transport discrepancy on the probability space $\mathcal P(\mathbb R^{d+1})$. Then, the \textit{distributionally robust estimation} (DRE) problem of interest becomes 
\begin{align}
\label{eq:dro}
\min_{\theta \in \Theta}\; \sup_{\mathbb Q \in \mathbb B_{\varepsilon}(\widehat{\mathbb P}_n)}\; \mathbb E_{(X,Y) \sim \mathbb Q} \left[ L(Y-\theta^\top X) \right].
\end{align}

Problem~\eqref{eq:dro} can be viewed as a zero-sum game between a statistician, who chooses the estimator~$\theta$, and a fictitious adversary, often envisioned as `nature', who chooses the distribution~$\mathbb Q$ of~$(X,Y)$.
Throughout the paper, we denote an optimal solution of \eqref{eq:dro} by $\hat{\theta}_{\mathrm{DRE}}$. From now on, we define nature's feasible set $\mathbb B_{\varepsilon}(\widehat{\mathbb P}_n) = \{ \mathbb Q \in \mathcal P(\mathbb R^{d+1}): W_p ( \mathbb Q , \widehat{\mathbb P}_n ) \leq \varepsilon \}$ using the optimal transport discrepancy
\begin{equation}
\label{eq:optimal:transport}
W_p \left( \mathbb Q, \widehat{\mathbb P}_n \right) = \inf_{\gamma \in \Gamma(\mathbb Q, \widehat{\mathbb P}_n)}\; \mathbb E_{((X_1,Y_1),(X_2,Y_2))\sim\gamma} \left[\|X_1-X_2\|^p + \infty |Y_1-Y_2|\right],
\end{equation}
where $\|\cdot\|$ represents the standard Euclidean norm, and $\Gamma(\mathbb Q, \widehat{\mathbb P}_n)$ represents the set of all joint probability distributions of~$(X_1,Y_1)$ and~$(X_2,Y_2)$ with marginals~$\mathbb Q$ and~$\widehat{\mathbb P}_n$, respectively. In particular, we will concentrate on the cases $p=1,2$, which are the most important in applications. To avoid confusion, in what follows we will refer to $W_1$ as the \emph{type-$1$ Wasserstein} discrepancy and to $W_2$ as the \emph{type-$2$ Wasserstein} discrepancy. Notice that the underlying transportation cost $\|x_1-x_2\|^p + \infty |y_1-y_2|$ on $\mathbb R^{d+1}$ assigns infinite cost when $y_1 \neq y_2$, restricting the distributional robustness to the input distribution, as will be later shown in Lemma~\ref{lemma:properties:ambiguity}. This covers the non-stationary environment known as \emph{covariate shift}, which has been recently attracting significant attention in machine learning \citep{ben2006analysis, sugiyama2007direct}. This transportation cost has been shown to recover some very popular regularized estimators such as square-root LASSO (using $W_2$), as well as regularized logistic regression and support vector machines (using $W_1$) \citep{shafieezadeh2019regularization, blanchet2019robust}. However, for general loss functions, the distributional robustness introduces only an implicit regularization, without having, in general, an equivalent explicit regularized formulation; see \citep{shafieezadeh2023new} for more details on the implicit regularization effect and when it reduces to explicit regularization. 

In the past few years, considerable effort has been put into understanding the statistical guarantees and the regularization effect of problem \eqref{eq:dro}. We summarize the major developments in what follows.
\begin{itemize}
    \item \textbf{Statistical guarantees}. Based on the decay rate of the ambiguity radius $\varepsilon$ (with the number of training samples $n$), many guarantees have been established. First, if $\varepsilon$ is chosen in the order of $n^{-p/\max\{2,d\}}$\footnote{The term $p$ in all the decrease rates appears since we did not employ the $1/p$ exponent in the definition of the optimal transport discrepancy \eqref{eq:optimal:transport}.}, the concentration inequality proposed in \cite[Theorem~2]{Fournier2015} shows that the underlying data-generating distribution $\mathbb P$ is contained in the ambiguity set $\mathbb B_{\varepsilon}(\widehat{\mathbb P}_n)$ with high probability. This has been exploited in \cite{esfahani2018data} to conclude that, for such choice of radius, for any $\theta$, the Wasserstein robust loss is an upper bound on the true loss with high probability. Secondly, through an asymptotic analysis, the works \cite{blanchet2019confidence, blanchet2016sample} show that if $\varepsilon$ is chosen in the order of $n^{-p/2}$, then the Wasserstein ball contains at least one distribution (not necessarily equal to $\mathbb P$) for which there exists an optimal solution which is equal to $\argmin_{\theta \in \Theta} \mathbb E_{(X,Y)\sim \mathbb P} \left[ L(Y-\theta^\top X) \right]$, with high probability. This result is valid asymptotically, namely, as $n \to \infty$, while keeping $d$ fixed. Thirdly, \cite{gao2020finite} shows that a non-asymptotic $n^{-p/2}$ rate for $\varepsilon$ guarantees that, up to a high-order residual, the true loss is upper bounded by the Wasserstein robust loss, uniformly for all $\theta$. Similarly, the analysis carried out in this paper will be in the setting where $\varepsilon$ satisfies the $n^{-p/2}$ decrease rate, i.e., $\varepsilon = \varepsilon_0/(n^{p/2})$, for some constant $\varepsilon_0$. Finally, statistical guarantees have also been proposed in \citep{sinha2018certifying,lee2018minimax,najafi2019robustness} for the setting where the ambiguity radius remains fixed, i.e., it does not vary with the sample size.

    \item \textbf{Variation regularization effect}. For type-$1$ Wasserstein ambiguity sets and convex loss functions, \cite{esfahani2018data} shows that the distributional robustness is equivalent to a Lipschitz norm regularization of the empirical loss. More generally, for the type-$p$ Wasserstein case, \citep{gao2017wasserstein} shows that the inner supremum in the distributionally robust optimization problem \eqref{eq:dro} is closely related (and asymptotically equivalent, as $\varepsilon \to 0$, at the rate $n^{-p/2}$) to a variation regularized estimation problem, where the variation boils down to the Lipschitz norm of the loss, for $p = 1$, or to the empirical gradient norm of the loss, for $p>1$. Particular cases of this general result have also been recovered in \cite{shafieezadeh2015distributionally,shafieezadeh2019regularization,blanchet2019robust}. Moreover, a connection with higher-order variation regularization has been established in \cite{bartl2020robust,shafieezadeh2023new}. From a practical perspective, Lipschitz and gradient regularization have proven themselves very efficient in a variety of applications in machine learning, such as  adversarial robustness of neural networks \citep{finlay2018lipschitz, hein2017formal, goodfellow2014explaining, jakubovitz2018improving, lyu2015unified, ororbia2017unifying, varga2017gradient, finlay2021scaleable}, and stability of generative adversarial networks \citep{roth2017stabilizing, nagarajan2017gradient, gulrajani2017improved}, to name a few. Moreover, high-order variation regularization is used in image recovery applications, to address the shortcomings of the total variation regularization \citep{lefkimmiatis2013hessian}.
\end{itemize}

Different from the existing literature, the focus of this paper is on studying the performance of the distributionally robust estimator $\hat{\theta}_{\mathrm{DRE}}$ in the \emph{high-dimensional regime}, where $d$ is of the same order as (or possibly even larger than) $n$. Driven by the wide empirical success of deep neural networks, the high-dimensional regime has gained tremendous attention in the recent years \citep{belkin2018understand, allen2019learning, mei2022generalization, arora2019fine, belkin2019reconciling}. In this regime, classical statistical asymptotic theory \citep{huber2011robust}, where the sample size $n$ is taken to infinity, while the dimension $d$ is kept fixed, often fails to provide useful predictions, and standard methods generally break down \citep{wainwright2019high}. For instance, it is known that for $d$ fixed or $d/n \to 0$ fast enough, least squares is optimal for Gaussian errors, whereas least absolute deviations is optimal for double-exponential errors. The work \cite{bean2013optimal} shows that this is no longer true in the high-dimensional regime, with the answer depending, in general, on the under/over-parametrization parameter $\rho$, as well as the form of the error distribution. Another example is shown in the work \cite{sur2019modern} where, in the context of logistic regression, the authors show that the maximum-likelihood estimate is biased, and that its variability is far greater than classically estimated. Such observations, as well as many others, have sparked a lot of interest in the modern area of high-dimensional statistics, leading to numerous works trying to unravel and solve these surprising phenomena. Many of these works have focused on linear models, i.e., $y_i = \theta_0^\top x_i + z_i$, for $i=1,\ldots,n$, under the assumption of isotropic Gaussian features $x_i \sim \mathcal N(0,d^{-1} \text{I}_d)$, and have studied different aspects of the problem in the proportional asymptotic regime $d,n \to \infty$, with $d/n \to \rho$; see \citep{donoho2016high, el2018impact, thrampoulidis2018precise} and the references therein. 

This setting has been shown to be both theoretically rich and practically powerful in capturing many interesting phenomena observed in more complex models, as detailed in what follows. A direct connection between linear models and neural networks has been established in many works \citep{jacot2018neural, du2018gradient, allen2019convergence, chizat2019lazy, hastie2022surprises,bartlett2021deep}. One important example is constituted by the \emph{double descent} behavior of neural networks \citep{belkin2019reconciling}, which has been theoretically analyzed through the lenses of linear regression \citep{advani2020high, hastie2022surprises, belkin2020two, bartlett2020benign, muthukumar2020harmless, mei2022generalization}. To the best of our knowledge, the idea of using Gaussian features can be traced back to the work \cite{donoho2009counting} which studied the phase-transition of $\ell_1$-minimization in the \emph{compressed sensing} problem. Remarkably, compared to other assumptions on the features, which allow the characterization of the error performance only up to loose constants \citep{candes2006near}, the isotropic Gaussian features assumption allowed the authors to obtain an asymptotically precise upper bound on the minimum number of measurements which guarantee, with probability $1$, the exact recovery of a structured signal from noiseless linear measurements. Interestingly, for many problems, the i.i.d.\ Gaussian ensembles are known to enjoy a \emph{universality} property, extending the validity of the results to broader classes of probability ensembles \citep{bayati2015universality,oymak2018universality,thrampoulidis2018precise, donoho2009observed,abbasi2019universality,panahi2017universal,hu2020universality,montanari2022universality}.

In this paper, we undertake this setting and focus on studying the performance of the distributionally robust estimator $\hat{\theta}_{\mathrm{DRE}}$ in the high-dimensional asymptotics, where $d, n \to \infty$, with $d/n \to \rho \in (0, \infty)$. Practically, $\rho$ encodes the under-parametrization or the over-parametrization of the problem, corresponding to $\rho \in (0, 1]$ and $\rho \in (1, \infty)$, respectively. The structural constraints on $\theta_0$ are encoded in our analysis by assuming that $\theta_0$ is sampled from a probability distribution $\mathbb P_{\theta_0}$. This modeling choice reflects a common practice in high-dimensional statistics, allowing us to analyze a broad class of structured problems and capture typical-case performance rather than a single fixed instance. We quantify the performance of $\hat{\theta}_{\mathrm{DRE}}$ through the normalized squared error $\|\hat{\theta}_{\mathrm{DRE}} - \theta_0\|^2/d$. Notice that the normalization $d^{-1}$ is necessary when $d \to \infty$, since an arbitrarily small error in each entry of the vector $\hat{\theta}_{\mathrm{DRE}} - \theta_0$ will result in an infinite $\|\hat{\theta}_{\mathrm{DRE}} - \theta_0\|$.

In the spirit of the works described above, the analysis throughout this paper will be done under the following assumptions, which are relatively standard in the high-dimensional statistics literature.
\begin{assumption}[High-dimensional asymptotics] ~
\label{assump:0}
\begin{enumerate}[label=(\roman*)]
    \item \label{assump:features} \textit{Isotropic Gaussian features}: the covariate vectors $x_i$, $i\in \{1,\ldots,n\}$, are i.i.d.\ $\mathcal N(0,d^{-1}\text{I}_d)$.
    \item \label{assump:independence} The true parameter $\theta_0$, the measurement noise $Z$, and the feature vectors $x_i$, $i\in \{1,\ldots,n\}$ are independent random variables.
    \item \label{assump:d:n:rho} The dimension of the problem and the number of measurements go to infinity at a fixed ratio, i.e., $d,n \to \infty$, with $d/n \to \rho \in (0, \infty)$.
\end{enumerate} 
\end{assumption}

As highlighted previously, many results derived for Gaussian ensembles have been rigorously proven and/or empirically observed to enjoy a \textit{universality} property, in the sense that they continue to hold true for a broad family of probability ensembles. In Section~\ref{sec:numerical}, we will validate numerically that the results proposed in this paper also demonstrate such universality. As a consequence, Assumption~\ref{assump:0}\ref{assump:features} introduces a convenient theoretical limitation, which has been shown in many cases to not affect the practical validity of the results in more general scenarios. Moreover, our distributionally robust optimization formulation \eqref{eq:dro} naturally captures all the covariate probability distributions which are ``close'' to $\mathcal N(0,d^{-1}\text{I}_d)$, including more general probability ensembles, as well as features whose entries are correlated. Finally, Assumption~\ref{assump:0}\ref{assump:d:n:rho} is the standard setting in high-dimensional statistics (compared to the classical setting, where $n \to \infty$, and $d/n \to 0$).

These assumptions allow us to use the \textit{Convex Gaussian Minimax Theorem} (CGMT), which is an extension of a classical Gaussian comparison inequality, that under convexity assumptions is shown in \cite{thrampoulidis2015regularized} to become tight. The CGMT will be one of the main technical ingredients in a sequence of steps, at the end of which we will show that the normalized squared error $\|\hat{\theta}_{\mathrm{DRE}} - \theta_0\|^2/d$ converges in probability to a non-trivial deterministic limit (despite both $\hat{\theta}_{\mathrm{DRE}}$ and $\theta_0$ being stochastic), which we recover as the solution of a convex-concave optimization problem which, surprisingly, involves at most four scalar variables. Both the under/over-parametrization parameter $\rho$ and the ambiguity radius $\varepsilon$ appear explicitly in the objective function of this optimization, while the loss function $L$ and the noise distribution $\mathbb P_Z$ will appear implicitly through an \textit{expected Moreau envelope}. Our results build upon the framework proposed in \citep{thrampoulidis2018precise}, which focused on convex-regularized estimation problems. In this paper, we extend their methodology and results to Wasserstein distributionally robust estimation problems.

\begin{figure}
    \centering
    \includegraphics[width=0.49\linewidth]{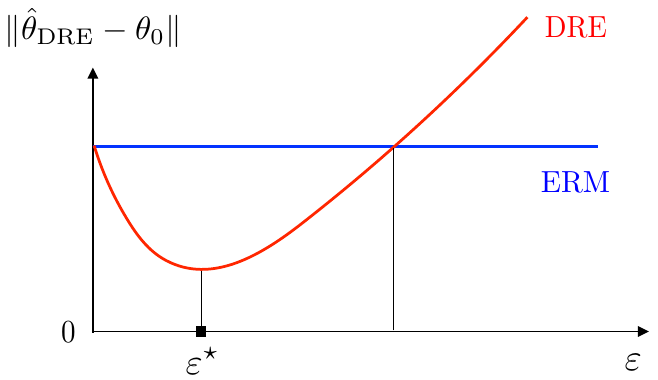}
    \caption{The impact of the ambiguity radius $\varepsilon$ on the estimation error $\|\hat{\theta}_{\mathrm{DRE}} - \theta_0\|$. This illustrative behavior aligns with the numerical results presented in Figures~\ref{figure:high:W1:W2} and \ref{figure:high:universality}. A central goal of this work is to determine, in a computationally efficient manner, the value $\varepsilon^\star$ that minimizes the estimation error in the high-dimensional regime.}
    \label{figue:DREvsERM}
\end{figure}

\paragraph{Contributions.} The main contributions of this paper can be summarized as follows.
\begin{itemize}
    \item[1)] \textbf{Asymptotic error for type-$1$ Wasserstein DRE.}\; In the high-dimensional proportional regime, where both the ambient dimension and the number of samples go to infinity at a proportional rate which encodes the under/over-parametrization of the problem, i.e., $d, n \to \infty$, with $d/n \to \rho \in (0, \infty)$, we show that the normalized squared error $\|\hat{\theta}_{\mathrm{DRE}} - \theta_0\|^2/d$ converges in probability to a deterministic value which can be recovered from the optimal solution of a convex-concave minimax optimization problem that involves four scalar variables.

    \item[2)] \textbf{Asymptotic error for type-$2$ Wasserstein DRE.}\; In the high-dimensional proportional regime, the normalized squared error $\|\hat{\theta}_{\mathrm{DRE}} - \theta_0\|^2/d$ converges in probability to a deterministic value which can be recovered from the optimal solutions of two convex-concave minimax optimization problems that involve at most four scalar variables. Interestingly, the analysis reveals that in order to precisely recover this error, an upper bound on the over-parametrization of the problem $d/n$ should hold. We then adopt a distributionally regularized relaxation of the DRE problem, and show that the normalized squared error can be recovered  from the minimizer of only one (simpler) convex-concave minimax problem, which involves only three scalar variables.

    \item[3)] \textbf{Hyperparameter tuning in high-dimensional DRE problems.}\; We show how the precise quantification of the squared error opens the path to accurate comparisons between different ambiguity radii $\varepsilon$, and to providing an answer to the following fundamental questions: \textit{\say{What is the minimum estimation error achievable by $\hat{\theta}_{\mathrm{DRE}}$?}}, \textit{\say{How to optimally choose $\varepsilon$ in order to achieve the minimum estimation error (see Figure~\ref{figue:DREvsERM})?}}, \textit{``How does the under/over-parametrization  parameter $\rho$ affect the estimation error?''}.
\end{itemize}

Aside from the contribution to the distributionally robust optimization community, we hope that this paper can bring the DRE formulation \eqref{eq:dro} to the attention of the high-dimensional statistics community, whose focus has been so far centered primarily around convex-regularized estimation problems. In particular, this paper complements and contributes to the rapidly expanding research stream which studies the \emph{precise high-dimensional asymptotics} of estimators, and which has so far produced many results in the context of phase transitions in $\ell_1$-minimization \citep{donoho2011noise,stojnic2009various,stojnic2013framework} and convex optimization problems \citep{amelunxen2014living}, LASSO \citep{bayati2011lasso,miolane2021distribution} and generalized LASSO \citep{oymak2013squared,oymak2016sharp,thrampoulidis2015lasso,hu2019asymptotics}, least squares \citep{hastie2022surprises} and convex-regularized least squares \citep{celentano2022fundamental}, estimation \citep{donoho2016high,lei2018asymptotics} and convex-regularized estimation \citep{karoui2013asymptotic, el2018impact,taheri2021fundamental,thrampoulidis2018precise}, generalized linear estimation \citep{gerbelot2020asymptotic,emami2020generalization}, binary models \citep{kini2020analytic,taheri2020sharp,deng2019model,montanari2019generalization,mignacco2020role}, specifically SVM \citep{huang2017asymptotic,kammoun2021precise} and logistic regression \citep{candes2020phase,mai2019large,salehi2019impact,salehi2020performance}, to name a few.

\paragraph{Paper organization.} The paper is organized as follows. In Section~$2$ we introduce the asymptotic CGMT, and discuss how it will be employed in this paper. Then, in Section~$3$ we study the distributionally robust estimation problem, and prove that it can be brought into the form required by the CGMT. Subsequently, in Section~$4$ we present the main results of this paper, which quantify the distributionally robust estimation error in high-dimensions. Finally, in Section $5$, we numerically validate the theoretical findings, and in Section $6$ we discuss some exciting research directions enabled by the results in this paper. All proofs are deferred to the Appendix.

\paragraph{Notation.} Throughout the paper, $\|\cdot\|$ will denote the standard Euclidean norm. For a convex function $f : \mathbb{R} \to \mathbb{R}$, we denote by $\partial f(x)$ the subdifferential of $f$ at $x$. If $f$ is differentiable at $x$, we write $f'(x)$ for its derivative. More generally, we define $f_+'(x) := \sup_{\xi \in \partial f(x)} |\xi|$ to represent the maximum (absolute) subgradient at $x$. Moreover, we denote by $e_f:\mathbb R \times \mathbb R_{> 0} \to \mathbb R$ its Moreau envelope, defined as $e_f (x,\tau) = \min_{\xi \in \mathbb R} |x-\xi|^2/(2 \tau) + f(\xi)$, and by $\text{Lip}(f)$ its Lipschitz constant. The expectation of a random variable $\zeta \sim \mathbb P$ are denoted by $\mathbb E_{\zeta\sim\mathbb P}[\zeta]$, and the probability of an event is denoted by $\text{Pr}(\cdot)$. Given two probability distributions $\mathbb P$, $\mathbb Q$, we denote by $\mathbb P \otimes \mathbb Q$ their product distribution. Given a measurable map $f$, we denote by $f_\# \mathbb{P}$ the pushforward of $\mathbb{P}$ via $f$, defined by $(f_\# \mathbb{P})(\mathcal{A}) = \mathbb{P}(f^{-1}(\mathcal{A}))$ for all measurable sets $\mathcal{A}$. We denote projection maps by $\pi$. Specifically, given the joint distribution $\gamma$ of $(X,Y)$ on $\mathbb R^d \times \mathbb R$, the pushforwards $(\pi_X)_\# \gamma$ and $(\pi_Y)_\# \gamma$ denote the marginal distributions of $\gamma$ on $\mathbb R^d$ and $\mathbb R$, respectively. Finally, we use the notation $X_n \overset{P}{\to} c$ to denote that the sequence of random variables $\{X_n\}_{n \in \mathbb N}$ converges in probability to the constant $c$, i.e., for any $\epsilon > 0$, $\lim_{n \to \infty} \text{Pr}(|X_n-c|>\epsilon) = 0$. In this case we say that $|X_n-c|\leq\epsilon$ holds with probability approaching $1$ (henceforth abbreviated as w.p.a.\ $1$).

\section{Convex Gaussian Minimax Theorem}
\label{sec:cgmt}

The key technical instrument employed in the high-dimensional analysis, where both the number of measurements and the dimension of the problem go to infinity, is the asymptotic version of the Convex Gaussian Minimax Theorem (CGMT), which we will present in Fact~\ref{prop:cgmt}. A self-contained exposition of this result is presented in what follows.

Consider the general problem of analyzing the following \textit{primary optimization} (PO) problem
\begin{align}
\label{eq:cgmt:po}
    \Phi(A) := \min_{w \in \mathcal S_w} \; \max_{u \in \mathcal S_u} \; u^\top A w + \psi(w,u),
\end{align}
where $A$ is a random matrix, with entries i.i.d.\ standard normal $\mathcal N(0,1)$, $\psi:\mathbb R^d \times \mathbb R^n \to \mathbb R$ is a continuous function, and $\mathcal S_w \in \mathbb R^d$, $\mathcal S_u \in \mathbb R^n$ are compact sets. We denote by $w_\Phi(A)$ an optimal solution of problem~\eqref{eq:cgmt:po}.

Analyzing the (PO) problem is in general very challenging, due to the bilinear term which includes the random matrix $A$. The CGMT associates to the (PO) problem a simpler \textit{auxiliary optimization} (AO) problem, from which we can infer properties of the optimal cost and solution of the original (PO) problem. The (AO) problem is a \say{decoupled} version of the (PO) problem, where the bilinear term $u^\top A w$ is replaced by the two terms $\|w\| g^\top u$ and $\|u\| h^\top w$, as follows
\begin{align}
\label{eq:cgmt:ao}
    \phi(g,h) := \min_{w \in \mathcal S_w} \; \max_{u \in \mathcal S_u} \; \|w\| g^\top u + \|u\| h^\top w + \psi(w,u),
\end{align}
where $g$ and $h$ are random vectors with entries i.i.d.\ standard normal $\mathcal N(0,1)$, and $\|\cdot\|$ is the standard Euclidean norm. We denote by $w_\phi(g,h)$ an optimal solution of problem~\eqref{eq:cgmt:ao}. Then, \cite{thrampoulidis2015regularized} shows that an extension of Gordon's Gaussian minimax theorem (GMT, \cite{gordon1988milman}) guarantees that the following inequality between the optimal values of the (PO) and (AO) problems holds
\begin{align}
\label{eq:cgmt:00}
    \text{Pr}(\Phi(A) < c) \leq 2\, \text{Pr}(\phi(g,h) \leq c),
\end{align}
for all $c \in \mathbb R$. Specifically, inequality~\eqref{eq:cgmt:00} says that whenever $\text{Pr}(\phi(g,h) \leq c)$ is close to zero, and therefore $c$ is a high probability lower bound on $\phi(g,h)$, we also have that $c$ is a high probability lower bound on $\Phi(A)$.

Moreover, \cite{thrampoulidis2015regularized} shows that under additional convexity assumptions, namely if $\psi$ is also convex-concave on $\mathcal S_w \times \mathcal S_u$, with $\mathcal S_w \in \mathbb R^d$ and $\mathcal S_u \in \mathbb R^n$ convex compact sets, whenever $c$ is a high probability upper bound on $\phi(g,h)$, we also have that $c$ is a high probability upper bound on $\Phi(A)$, i.e.,
\begin{align}
\label{eq:cgmt:01}
    \text{Pr}(\Phi(A) > c) \leq 2\, \text{Pr}(\phi(g,h) \geq c).
\end{align}
Combining inequalities \eqref{eq:cgmt:00} and \eqref{eq:cgmt:01} leads to the CGMT, which states that under the additional convexity assumptions, the (AO) problem tightly bounds the optimal value of the (PO) problem, in the sense that for all $\mu \in \mathbb R$ and $t > 0$,
\begin{align}
\label{eq:cgmt}
    \text{Pr}(|\Phi(A) - \mu|>t) \leq 2\, \text{Pr}(|\phi(g,h) - \mu| \geq t).
\end{align}
The concentration inequality \eqref{eq:cgmt} shows that if the optimal value $\phi(g,h)$ concentrates around $\mu$ with high probability, and therefore $\text{Pr}(|\phi(g,h) - \mu| \geq t)$ is small, then $\Phi(A)$ concentrates around $\mu$ with high probability. In particular, inequality~\eqref{eq:cgmt} will be essential in the study of the asymptotic (as both $d$ and $n$ go to infinity) properties of the optimal solution $w_\Phi(A)$ of the (PO) problem, based solely on the optimal value $\phi(g,h)$ of the (AO) problem.

In the rest of this section, we focus on explaining how the concentration inequality \eqref{eq:cgmt} will be used in this paper. First of all, we are interested in quantifying the performance of the distributionally robust estimator through the normalized squared error $\|\hat{\theta}_{\mathrm{DRE}} - \theta_0\|^2/d$, therefore we introduce the following change of variables
\begin{align}
\label{eq:theta:to:w}
    w := \frac{\theta-\theta_0}{\sqrt{d}}.
\end{align}
We use the notation $\hat{w}_{\textrm{DRE}}$ to denote the value of $w$ when the estimator $\theta = \hat{\theta}_{\mathrm{DRE}}$ is used in \eqref{eq:theta:to:w}. Now, in order to prove that the norm of $\hat{w}_{\textrm{DRE}}$ (which is random by nature) converges (in probability) to some deterministic value $\alpha_\star$, we will show that w.p.a.\ $1$ as $d,n \to \infty$, we have that for all $\eta > 0$,
\begin{align}
\label{eq:S}
    \hat{w}_{\textrm{DRE}} \in \mathcal S_\eta := \{w \in \mathcal S_w: \; |\|w\|-\alpha_\star|<\eta\}.
\end{align}
 
In Section~\ref{sec:linear}, we will show that the value of $\alpha_\star$, which quantifies the performance of the distributionally robust estimator, can be recovered from the optimal solution of a convex-concave minimax problem which involves at most four scalars. While showing this, the CGMT will play a major role, as explained in what follows. 

We will start by showing that problem \eqref{eq:dro} can be rewritten as a (PO) problem, in the form \eqref{eq:cgmt:po}, with matrix $A$ constructed using the measurement vectors $x_i$. Then, following the reasoning presented thus far, we will formulate the (AO) problem associated to it. Assume now that the set $\mathcal S_u$ takes the form $\mathcal S_u = \{u \in \mathbb R^n:\, \|u\|\leq K_\beta\}$. In Section~\ref{sec:linear}, we will see that this is precisely the case here. Now, in order to arrive at the scalar convex-concave minimax problem, whose optimal solution is $\alpha_\star$, we will need to work with the following slight variation of the (AO) problem \eqref{eq:cgmt:ao},
\begin{align}
\label{eq:cgmt:ao:1}
    \phi'(g,h) := \max_{0 \leq \beta \leq K_\beta}\; \min_{w \in \mathcal S_w} \; \max_{\|u\|\leq \beta} \; \|w\| g^\top u + \|u\| h^\top w + \psi(w,u).
\end{align}

Notice that the two problems \eqref{eq:cgmt:ao} and \eqref{eq:cgmt:ao:1} can be obtained easily from one-another by exchanging the minimization over $w$ and the maximization over $\beta$. However, despite the small difference, the two problems are generally not equivalent. This is due to the fact that, different from the (PO) problem  \eqref{eq:cgmt:po}, the objective function of the (AO) problem \eqref{eq:cgmt:ao} is not convex-concave in $(w,u)$. Indeed, due to the random vectors $g$ and $h$, the terms $\|w\|g^\top u$ and $\|u\|h^\top w$ could be, for example, concave in $w$ and convex in $u$ if $g^\top u$ is negative and $h^\top w$ is positive, respectively, prohibiting the use of standard tools (such as Sion's minimax principle) to exchange the order of the minimization and maximization in \eqref{eq:cgmt:ao:1}. 

Nonetheless, the tight relation between the (PO) problem \eqref{eq:cgmt:po} and the (AO) problem \eqref{eq:cgmt:ao} shown by the CGMT, together with the convexity of the (PO) problem, can be used to asymptotically (as both $d$ and $n$ go to infinity) infer properties of the optimal solution $w_\Phi(A)$ of the (PO) problem, based solely on the optimal value $\phi(g,h)$ of the modified (AO) problem \eqref{eq:cgmt:ao:1}. The following result is instrumental in the high-dimensional error analysis presented in Section~\ref{sec:linear}. 

\begin{fact}[Asymptotic CGMT]
\label{prop:cgmt}
Let $\mathcal S$ be an arbitrary open subset of $\mathcal S_w$ and $\mathcal S^c = \mathcal S_w/\mathcal S$. We denote by $\phi'_{\mathcal S^c}(g,h)$ the optimal value of the modified (AO) problem \eqref{eq:cgmt:ao:1} when the minimization over $w$ is restricted as $w \in \mathcal S^c$. If there exist constants $\overline{\phi'}$ and $\overline{\phi'}_{\mathcal S^c}$, with $\overline{\phi'} < \overline{\phi'}_{\mathcal S^c}$, such that $\phi'(g,h) \overset{P}{\to} \overline{\phi'}$ and $\phi'_{\mathcal S^c}(g,h) \overset{P}{\to} \overline{\phi'}_{\mathcal S^c}$ as  $d,n \to \infty$, then
\begin{align}
\label{eq:asympt:cgmt}
    \lim_{d,n \to \infty} \text{Pr}(w_\Phi(A) \in \mathcal S) = 1.
\end{align}
\end{fact}

The core intuition behind Fact~\ref{prop:cgmt} is as follows. Although the (AO) problem~\eqref{eq:cgmt:ao} and its modified version~\eqref{eq:cgmt:ao:1} are not necessarily equivalent, depending on the realizations of $g$ and $h$, Fact~\ref{prop:cgmt} demonstrates that strong duality holds with high probability over the randomness in $g$ and $h$ in high dimensions. Consequently, the modified (AO) problem~\eqref{eq:cgmt:ao:1} can be used as an effective surrogate for the original (AO) problem~\eqref{eq:cgmt:ao} in analyzing the high-dimensional behavior of the (PO) problem~\eqref{eq:cgmt:po}. The asymptotic result in~\eqref{eq:asympt:cgmt} then follows from inequalities~\eqref{eq:cgmt:00} and~\eqref{eq:cgmt:01}, with $\phi(g,h)$ effectively replaced by $\phi'(g,h)$. The proof of this result can be found in \cite[Corollary~1]{thrampoulidis2018precise}. As the proof is illuminating and to keep the paper self-contained, we provide a sketch of its proof in Appendix~\ref{appendix:additional:proofs:2}.

Armed with Fact~\ref{prop:cgmt}, we can conclude the desired result ($\|\hat{w}_{\textrm{DRE}}\| \overset{P}{\to} \alpha_\star$) with $w_\Phi(A) = \hat{w}_{\textrm{DRE}}$ and $\mathcal S = \mathcal S_\eta$ (for all $\eta > 0$). This will necessitate studying the limits $\phi'(g,h) \overset{P}{\to} \overline{\phi'}$ and $\phi'_{\mathcal S^c}(g,h) \overset{P}{\to} \overline{\phi'}_{\mathcal S^c}$ for the modified (AO) problem, required by Fact~\ref{prop:cgmt}, which will be the main technical challenge.

\section{Wasserstein Distributionally Robust Estimation}
\label{sec:dro:M-est}

In this section we study the DRE problem~\eqref{eq:dro}, and prove that it can be brought into the form of a (PO) problem, as required by the CGMT framework. While the distributional robustness induced by $W_1$ corresponds to a Lipschitz regularization, making it possible to rely on results from the high-dimensional analysis of regularized estimators literature, the distributional robustness induced by $W_2$ does not generally have a regularized formulation. Nonetheless, we will present a strong duality result which will enable its representation as a (PO) problem and, with it, the high-dimensional analysis based on the asymptotic CGMT result presented in Fact~\ref{prop:cgmt}.

At different points in the paper, we will make use of the following assumptions on the loss function $L$ and the baseline distribution $\mathbb P$. While reading the assumptions, recall that the two scenarios of interest in this paper are $p=1$ and $p=2$.

\begin{assumption}[DRE regularity assumptions] ~
\label{assump:dro}
\begin{enumerate}[label=(\roman*)]
	\item \label{assump:ell1} The loss function $L$ is proper, continuous, and convex.
	\item \label{assump:ell2} The loss function $L$ satisfies the growth rate $L(u) \leq C(1 + |u|^p)$, for some $C>0$.
	\item \label{assump:pstar} The distribution $\mathbb P$ has finite $p^\text{th}$ moment, i.e., $\mathbb E_{\mathbb P}\left[ \|(X,Y)\|^p \right]  < \infty$.
	\item \label{assump:ell:diff} For $p=2$, the loss function $L$ is $M$-smooth, for some $M > 0$, i.e., it is differentiable and has a Lipschitz continuous gradient, with Lipschitz constant $M$.	
\end{enumerate} 
\end{assumption}

These are essential and common assumptions in distributionally robust optimization. Specifically, Assumption~\ref{assump:dro}\ref{assump:ell1} ensures that the inner maximization in \eqref{eq:dro} admits a strong dual, which we will present in Fact~\ref{thm:nonconvexduality}. Moreover, Assumptions~\ref{assump:dro}\ref{assump:ell2}-\ref{assump:pstar} will ensure that the optimal value of \eqref{eq:dro} is finite (see Lemma~\ref{lemma:finiteoptimalvalue:1}). Finally, Assumption~\ref{assump:dro}\ref{assump:ell:diff} will be used for the type-$2$ Wasserstein case, in Lemma~\ref{lemma:convexity}, to show that the dual formulation mentioned above is convex-concave minimax problem.

\begin{lemma}[Finite optimal value of DRE problem]
\label{lemma:finiteoptimalvalue:1}
Let Assumptions~\ref{assump:dro}\ref{assump:ell2}-\ref{assump:pstar} be satisfied. Then the optimal value of \eqref{eq:dro} is finite.
\end{lemma}

Recall from Section~\ref{sec:cgmt} and Fact~\ref{prop:cgmt} that, after rewritting the DRE problem \eqref{eq:dro} as a (PO) problem, and subsequently as an (AO) problem, we can study the optimal solution of the (PO) problem based solely on the optimal value of the (AO) problem. In this stream of reasoning, having a finite optimal value is crucial. The proof of this lemma can be easily recovered from \cite[Proposition~2.5]{shafieezadeh2023new}, and is therefore omitted. Before proceeding to the main results of this section, we present the following lemma, which sheds light on the distributional robustness introduced by the transportation cost in the definition of $\mathbb B_\varepsilon(\widehat{\mathbb P}_n)$.

\begin{lemma}[Ambiguity set description]
\label{lemma:properties:ambiguity}
Let $\mathbb Q$ be a distribution inside the ambiguity set $\mathbb B_\varepsilon(\widehat{\mathbb P}_n)$. Moreover, given $\mathbb Q_1, \mathbb Q_2 \in \mathcal P(\mathbb R^d)$, let $d_p ( \mathbb Q_1, \mathbb Q_2 ) = \inf_{\gamma \in \Gamma(\mathbb Q_1, \mathbb Q_2)} \mathbb E_{(X_1,X_2)\sim\gamma} \left[\|X_1-X_2\|^p\right]$. Then the marginals of $\mathbb Q$ with respect  to $X$ and $Y$ satisfy
\begin{align*}
   (\pi_{Y})_\# \mathbb Q = \frac{1}{n}\sum_{i=1}^{n}\delta_{y_i} \quad\quad \text{and} \quad\quad d_p \left((\pi_{X})_\# \mathbb Q, \frac{1}{n}\sum_{i=1}^{n}\delta_{x_i}\right) \leq \varepsilon.
\end{align*}
\end{lemma}

Lemma~\ref{lemma:properties:ambiguity} shows that the ambiguity set $\mathbb B_\varepsilon(\widehat{\mathbb P}_n)$ contains distributions $\mathbb Q$ whose marginal distribution of $Y$ is equal to the empirical distribution $\frac{1}{n}\sum_{i=1}^{n}\delta_{y_i}$ and whose marginal distribution of $X$ can be both discrete or continuous, and whose support is not necessarily concentrated on the samples $\{x_i\}_{i=1}^n$. 


In what follows, we will consider separately the two cases of distributional robustness induced by $W_1$ and $W_2$. Moreover, for $W_2$, we will also consider a distributionally regularized alternative to \eqref{eq:dro}, which will be introduced and motivated in Section~\ref{sec:distrib:reg}. 

\subsection{Type-1 Wasserstein DRE}
\label{sec:distr:rob:p=1}

We first concentrate on the type-$1$ Wasserstein case and recall the well-known fact that, for the type-$1$ Wasserstein discrepancy, the distributional robustness in problem \eqref{eq:dro} has an explicit Lipschitz regularization effect. For details, we refer to \cite{shafieezadeh2019regularization} for the case when the loss function is convex, and to \cite{gao2017wasserstein} for nonconvex Lipschitz loss functions.

\begin{fact}[Regularized estimation for $p=1$]
\label{prop:nonconvexduality:1}
Let Assumptions~\ref{assump:dro}\ref{assump:ell1}-\ref{assump:ell2} be satisfied for $p=1$. Then the optimal value of \eqref{eq:dro} is equal to the optimal value of the following regularized estimation problem
\begin{align}
\label{eq:dual:p=1}
    \min_{\theta \in \Theta}\; \frac{1}{n} \sum_{i=1}^{n}\, L(y_i - \theta^\top x_i)  + \varepsilon \text{Lip}(L)\|\theta\|.
\end{align}
\end{fact}

The regularized estimation formulation of problem \eqref{eq:dro} presented in Fact~\ref{prop:nonconvexduality:1} will constitute the starting point of the high-dimensional error analysis presented in Section~\ref{subsec:W1}. For illustration purposes, we conclude this section with the following example, which shows (informally) that problem \eqref{eq:dro} with $W_1$ can be rewritten as a (PO) problem.

\begin{example}[DRE $\to$ (PO)] 
\label{ex:cgmt:reg}
Consider the DRE problem \eqref{eq:dro}, with type-$1$ Wasserstein discrepancy, which has the explicit regularized formulation \eqref{eq:dual:p=1}. Using the fact that $y_i = \theta_0^\top x_i + z_i$, and introducing the change of variables $w = (\theta-\theta_0)/\sqrt{d}$, we obtain
\begin{align*}
    \min_{w \in \mathcal W}\; \frac{1}{n} \sum_{i=1}^{n}\, L(z_i - \sqrt{d}w^\top x_i)  + \varepsilon\, \text{lip}(L)\|\sqrt{d}w + \theta_0\|.
\end{align*}
with $\mathcal W$ the feasible set of $w$ obtained from $\Theta$ and the change of variable. Now, introducing the convex conjugate of $L$, we obtain
\begin{align*}
    \min_{w \in \mathcal W}\; \frac{1}{n} \sum_{i=1}^{n}\, \left(\sup_{u_i \in \mathbb R} u_i(z_i - \sqrt{d}w^\top x_i) - L^*(u_i)\right)  + \varepsilon\, \text{lip}(L)\|\sqrt{d}w + \theta_0\|,
\end{align*}
which can be rewritten as
\begin{align*}
    \min_{w \in \mathcal W}\; \sup_{u \in \mathbb R^n}\; - \frac{1}{n} u^\top (\sqrt{d} A) w + \frac{1}{n} u^\top z - \frac{1}{n}\sum_{i=1}^{n}\,L^*(u_i) + \varepsilon\, \text{lip}(L)\|\sqrt{d}w + \theta_0\|.
\end{align*}

Assume now that we can restrict this optimization to $w \in \mathcal S_w$ and $u \in \mathcal S_u$, for some convex and compact sets $\mathcal S_w, \mathcal S_u$, without changing its optimal value. This step is formally justified in Section 4, as part of the proofs of our main theoretical results. Then, by defining $\psi(w,u) =  n^{-1} u^\top z - n^{-1}\sum_{i=1}^{n}\,L^*(u_i) + \varepsilon\, \text{lip}(L)\|\sqrt{d}w + \theta_0\|$, which can be easily seen to be convex-concave in $(w,u)$, and noticing that $\sqrt{d}A$ has entries i.i.d.\ standard normal (since the feature vectors $x_i$ are i.i.d.\ $\mathcal N(0,d^{-1}\text{I}_d)$ by Assumption~\ref{assump:0}), we see that problem \eqref{eq:dro} can be rewritten in the form of the (PO) problem.\qed
\end{example}

\subsection{Type-2 Wasserstein DRE}
\label{sec:distr:rob:p=2}

We now move to the type-$2$ Wasserstein case and proceed by presenting the minimax dual reformulation of problem \eqref{eq:dro}. Similar to Fact~\ref{prop:nonconvexduality:1}, this formulation will represent the starting point of the high-dimensional analysis conducted in Section~\ref{subsec:W2}.

\begin{fact}[Minimax reformulation for $p=2$]
\label{thm:nonconvexduality}
Let Assumption~\ref{assump:dro}\ref{assump:ell1} be satisfied for $p=2$. Then the optimal value of \eqref{eq:dro} is equal to the optimal value of the following minimax problem
\begin{align}
\label{eq:dro:dual:uni}
    \inf_{\theta \in \Theta, \lambda \geq 0}\; \sup_{u \in \mathbb R^n} \; \lambda \varepsilon + \frac{1}{n} \sum_{i=1}^{n}\; u_i(y_i - \theta^\top x_i) + \frac{u_i^2}{4 \lambda}\|\theta\|^2 - L^*(u_i).
\end{align}
\end{fact}

The proof of this fact is a simple application of \cite[Theorem~3.8(i)]{shafieezadeh2023new} and is therefore omitted. In the rest of Section~\ref{sec:distr:rob:p=2}, we will concentrate on the dual formulation \eqref{eq:dro:dual:uni} and study the conditions under which \eqref{eq:dro:dual:uni} is concave in $u$, and the norm of its optimal solution $u_\star$ is in the order of $\sqrt{n}$. In particular, the $\sqrt{n}\,$-growth rate of $u_\star$ will allow us to guarantee that, after the rescaling $u \to u\sqrt{n}$, the variable $u$ can be restricted without loss of generality to a convex compact set $\mathcal S_u$. Recall from Section~\ref{sec:cgmt} that the concavity in $u$, together with the convexity and compactness of $\mathcal S_u$ are essential in the high-dimensional error analysis using the CGMT framework. It turns out that both properties hold as long as the ambiguity radius $\varepsilon$ satisfies a mild upper-bound, which will be presented in Lemmas~\ref{lemma:convexity} and \ref{lemma:growthrates}. These results build upon the work \cite{blanchet2018optimal}, and require the following assumption on the set $\Theta$.

\begin{assumption}[Bounded parameter set]
\label{assump:Theta} 
The set $\Theta$ is defined as $\Theta := \{\theta \in \mathbb R^d:\, \|\theta\| \leq R_{\theta} \sqrt{d}\}$.
\end{assumption}

Assumption~\ref{assump:Theta} is essential for the application of the CGMT framework in the high-dimensional error analysis. Specifically, it will be used first in the proof of Lemma~\ref{lemma:growthrates}, while studying the growth rates of the optimal solutions. 

Before proceeding, we need to introduce the following concepts used in Lemmas~\ref{lemma:convexity} and \ref{lemma:growthrates}. Let $\underline{L}_n$ and $\overline{L}_n$ denote two constants satisfying
\begin{align}
    \label{eq:int:ell:bounds}
    0<\underline{L}_n \leq \mathbb E_{\widehat{\mathbb P}_n}\left[L'(Y - \theta^\top X)^2 \right] \leq \overline{L}_n,
\end{align}
for all $\theta \in \Theta$. Recall from Assumption~\ref{assump:dro}\ref{assump:ell:diff} that $L$ is differentiable, and therefore $L'$ is well-defined. Notice that, due to the upper bound $L(u) \leq C(1 + |u|^2)$ from Assumption~\ref{assump:ell2}, the value $\overline{L}_n$ is finite. 
Moreover, excluding the trivial and practically not relevant case where the loss function $L$ is $\widehat{\mathbb P}_n$-a.s. constant, resulting in $\underline{L_n} = 0$, in general $\underline{L_n}$ is strictly positive. 

In the following lemma, we show that an upper bound on the radius $\varepsilon$ of the ambiguity set guarantees that \eqref{eq:dro:dual:uni} is a convex-concave minimax problem.

\begin{lemma}[Concavity in $u$ of \eqref{eq:dro:dual:uni}]
\label{lemma:convexity}
Let Assumptions~\ref{assump:dro}\ref{assump:ell1} and \ref{assump:ell:diff}, and Assumption~\ref{assump:Theta} be satisfied. Moreover, let $\varepsilon = \varepsilon_0/n$, for some constant $\varepsilon_0$ satisfying $\varepsilon_0 \leq \rho^{-1} M^{-2} R_{\theta}^{-2}\underline{L}_n$. Then, problem \eqref{eq:dro:dual:uni} is equivalent to the following convex-concave minimax problem 
\begin{align*}
    \inf_{\theta \in \Theta, \lambda \in \Lambda}\; \sup_{u \in \mathbb R^n} \; \lambda \varepsilon + \frac{1}{n} \sum_{i=1}^{n}\; u_i(y_i - \theta^\top x_i) + \frac{u_i^2}{4 \lambda}\|\theta\|^2 - L^*(u_i),
\end{align*}
with $\Lambda = \{\lambda \geq 0:\; \lambda \geq M R_\theta \sqrt{d} \|\theta\|/2 \}$.
\end{lemma}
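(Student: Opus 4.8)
\textbf{Proof proposal for Lemma~\ref{lemma:convexity}.}

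The plan is to show that, under the stated bound on $\varepsilon_0$, the outer infimum in \eqref{eq:dro:dual:uni} is unaffected if we intersect the feasible set of $\lambda$ with $\Lambda=\{\lambda\ge 0:\ \lambda\ge M R_\theta\sqrt d\,\|\theta\|/2\}$, and that on this restricted domain the inner maximand is concave in $u$. The concavity part is immediate once $\lambda$ is restricted: for fixed $\theta,\lambda$, the $i$-th summand $u_i(y_i-\theta^T x_i)+\frac{u_i^2}{4\lambda}\|\theta\|^2 - L^*(u_i)$ is a sum of a linear term, the quadratic $\frac{\|\theta\|^2}{4\lambda}u_i^2$, and $-L^*(u_i)$. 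Since $L$ is $M$-smooth (Assumption~\ref{assump:dro}\ref{assump:ell:diff}), its conjugate $L^*$ is $\frac1M$-strongly convex, so $-L^*$ has second derivative $\le -\frac1M$; hence the summand is concave in $u_i$ as soon as $\frac{\|\theta\|^2}{4\lambda}\le\frac{1}{2M}$, i.e. $\lambda\ge \frac{M\|\theta\|^2}{2}$. This is implied by $\lambda\ge MR_\theta\sqrt d\,\|\theta\|/2$ whenever $\|\theta\|\le R_\theta\sqrt d$, which holds on $\Theta$ by Assumption~\ref{assump:Theta}. So the real content is showing the restriction to $\Lambda$ is without loss of optimality.

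For that I would argue that any optimal (or near-optimal, via an infimizing sequence) $\lambda$ must already satisfy $\lambda\ge MR_\theta\sqrt d\,\|\theta\|/2$. Fix $\theta\in\Theta$. Perform the inner maximization over $u$ explicitly on a per-coordinate basis: the inner sup equals $\lambda\varepsilon+\frac1n\sum_i \sup_{u_i}\big\{u_i(y_i-\theta^Tx_i)+\frac{u_i^2}{4\lambda}\|\theta\|^2-L^*(u_i)\big\}$. When $\lambda$ is so small that $\frac{\|\theta\|^2}{4\lambda}>\frac{1}{2M}$, the quadratic in $u_i$ eventually dominates $-L^*(u_i)$ (which grows at most quadratically with curvature $\frac1M$) and the coordinate sup is $+\infty$ unless $\|\theta\|=0$. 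More precisely, following the argument underlying \cite{blanchet2018optimal}, the inner value is finite for given $\theta$ exactly when $\lambda$ exceeds $\frac{M\|\theta\|^2}{2}$ (the threshold at which $L^*(u_i)-\frac{\|\theta\|^2}{4\lambda}u_i^2$ ceases to be coercive); below that threshold the objective is $+\infty$ and such $\lambda$ cannot be part of a minimizing pair. Combining with $\|\theta\|\le R_\theta\sqrt d$ gives that the effective feasible set is contained in $\{\lambda\ge M\|\theta\|^2/2\}\subseteq\Lambda$ — actually one wants the slightly larger set $\Lambda$ using the uniform bound $\|\theta\|\le R_\theta\sqrt d$ to make $\Lambda$ independent of $\theta$ up to the product form; the point is that enlarging the constraint from $\lambda\ge M\|\theta\|^2/2$ to $\lambda\ge MR_\theta\sqrt d\|\theta\|/2$ still excludes nothing that was feasible. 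The role of the bound $\varepsilon_0\le \rho^{-1}M^{-2}R_\theta^{-2}\underline L_n$ (equivalently $\varepsilon=\varepsilon_0/n$ with $d=\rho n$, so $\varepsilon d\le M^{-2}R_\theta^{-2}\underline L_n$) is to guarantee that this restriction does not make the infimum jump: one must check that the constrained problem still attains (or approaches) the same value, i.e. there is a feasible $(\theta,\lambda)\in\Theta\times\Lambda$ achieving the unconstrained optimum. This is where the lower bound $\underline L_n$ on $\mathbb E_{\mathbb P_n}[L'(Y-\theta^TX)^2]$ enters: it controls, via the envelope theorem / first-order conditions in $\lambda$ (the term $\lambda\varepsilon$ balanced against $-\frac{\|\theta\|^2}{4\lambda^2}\sum u_i^2$ with $u_i$ the optimal dual variable $\approx L'(y_i-\theta^Tx_i)$), that the optimal $\lambda$ scales like $\|\theta\|\sqrt{\varepsilon^{-1}\mathbb E_{\mathbb P_n}[L'^2]}$, and the bound on $\varepsilon_0$ is precisely what forces this quantity to exceed $MR_\theta\sqrt d\|\theta\|/2$.

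\textbf{Main obstacle.} The delicate step is the second one: verifying that imposing $\lambda\in\Lambda$ genuinely does not change the optimal value, rather than merely not changing it on the subset where the inner problem is finite. One has to rule out the degenerate possibility that the unconstrained infimum over $\lambda\ge0$ is approached along a sequence $\lambda_k\to$ (something below the threshold) while $\theta_k\to 0$ in a compensating way, and show instead that the infimizing pairs stay in $\Theta\times\Lambda$. I expect this to require a careful two-sided estimate: a lower bound on the optimal $\lambda$ in terms of $\|\theta\|$, $\varepsilon$, and $\underline L_n$ (using that the optimal $u_i$ are bounded below in an $\ell_2$-averaged sense by $\underline L_n$), together with the reverse inequality showing the value of the constrained problem does not exceed the unconstrained one — the latter being trivial since $\Lambda\subseteq\{\lambda\ge0\}$. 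Invoking the growth/scaling estimates from \cite{blanchet2018optimal} should streamline the first bound, and the explicit form of the conjugate curvature for $M$-smooth $L$ makes the concavity threshold exact.
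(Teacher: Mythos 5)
Your overall route coincides with the paper's: you use the $1/M$-strong convexity of $L^*$ (via the decomposition $L^*(u)=u^2/(2M)+f(u)$) to get the concavity threshold $\lambda \geq M\|\theta\|^2/2$, and you justify the restriction to $\Lambda$ by a lower bound on the optimal multiplier of the form $\lambda_\star(\theta)\geq \frac{1}{2\sqrt{\varepsilon}}\sqrt{\underline{L}_n}\,\|\theta\|$, which the condition $\varepsilon_0 \leq \rho^{-1}M^{-2}R_\theta^{-2}\underline{L}_n$ (together with $\varepsilon=\varepsilon_0/n$, $d=\rho n$, and $\|\theta\|\leq R_\theta\sqrt{d}$) turns into $\lambda_\star(\theta)\geq M R_\theta\sqrt{d}\,\|\theta\|/2$. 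This is exactly how the paper argues: it quotes the bound on $\lambda_\star(\theta)$ from \cite{blanchet2018optimal}, combines it with $\|\theta\|\leq R_\theta\sqrt{d}$, and observes that $\frac{M R_\theta\sqrt{d}\|\theta\|}{2}\leq \frac{1}{2\sqrt{\varepsilon}}\sqrt{\underline{L}_n}\|\theta\|$ is equivalent to $\varepsilon \leq \underline{L}_n/(d M^2 R_\theta^2)$. So the step you flag as the \emph{main obstacle} is precisely the step the paper closes by citing this estimate; once that bound is accepted there is nothing further to verify.

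Two intermediate claims in your write-up are incorrect and should be dropped, since they are not needed. First, the finiteness dichotomy --- that the inner supremum is $+\infty$ exactly when $\lambda < M\|\theta\|^2/2$ --- is false in general: $1/M$-strong convexity is only a \emph{lower} bound on the curvature of $L^*$, and for the paper's running example (the Huber loss) $L^*$ has compact domain, so the supremum over $u_i$ is finite for every $\lambda\geq 0$. Hence small $\lambda$ cannot be excluded on finiteness grounds; only the quantitative lower bound on $\lambda_\star$ does the job. Second, your statement that passing from $\{\lambda\geq M\|\theta\|^2/2\}$ to $\Lambda=\{\lambda\geq M R_\theta\sqrt{d}\,\|\theta\|/2\}$ ``excludes nothing that was feasible'' has the inclusion backwards: since $\|\theta\|\leq R_\theta\sqrt{d}$, the set $\Lambda$ is the \emph{smaller} one, so the restriction is a genuine strengthening --- which is exactly why the $\lambda_\star$ bound is required. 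Relatedly, for an infimum the trivial direction is that restricting to $\Lambda$ cannot \emph{decrease} the value; the nontrivial direction (that it does not increase it) is what follows from $\lambda_\star(\theta)\in\Lambda$.
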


Moreover, in the next lemma we show that the same upper bound on $\varepsilon$ guarantees that the norm of the optimal solution $u_\star$ in \eqref{eq:dro:dual:uni} can be upper bounded by $K_u \sqrt{n}$, for some constant $K_u>0$. Recall that the CGMT requires $u$ to live in a compact set. This will be achieved in Theorem~\ref{thm:estimationerror:2} using this upper bound and the rescaling $u \to u \sqrt{n}$.

\begin{lemma}[Growth rate of $u_\star$ in \eqref{eq:dro:dual:uni}]
\label{lemma:growthrates}
Let Assumption~\ref{assump:dro}\ref{assump:ell1} and Assumption~\ref{assump:Theta} be satisfied. Moreover, let $\varepsilon = \varepsilon_0/n$, for some constant $\varepsilon_0$ satisfying $ \varepsilon_0 \leq \rho^{-1} M^{-2} R_{\theta}^{-2}\underline{L}_n$. Then the optimal $u_\star$ in \eqref{eq:dro:dual:uni} satisfies $\|u_\star\|\leq K_u \sqrt{n}$, for some constant $K_u>0$. 
\end{lemma}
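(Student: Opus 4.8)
The plan is to bound $\|u_\star\|$ by evaluating the dual objective \eqref{eq:dro:dual:uni} at two strategically chosen points and exploiting the optimality of $(\theta_\star,\lambda_\star,u_\star)$. First I would fix any optimizer $\theta_\star\in\Theta$ and $\lambda_\star\geq 0$, and recall from Lemma~\ref{lemma:convexity} (which holds under the stated hypotheses, since the bound on $\varepsilon_0$ is the same) that at optimality $\lambda_\star \geq M R_\theta\sqrt d\,\|\theta_\star\|/2$, so $\|\theta_\star\|^2/(4\lambda_\star) \leq \|\theta_\star\|/(2MR_\theta\sqrt d) \leq 1/(2M)$ using $\|\theta_\star\|\leq R_\theta\sqrt d$. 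This is the key structural consequence of the $\varepsilon$-bound: the quadratic-in-$u$ term in \eqref{eq:dro:dual:uni} has coefficient at most $1/(2M)$, i.e.\ it is dominated by the $-L^*(u_i)$ term because $L$ is $M$-smooth and hence $L^*$ is $1/M$-strongly convex.

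Next I would carry out the inner maximization over $u$ explicitly at the optimal $(\theta_\star,\lambda_\star)$. Since the objective decouples across coordinates, each $u_i$ solves
\begin{align*}
    \max_{u_i\in\mathbb R}\; u_i\,\frac{y_i-\theta_\star^T x_i}{n} + \frac{u_i^2\,\|\theta_\star\|^2}{4n\lambda_\star} - \frac{1}{n}L^*(u_i).
\end{align*}
Writing $a := \|\theta_\star\|^2/(4\lambda_\star) \leq 1/(2M)$ and $r_i := y_i-\theta_\star^T x_i$, the first-order condition reads $r_i + 2 a u_i \in \partial L^*(u_i)$, equivalently $u_i \in \partial L(r_i + 2a u_i)$. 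Because $L$ is $M$-smooth, $\partial L(t)=\{L'(t)\}$ is single-valued with $|L'(t)-L'(s)|\leq M|t-s|$, so $u_i = L'(r_i + 2a u_i)$, giving $|u_i| \leq |L'(r_i)| + M\cdot 2a\,|u_i| \leq |L'(r_i)| + (2aM)|u_i|$. Since $2aM \leq 1$ we cannot quite divide; I would instead take $a \leq 1/(2M) - \delta$ for the strict inequality, or more cleanly use strong convexity of $L^*$: the map $u\mapsto -L^*(u)/n + a u^2/n$ is $(1/M - 2a)/n \geq 0$-concave, and when $\varepsilon_0$ is \emph{strictly} below the threshold one gets a positive modulus; alternatively, argue directly that the maximizer satisfies $(1/M)|u_i| \le |r_i + 2au_i - t^*| \le \dots$ A cleaner route: $L^*$ being the conjugate of an $M$-smooth function has full domain only if $L$ is coercive; in general $u_i$ lies in $\overline{\mathrm{range}(L')}$, and the stationarity $u_i = L'(r_i+2au_i)$ together with $2aM\le1$ forces, via a fixed-point/contraction argument when $2aM<1$ and a limiting argument at equality, the bound $|u_i|\le C(1+|r_i|)$ for a universal constant. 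Then
\begin{align*}
    \|u_\star\|^2 = \sum_{i=1}^n u_i^2 \leq C'\sum_{i=1}^n (1+r_i^2) = C' n\big(1 + \mathbb E_{\mathbb P_n}[(Y-\theta_\star^T X)^2]\big) \leq C' n (1 + \overline L_n / c)
\end{align*}
for suitable constants, where the last step uses the growth bound from Assumption~\ref{assump:dro}\ref{assump:ell2} (via $\mathbb E_{\mathbb P_n}[L'(Y-\theta_\star^TX)^2]\leq \overline L_n$ and the relation $|L'(t)|\le$ linear-in-$|t|$ implied by $M$-smoothness plus the quadratic growth of $L$). Taking $K_u := \sqrt{C'(1+\overline L_n/c)}$ finishes it.

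The main obstacle I anticipate is handling the boundary case $2aM = 1$, i.e.\ when $\varepsilon_0$ attains its upper bound $\rho^{-1}M^{-2}R_\theta^{-2}\underline L_n$: there the naive inequality $|u_i|(1-2aM)\le|L'(r_i)|$ becomes vacuous. I would resolve this by noting that at $\varepsilon_0$ equal to the threshold the constraint $\lambda_\star \geq MR_\theta\sqrt d\|\theta_\star\|/2$ combined with $\lambda_\star$ actually being determined by the outer minimization typically gives $\lambda_\star$ strictly larger (the envelope theorem in $\lambda$ forces $\varepsilon = \|\theta_\star\|^2\sum u_i^2/(4n\lambda_\star^2)$, pinning $\lambda_\star$ away from its lower bound unless $u_\star=0$), hence $2aM<1$ strictly; so the degenerate case only occurs when $u_\star = 0$, which trivially satisfies the bound. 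Making this dichotomy rigorous — either $u_\star=0$ or $2aM<1$ strictly with a quantitative gap depending only on the problem constants — is the delicate step, and it leans on the stationarity condition in $\lambda$ from \eqref{eq:dro:dual:uni} together with Assumption~\ref{assump:Theta}.
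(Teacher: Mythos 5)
Your route is genuinely different from the paper's and, as written, does not close. The paper never touches the first-order conditions in $u$: it uses Lemma~\ref{lemma:convexity} and Sion's principle to swap the minimization over $\lambda$ with the maximization over $u$, solves the inner minimization over $\lambda$ in closed form, and splits into two cases. If the minimizer sits at the lower end of $\Lambda$, the defining condition of that case, $\|\theta\|\|u_\star\|/(2\sqrt{n\varepsilon}) \leq M R_\theta \sqrt d\,\|\theta\|/2$, \emph{is} already the bound $\|u_\star\|\leq \sqrt{\varepsilon_0\rho}\,M R_\theta\sqrt n$; otherwise $\lambda_\star=\|\theta_\star\|\|u_\star\|/(2\sqrt{n\varepsilon})$ and the bound follows from the \emph{upper} bound $\lambda_\star \leq \bigl(\tfrac12 M R_\theta\sqrt d + \sqrt{\overline{L}_n}\bigr)\|\theta_\star\|$ of \cite{blanchet2018optimal}. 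Your argument never invokes this upper bound on $\lambda_\star$, and that is exactly where the gaps sit.

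Concretely: (i) the contraction constant $1/(1-2aM)$ is uncontrolled, since the hypotheses only give $2aM\leq 1$, and your dichotomy (``either $u_\star=0$ or $2aM<1$ strictly'') provides no quantitative gap, so $|u_i|\leq |L'(r_i)|/(1-2aM)$ can degenerate; the $\lambda$-stationarity identity you mention would actually repair this cleanly (it gives $2aM\,\|u_\star\| = M\|\theta_\star\|\sqrt{\varepsilon_0}$, hence $\|u_\star\|\leq \bigl(\sum_i L'(r_i)^2\bigr)^{1/2} + M R_\theta\sqrt{\rho\varepsilon_0}\,\sqrt n$ with no gap needed), but you did not carry that computation out. (ii) The inequality $\mathbb E_{\mathbb P_n}[(Y-\theta_\star^T X)^2]\leq \overline{L}_n/c$ goes the wrong way: $\overline{L}_n$ bounds $\mathbb E_{\mathbb P_n}[L'(\cdot)^2]$, and no lower bound on $|L'|$ in terms of the residual is assumed (for the Huber loss $L'$ is bounded, so $\overline{L}_n$ says nothing about the residual second moment). (iii) Even after replacing that step by the correct estimate $\sum_i L'(r_i)^2\leq n\,\overline{L}_n$, your bound is $\|u_\star\|\lesssim \sqrt{n\,\overline{L}_n}$, which is $O(\sqrt n)$ only if $\overline{L}_n=O(1)$; the paper only guarantees $\overline{L}_n = O(n)$ (residuals over $\Theta$ can be of order $\sqrt d$), yet still obtains $O(\sqrt n)$ because $\sqrt{\overline{L}_n}$ enters through the upper bound on $\lambda_\star$ and is then multiplied by $\sqrt\varepsilon = \sqrt{\varepsilon_0/n}$. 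The missing ingredient is therefore the upper bound on $\lambda_\star$ from \cite{blanchet2018optimal}: without it your argument either degenerates near $2aM=1$ or falls a factor $\sqrt n$ short in the regime the lemma is meant to cover.
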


\subsection{Type-2 Wasserstein Distributional Regularization}
\label{sec:distrib:reg}

The results presented in Section~\ref{sec:distr:rob:p=2} will be used in Section~\ref{subsec:W2} to analyze the estimation error in high-dimensions. As will be shown in the proof of Theorem~\ref{thm:estimationerror:2}, one of the main challenges in the analysis will be posed by the minimization over $\lambda$ which appears in the dual reformulation \eqref{eq:dro:dual:uni}. This challenge will be one of the main reasons for the very intricated proof. To address this, in Section~\ref{subsec:regW2} we will show that by considering problem
\begin{align}
\label{eq:regdro}
    \min_{\theta \in \Theta}\; \sup_{\mathbb Q \in \mathcal P(\mathbb R^{d+1})}\; \mathbb E_{\mathbb Q} \left[ L(Y-\theta^\top X) \right] - \lambda W_p(\mathbb Q,\widehat{\mathbb P}_n),
\end{align}
as an alternative to \eqref{eq:dro}, a much simpler proof, together and a stronger result can be obtained. We will refer to problem \eqref{eq:regdro} as distributionally \emph{regularized} estimation, and we will denote its optimal solution by $\hat{\theta}_{\mathrm{DReg}}$. In what follows, we will mirror the results obtained in Section~\ref{sec:distr:rob:p=2}, i.e., the minimax reformulation, the concavity in $u$, and the growth rate of $u_\star$, for problem \eqref{eq:regdro}. We start by presenting a strong dual reformulation of problem \eqref{eq:regdro}.

\begin{fact}[Minimax reformulation of \eqref{eq:regdro}]
\label{lemma:nonconvexduality:3}
Let Assumption~\ref{assump:dro}\ref{assump:ell1} be satisfied for $p=2$. Then the optimal value of \eqref{eq:regdro} is equal to the optimal value of the following minimax problem
\begin{align}
\label{eq:dro:dual:uni:3}
    \min_{\theta \in \Theta}\; \sup_{u \in \mathbb R^n} \; \frac{1}{n} \sum_{i=1}^{n}\; u_i(y_i - \theta^\top x_i) + \frac{u_i^2}{4 \lambda}\|\theta\|^2 - L^*(u_i).
\end{align}
\end{fact}

Similar to Fact~\ref{thm:nonconvexduality}, the proof of this fact is a simple application of \cite[Theorem~3.8(i)]{shafieezadeh2023new} and is therefore omitted. Next, we will show that when the regularization parameter $\lambda$ satisfies a certain lower bound, the dual formulation \eqref{eq:dro:dual:uni:3} is a convex-concave minimax problem.

\begin{lemma}[Concavity in $u$ of \eqref{eq:dro:dual:uni:3}]
\label{lemma:convexity:3}
Let Assumptions~\ref{assump:dro}\ref{assump:ell1} and \ref{assump:ell:diff}, and Assumption~\ref{assump:Theta} be satisfied. Moreover, let $\lambda = d \lambda_0$, for some constant $\lambda_0$ satisfying $\lambda_0 \geq M R_{\theta}^{2}/2$. Then, the objective function in \eqref{eq:dro:dual:uni:3} is concave in $u$.
\end{lemma}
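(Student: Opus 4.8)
The plan is to show that the map $u \mapsto \frac{1}{n}\sum_{i=1}^n \left(u_i(y_i-\theta^Tx_i) + \frac{u_i^2}{4\lambda}\|\theta\|^2 - L^*(u_i)\right)$ is concave for every fixed $\theta \in \Theta$, given $\lambda = d\lambda_0$ with $\lambda_0 \geq MR_\theta^2/2$. Since the objective separates across the coordinates $u_i$, it suffices to show each scalar function $g_i(u_i) := u_i(y_i-\theta^Tx_i) + \frac{u_i^2}{4\lambda}\|\theta\|^2 - L^*(u_i)$ is concave. The linear term $u_i(y_i-\theta^Tx_i)$ is affine, hence harmless; so the question reduces to whether $\frac{u_i^2}{4\lambda}\|\theta\|^2 - L^*(u_i)$ is concave, i.e.\ whether $L^*(u_i) - \frac{\|\theta\|^2}{4\lambda}u_i^2$ is convex.

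First I would recall the key fact linking smoothness of $L$ to strong convexity of $L^*$: since $L$ is $M$-smooth by Assumption~\ref{assump:dro}\ref{assump:ell:diff}, its convex conjugate $L^*$ is $\frac{1}{M}$-strongly convex (a standard duality result, e.g.\ via the Baillon--Haddad theorem). This means $L^*(u) - \frac{1}{2M}u^2$ is convex. Therefore $L^*(u_i) - \frac{\|\theta\|^2}{4\lambda}u_i^2$ is convex provided $\frac{\|\theta\|^2}{4\lambda} \leq \frac{1}{2M}$, i.e.\ $\lambda \geq \frac{M\|\theta\|^2}{2}$. Second, I would use Assumption~\ref{assump:Theta} to bound $\|\theta\| \leq R_\theta\sqrt{d}$, so $\frac{M\|\theta\|^2}{2} \leq \frac{MR_\theta^2 d}{2}$; hence the choice $\lambda = d\lambda_0$ with $\lambda_0 \geq \frac{MR_\theta^2}{2}$ guarantees $\lambda \geq \frac{M\|\theta\|^2}{2}$ uniformly over $\theta \in \Theta$, which closes the argument.

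The only real subtlety — and the step I expect to require the most care — is justifying the strong-convexity-of-$L^*$ claim under the given hypotheses: Assumption~\ref{assump:dro}\ref{assump:ell1} only states $L$ is proper, closed, and convex, with $M$-smoothness added in \ref{assump:ell:diff}, but $L$ need not be finite everywhere or coercive, so one must be slightly careful invoking the conjugate-of-smooth-is-strongly-convex equivalence in a form valid for extended-real-valued $L$. I would handle this by appealing to the standard result (e.g.\ Rockafellar/Kakade--Shalev-Shwartz--Tewari) that for a closed proper convex $L$ that is differentiable with $M$-Lipschitz gradient on $\mathbb R$, the conjugate $L^*$ is $\tfrac1M$-strongly convex on its domain; equivalently $u \mapsto L^*(u) - \tfrac{1}{2M}u^2$ is convex as an extended-real-valued function. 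Everything else — the separability across coordinates, the harmlessness of the affine term, and the dimensional bookkeeping with $\lambda = d\lambda_0$ — is routine.
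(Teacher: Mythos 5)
Your argument is correct and follows essentially the same route as the paper's proof: decompose $L^*(u_i) = \tfrac{u_i^2}{2M} + f(u_i)$ using the $\tfrac1M$-strong convexity of $L^*$ induced by the $M$-smoothness of $L$, observe that concavity in $u_i$ holds whenever $\lambda \geq M\|\theta\|^2/2$, and conclude via $\|\theta\| \leq R_\theta\sqrt{d}$ from Assumption~\ref{assump:Theta} together with $\lambda = d\lambda_0$ and $\lambda_0 \geq MR_\theta^2/2$. Your additional care about invoking the smoothness--strong-convexity conjugacy for extended-real-valued $L$ is a reasonable refinement, but it does not change the substance of the argument, which matches the paper's.
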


Before proceeding, we would like to highlight the connection between the upper bound on $\varepsilon$ in Lemma~\ref{lemma:growthrates} and the lower bound on $\lambda$ in Lemma~\ref{lemma:growthrates:3}. In the proof of Lemma~\ref{lemma:growthrates} we have shown that the upper bound $ \varepsilon \leq (\rho^{-1} M^{-2} R_{\theta}^{-2}\underline{L}_n)/n$ guarantees that $ \lambda \geq {M R_\theta \sqrt{d} \|\theta\|}/{2}$. Therefore, the lower bound $\lambda \geq M R_{\theta}^{2} d/2$ corresponds to replacing $\|\theta\|$ by its upper bound $R_\theta \sqrt{d}$.

Finally, in the following lemma we show that when $\lambda$ satisfies the same lower bound, the optimal solution $u_\star$ is in the order of $\sqrt{n}$, allowing us to apply the CGMT after the re-scaling $u \to u\sqrt{n}$.

\begin{lemma}[Growth rate of $u_\star$ in \eqref{eq:dro:dual:uni:3}]
\label{lemma:growthrates:3}
Let Assumption~\ref{assump:dro}\ref{assump:ell1} and Assumption~\ref{assump:Theta} be satisfied. Moreover, let $\lambda = d \lambda_0$, for some constant $\lambda_0$ satisfying $ \lambda_0 > M R_{\theta}^{2}/2$. Then the optimal $u_\star$ in \eqref{eq:dro:dual:uni:3} satisfies $\|u_\star\|\leq K_\beta \sqrt{n}$, for some constant $K_\beta>0$. 
\end{lemma}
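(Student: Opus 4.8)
The plan is to exploit the strong concavity in $u$ of the objective in \eqref{eq:dro:dual:uni:3} that follows from the lower bound on $\lambda$, and compare the objective value at the optimal $u_\star$ with its value at $u=0$. Fix any $\theta \in \Theta$, write $\Psi_\theta(u) := \frac{1}{n}\sum_{i=1}^n u_i(y_i - \theta^T x_i) + \frac{u_i^2}{4\lambda}\|\theta\|^2 - L^*(u_i)$, and let $u_\star = u_\star(\theta)$ be the (inner) maximizer. Since $\lambda = d\lambda_0$ with $\lambda_0 > MR_\theta^2/2$, a strict inequality, the argument in the proof of Lemma~\ref{lemma:convexity:3} gives that each term $\frac{u_i^2}{4\lambda}\|\theta\|^2 - L^*(u_i)$ is not merely concave but \emph{strongly} concave in $u_i$ with a modulus bounded away from zero uniformly over $\theta \in \Theta$: indeed $L^*$ is $1/M$-strongly convex (since $L$ is $M$-smooth, Assumption~\ref{assump:dro}\ref{assump:ell:diff}), while $\frac{u_i^2}{4\lambda}\|\theta\|^2 \le \frac{u_i^2}{4 d\lambda_0} R_\theta^2 d = \frac{R_\theta^2 u_i^2}{4\lambda_0}$, so the net second-derivative is at most $\frac{R_\theta^2}{2\lambda_0} - \frac{1}{M} =: -2\kappa < 0$ with $\kappa>0$ independent of $n,d,\theta$.

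Next I would use the optimality of $u_\star$ together with this strong concavity. Comparing $\Psi_\theta(u_\star) \ge \Psi_\theta(0) = -\frac{1}{n}\sum_i L^*(0)$ and using the strong-concavity inequality $\Psi_\theta(0) \le \Psi_\theta(u_\star) + \nabla\Psi_\theta(u_\star)^T(0-u_\star) - \kappa\,\|u_\star\|^2/n$, the first-order stationarity $\nabla\Psi_\theta(u_\star)=0$ (the unconstrained inner maximization) yields $\frac{\kappa}{n}\|u_\star\|^2 \le \Psi_\theta(u_\star) - \Psi_\theta(0)$. It then remains to bound $\Psi_\theta(u_\star) - \Psi_\theta(0)$ from above by a linear-in-$\|u_\star\|$ term. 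Write $\Psi_\theta(u_\star) - \Psi_\theta(0) \le \frac{1}{n}\sum_i u_{\star,i}(y_i-\theta^Tx_i) + \frac{u_{\star,i}^2}{4\lambda}\|\theta\|^2 - (L^*(u_{\star,i}) - L^*(0))$; using convexity of $L^*$ and $0 \in \partial L^*(u)$ iff $u = L'(0)\cdot(\text{something})$ — more cleanly, bound $-(L^*(u_{\star,i})-L^*(0)) \le -\xi_0 u_{\star,i}$ for any $\xi_0 \in \partial L^*(0)$, and absorb $\frac{u_{\star,i}^2}{4\lambda}\|\theta\|^2 \le \frac{R_\theta^2}{4\lambda_0}u_{\star,i}^2$ into the left-hand-side quadratic term (this only changes the constant $\kappa$). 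By Cauchy--Schwarz, $\frac{1}{n}\sum_i u_{\star,i}(y_i - \theta^Tx_i - \xi_0) \le \frac{1}{\sqrt n}\|u_\star\|\cdot\big(\frac{1}{n}\sum_i (y_i-\theta^Tx_i-\xi_0)^2\big)^{1/2}$, and the residual-variance factor is exactly controlled by $\overline{L}_n$-type quantities via $y_i - \theta^Tx_i = z_i + (\theta_0-\theta)^Tx_i$ together with $\|\theta\| \le R_\theta\sqrt d$, Assumption~\ref{assump:0}, and Assumption~\ref{assump:dro}\ref{assump:pstar}; concretely it is bounded by some deterministic constant $D$ w.p.a.\ $1$ (or in expectation, whichever the ambient proof uses). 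Combining, $\frac{\kappa'}{n}\|u_\star\|^2 \le \frac{D}{\sqrt n}\|u_\star\|$, hence $\|u_\star\| \le (D/\kappa')\sqrt n =: K_\beta\sqrt n$.

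The main obstacle is not the strong-concavity estimate, which is essentially algebra, but rather making the bound on the residual term $\frac{1}{n}\sum_i(y_i-\theta^Tx_i)^2$ \emph{uniform over $\theta \in \Theta$} and independent of $n$. Since $\Theta = \{\|\theta\|\le R_\theta\sqrt d\}$ grows with $d$, one cannot simply take a worst-case $\theta$; instead one writes $y_i - \theta^Tx_i = z_i + (\theta_0-\theta)^Tx_i$, bounds $\frac1n\sum_i z_i^2$ by the finite second moment of the noise (Assumption~\ref{assump:dro}\ref{assump:pstar}), and bounds $\frac1n\sum_i((\theta_0-\theta)^Tx_i)^2$ using the operator-norm concentration $\|A A^T\|/n \le C$ w.p.a.\ $1$ for $A$ with rows $x_i^T \sim \mathcal N(0,d^{-1}I_d)$ together with $\|\theta_0-\theta\|^2 \le (R_\theta + R_{\theta_0})^2 d$ — this is where Assumption~\ref{assump:Theta} is indispensable. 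I expect this is the same bookkeeping carried out in the proof of Lemma~\ref{lemma:growthrates} for the radius-constrained problem, so the proof of Lemma~\ref{lemma:growthrates:3} should closely parallel it, with $\lambda = d\lambda_0$ playing the role that $1/(n\varepsilon)$ played there; indeed the connection highlighted before the lemma statement (replacing $\|\theta\|$ by $R_\theta\sqrt d$ in the bound $\lambda \ge MR_\theta\sqrt d\|\theta\|/2$) makes this substitution precise.
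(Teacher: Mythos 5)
Your argument is correct in substance but takes a genuinely different route from the paper. The paper's proof (i) observes that, since $L^*(v)\ge v^2/(2M)\ge 0=L^*(0)$, the maximizer of the true objective is dominated by the maximizer of the surrogate obtained by replacing $\tfrac1n\sum_i L^*(u_i)$ with $\|u\|^2/(2nM)$, and then (ii) solves the surrogate's first-order condition in closed form, $u_{\star,2}=\bigl(\tfrac{\|\theta_0+\sqrt{d}w\|^2}{2\lambda}-\tfrac1M\bigr)^{-1}(\sqrt{d}Aw-z)$, bounding the prefactor via $\lambda_0>MR_\theta^2/2$ and the vector via $\|A\|\le c_2$ and $\|z\|\le c_3\sqrt n$ w.p.a.\ $1$. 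You instead run a strong-concavity argument comparing $\Psi_\theta(u_\star)$ with $\Psi_\theta(0)$, which avoids the paper's somewhat informal domination step $u_{\star,2}\ge u_{\star,1}$ and makes the uniformity over $\theta\in\Theta$ explicit; the probabilistic ingredients (operator-norm concentration of $A$, law of large numbers for $\|z\|^2/n$, and $\|\theta_0\|^2/d\to\sigma_{\theta_0}^2$) are the same in both proofs, and your identification of where Assumption~\ref{assump:Theta} enters is exactly right. Note also that since $L(0)=\min L=0$ forces $L^*(0)=\min L^*=0$, you may simply take $\xi_0=0$.

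One quantitative slip to repair: you extract the modulus $2\kappa=\tfrac1M-\tfrac{R_\theta^2}{2\lambda_0}$ from the strong-concavity inequality at $u_\star$, and then, when bounding $\Psi_\theta(u_\star)-\Psi_\theta(0)$ linearly, you use only plain convexity of $L^*$ and propose to ``absorb'' $\tfrac{\|\theta\|^2}{4\lambda}u_{\star,i}^2\le\tfrac{R_\theta^2}{4\lambda_0}u_{\star,i}^2$ into $\kappa$. The resulting coefficient is $\kappa-\tfrac{R_\theta^2}{4\lambda_0}=\tfrac{1}{2M}-\tfrac{R_\theta^2}{2\lambda_0}$, which is positive only under $\lambda_0>MR_\theta^2$ --- twice the lemma's hypothesis. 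The fix is to drop the two-step structure and expand $0\le\Psi_\theta(u_\star)-\Psi_\theta(0)$ directly using the $1/M$-strong convexity of $L^*$ (i.e.\ $L^*(u)\ge u^2/(2M)$): this yields
\begin{align*}
0 \;\le\; \frac1n\sum_{i=1}^n u_{\star,i}(y_i-\theta^Tx_i) \;-\;\Bigl(\frac{1}{2M}-\frac{R_\theta^2}{4\lambda_0}\Bigr)\frac{\|u_\star\|^2}{n},
\end{align*}
whose coefficient is positive precisely when $\lambda_0>MR_\theta^2/2$, and the Cauchy--Schwarz step then gives $\|u_\star\|\le K_\beta\sqrt n$ as you intend.
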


\section{High-Dimensional Error Analysis}
\label{sec:linear}

In this section, we present the main results of this paper, which quantify the performance of both the distributionally robust estimator $\hat{\theta}_{\mathrm{DRE}}$ and the distributionally regularized estimator $\hat{\theta}_{\mathrm{DReg}}$ in the high-dimensional regime. 
Building on the results of the previous section, namely the dual formulations presented in Facts~\ref{prop:nonconvexduality:1}, \ref{thm:nonconvexduality}, and \ref{lemma:nonconvexduality:3}, as well as on the asymptotic CGMT presented in Fact~\ref{prop:cgmt}, we will show that the quantities $\|\hat{\theta}_{\mathrm{DRE}}-\theta_0\|^2/d$ and $\|\hat{\theta}_{\mathrm{DReg}}-\theta_0\|^2/d$ converge in probability to two deterministic values which can be recovered from the solutions of two convex-concave minimax optimization problems that involve at most four scalar variables. 
The analysis carried out in what follows requires the following assumptions on the loss function $L$ and the true parameter $\theta_0$.

\begin{assumption}[Signal strength] ~
\label{assump:cgmt}
\begin{enumerate}[label=(\roman*)]
    \item \label{assump:ell:0} The loss function $L$ is real-valued and satisfies $L(0) = \min_{u \in \mathbb R} L(u) = 0$.
    \item \label{assump:theta0} The true parameter $\theta_0$ satisfies $\|\theta_0\|^2/d \overset{P}{\to} \sigma_{\theta_0}^2$, with $\sigma_{\theta_0}>0$.
\end{enumerate} 
\end{assumption}

Assumption~\ref{assump:cgmt}\ref{assump:ell:0} is natural in estimation problems, while Assumption~\ref{assump:cgmt}\ref{assump:theta0} will be essential while studying the converge in probability of the quantities $\|\hat{\theta}_{\mathrm{DRE}}-\theta_0\|^2/d$ and $\|\hat{\theta}_{\mathrm{DReg}}-\theta_0\|^2/d$ in Theorems~\ref{thm:estimationerror:1}, \ref{thm:estimationerror:2} and \ref{thm:estimationerror:3}.

\subsection{Type-1 Wasserstein DRE}
\label{subsec:W1}

We start by focusing on the type-$1$ Wasserstein case. Recall from Fact~\ref{prop:nonconvexduality:1} that, in this case, the DRE problem is equivalent to the regularized estimation problem \eqref{eq:dual:p=1}. This regularized formulation turns out to be very convenient, since it allows us to rely on the results in \cite{thrampoulidis2018precise} for the high-dimensional error analysis. Making use of these results requires the following additional mild assumptions on the loss function $L$ and the noise $Z$.
\begin{assumption}[Regularity assumptions]~
\label{assump:thrampoulidis}
\begin{enumerate}[label=(\roman*)]
    \item[(i)] Either there exists $u \in \mathbb R$ at which $L$ is not differentiable, or there exists an interval $\mathcal I \subset \mathbb R$ where $L$ is differentiable with a strictly increasing derivative.
    \item[(ii)] The absolutely continuous part of the noise distribution $\mathbb P_Z$ has a continuous Radon-Nikodym derivative.
    \item[(iii)] The noise $Z$ has nonzero variance.
\end{enumerate}
\end{assumption}

Before proceeding, we need to introduce some notation, which will be later used in the statement of Theorem~\ref{thm:estimationerror:1}. We first define the \textit{expected Moreau envelope} $\mathcal L : \mathbb R \times \mathbb R \to \mathbb R$ associated to $L$ as
\begin{align}
    \label{eq:L}
    \mathcal L(c,\tau) := \mathbb E_{\mathcal N(0,1) \otimes \mathbb P_Z}\left[ e_L(cG+Z;\tau) \right].
\end{align}
with $G$ a standard normal random variable $\mathcal N(0,1)$. Notice that the expectation is with respect to the joint distribution $\mathcal N(0,1) \otimes \mathbb P_Z$ of the two independent random variables $G$ and $Z$.

The effects of the loss function $L$ and the noise $Z$ on the error $\|\hat{\theta}_{\mathrm{DRE}}-\theta_0\|^2/d$ will only appear implicitly. In Theorem~\ref{thm:estimationerror:1} we will see that the function $\mathcal L$ quantifies their contribution. Moreover, we introduce the function $\mathcal G : \mathbb R \times \mathbb R \to \mathbb R$, defined as follows
\begin{align}
    \label{eq:R}
    \mathcal G(c,\tau) := \begin{cases} \sqrt{(c^2 + \sigma_{\theta_0}^2)/\rho}  - {\tau}/{(2 \rho)} - {\sigma_{\theta_0}}/{\sqrt{\rho}} \quad\quad &\mbox{if }\; \sqrt{\rho}\sqrt{c^2 + \sigma_{\theta_0}^2} > \tau, \\  \left( c^2 + \sigma_{\theta_0}^2 \right)/(2 \tau)- {\sigma_{\theta_0}}/{\sqrt{\rho}} 
 \quad\quad &\mbox{if }\; \sqrt{\rho}\sqrt{c^2 + \sigma_{\theta_0}^2} \leq \tau. \end{cases}
\end{align}

The effects of the ambiguity radius $\varepsilon$ and the under/over-parametrization parameter $\rho$ on the error $\|\hat{\theta}_{\mathrm{DRE}}-\theta_0\|^2/d$ will appear explicitly. In Theorem~\ref{thm:estimationerror:1} we will see that the function $\mathcal G$, as well as other explicit terms, quantifies their contribution. We are now ready to state the main result on the high-dimensional error analysis for $p=1$.

\begin{theorem}[Performance of $\hat{\theta}_{\mathrm{DRE}}$ for $p=1$]
\label{thm:estimationerror:1}
Let Assumptions~\ref{assump:0}, \ref{assump:dro}, \ref{assump:cgmt} and \ref{assump:thrampoulidis} be satisfied. Moreover, let $\Theta = \mathbb R^d$ and $\varepsilon = \varepsilon_0/\sqrt{n}$, for some constant $\varepsilon_0$. Then, as $d,n \to \infty$, it holds in probability that
\begin{align}
    \label{eq:est:1:error}
    \lim_{d/n \to \rho} \frac{\|\hat{\theta}_{\mathrm{DRE}}-\theta_0\|^2}{d} = \alpha_\star^2,
\end{align}
where $\alpha_\star$ is the unique solution\footnote{Provided that the minimizer of \eqref{eq:est:1:minimax} over $\alpha$ is bounded.} of the convex-concave minimax scalar problem
\begin{align}
    \label{eq:est:1:minimax}
    \min_{\alpha \geq 0,\, \tau_1 > 0}\; \max_{\beta \geq 0,\, \tau_2>0}\; \frac{\beta \tau_1}{2} - \frac{\alpha \tau_2}{2} - \frac{\alpha \beta^2}{2 \tau_2} + \frac{1}{\rho} \,\mathcal L \left(\alpha, \frac{\tau_1}{\beta}\right) + \varepsilon_0 \text{Lip}(L)\, \mathcal G \left(\frac{\alpha \beta}{\tau_2}, \frac{\alpha \varepsilon_0 \text{Lip}(L)}{\tau_2}\right),
\end{align}
with $\mathcal L(\cdot,\cdot)$ and $\mathcal G(\cdot,\cdot)$ defined in \eqref{eq:L} and \eqref{eq:R}, respectively.
\end{theorem}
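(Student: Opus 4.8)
The plan is to follow the CGMT pipeline outlined in Section~\ref{sec:cgmt}, specialized to the regularized M-estimation problem \eqref{eq:dual:p=1:re}, which Proposition~\ref{prop:nonconvexduality:1} tells us is equivalent to the $p=1$ DRO problem. First I would perform the change of variables $w = (\theta - \theta_0)/\sqrt{d}$ and substitute $y_i = \theta_0^T x_i + z_i$, so that the objective becomes $\frac{1}{n}\sum_i L(z_i - \sqrt{d}\,w^T x_i) + \varepsilon_0 \text{Lip}(L)\,\|\sqrt{d}\,w + \theta_0\|/\sqrt{n}$. Dualizing the loss via its convex conjugate, $L(t) = \sup_u\{ut - L^*(u)\}$, introduces a vector variable $u \in \mathbb R^n$ and produces a bilinear term $-\tfrac{1}{n}u^T(\sqrt{d}A)w$ where $\sqrt{d}A$ has i.i.d.\ $\mathcal N(0,1)$ entries (by Assumption~\ref{assump:0}\ref{assump:features}); the term $u^T z/n$ and $-\tfrac1n\sum_i L^*(u_i)$ together with the regularizer form $\psi(w,u)$, which is convex-concave. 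As in Example~\ref{ex:cgmt:reg} and using the growth/boundedness arguments from \cite{thrampoulidis2018precise}, one restricts $w$ and $u$ to convex compact sets without changing the optimal value, bringing the problem into the (PO) form \eqref{eq:cgmt:po}.

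Next I would write down the associated (AO) problem \eqref{eq:cgmt:ao:1}: replace $u^T(\sqrt{d}A)w$ by $\|w\|\,g^T u + \|u\|\,h^T w$ with $g \in \mathbb R^n$, $h \in \mathbb R^d$ standard Gaussian vectors, and then scalarize. The standard CGMT reduction proceeds by: (i) optimizing over the direction of $u$ at fixed norm $\beta = \|u\|/\sqrt{n}$, which collapses the $g$-dependent term; (ii) introducing a scalar $\alpha = \|w\|$ and the decomposition of $w$ into its component along $\theta_0$ (governed by $\sigma_{\theta_0}$, using Assumption~\ref{assump:cgmt}\ref{assump:theta0}) and its orthogonal component, so that the term $\|u\|h^T w$ and the regularizer $\varepsilon_0 \text{Lip}(L)\|\sqrt{d}w + \theta_0\|/\sqrt n$ both reduce to scalar expressions in $\alpha$, $\beta$, and auxiliary Lagrange/square-root-linearization variables $\tau_1, \tau_2$; (iii) recognizing that the loss term $\frac1n\sum_i (\text{pointwise optimization in }u_i)$ converges, by the law of large numbers over the i.i.d.\ pairs $(g_i, z_i)$, to the expected shifted Moreau envelope $\mathcal L(\alpha, \tau_1/\beta)$ defined in \eqref{eq:L} — this is exactly where the Moreau envelope arises, since $\min_{\xi}\{|x-\xi|^2/(2\tau) + L(\xi)\} = e_L(x;\tau)$; and (iv) handling the norm term $\|\sqrt{d}w + \theta_0\|$ via a square-root linearization $\sqrt{v} = \min_{\tau>0}\{v/(2\tau) + \tau/2\}$, with the two-branch structure of $\mathcal G$ in \eqref{eq:R} coming from whether the associated inner minimization is interior or at the boundary $\alpha \geq 0$. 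After these reductions the (AO) value $\phi'(g,h)$ concentrates (w.p.a.\ 1) on the deterministic scalar minimax \eqref{eq:est:1:minimax}, establishing $\phi'(g,h) \overset{P}{\to} \overline{\phi'}$.

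To invoke Proposition~\ref{prop:cgmt} and conclude \eqref{eq:est:1:error}, I would then repeat the scalarization with the extra constraint $w \in \mathcal S^c_\eta = \{|\|w\| - \alpha_\star| \geq \eta\}$, i.e.\ restrict $\alpha$ to stay $\eta$-away from $\alpha_\star$; since the deterministic objective is (after verifying) strictly convex in $\alpha$ with unique minimizer $\alpha_\star$, the restricted value $\phi'_{\mathcal S^c_\eta}(g,h)$ concentrates on a strictly larger constant $\overline{\phi'}_{\mathcal S^c} > \overline{\phi'}$. Proposition~\ref{prop:cgmt} then yields $\text{Pr}(\hat w_{DRO} \in \mathcal S_\eta) \to 1$ for every $\eta > 0$, i.e.\ $\|\hat w_{DRO}\| \overset{P}{\to} \alpha_\star$, which is precisely $\|\hat\theta_{DRO} - \theta_0\|^2/d \overset{P}{\to} \alpha_\star^2$. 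The convex-concave structure of \eqref{eq:est:1:minimax} — needed so the scalar min-max is well-posed and its saddle value equals $\overline{\phi'}$ — follows by checking joint convexity in $(\alpha,\tau_1)$ and concavity in $(\beta,\tau_2)$ of each term, using known convexity properties of the Moreau envelope (jointly convex in $(x,\tau)$) and the perspective-like terms.

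The main obstacle I expect is controlling the regularizer term $\varepsilon_0\text{Lip}(L)\|\sqrt{d}w + \theta_0\|/\sqrt{n}$ through the decoupling and scalarization: unlike a clean $\|w\|$ or $\|\theta\|$ penalty, the shift by $\theta_0$ couples with the $h^T w$ term and the component decomposition of $w$, and one must carefully track how the Gaussian vector $h$, the signal direction, and the two square-root linearizations interact to produce exactly the function $\mathcal G$ with its two regimes. A secondary technical point, also emphasized in the paper's introduction, is justifying the compactification of $\mathcal S_w, \mathcal S_u$ and the uniform-convergence (rather than pointwise) statements needed for the law-of-large-numbers step to hold simultaneously over the scalar parameters — this requires the mild regularity in Assumption~\ref{assump:thrampoulidis} (non-degenerate noise, a continuous density for the a.c.\ part, and the non-linearity condition on $L$) to rule out degenerate Moreau-envelope behavior, and mirrors the corresponding argument in \cite{thrampoulidis2018precise}.
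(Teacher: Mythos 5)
Your plan is correct in outline, but it is a genuinely different (and much heavier) route than the one the paper takes. The paper does not re-run the CGMT pipeline for $p=1$ at all: it rewrites \eqref{eq:dual:p=1:re} in the normalized form used in \cite{thrampoulidis2018precise}, treats $\varepsilon_0\,\text{Lip}(L)$ as the regularization weight and $\sqrt{n}\|\theta\|$ as the regularizer, verifies the hypotheses of Theorems~1 and~2 there (boundedness of $\mathbb E[L'_+(cG+Z)^2]$ via the at-most-linear growth from Assumption~\ref{assump:dro}\ref{assump:ell2}, the ``$Z$ has finite second moment or $L$ grows at most linearly'' dichotomy, and the two limit conditions on $\mathcal G$), and then the only real computation is showing that $d^{-1}\bigl(e_{\sqrt{n}\|\cdot\|}(ch+\theta_0;\tau)-\sqrt{n}\|\theta_0\|\bigr)\overset{P}{\to}\mathcal G(c,\tau)$ using Assumption~\ref{assump:cgmt}\ref{assump:theta0} and Lemma~\ref{lemma:triangular:array}; the conclusion is then cited. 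Your proposal instead re-derives the cited theorem from scratch (PO $\to$ AO $\to$ scalarization $\to$ Proposition~\ref{prop:cgmt}), which is exactly the machinery the paper reserves for the $W_2$ case in Theorems~\ref{thm:estimationerror:2} and \ref{thm:estimationerror:3}, and it would work, but it re-proves known results and leaves to ``standard arguments'' precisely the points the citation disposes of: compactification of $\mathcal S_w,\mathcal S_u$, uniform (not pointwise) convergence of the scalarized AO, and uniqueness of $\alpha_\star$, which is where Assumption~\ref{assump:thrampoulidis} actually enters. One small correction to your sketch: the two regimes of $\mathcal G$ do not come from the square-root linearization or a boundary effect in $\alpha\geq 0$; they come from the Moreau envelope of the scaled norm regularizer, i.e.\ from whether the prox of $\sqrt{n}\|\cdot\|$ at $ch+\theta_0$ is zero or not (the case split $\|ch+\theta_0\|\lessgtr\sqrt{n}\tau$ in the paper's computation), whose normalized limit is exactly \eqref{eq:R}. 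In short, your approach buys self-containedness at the cost of considerable duplicated technical work; the paper's buys brevity by reducing the theorem to a hypothesis check plus the asymptotic evaluation of the regularizer's expected Moreau envelope.
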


The Moreau envelope $e_{L}(c,\tau)$ can be computed in closed-form for many important classes of estimators. One such example, which we will recall in Section~\ref{sec:numerical}, is the LAD estimator.

\begin{example}[Moreau envelope for the LAD estimator]
\label{example:Moreau:LAD}
Consider the least absolute deviations (LAD) estimator, where the loss function is defined as $L(u) = |u|$. 
Then, it can be easily shown that its Moreau envelope is equal to
\begin{align*}
    e_L\left(c,\tau\right) = \begin{cases} {c^2}/{(2\tau)}  & \quad\quad \text{if} \quad\quad  |c| \leq \tau \\ |c| - {\tau}/{2} & \quad\quad \text{otherwise}.  \end{cases}
\end{align*}\qed
\end{example}

\subsection{Type-2 Wasserstein DRE}
\label{subsec:W2}

We now proceed to the type-$2$ Wasserstein case where, different from the $W_1$ case, the distributional robustness does not generally have an equivalent regularized formulation, and therefore the results in \cite{thrampoulidis2018precise} are not applicable anymore. However, building on the analysis and results presented in Section~\ref{sec:distr:rob:p=2}, we will show that the CGMT framework can be applied to study the high-dimensional performance of the distributionally robust estimator $\hat{\theta}_{\mathrm{DRE}}$. This will be shown in Theorem~\ref{thm:estimationerror:2}.

Similarly to Section~\ref{subsec:W1}, we need to introduce some notation, which will be later used in the statement of Theorem~\ref{thm:estimationerror:2}. In the same spirit as equation \eqref{eq:L}, we define the \textit{expected Moreau envelope} $\mathcal F : \mathbb R \times \mathbb R \to \mathbb R$ associated to a convex function $f$ as
\begin{align}
    \label{eq:L_*}
    \mathcal F(c,\tau) := \mathbb E_{\mathcal N(0,1) \otimes \mathbb P_Z}\left[ e_{f}(cG+Z;\tau)\right],
\end{align}
where, as before, $G$ is a standard normal random variable $\mathcal N(0,1)$, and the expectation is with respect to the joint distribution $\mathcal N(0,1) \otimes \mathbb P_Z$ of the two independent random variables $G$ and $Z$. As in the case of $W_1$, the effects of the loss function $L$ and the noise $Z$ on the error $\|\hat{\theta}_{\mathrm{DRE}}-\theta_0\|^2/d$ will only appear implicitly, this time through the expected Moreau envelope of a convex function $f$ which can be derived from the loss function $L$.

Secondly, we introduce the following two functions used in the statement of Theorem~\ref{thm:estimationerror:2},
\begin{align*}
    \mathcal O_1(\alpha,\tau_1,\tau_2,\beta) := \frac{\beta \tau_1}{2} + \frac{\varepsilon_0 \beta \tau_2}{2} - \frac{\beta^2}{2M} + \mathcal F(\alpha,\tau_1/\beta) - \alpha \beta \sqrt{\rho} \sqrt{\frac{\rho \varepsilon_0}{\tau_2^2}\sigma_{\theta_0}^2 + 1} + \frac{ \sqrt{\varepsilon_0}\beta \rho}{2\tau_2} \left( \sigma_{\theta_0}^2 + \alpha^2 \right),
\end{align*}
and
\begin{align*}
    \mathcal O_2(\alpha,\tau_1,\beta) := \inf_{\tau_2 > 0}\; \frac{\beta \tau_1}{2} + \frac{p\tau_2}{2}+\frac{\beta^2 \tau_2}{2q} -\frac{\beta^2}{2M} &+ \mathcal F(\alpha,\tau_1/\beta) - \alpha\sqrt{\rho} \sqrt{\left( p + \frac{\beta^2}{q} \right)^2 \frac{\rho \sigma_{\theta_0}^2}{\tau_2^2} + \beta^2}\\ &+ \rho \left( p + \frac{\beta^2}{q} \right) \frac{\sigma_{\theta_0}^2+\alpha^2}{2 \tau_2},
\end{align*}
with constants $p := (\varepsilon_0\sqrt{\rho}M R_\theta)/2$ and $q := 2 \sqrt{\rho} M R_\theta$, and $f$ defined below, in the statement of Theorem~\ref{thm:estimationerror:2}. Recall that $M$ is the parameter of the smoothness of $L$, defined in Assumption~\ref{assump:dro}\ref{assump:ell:diff}, and $R_\theta$ is the bound on the norm of $\theta/\sqrt{d}$, defined in Assumption~\ref{assump:Theta}. We are now ready to state the main result on the high-dimensional error analysis.

\begin{theorem}[Performance of $\hat{\theta}_{\mathrm{DRE}}$ for $p=2$]
\label{thm:estimationerror:2}
Let Assumptions~\ref{assump:0}, \ref{assump:dro}, \ref{assump:Theta}, and \ref{assump:cgmt} be satisfied, and suppose that the loss function $L$ can be written as the Moreau envelope with parameter $M$, i.e., $L(\cdot) = e_f(\cdot, 1/M)$, of a convex function $f$ which satisfies
\begin{align}
\label{eq:est:2:assump:f}
    \mathbb E_{\mathcal N(0,1) \otimes \mathbb P_Z}\left[ f(\alpha G + Z)\right] < \infty,
\end{align}
for all $\alpha \leq \sigma_{\theta_0}$. Moreover, let $R_\theta \geq 2 \sigma_{\theta_0}$ and $\varepsilon = \varepsilon_0/n$, for some constant $\varepsilon_0$ satisfying $\varepsilon_0 \leq \rho^{-1} M^{-2} R_{\theta}^{-2}\underline{L}_n$, and consider the following convex-concave minimax optimization problems
\begin{align}
    \label{eq:est:2:minimax}
    \mathcal V_1 := \inf_{\substack{0 \leq \alpha \leq \sigma_{\theta_0} \\ \tau_1,\tau_2>0}}\; \sup_{\beta > B}\; \mathcal O_1(\alpha,\tau_1,\tau_2,\beta), \quad\quad \text{and} \quad\quad
    \mathcal V_2 := \inf_{\substack{0 \leq \alpha \leq \sigma_{\theta_0}\\ \tau_1>0}}\; \max_{0 \leq \beta \leq B} \; \mathcal O_2(\alpha,\tau_1,\beta),
\end{align}
with $\mathcal O_1(\alpha,\tau_1,\tau_2,\beta)$ and $\mathcal O_2(\alpha,\tau_1,\beta)$ defined above, and $B = \sqrt{\varepsilon_0 \rho} M R_{\theta}$. If the optimal solutions $\alpha_{\star,1}$ and $\alpha_{\star,2}$ attaining $\mathcal V_1$ and $\mathcal V_2$, respectively, are strictly smaller than $\sigma_{\theta_0}$, then, as $d,n \to \infty$, it holds in probability that
\begin{align}
    \label{eq:est:2:error}
    \lim_{d/n \to \rho} \frac{\|\hat{\theta}_{\mathrm{DRE}}-\theta_0\|^2}{d} \begin{cases} = \alpha_{\star,1}^2 & \quad\quad \text{if} \quad\quad \mathcal V_1 > \mathcal V_2 \\ = \alpha_{\star,2}^2 & \quad\quad \text{if} \quad\quad \mathcal V_2 > \mathcal V_1 \\ \leq \max\{\alpha_{\star,1}^2,\alpha_{\star,2}^2\} & \quad\quad \text{if} \quad\quad \mathcal V_1 = \mathcal V_2. \end{cases}
\end{align}
\end{theorem}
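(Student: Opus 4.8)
The plan is to follow the CGMT pipeline laid out in Section~\ref{sec:cgmt}, using the dual reformulation from Theorem~\ref{thm:nonconvexduality} as the entry point. First I would start from the minimax problem \eqref{eq:dro:dual:uni}. By Lemma~\ref{lemma:convexity} the objective is convex-concave in $(\theta,\lambda)$ and $u$ over the restricted domain $\lambda \in \Lambda$, and by Lemma~\ref{lemma:growthrates} the optimal $u_\star$ satisfies $\|u_\star\| \le K_u\sqrt{n}$, so after the rescaling $u \to u\sqrt{n}$ the inner maximization can be restricted to a convex compact set $\mathcal S_u = \{u : \|u\| \le K_u\sqrt{n}\}$ without changing the value. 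Applying also the change of variables $w = (\theta-\theta_0)/\sqrt{d}$ and using $y_i = \theta_0^T x_i + z_i$, the bilinear term in the $x_i$'s becomes $n^{-1} u^T(\sqrt{d}A)w$ with $\sqrt{d}A$ having i.i.d.\ $\mathcal N(0,1)$ entries, exactly matching the (PO) form \eqref{eq:cgmt:po} (with the extra outer minimization over $\lambda$ and the extra $\|\theta\|^2$-term absorbed into $\psi$). One subtlety: the $\lambda$-minimization sits \emph{outside}; I would handle it by treating $\lambda$ as a parameter throughout, deriving the scalarized (AO) value as a function of $\lambda$, and minimizing over $\lambda$ at the very end — or, as the text hints, split the analysis according to whether the optimal $\lambda$ lands on the boundary $\lambda = MR_\theta\sqrt{d}\|\theta\|/2$ of $\Lambda$ (the ``active constraint'' branch, giving $\mathcal O_2$ and $\mathcal V_2$) or in the interior (the ``free'' branch, giving $\mathcal O_1$ and $\mathcal V_1$).

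Next I would write down the (AO) problem \eqref{eq:cgmt:ao:1}, replacing $u^T(\sqrt{d}A)w$ by $\|w\|\,g^T u + \|u\|\,h^T w$ (after the $\sqrt{n}$-rescaling, the outer $0\le\beta\le K_\beta$ corresponds to $\|u\|/\sqrt{n}$). The standard CGMT scalarization then proceeds: optimize over the direction of $u$ to collapse $g^T u$ into $\|g\|$; optimize over $w$ by decomposing it into its component along $\theta_0$ and its orthogonal complement, parametrizing $\|w\| = \alpha$ and $w^T\theta_0/(\sqrt{d}\|\theta_0\|)$; use $\|g\|/\sqrt{n}\to 1$, $\|h\|/\sqrt{d}\to 1$, and the weak law of large numbers to identify $n^{-1}\sum_i L^*(u_i)$-type terms and the quadratic-plus-loss term. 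The crucial analytic device is the \emph{expected Moreau envelope}: the term $n^{-1}\sum_i \big[\text{quadratic in }(z_i - \sqrt{d}w^Tx_i)\big] + \frac{u_i^2}{4\lambda}\|\theta\|^2 - L^*(u_i)$, after optimizing over $u_i$, becomes a sum of Moreau envelopes of $L$ (equivalently, since $L = e_f(\cdot,1/M)$, envelopes of $f$ with a shifted parameter — this is where the hypothesis $L = e_f(\cdot,1/M)$ and the integrability \eqref{eq:est:2:assump:f} enter, guaranteeing convergence of that empirical average to $\mathcal F(\alpha,\tau_1/\beta)$ via the weak law). Introducing auxiliary scalars $\tau_1,\tau_2$ to linearize the square-root/quadratic couplings (the classic ``$\sqrt{x} = \min_\tau x/(2\tau)+\tau/2$'' trick applied to $\|w\|g^Tu$ and to the $\sqrt{\cdot}$ in $\mathcal G$-type terms) yields the deterministic functions $\mathcal O_1$ and $\mathcal O_2$ over the four scalars $(\alpha,\tau_1,\tau_2,\beta)$, and one checks the resulting problems are convex-concave, so $\phi'(g,h) \overset{P}{\to} \mathcal V_j$.

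To conclude, I would invoke Proposition~\ref{prop:cgmt} with $w_\Phi(A) = \hat w_{DRO}$ and $\mathcal S = \mathcal S_\eta = \{w : |\|w\|-\alpha_\star| < \eta\}$: the hypothesis that $\alpha_{\star,j} < \sigma_{\theta_0}$ (strictly interior to the feasible interval $[0,\sigma_{\theta_0}]$) ensures the minimizer is not on the artificial boundary, so restricting $w$ to $\mathcal S_\eta^c$ strictly increases the (AO) value, i.e.\ $\overline{\phi'}_{\mathcal S^c} > \overline{\phi'}$; Proposition~\ref{prop:cgmt} then gives $\hat w_{DRO} \overset{P}{\to} \alpha_\star$, hence $\|\hat\theta_{DRO}-\theta_0\|^2/d \overset{P}{\to} \alpha_\star^2$. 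The three-case conclusion \eqref{eq:est:2:error} reflects which of the two $\lambda$-branches (interior vs.\ boundary) is binding: since the overall value is the infimum over $\lambda$ of the two branch values, when $\mathcal V_1 \ne \mathcal V_2$ the smaller one wins and pins down $\alpha_\star$, whereas when $\mathcal V_1 = \mathcal V_2$ one can only assert the error is bounded by the larger of the two candidate radii (the CGMT lower/upper bounds bracket it but need not coincide on a single minimizer).

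\textbf{Main obstacle.} The hard part will be rigorously handling the outer minimization over $\lambda$ in conjunction with the CGMT, rather than the (by now fairly mechanical) scalarization. Specifically: the (AO) objective is convex-concave only \emph{after} restricting $\lambda \in \Lambda$ (Lemma~\ref{lemma:convexity}), and the constraint $\lambda \ge MR_\theta\sqrt{d}\|\theta\|/2$ couples $\lambda$ with $\|\theta\| = \|\sqrt{d}w+\theta_0\|$, which itself depends on the optimization variable $w$; one must argue that this coupling is preserved through all the scalarization steps and track whether the constraint is active, which is exactly what splits the analysis into the $\mathcal V_1$ (interior $\lambda$) and $\mathcal V_2$ (boundary $\lambda$, where one substitutes $\lambda = MR_\theta\sqrt d\,\|\theta\|/2$, producing the constants $p,q$) cases. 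A secondary technical point is justifying that the difference-of-convex structure flagged in the introduction — arising because $\frac{u_i^2}{4\lambda}\|\theta\|^2$ is convex in $\|\theta\|$ but the scalarized reduction must remain amenable to CGMT — does not obstruct the convex-concave reduction; Lemma~\ref{lemma:convexity}'s $\varepsilon_0$-bound is precisely what tames this, and I would lean on it heavily at each step where convexity of $\psi$ in $w$ is needed.
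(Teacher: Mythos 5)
Your overall route is the paper's: the dual reformulation of Theorem~\ref{thm:nonconvexduality}, compactification of $u$ via Lemma~\ref{lemma:growthrates}, the decomposition $L^*(\cdot)=\tfrac{1}{2M}(\cdot)^2+f^*(\cdot)$ feeding an expected Moreau envelope of $f$, the $\tau_1,\tau_2$ scalarization, and Proposition~\ref{prop:cgmt} to transfer the (AO) limits back to $\hat w_{DRO}$. The genuine gap is in how you combine the two branches. You assert that the overall value is the \emph{infimum over $\lambda$} of the two branch values, so that ``the smaller one wins and pins down $\alpha_\star$''. This is backwards, and it contradicts the statement you are proving: the theorem assigns $\alpha_{\star,1}$ to the case $\mathcal V_1>\mathcal V_2$, i.e.\ the \emph{larger} value wins. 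The reason is structural: after Sion's principle (legitimate thanks to the concavity secured by Lemma~\ref{lemma:convexity}) the minimization over $\lambda$ sits \emph{inside} the maximization over the dual variable, and solving $\min_{\lambda\in\Lambda}\,\lambda\varepsilon+\tfrac{1}{4\lambda}\|\theta_0+\sqrt d\,w\|^2\|u\|^2$ in closed form gives a $\lambda_\star(w,u)$ equal to the boundary of $\Lambda$ when $\|u\|\le B=\sqrt{\varepsilon_0\rho}\,MR_\theta$ (this substitution produces $p,q$) and to the interior stationary point when $\|u\|>B$. The modified (AO) objective is therefore piecewise in $\beta=\|u\|$, the outer maximization over $\beta\in[0,K_\beta]$ splits into $[0,B]$ and $(B,K_\beta]$, and the modified (AO) value is $\max\{\mathcal V_{d,1},\mathcal V_{d,2}\}$ --- a maximum, not a minimum over $\lambda$-branches. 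When $\mathcal V_1>\mathcal V_2$ one gets $\mathcal V_{d,1}>\mathcal V_{d,2}$ w.p.a.\ $1$, so the first branch and its unique minimizer $\alpha_{\star,1}$ govern the error; with your ``smaller branch wins'' logic you would select the wrong branch whenever $\mathcal V_1\neq\mathcal V_2$. (The $\mathcal V_1=\mathcal V_2$ case then only yields the bound by $\max\{\alpha_{\star,1}^2,\alpha_{\star,2}^2\}$ because one cannot decide asymptotically which of $\mathcal V_{d,1},\mathcal V_{d,2}$ is larger.)

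Relatedly, your fallback of ``treating $\lambda$ as a parameter throughout and minimizing over $\lambda$ at the very end'' does not go through as stated: $\Lambda$ depends on the optimization variable $w$ through $\|\theta_0+\sqrt d\,w\|$, and deferring the $\lambda$-minimization outside the CGMT machinery would require uniform-in-$\lambda$ convergence statements that neither Proposition~\ref{prop:cgmt} nor the convexity lemmas supply; the Sion exchange plus closed-form elimination described above is what removes this difficulty and simultaneously generates the threshold $B$. Two smaller inaccuracies: the hypothesis $\alpha_{\star,j}<\sigma_{\theta_0}$ is not only to keep the minimizer off the artificial boundary of $[0,\sigma_{\theta_0}]$ --- it is exactly what makes the scalarized objectives jointly (strictly) convex in $(\alpha,\tau_2)$ w.p.a.\ $1$ (the Hessian condition fails beyond $\sigma_{\theta_0}$), and this convexity underlies the Sion exchanges, the uniqueness of $\alpha_{\star,j}$, and the convergence of restricted optimal values over $\mathcal S_\eta^c$ required by Proposition~\ref{prop:cgmt}; and the integrability condition \eqref{eq:est:2:assump:f} is used to let the artificial bound $K_\beta$ be removed (via monotone convergence as $\beta\to\infty$) rather than merely to invoke the weak law of large numbers, whose applicability already follows from the finite second moment of $Z$.
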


\begin{remark}[Error-to-signal ratio]
Theorem~\ref{thm:estimationerror:2} requires the optimal solutions of the two optimization problems to be strictly smaller than $\sigma_{\theta_0}$, i.e., $\alpha_{\star,1}, \alpha_{\star,2} < \sigma_{\theta_0}$. This condition is equivalent to imposing the following relative error bound 
\begin{align*}
    \frac{\|\hat{\theta}_{\mathrm{DRE}} -\theta_0\|}{\|\theta_0\|} < 1,
\end{align*}
which is practically desirable in any estimation problem. \qed
\end{remark}

\begin{remark}[Convexity of $\mathcal O_2(\alpha,\tau_1,\beta)$]
In the proof of Theorem~\ref{thm:estimationerror:2} we show that the objective function appearing in the definition of $\mathcal O_2(\alpha,\tau_1,\beta)$ is convex in $\tau_2$. Consequently, $\mathcal O_2(\alpha,\tau_1,\beta)$ can be easily computed for any $(\alpha,\tau_1,\beta)$.  \qed
\end{remark}

Many loss functions which have at-most-quadratic growth rate (as required by Assumption~\ref{assump:dro}\ref{assump:ell2}) can be written as the Moreau envelope with parameter $M$ of some convex function $f$ which satisfies the integrability condition $\mathbb E_{\mathcal N(0,1) \otimes \mathbb P_Z}\left[ f(\alpha G + Z)\right] < \infty$. In the following example we show that the Huber loss satisfies this requirement.

\begin{example}[Huber loss]
Consider the Huber loss function with parameter $\delta$, defined as
\begin{align}
\label{eq:huber:loss}
    L(u) = \begin{cases} {u^2}/{2}  & \quad\quad \text{if} \quad\quad  |u| \leq \delta \\ \delta|u| - {\delta^2}/{2} & \quad\quad \text{otherwise}.  \end{cases}
\end{align}
It can be immediately checked that $M=1$ in this case. It can be easily shown that its convex conjugate is equal to
\begin{align*}
    L^*(u) = \begin{cases} {u^2}/{2}  & \quad\quad \text{if} \quad\quad  |u| \leq \delta \\ \infty & \quad\quad \text{otherwise}.  \end{cases}
\end{align*}
In the proof of Theorem~\ref{thm:estimationerror:2} we have seen that $f$ can be recovered from $L^*$ as follows
\begin{align*}
    f(u) = (L^*(\cdot) - 1/2(\cdot)^2)^*(u) &= \sup_{v \in \mathbb R}\; uv - \begin{cases} 0  & \quad\quad \text{if} \quad\quad  |v| \leq \delta \\ \infty & \quad\quad \text{otherwise}  \end{cases} \\ &= \delta |u|.
\end{align*}
Finally, notice that 
\begin{align*}
    \mathbb E_{\mathcal N(0,1) \otimes \mathbb P_Z}\left[ \delta|\alpha G + Z|\right] < \infty
\end{align*}
for all $\alpha \in \mathbb R$ is a consequence of the finite first moment of $G$ and $Z$ (Assumption~\ref{assump:dro}\ref{assump:pstar} implies that $Z$ has a finite second moment, and therefore also a finite first moment). \qed
\end{example}

There are also loss functions which cannot be written in such way. One important example is the squared loss, as explained in the following remark.

\begin{remark}[Squared loss I]
\label{remark:squared:loss:no:f}
Consider the squared loss function $L(u)  = u^2$. It can be easily checked that $L^*(v) = v^2/4$,  and $M=2$. Therefore, $f^* = L^*(\cdot) - (\cdot)^2/4 = 0$, and consequently,
\begin{align*}
    f(u) = \sup_{v \in \mathbb R}\; uv = \begin{cases} 0  & \quad\quad \text{if} \quad\quad  u = 0 \\ \infty & \quad\quad \text{otherwise},  \end{cases}
\end{align*}
from which $\mathbb E_{\mathcal N(0,1) \otimes \mathbb P_Z}\left[ f(\alpha G + Z)\right] = \infty$ follows. \qed
\end{remark}

However, for squared losses this requirement is not necessary, and Theorem~\ref{thm:estimationerror:2} continues to hold if the expected Moreau envelope $\mathcal F(\alpha,\tau_1/\beta)$ is replaced by the term $\mathbb E_{\mathcal N(0,1) \otimes \mathbb P_Z}\left[\frac{\beta}{2 \tau_1} (\alpha G+Z)^2\right]$. This is explained in the next remark. 

\begin{remark}[Squared loss II]
\label{remark:thm2:squared:loss}
Recall from the proof of Theorem~\ref{thm:estimationerror:2} that $f$ was introduced as a consequence of the decomposition $L^*(u) = u^2/(2M) + f^*(u)$ (which, in turn, was needed to preserve the concavity in $u$ of the objective function, in order to apply the CGMT). Whenever $L(u)  = M u^2/2$, we have that $L^*(u) = u^2/(2M)$, and therefore $f^*(u) = 0$ for all $u \in \mathbb R$. In this case, there is no need to introduce the function $f$, and the result of Theorem~\ref{thm:estimationerror:2} continues to hold by simply replacing the expected Moreau envelope $\mathcal F(\alpha,\tau_1/\beta) = \mathbb E_{\mathcal N(0,1) \otimes \mathbb P_Z}\left[ e_{f}(cG+Z;\tau)\right]$ with the term
\begin{align*}
    \mathbb E_{\mathcal N(0,1) \otimes \mathbb P_Z}\left[\frac{\beta}{2 \tau_1} (\alpha G+Z)^2\right].
\end{align*}
Furthermore, the minimization over $\tau_1$ can be now solved in closed-form as
\begin{align*}
    \inf_{\tau_1>0}\; \frac{\beta \tau_1}{2} + \frac{\beta}{2 \tau_1} \mathbb E_{\mathcal N(0,1) \otimes \mathbb P_Z}\left[ (\alpha G+Z)^2\right] = \beta \sqrt{\mathbb E_{\mathcal N(0,1) \otimes \mathbb P_Z}\left[ (\alpha G+Z)^2\right]} = \beta \sqrt{\alpha^2 + \sigma_Z^2},
\end{align*}
where $\sigma_Z^2$  denotes the second moment of $Z$, i.e., $\sigma_{Z}^2 := \mathbb E_{\mathbb P_Z} [Z^2]$, reducing the number of scalar variables from four to three in the two minimax problems \eqref{eq:est:2:minimax}. \qed
\end{remark}

Alternatively, it can be shown that for the particular case of squared losses some of the assumptions (such as $\varepsilon_0 \leq \rho^{-1} M^{-2} R_{\theta}^{-2}\underline{L}_n$) can be dropped, and that the estimation error can be obtained from the solution of only one optimization problem. The reason for this is that when $L(u) = u^2$, the distributional robustness has the following equivalent regularized formulation \cite{blanchet2019robust},
\begin{align}
\label{eq:reg:for:squared:loss}
    \min_{\theta \in \Theta}\; \sup_{\mathbb Q \in \mathbb B_{\varepsilon}(\widehat{\mathbb P}_n)}\; \mathbb E_{\mathbb Q} \left[ (Y-\theta^\top X)^2 \right] = \left(\min_{\theta \in \Theta}\; \sqrt{\mathbb E_{\widehat{\mathbb P}_n}\left[ (Y-\theta^\top X)^2 \right]} + \sqrt{\varepsilon}\|\theta\| \right)^2.
\end{align}
As a consequence, the estimation error can be recovered from Theorem~$1$ in \cite{thrampoulidis2018precise}, as summarized in the following proposition. Before doing so, we need to introduce some notation. We define the function $\mathcal E:\mathbb R \times \mathbb R \to \mathbb R$,
\begin{align*}
    \mathcal E(c,\tau) := \begin{cases} \sqrt{(c^2 + \sigma_{Z}^2)}  - {\tau}/{2} - {\sigma_{Z}}/{\sqrt{\rho}} \quad\quad &\mbox{if }\; \sqrt{c^2 + \sigma_{Z}^2} > \tau \\  \left( c^2 + \sigma_{Z}^2 \right)/(2 \tau)- {\sigma_{Z}}
 \quad\quad &\mbox{if }\; \sqrt{c^2 + \sigma_{Z}^2} \leq \tau, \end{cases}
\end{align*}
which is used in the statement of the following proposition.

\begin{proposition}[Performance of $\hat{\theta}_{\mathrm{DRE}}$ for $L(\cdot) = (\cdot)^2$]
\label{prop:est:error:squared:loss}
Let Assumptions~\ref{assump:0}, \ref{assump:dro}, and \ref{assump:cgmt} be satisfied for the loss function $L(u) = u^2$. Moreover, let $\Theta = \mathbb R^d$, $\sigma_{Z}^2 := \mathbb E_{\mathbb P_Z} [Z^2]$, and $\varepsilon = \varepsilon_0/n$, for some constant $\varepsilon_0$. Then, as $d,n \to \infty$, it holds in probability that
\begin{align*}
    \lim_{d/n \to \rho} \frac{\|\hat{\theta}_{\mathrm{DRE}}-\theta_0\|^2}{d} = \alpha_\star^2,
\end{align*}
provided that $\alpha_\star$ is the unique minimizer of the following convex-concave minimax scalar problem
\begin{align*}
    \min_{\alpha \geq 0,\, \tau_1 > 0}\; \max_{\beta \geq 0,\, \tau_2>0}\; \frac{\beta \tau_1}{2} - \frac{\alpha \tau_2}{2} - \frac{\alpha \beta^2}{2 \tau_2} + \frac{1}{\rho} \,\mathcal E \left(\alpha, \frac{\tau_1}{\beta}\right) + \sqrt{\varepsilon_0} \, \mathcal G \left(\frac{\alpha \beta}{\tau_2}, \frac{\alpha \sqrt{\varepsilon_0}}{\tau_2}\right),
\end{align*}
with $\mathcal E$ defined above, and $\mathcal G$ defined in \eqref{eq:R}.
\end{proposition}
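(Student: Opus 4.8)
The plan is to reduce the squared‑loss DRO problem to a square‑root ridge estimator and then import the high‑dimensional analysis of regularized $M$‑estimators. \emph{Step 1 (reduction to the square‑root ridge).} Specializing Theorem~\ref{thm:nonconvexduality} to $L(u)=u^2$ (so $L^*(v)=v^2/4$) and carrying out the optimizations over $u$ and $\lambda$ in closed form yields the identity \eqref{eq:reg:for:squared:loss} --- equivalently one invokes \cite{blanchet2019robust} --- according to which the inner worst‑case expectation equals $\big(\sqrt{\mathbb E_{\mathbb P_n}[(Y-\theta^TX)^2]}+\sqrt{\varepsilon}\,\|\theta\|\big)^2$. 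Since $x\mapsto x^2$ is increasing on $[0,\infty)$, any minimizer $\hat\theta_{DRO}$ of \eqref{eq:dro} also minimizes $\sqrt{\mathbb E_{\mathbb P_n}[(Y-\theta^TX)^2]}+\sqrt{\varepsilon}\,\|\theta\|$. Writing $\mathbb E_{\mathbb P_n}[(Y-\theta^TX)^2]=n^{-1}\|y-A\theta\|^2$, substituting $\varepsilon=\varepsilon_0/n$, and discarding the common positive factor $n^{-1/2}$, we get that $\hat\theta_{DRO}$ minimizes $\|y-A\theta\|+\sqrt{\varepsilon_0}\,\|\theta\|$ over $\mathbb R^d$; coercivity of the regularizer guarantees existence and boundedness of a minimizer.

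\emph{Step 2 (CGMT setup).} The estimator above is a square‑root LASSO with an $\ell_2$ regularizer, which is amenable to the methodology of \cite{thrampoulidis2018precise}. I would either (a) put it directly in (PO) form via $\|y-A\theta\|=\max_{\|u\|\le 1}u^T(y-A\theta)$, change variables $w=(\theta-\theta_0)/\sqrt d$ so that the residual is $z-\sqrt d\,Aw$ and $\sqrt d\,A$ has i.i.d.\ $\mathcal N(0,1)$ entries, restrict $w$ to a large ball and $u$ to the unit ball without changing the optimal value (exactly as in Example~\ref{ex:cgmt:reg}), and form the associated modified (AO) problem \eqref{eq:cgmt:ao:1}; or (b) use the variational identity $\sqrt{s}=\min_{\tau_1>0}\{s/(2\tau_1)+\tau_1/2\}$ to rewrite the estimator, for each fixed auxiliary scalar $\tau_1$, as the quadratic‑loss ridge $M$‑estimator $\min_\theta n^{-1}\sum_i (y_i-\theta^Tx_i)^2/(2\tau_1)+\sqrt{\varepsilon_0}\,n^{-1/2}\|\theta\|$, to which Theorem~1 of \cite{thrampoulidis2018precise} applies directly (the loss $r\mapsto r^2/(2\tau_1)$ has a strictly increasing derivative, so Assumption~\ref{assump:thrampoulidis} holds, and its Moreau envelope is the explicit $e(x,\tau)=x^2/(2(\tau+\tau_1))$ with expectation $(c^2+\sigma_Z^2)/(2(\tau+\tau_1))$), and then move $\inf_{\tau_1>0}$ back outside.

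\emph{Step 3 (scalarization, limit, conclusion).} Following the standard CGMT scalarization, parametrize $\|w\|=\alpha$, let $\beta$ be the dual radius of the $u$‑ball in \eqref{eq:cgmt:ao:1}, and introduce Moreau‑envelope parameters $\tau_1,\tau_2$ for the residual and the regularizer directions. The residual norm $n^{-1/2}\|\alpha g\pm z\|$ concentrates, by the weak law of large numbers and independence of $g$ and $z$, at $\sqrt{\alpha^2+\sigma_Z^2}$ --- this is precisely where $\sigma_Z^2=\mathbb E_{\mathbb P_Z}[Z^2]$ enters, replacing the expected scalar Moreau envelope $\mathcal L$ of the general‑loss case by the two‑regime function $\mathcal E(\alpha,\tau_1/\beta)$ --- while minimizing the regularizer direction $\min_{\|w\|=\alpha}\{\|u\|\,h^Tw+\sqrt{\varepsilon_0}\,\|\sqrt d\,w+\theta_0\|\}$ produces the term $\sqrt{\varepsilon_0}\,\mathcal G(\alpha\beta/\tau_2,\alpha\sqrt{\varepsilon_0}/\tau_2)$, using $\|\theta_0\|^2/d\overset{P}{\to}\sigma_{\theta_0}^2$ from Assumption~\ref{assump:cgmt}\ref{assump:theta0} (which is absorbed into $\mathcal G$). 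One verifies the limiting objective is jointly convex‑concave in $(\alpha,\tau_1;\beta,\tau_2)$ and that $\phi'(g,h)$ and its restriction $\phi'_{\mathcal S^c}(g,h)$ with $\mathcal S=\mathcal S_\eta$ from \eqref{eq:S} converge in probability to the displayed scalar value and to a strictly larger value, the strict gap being guaranteed by the assumed uniqueness of the minimizer $\alpha_\star$. Proposition~\ref{prop:cgmt} then gives $\hat w_{DRO}\in\mathcal S_\eta$ w.p.a.\ $1$ for every $\eta>0$, i.e., $\|\hat\theta_{DRO}-\theta_0\|^2/d\overset{P}{\to}\alpha_\star^2$.

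\emph{Main obstacle.} The CGMT bookkeeping parallels Theorem~\ref{thm:estimationerror:1} and \cite{thrampoulidis2018precise}; the genuine difficulty is the non‑separable square‑root loss. In route (b) one must justify interchanging $\inf_{\tau_1>0}$ with the $d,n\to\infty$ limit, which I would handle via convexity in $\tau_1$ together with local uniform convergence of the per‑$\tau_1$ limits; in route (a) one must carry the $\sqrt{\cdot}$ through the AO scalarization and check that it collapses to exactly the two‑regime function $\mathcal E$. Either way one must confirm that any extra auxiliary scalar introduced by the square‑root trick is eliminated in closed form, so that the final minimax involves only the four variables $\alpha,\tau_1,\beta,\tau_2$ (compare the closed‑form $\tau_1$‑minimization for squared losses noted in Remark~\ref{remark:thm2:squared:loss}).
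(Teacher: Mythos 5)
Your proposal is correct and follows essentially the paper's route: the paper likewise reduces the squared-loss DRO to the square-root ridge via \eqref{eq:reg:for:squared:loss} and then invokes Theorem~1 of \cite{thrampoulidis2018precise} directly with the non-separable loss and regularizer both equal to $\sqrt{n}\|\cdot\|$, computing the Moreau envelope of $\sqrt{n}\|\cdot\|$ in closed form so that its normalized limits (after subtracting $\sqrt{n}\|z\|$, resp.\ $\sqrt{n}\|\theta_0\|$) yield exactly $\mathcal E$ and $\mathcal G$ — i.e., your route (a). Your alternative route (b), with the per-$\tau_1$ quadratic reduction and the limit/infimum interchange, is unnecessary and is precisely the complication the paper's (and your primary) argument avoids.
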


\begin{remark}[Uniqueness of $\alpha_\star$]
Notice that, different from Theorem~\ref{thm:estimationerror:2}, the uniqueness of the optimizer $\alpha_\star$ is not guaranteed in Proposition~\ref{prop:est:error:squared:loss}. When $\alpha_\star$ is not unique, and Proposition~\ref{prop:est:error:squared:loss} cannot be used to recover the estimation error, the adaptation of Theorem~\ref{thm:estimationerror:2} for squared losses, proposed in Remark~\ref{remark:thm2:squared:loss}, can be used to recover the estimation error, provided that the minimizers of $\mathcal V_1$ and $\mathcal V_2$ in $\eqref{eq:est:2:minimax}$ are strictly smaller than $\sigma_{\theta_0}$. \qed
\end{remark}


\subsection{Type-2 Wasserstein Distributional Regularization}
\label{subsec:regW2}

In Theorem~\ref{thm:estimationerror:2} we have shown that the estimation error in the distributionally robust setting with $W_2$ can be computed from the optimal solution of two scalar convex-concave minimax problems. Nonetheless, the result has an inconvenient limitation when compared to the results obtained for $p=1$: the estimation error can be found only when the minimizers of $\mathcal V_1$ and $\mathcal V_2$ in $\eqref{eq:est:2:minimax}$ are strictly smaller than $\sigma_{\theta_0}$. In what follows, we will show that by considering the distributionally regularized problem~\eqref{eq:regdro} this assumption can be dropped. Additionally, the estimation error can be recovered from the minimizer of only one (simpler) convex-concave minimax problem, which involves only three scalar variables. This is shown in the following theorem.

\begin{theorem}[Performance of $\hat{\theta}_{\mathrm{DReg}}$ for $p=2$]
\label{thm:estimationerror:3}
Let Assumptions~\ref{assump:0}, \ref{assump:dro}, \ref{assump:Theta}, and \ref{assump:cgmt} be satisfied, and suppose that the loss function $L$ can be written as the Moreau envelope of a convex function $f$ which satisfies
\begin{align}
\label{eq:est:3:assump:f}
    \mathbb E_{\mathcal N(0,1) \otimes \mathbb P_Z}\left[ f(\alpha G + Z)\right] < \infty,
\end{align}
for all $\alpha \geq 0$. Moreover, let $R_\theta \geq \sigma_{\theta_0}$ and $\lambda = d \lambda_0$, for some constant $\lambda_0$ satisfying $\lambda_0 > M R_{\theta}^{2}/2$. 
Then, as $d,n \to \infty$, it holds in probability that
\begin{align}
    \label{eq:est:3:error}
    \lim_{d/n \to \rho} \frac{\|\hat{\theta}_{\mathrm{DReg}}-\theta_0\|^2}{d} \begin{cases} = \alpha_{\star}^2 & \quad\quad \text{if} \quad\quad \mathcal \alpha_{\star} \in \left[0, R_\theta - \sigma_{\theta_0}\right] \\ \leq \alpha_{\star}^2 & \quad\quad \text{if} \quad\quad \alpha_{\star} \in \left(R_\theta - \sigma_{\theta_0}, R_\theta + \sigma_{\theta_0} \right], \end{cases}
\end{align}
where $\alpha_\star$ is the unique solution of the following convex-concave minimax scalar problem
\begin{align}
    \label{eq:est:3:minimax}
    \min_{\substack{0 \leq \alpha \leq R_{\theta} + \sigma_{\theta_0}\\ \tau_1>0}}\; \max_{\beta \geq 0}\; \frac{\beta \tau_1}{2}-\frac{\beta^2}{2M} + \mathcal F(\alpha,\frac{\tau_1}{\beta}) + \frac{\beta^2}{4 \lambda_0}\left( \sigma_{\theta_0}^2 + \alpha^2 \right) - \alpha \beta \sqrt{\rho + \frac{\beta^2 \sigma_{\theta_0}^2}{4 \lambda_0^2}}.
\end{align}
\end{theorem}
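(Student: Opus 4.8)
The plan is to follow the same route as the proof of Theorem~\ref{thm:estimationerror:2}, with the considerable simplification that problem \eqref{eq:dro:distrib:reg} carries no inner infimum over a multiplier $\lambda$, so that no case split (of the kind producing $\mathcal V_1$ and $\mathcal V_2$) is needed and a single scalar minimax problem suffices. \emph{Step 1: reduction to a (PO) problem.} I would start from the dual reformulation of Lemma~\ref{lemma:nonconvexduality:3}, i.e.\ problem \eqref{eq:dro:dual:uni:3} with $\lambda = d\lambda_0$. By Lemma~\ref{lemma:growthrates:3} the optimal $u_\star$ satisfies $\|u_\star\|\le K_\beta\sqrt n$, so the supremum over $u$ may be restricted to the convex compact ball $\{u:\|u\|\le K_\beta\sqrt n\}$, which after the rescaling $u\mapsto u\sqrt n$ becomes a ball of radius $K_\beta$; by Lemma~\ref{lemma:convexity:3} the objective is concave on it. Writing $y_i-\theta^Tx_i=z_i-\sqrt d\,w^Tx_i$ with the change of variables \eqref{eq:theta:to:w}, and using $L^*(v)=\tfrac{v^2}{2M}+f^*(v)$ (which holds precisely because $L=e_f(\cdot,1/M)$), the problem takes the (PO) form \eqref{eq:cgmt:po} after the deterministic rescalings of Example~\ref{ex:cgmt:reg}: the bilinear term is $-\tfrac{1}{\sqrt n}u^T(\sqrt d A)w$ with $\sqrt d A$ having i.i.d.\ $\mathcal N(0,1)$ entries by Assumption~\ref{assump:0}\ref{assump:features}, while $\psi(w,u)$ collects $\tfrac1n u^Tz$, $-\tfrac1n\sum_i f^*(u_i)$, the indicator of $\Theta$, and the coupling term $\tfrac{\|u\|^2}{4\lambda}\|\sqrt d w+\theta_0\|^2-\tfrac{\|u\|^2}{2M}$; this $\psi$ is convex in $w$ (the $\|\theta\|^2$ term is convex and enters with nonnegative coefficient) and concave in $u$ (exactly Lemma~\ref{lemma:convexity:3}), so the CGMT applies.

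\emph{Step 2: auxiliary problem and scalarization.} Next I would pass to the modified (AO) problem \eqref{eq:cgmt:ao:1}, replacing the bilinear term by $-\tfrac{1}{\sqrt n}\bigl(\|w\|\,g^Tu+\|u\|\,h^Tw\bigr)$ (signs are immaterial since $g,h$ are symmetric), and carry out the standard deterministic reduction. First optimize over the direction of $u$, which collapses the $u$-dependence to its rescaled norm $\beta:=\|u\|/\sqrt n\in[0,K_\beta]$; then split $w$ into its component along $\theta_0$ and the orthogonal part, introducing the scalars $\alpha:=\|w\|$ and the normalized correlation of $w$ with $\theta_0$. The residual minimization over $w$ of a convex quadratic plus a $\tfrac1n\sum_i f^*$-type sum is handled by the Moreau-envelope identity, turning $\tfrac1n\sum_i e_f(\alpha G_i+z_i;\tau)$ into a Moreau envelope whose limit, by the weak law of large numbers together with the integrability assumption \eqref{eq:est:3:assump:f}, is the expected Moreau envelope $\mathcal F(\alpha,\tau)$. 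The coupling term contributes $\tfrac{\beta^2}{4\lambda_0}(\sigma_{\theta_0}^2+\alpha^2)$ in the limit (using $\|\theta_0\|^2/d\to\sigma_{\theta_0}^2$ from Assumption~\ref{assump:cgmt}\ref{assump:theta0}), and combining it with the Gaussian-width terms — here the square can be \emph{completed in closed form}, producing the factor $\sqrt{\rho+\beta^2\sigma_{\theta_0}^2/(4\lambda_0^2)}$ and obviating the extra auxiliary variable $\tau_2$ present in Theorem~\ref{thm:estimationerror:2} — yields exactly the objective of \eqref{eq:est:3:minimax} in the three scalars $(\alpha,\tau_1,\beta)$. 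One then checks that the limiting scalar objective is convex in $(\alpha,\tau_1)$ and concave in $\beta$, the latter using the lower bound $\lambda_0>MR_\theta^2/2$ to dominate $+\tfrac{\beta^2}{4\lambda_0}(\sigma_{\theta_0}^2+\alpha^2)$ by $-\tfrac{\beta^2}{2M}$ on the relevant range $\alpha\in[0,R_\theta+\sigma_{\theta_0}]$, and that the maximizing $\beta$ is finite, so the box $\beta\le K_\beta$ inherited from Lemma~\ref{lemma:growthrates:3} is inactive and can be dropped, leaving $\max_{\beta\ge0}$.

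\emph{Step 3: asymptotic CGMT and the $\Theta$-constraint.} This gives $\phi'(g,h)\overset{P}{\to}\overline{\phi'}$, the optimal value of \eqref{eq:est:3:minimax}. Repeating the scalarization with the $w$-minimization restricted to $\mathcal S^c=\{w:|\,\|w\|-\alpha_\star|\ge\eta\}$ and invoking strict convexity of the limiting objective in $\alpha$ at its unique minimizer $\alpha_\star$ yields $\phi'_{\mathcal S^c}(g,h)\overset{P}{\to}\overline{\phi'}_{\mathcal S^c}>\overline{\phi'}$, so Proposition~\ref{prop:cgmt} applies and gives $\|\hat w_{DRE}\|\overset{P}{\to}\alpha_\star$, i.e.\ $\|\hat\theta_{DRE}-\theta_0\|^2/d\overset{P}{\to}\alpha_\star^2$, whenever the constraint $\theta\in\Theta$ is inactive. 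By Assumption~\ref{assump:Theta}, $\theta\in\Theta\iff\|\sqrt d\,w+\theta_0\|\le R_\theta\sqrt d$, which by the triangle inequality and $\|\theta_0\|/\sqrt d\to\sigma_{\theta_0}$ is implied w.p.a.\ $1$ by $\|w\|\le R_\theta-\sigma_{\theta_0}$ and implies $\|w\|\le R_\theta+\sigma_{\theta_0}$. Hence if $\alpha_\star\in[0,R_\theta-\sigma_{\theta_0}]$ the unconstrained analysis is exact, while if $\alpha_\star\in(R_\theta-\sigma_{\theta_0},R_\theta+\sigma_{\theta_0}]$ the constraint can only shrink the feasible set for $w$, so one obtains only the one-sided bound $\|\hat\theta_{DRE}-\theta_0\|^2/d\le\alpha_\star^2$ — precisely the dichotomy \eqref{eq:est:3:error}.

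I expect the main obstacle to be Step~2: carrying out the deterministic scalarization of the (AO) problem in the simultaneous presence of the non-convex-looking Gaussian terms $\|w\|g^Tu$, $\|u\|h^Tw$ and the coupling term $\tfrac{\|u\|^2}{4\lambda}\|\sqrt d w+\theta_0\|^2$ — identifying the right auxiliary scalar variables, rigorously justifying the interchange of the limit with the min/max via uniform convergence and convexity, performing the completion of the square that eliminates the fourth scalar, and verifying the convex–concave structure (notably concavity in $\beta$, which is where the hypothesis $\lambda_0>MR_\theta^2/2$ and $R_\theta\ge\sigma_{\theta_0}$ enter). Everything else is a streamlined version of the corresponding arguments in the proof of Theorem~\ref{thm:estimationerror:2}.
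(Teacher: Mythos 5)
Your proposal follows essentially the same route as the paper's proof: the dual reformulation of Lemma~\ref{lemma:nonconvexduality:3} combined with Lemmas~\ref{lemma:convexity:3} and \ref{lemma:growthrates:3}, the decomposition $L^*(\cdot)=\tfrac{1}{2M}(\cdot)^2+f^*(\cdot)$, CGMT and the modified (AO) problem, scalarization over $\|u\|$ and $\|w\|$ with a single square-root trick (no $\tau_2$, since the $\lambda$-term is already quadratic in $w$), the weak-law limit to the expected Moreau envelope $\mathcal F$, uniqueness of $\alpha_\star$ via strict convexity, Proposition~\ref{prop:cgmt} via the convexity/convergence lemmas, and the same enlargement of $\mathcal W$ to $\{\|w\|\le R_\theta+\sigma_{\theta_0}\}$ producing the dichotomy in \eqref{eq:est:3:error}. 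One small caveat on the step you flagged: concavity in $\beta$ does not follow from dominating $\tfrac{\beta^2}{4\lambda_0}(\sigma_{\theta_0}^2+\alpha^2)$ by $\tfrac{\beta^2}{2M}$ alone (that inequality fails for $\alpha$ near $R_\theta+\sigma_{\theta_0}$); the paper instead obtains it from the pre-scalarized objective, where $\|\theta_0+\sqrt d\,w\|\le R_\theta\sqrt d$, being concave in $\beta$ and concavity being preserved under the minimization over $w$, with the large-$\beta$ behavior controlled by the combined coefficient $\tfrac{(\sigma_{\theta_0}-\alpha)^2}{4\lambda_0}-\tfrac{1}{2M}<0$, i.e.\ only after the cross term $-\alpha\beta\sqrt{\rho+\beta^2\sigma_{\theta_0}^2/(4\lambda_0^2)}$ is taken into account.
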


\begin{remark}[Relationship between $\varepsilon_0$ and $\lambda_0$]
As stated in Section~\ref{sec:distrib:reg}, there is a close connection between the upper bound on $\varepsilon_0$ in Theorem~\ref{thm:estimationerror:2} and the lower bound on $\lambda_0$ in Theorem~\ref{thm:estimationerror:3}. In Lemma~\ref{lemma:growthrates} we have shown that the upper bound $ \varepsilon \leq (\rho^{-1} M^{-2} R_{\theta}^{-2}\underline{L}_n)/n$ guarantees that $ \lambda \geq {M R_\theta \sqrt{d} \|\theta\|}/{2}$. Therefore, the lower bound $\lambda \geq M R_{\theta}^{2} d/2$ corresponds to replacing $\|\theta\|$ by its upper bound $R_\theta \sqrt{d}$.
\end{remark}

Recall from Remark~\ref{remark:squared:loss:no:f} that the squared loss $L(u) = M u^2/2$ cannot be written as the Moreau envelope with parameter $M$ of some convex function $f$ which satisfies the integrability condition $\mathbb E_{\mathcal N(0,1) \otimes \mathbb P_Z}\left[ f(\alpha G + Z)\right] < \infty$. Nonetheless, in the following corollary we show how Theorem~\ref{thm:estimationerror:3} can be easily adapted to account for this case as well, resulting in a simpler convex-concave minimax problem, which involves only two scalar variables.

\begin{corollary}[Performance of $\hat{\theta}_{\mathrm{DReg}}$ for $L(\cdot) = (\cdot)^2$]
\label{cor:regdro:squared:loss}
Let Assumptions~\ref{assump:0}, \ref{assump:dro}, \ref{assump:Theta}, and \ref{assump:cgmt} be satisfied for the loss function $L(u) = u^2$. Moreover, let $R_\theta \geq \sigma_{\theta_0}$, $\sigma_{Z}^2 := \mathbb E_{\mathbb P_Z} [Z^2]$, and $\lambda = d \lambda_0$, for some constant $\lambda_0$ satisfying $\lambda_0 > R_{\theta}^{2}$. 
Then, as $d,n \to \infty$, it holds in probability that
\begin{align*}
    \lim_{d/n \to \rho} \frac{\|\hat{\theta}_{\mathrm{DReg}}-\theta_0\|^2}{d} \begin{cases} = \alpha_{\star}^2 & \quad\quad \text{if} \quad\quad \mathcal \alpha_{\star} \in \left[0, R_\theta - \sigma_{\theta_0}\right] \\ \leq \alpha_{\star}^2 & \quad\quad \text{if} \quad\quad \alpha_{\star} \in \left(R_\theta - \sigma_{\theta_0}, R_\theta + \sigma_{\theta_0} \right], \end{cases}
\end{align*}
where $\alpha_\star$ is the unique solution of the following convex-concave minimax scalar problem
\begin{align}
\label{eq:DReg_quadratic}
    \min_{0 \leq \alpha \leq R_{\theta} + \sigma_{\theta_0}}\; \max_{\beta \geq 0}\; \beta \sqrt{\alpha^2 + \sigma_Z^2}-\frac{\beta^2}{2M} + \frac{\beta^2}{4 \lambda_0}\left( \sigma_{\theta_0}^2 + \alpha^2 \right) - \alpha \beta \sqrt{\rho + \frac{\beta^2 \sigma_{\theta_0}^2}{4 \lambda_0^2}}.
\end{align}
\end{corollary}

We conclude the theoretical developments in this paper with two remarks: one discussing how the signal and noise variances required by our framework can be estimated in practice, and the other outlining a possible extension of our results to non-isotropic feature distributions.

\begin{remark}[Estimation of $\sigma_{\theta_0}^2$ and $\sigma_Z^2$]
The asymptotic error $\alpha_\star^2$ for both $\hat{\theta}_{\mathrm{DRE}}$ and $\hat{\theta}_{\mathrm{DReg}}$ depends on the signal strength $\sigma_{\theta_0}^2$ and the noise variance $\sigma_Z^2$, which in practice need to be estimated. While our theory does not require the noise $Z$ to be Gaussian, most high-dimensional estimators for $\sigma_Z^2$ (and for $\sigma_{\theta_0}^2$) are developed under the additional assumption that $Z \sim \mathcal{N}(0, \sigma_Z^2)$. Under this assumption, consistent and asymptotically normal estimators can be constructed using method-of-moments or maximum likelihood techniques, as proposed in \cite{dicker2014variance,dicker2016maximum}. In addition, the EigenPrism procedure of \cite{janson2017eigenprism} provides finite-sample confidence intervals for $\sigma_{\theta_0}^2$, allowing practitioners to assess the robustness of the selected radius $\varepsilon$ to estimation uncertainty.
\end{remark}

\begin{remark}[Non-isotropic case]
\label{remark:non-isotropic}
Our theoretical results assume isotropic Gaussian covariates $X \sim \mathcal{N}(0, d^{-1} \mathrm{I}d)$, which simplifies the geometry of the estimation problem. However, the analysis can be extended to the more general setting where $X \sim \mathcal{N}(0, d^{-1} \Sigma)$ for a positive definite covariance matrix $\Sigma$. In this case, a whitening transformation of the features can recover isotropy, assuming $\Sigma$ is known or can be consistently estimated. Importantly, both the estimation error and the minimax objective must then be redefined with respect to the $\Sigma$-induced norm $\| \cdot \|_\Sigma^2 = \langle \cdot, \Sigma \cdot \rangle$, and the signal strength becomes $\sigma_{\theta_0}^2 = d^{-1} \theta_0^\top \Sigma \theta_0$. We view a systematic treatment of this extension as an important direction for future work.
\end{remark}


\section{Numerical Experiments}
\label{sec:numerical}

In this section we numerically validate the theoretical results presented so far in the context of hyperparameter tuning for regression problems in high dimensions. Specifically, we consider the problem of choosing the radius $\varepsilon$ which minimizes the normalized squared estimation error in the high-dimensional regime, where both the dimension of the problem $d$ and the number of samples $n$ are very large. In such regimes, numerical techniques such as cross-validation are computationally very demanding or even prohibitive. Instead, Theorems~\ref{thm:estimationerror:1}, \ref{thm:estimationerror:2}, and \ref{thm:estimationerror:3} provide an easy-to-compute relationship between the normalized squared estimation error and the values of $\varepsilon$ and $\rho$, which appear explicitly in the objective functions of \eqref{eq:est:1:minimax} and \eqref{eq:est:2:minimax}. Although these results were recovered in the asymptotic regime and for isotropic Gaussian features, in what follows we will show numerically that they are valid for broader classes of probability ensembles, already when $d,n \approx 500$. We conclude the section by comparing our approach with standard cross-validation for radius tuning, illustrating its practical advantages.

\subsection{Radius Tuning in High Dimensions}
\label{subsec:radius:tuning}

We focus on the problem of estimating an unknown parameter $\theta_0$ in $d$ dimensions from $n$ noisy linear measurements of the form $y_i = \theta_0 ^\top x_i + z_i$. Throughout this entire section, we consider the case where the true parameter $\theta_0$ is sparse, with roughly $10\%$ of its entries being non-zero. In our experiments, we encode such sparsity in a probabilistic fashion, by sampling the entries of $\theta_0$ according to the distribution $\mathbb P_{\theta_0} = 0.1\, \mathcal N(0,10) + 0.9\, \delta_0$, where $\delta_0$ represents the Dirac delta distribution at $0$. From the definition of $\mathbb P_{\theta_0}$ we automatically recover, by the weak law of large numbers, that $\sigma_{\theta_0}^2 = 1$ (recall the definition of $\sigma_{\theta_0}^2$ from Assumption~\ref{assump:cgmt}). Moreover, we assume that the noise values $z_i$ are i.i.d.\ according to the distribution $\mathbb P_Z = \mathcal N(0,0.1)$, resulting in $\sigma_Z^2 = 0.1$. 

\begin{figure}
    \center
	\subfigure[Type-$1$ Wasserstein DRE]{\includegraphics[width=0.49\columnwidth]{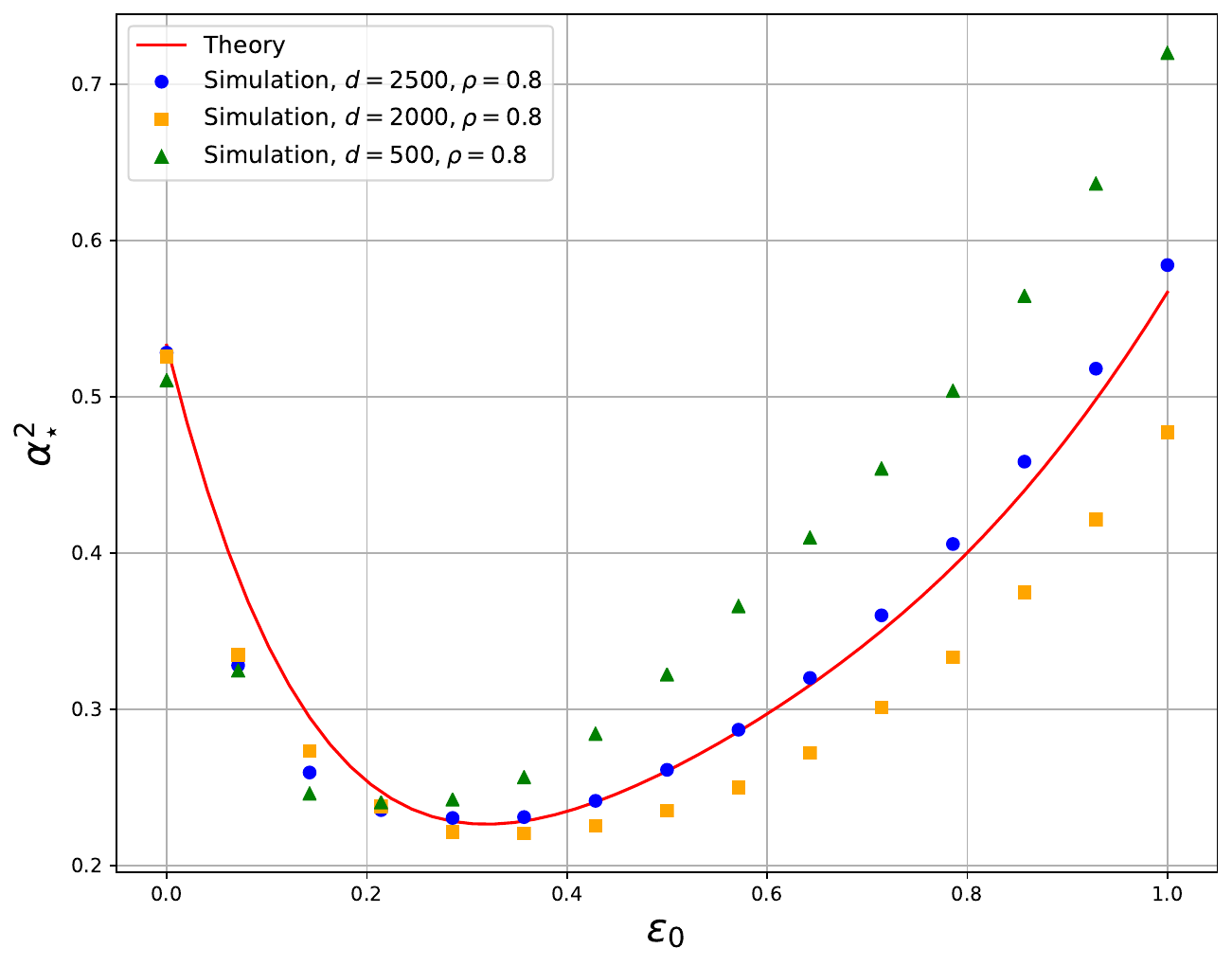}} ~
	\subfigure[Type-$2$ Wasserstein DRE]{\includegraphics[width=0.49\columnwidth]{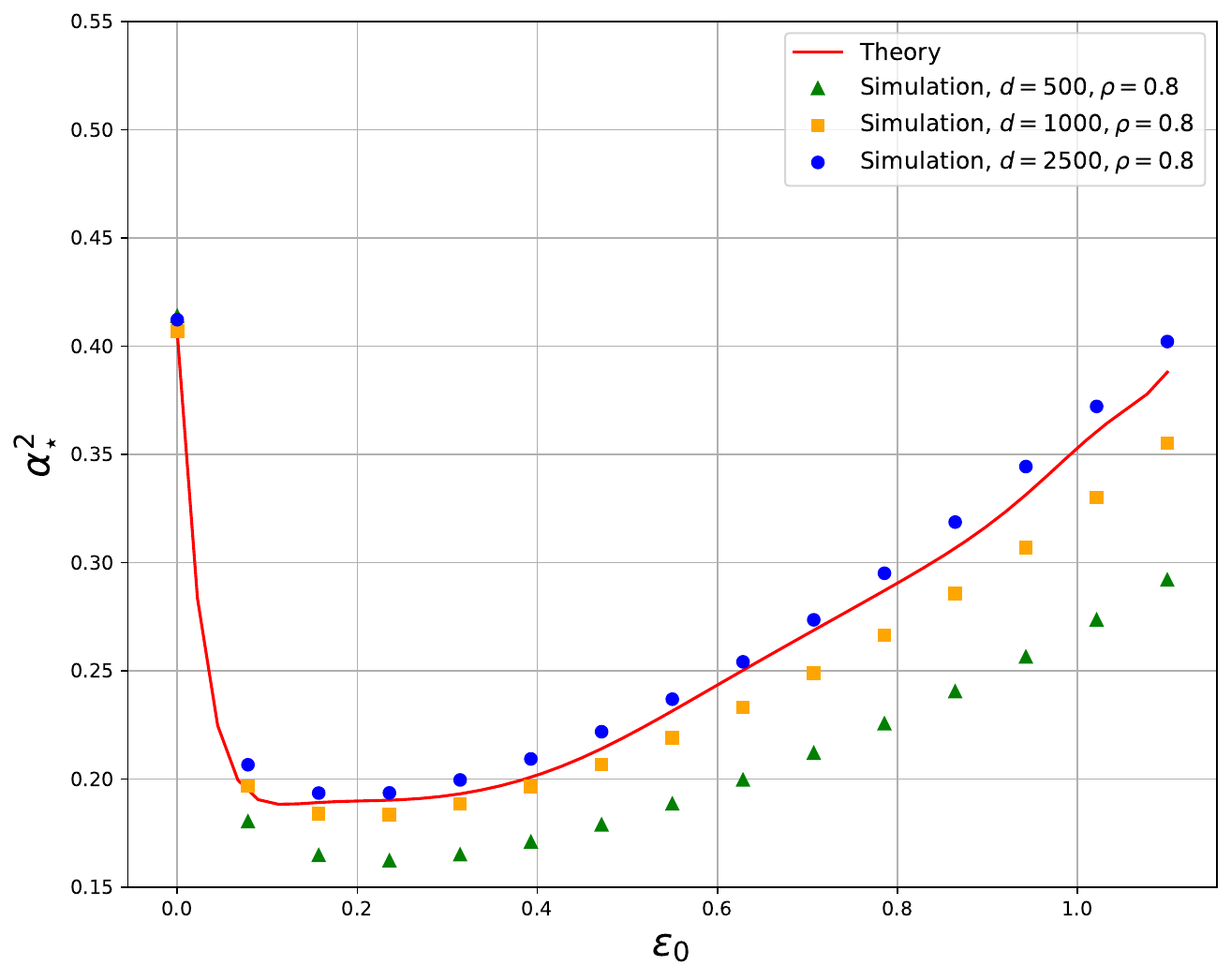}}
    \caption{Theory vs simulation for different choices of $d$ and $\rho = d/n = 0.8$.}
    \label{figure:high:W1:W2}
\end{figure}

We start by considering the type-$1$ Wasserstein DRE problem with loss function $L(\cdot) = |\cdot|$. Notice that Assumption~\ref{assump:thrampoulidis}\ref{assump:thrampoulidis} is trivially satisfied. Following Theorem~\ref{thm:estimationerror:1}, the asymptotic normalized squared error $\lim_{d \to \infty} {\|\hat{\theta}_{\mathrm{DRE}}-\theta_0\|^2}/{d}$ is equal to $\alpha_\star^2$, where $\alpha_\star$ is the solution of the convex-concave minimax scalar problem \eqref{eq:est:1:minimax}. Since the Moreau envelope is equal to the Huber loss (see Example~\ref{example:Moreau:LAD}), we have that the expected Moreau envelope can be easily recovered in closed-form for our choice of $\mathbb P_Z$. Finally, we consider the case where $\rho = d/n = 0.8$. In Figure~\ref{figure:high:W1:W2}(a) we show in red the asymptotic normalized squared error $\alpha_\star^2$ as a function of the radius $\varepsilon_0$ (recall that $\varepsilon = \varepsilon_0 / \sqrt{n}$). We now investigate to what extent the asymptotic error $\alpha_\star^2$ is informative in realistic scenarios, where the dimension $d$ might be large but nonetheless finite. For this, we show on the same plot the normalized squared error averaged over $5$ independent realizations for dimensions $d = 500, 2000$, and $2500$, while keeping fixed the value of $\rho$ at $0.8$. It is important to notice that, although there is a slight discrepancy between the asymptotic and the three non-asymptotic error curves, the asymptotic curve captures the true behavior (i.e., the minimum normalized squared error is roughly the same for all of them). In turn, this allows for a computationally very efficient way to tune the radius $\varepsilon_0$ using the developed asymptotic tools for cases where the dimension is as low as $500$. 

We now proceed to the type-$2$ Wasserstein DRE problem with loss function $L(\cdot) = (\cdot)^2$. In this case, the asymptotic normalized squared error $\alpha_\star^2$ can be recovered from Proposition~\ref{prop:est:error:squared:loss} under less restrictive assumptions than Theorem~\ref{thm:estimationerror:2}. In Figure~\ref{figure:high:W1:W2}(b) we plot in red the value of $\alpha_\star^2$ as a function of the radius $\varepsilon_0$. Moreover, we plot the normalized squared error averaged over $5$ independent realizations for dimensions $d = 500, 1000$, and $2500$, while keeping fixed the value of $\rho$ at $0.8$. Similarly to the type-$1$ Wasserstein case, the asymptotic error curve captures very well the true behavior of the error even when the dimension is as low as $500$. 

We now study the effect of the under/over-parametrization $\rho = d/n$ on the normalized squared estimation error. We consider again the type-$2$ Wasserstein DRE problem described above for the two values $\rho = 0.8$ (under-parametrized DRE problem) and $\rho = 1.2$ (over-parametrized DRE problem). In Figure~\ref{figure:high:universality}(a) we plot both the asymptotic and the non-asymptotic (for $d=2500$) error curves, which show that a smaller estimation error can be obtained in the case $\rho = 0.8$, i.e., where the number of samples is larger than the problem dimension. Moreover, we explore numerically the validity of our theoretical results beyond the assumption that the features are Gaussian. For this, we consider the cases where the entries of the vectors $x_i$ are i.i.d. Bernoulli or Poisson random variables with the same mean and variance as in Assumption~\ref{assump:0}\ref{assump:features}. We consider the type-$2$ Wasserstein DRE problem described above, and we plot the results in Figure~\ref{figure:high:universality}(b), where it can be seen that the theoretical results allow us to find the right tuning of the radius $\varepsilon_0$ even in cases where the features fail to be Gaussian.

\begin{figure}
    \center
	\subfigure[Under-parametrization vs over-parametrization]{\includegraphics[width=0.49\columnwidth]{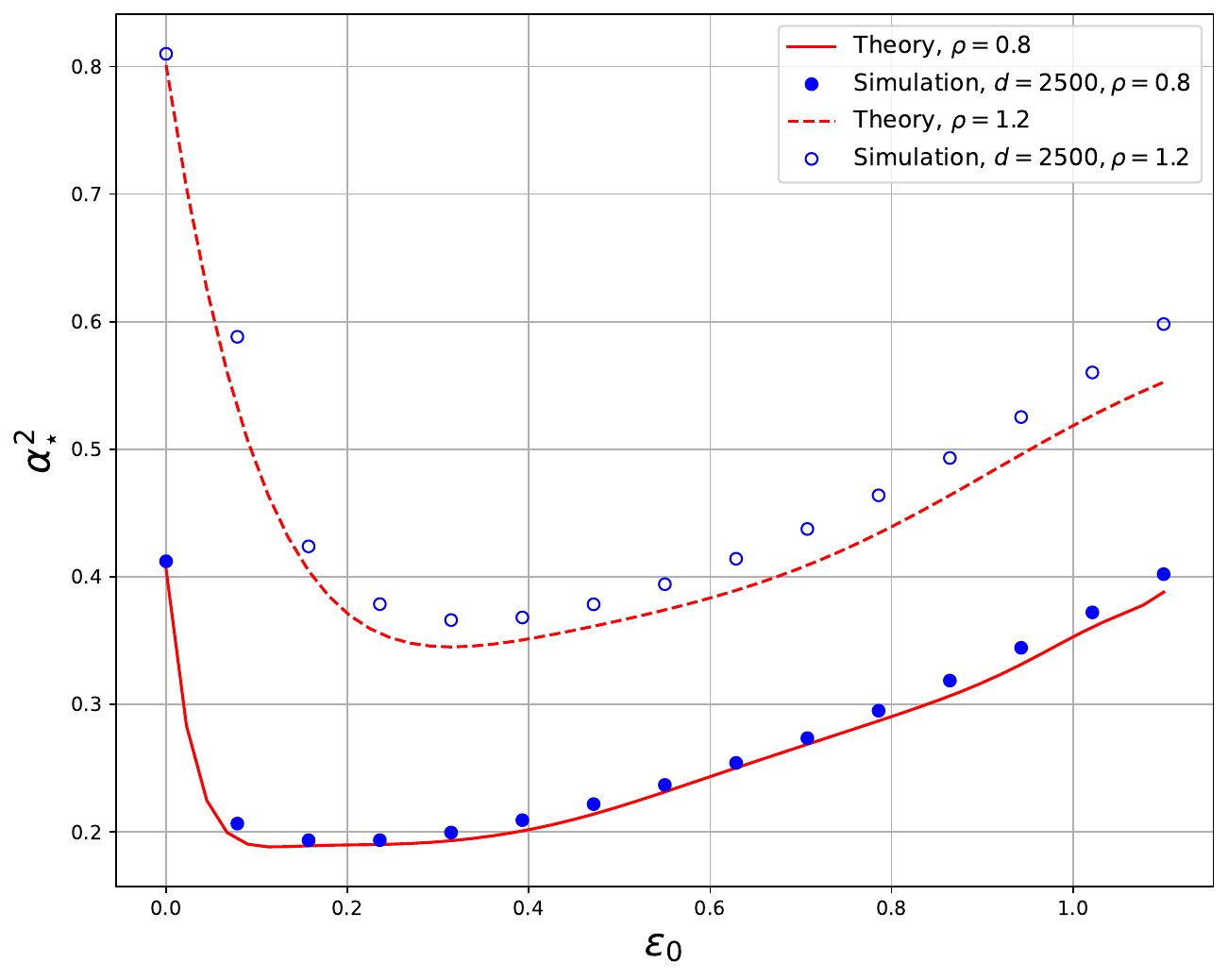}} ~
	\subfigure[Universality]{\includegraphics[width=0.49\columnwidth]{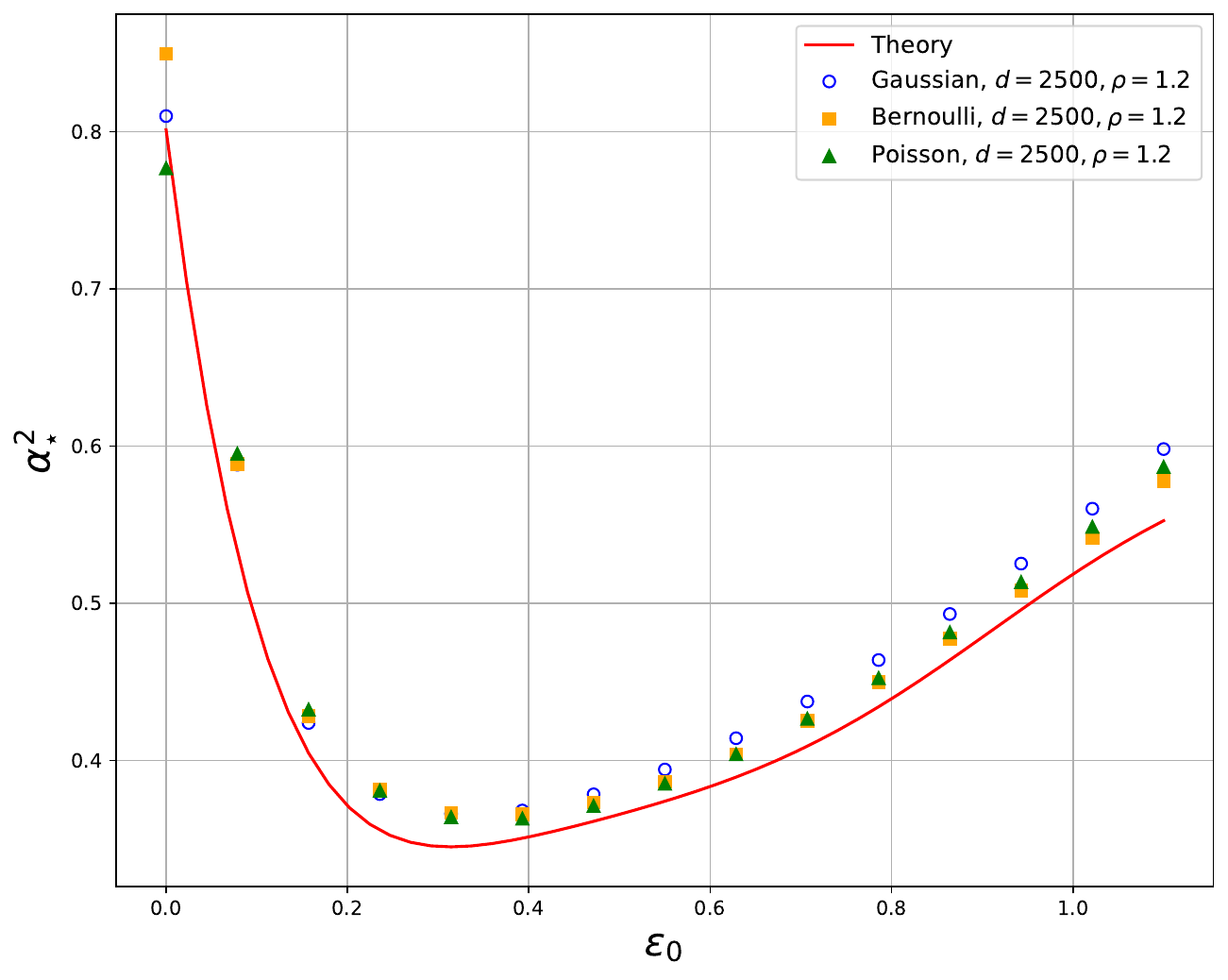}}
    \caption{(a) The effect of $\rho = d/n$ on the estimation error; (b) the validity of the results extends to broader classes of probability ensembles.}
    \label{figure:high:universality}
\end{figure}

Finally, we numerically validate the type-2 Wasserstein distributionally regularized estimator presented in Theorem~\ref{thm:estimationerror:3}. For simplicity, we focus on the case where $L(\cdot)=(\cdot)^2$, so that the asymptotic error can be computed using the scalar minimax optimization problem in Corollary~\ref{cor:regdro:squared:loss}. We consider
the exact same scenario as in the previous experiments, with $\sigma_{\theta_0}^2 = 1$ and $\sigma_Z^2 = 0.1$. However, differently than above, the radius $\varepsilon_0$ is replaced by the regularization parameter $\lambda_0$. Since the theoretical guarantees only hold for $\lambda_0 \geq R_{\theta}^2$, with $R_{\theta}$ defined in Assumption~\ref{assump:Theta}, we are incentivized to pick the value of $R_{\theta}$ as close as possible to the norm of the true parameter $\theta_0$. We thus pick $R_{\theta} = \sigma_{\theta_0}$, acknowledging that this value will result in an optimal solution $\alpha_\star$ to the scalar optimization problem \eqref{eq:DReg_quadratic} that is only an upper bound on the true asymptotic normalized squared error. We illustrate this in Figure~\ref{fig:W2_DReg}. Notice that, in this case, picking a value of $R_\theta$ larger than $\sigma_{\theta_0}$ will result in a lower bound on $\lambda_0$ which is much larger than $1$. This, in turn, will cause the asymptotic curve to miss the value of $\lambda_0$ which attains the minimum error. As in the previous cases, it is important to highlight that, although we only obtain an upper bound on the true estimation error in high dimensions, the asymptotic curve captures the true behavior of the error even for dimensions as low as $d=500$.

\subsection{Comparison with Cross-Validation}
\label{subsec:cross:val}

\begin{figure}
    \centering
    \includegraphics[width=0.49\linewidth]{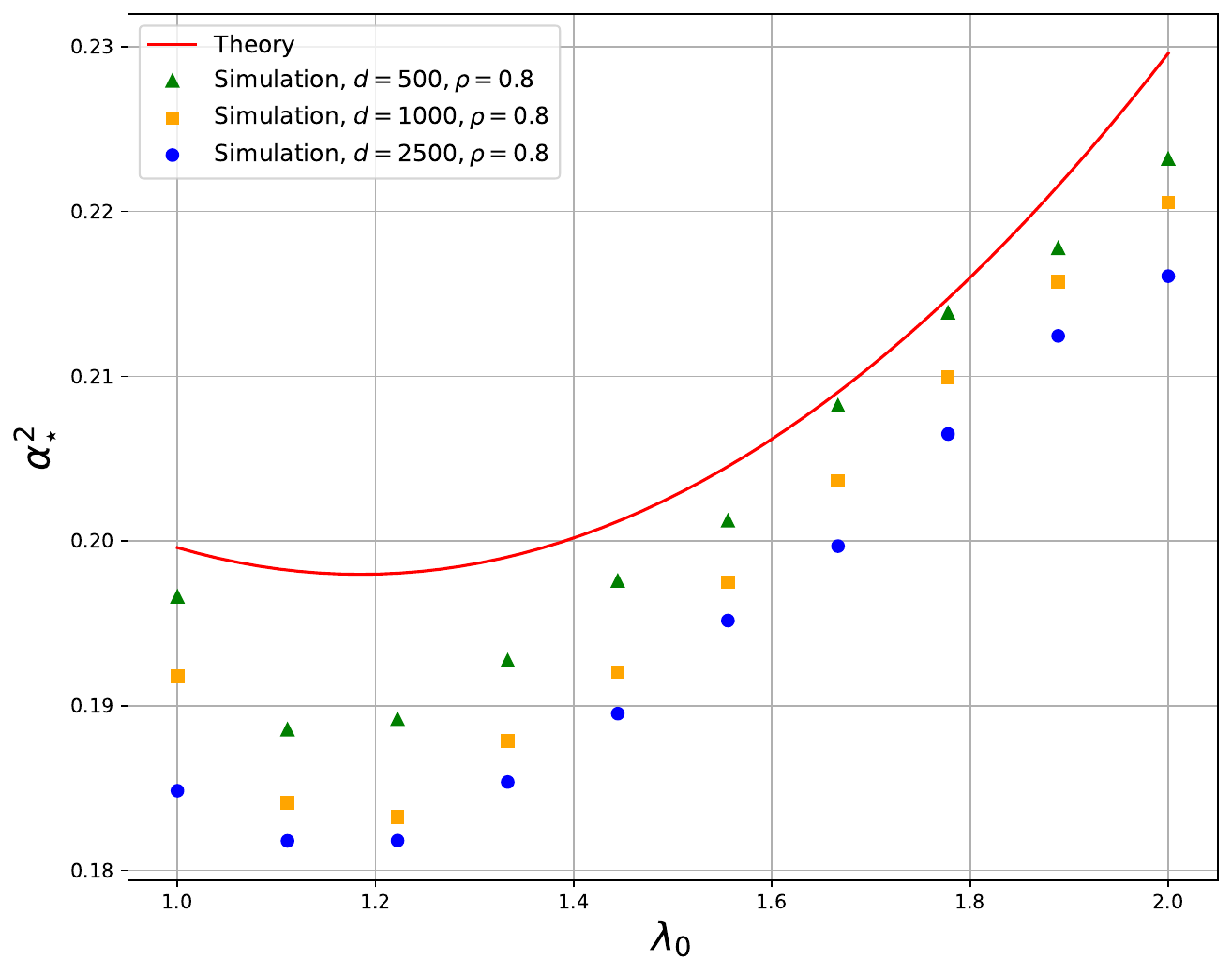}
    \caption{Type-$2$ Wasserstein distributional regularization.}
    \label{fig:W2_DReg}
\end{figure}

We conclude this section by comparing our theory-driven approach to hyperparameter tuning via the standard cross-validation procedure commonly used in practice. While our method selects the robustness radius $\varepsilon_0$ by minimizing the asymptotic estimation error derived in Theorems~\ref{thm:estimationerror:1}, \ref{thm:estimationerror:2}, and \ref{thm:estimationerror:3}, cross-validation selects $\varepsilon_0$ by minimizing the empirical loss on test data. Although both strategies aim to identify an appropriate level of regularization, their objectives are fundamentally different: in estimation problems, the goal is to recover the true signal $\theta_0$ as accurately as possible, as measured by the squared error $\|\hat{\theta}_{\mathrm{DRE}} - \theta_0\|_2^2$, not merely to minimize empirical loss. Since $\theta_0$ is typically unknown, cross-validation is used in practice as a surrogate. As we demonstrate below, our asymptotic formula not only targets the correct estimation objective, but also yields tuning decisions that closely match those obtained via cross-validation, at a small fraction of the computational cost. This distinction becomes particularly relevant in high-dimensional settings, where cross-validation may be prohibitively expensive.

Figure~\ref{fig:crossval} reports the results of this comparison for both the type-$1$ and type-$2$ Wasserstein DRE estimators. We consider the same setting as in the previous experiments: the dimension is $d = 2500$, the underparametrization ratio is $\rho = d/n = 0.8$, and the distributions of $\theta_0$ and $Z$ are as described in Section~\ref{subsec:radius:tuning}. For each value of $\varepsilon_0$, we compute the out-of-sample loss via $5$-fold cross-validation (green curve). Specifically, we solve the DRE problem using $n = d/\rho$ samples, obtain $\hat{\theta}_{\mathrm{DRE}}$, and evaluate its performance on $1000$ test points, repeating the procedure five times and averaging the results. We also plot the theoretical asymptotic estimation error $\alpha_\star^2$ (red curve), and the empirical estimation error $\|\hat{\theta}_{\mathrm{DRE}} - \theta_0\|_2^2$ (blue curve), computed as in Section~\ref{subsec:radius:tuning}. Each plot features two y-axes: the left for estimation error and the right for the cross-validated loss.

We observe that the empirical loss and the estimation error exhibit qualitatively similar behavior across different values of $\varepsilon_0$, and are minimized at nearly the same value of the radius. This finding suggests that empirical loss minimization can indeed serve as a reasonable proxy for estimation error in this setting. However, it also reinforces the strength of our theoretical analysis: our method recovers a near-optimal value of $\varepsilon_0$ directly from the asymptotic formula, without the need to repeatedly solve high-dimensional DRE problems as required by cross-validation. In fact, our approach only requires solving a four-dimensional scalar optimization problem, making it substantially more efficient.

\begin{figure}
    \centering
    \subfigure[Type-$1$ Wasserstein DRE]{\includegraphics[width=0.49\linewidth]{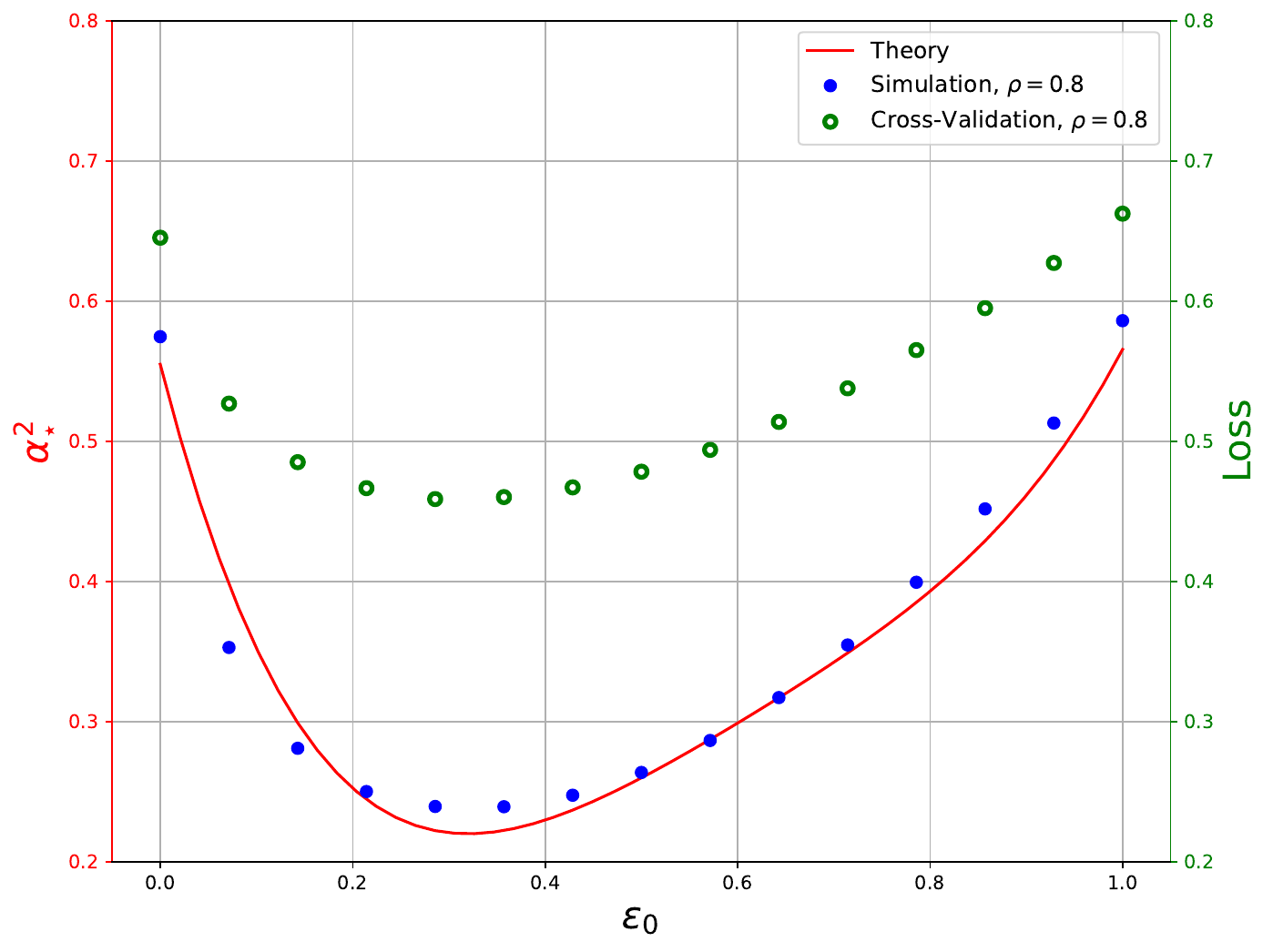}} \hfill
    \subfigure[Type-$2$ Wasserstein DRE]{\includegraphics[width=0.49\linewidth]{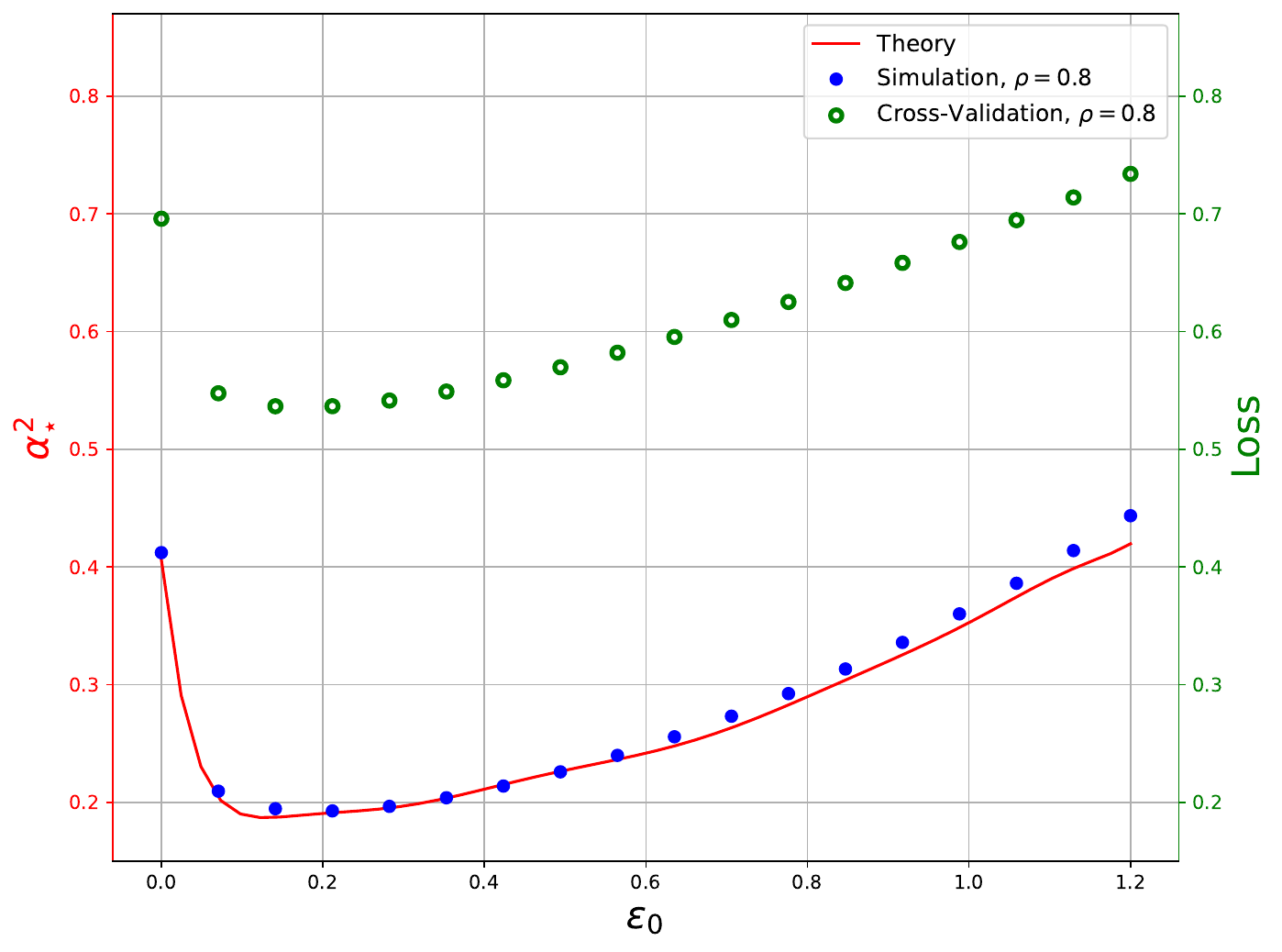}}
    \caption{Comparison between theory-based and cross-validation-based selection of $\varepsilon$ in Wasserstein-1 and Wasserstein-2 DRE.}
    \label{fig:crossval}
\end{figure}

Finally, we emphasize that our plots also highlight the benefits of robustness over empirical risk minimization (ERM). In all cases, the choice $\varepsilon_0 = 0$ corresponds to ERM, which serves as a natural baseline. As seen in Figure~\ref{fig:crossval}, a moderate level of robustness (i.e., $\varepsilon_0 > 0$) significantly improves the estimation error, further confirming the practical value of the proposed approach.

\section{Conclusion and Future Work}
\label{sec:conclusion}

In this paper, we have addressed the problem of quantifying the performance of Wasserstein distributionally robust estimators in high dimensions. To do so, we have adopted a modern high-dimensional statistical viewpoint, and we have shown that in the high-dimensional proportional regime, and under the assumption of isotropic Gaussian features, the squared error of such estimators can be easily recovered from the solution of a convex-concave minimax problem which involves at most four scalar variables. We envision numerous interesting directions in which the results of this paper can be employed, extended, or serve as starting point for future work:
\begin{itemize}
    \item[1.] \emph{Beyond Gaussian features.} The analysis presented in this paper
    relies on the assumption that the measurement matrix $A$ has entries i.i.d.\ $\mathcal N(0,1/d)$. One future direction is to extend these results to the case where $A$ is an isotropically random orthogonal matrix, i.e., a matrix sampled uniformly at random from the manifold of row-orthogonal matrices satisfying $A A^\top = I$. This class of measurement matrices is known to be practically relevant due to the fact that their condition number is $1$, and they do not amplify the noise. In \cite{thrampoulidis2015isotropically} the authors extended the CGMT framework to recover the squared error of LASSO, unde the assumptions that $A$ is isotropically random orthogonal and $Z$ is Gaussian. Similar ideas could be employed to study the performance of Wasserstein distributionally robust estimators isotropically random orthogonal measurement matrices. 

    \item[2.] \emph{Universality.} An important open question is whether our analysis extends to broader classes of feature distributions. Recent progress provides promising tools in this direction: \cite{han2023universality} establish CGMT-based universality results for convex-regularized regression under Gaussian design, offering the most direct connection to our setting. While type-1 Wasserstein DRE reduces to a convex regularized form and may fall within their framework, general type-2 Wasserstein DRE—where our main contributions lie—requires distinct analytical techniques. Nevertheless, dual reformulations of DRE problems may offer a promising pathway. Further insights may also be drawn from recent extensions of CGMT to high-dimensional logistic regression under dependence \cite{mallory2025universality}, and from more general frameworks for universality in empirical risk minimization beyond CGMT, such as \cite{montanari2022universality} and the references therein.
    
    \item[3.] \emph{Fundamental limits and optimal tuning of $L$ and $\varepsilon$.} The results presented in this paper open the path towards answering the following two fundamental question: \textit{\say{What is the minimum estimation error $\alpha_{min}$ achievable by $\hat\theta_{DRO}$ as a function of $\rho$?}} and \textit{\say{How to optimally choose $L$ and $\varepsilon$ in order to achieve $\alpha_{min}$?}}. So far, these and similar type questions have been addressed in the context of unregularized estimators \cite{bean2013optimal,donoho2016high,advani2016statistical}, ridge-regularized estimation \cite{taheri2021fundamental}, and convex-regularized least-squares for linear models over structured (e.g., sparse, low-rank) signals \cite{celentano2022fundamental}. In particular, the methodology proposed in \cite{taheri2021fundamental} builds upon the CGMT framework, and could be employed to provide an answer to the aforementioned questions in the context of Wasserstein distributionally robust estimators.
    
    \item[4.] \emph{Binary models.} The analysis and results presented in this paper could be extended to binary observation models, following similar ideas as in \cite{salehi2019impact, taheri2020sharp}. In particular, the problem \emph{fundamental limits and optimal tuning of $L$ and $\varepsilon$} could also be studied for these models, similarly to what was done in \cite{taheri2020sharp} for Empirical Risk Minimation estimators.
    
    \item[5.] \emph{f-Divergence distributionally robust estimation.} The reasoning presented in this paper could be employed to recover the error of other distributionally robust estimation formulations, where the ambiguity set is constructed using a different statistical distance on the probability space. In particular, the case where the statistical distance is chosen to be an f-Divergence has recently received a lot of attention in Operations Research and Machine Learning, due to its computational tractability \citep{ben2013robust,shapiro2017distributionally,namkoong2016stochastic}, variance regularization effect \citep{lam2016robust,lam2019recovering,duchi2019variance}, and optimality in terms of out-of-sample disappointment \citep{van2021data,sutter2020general,bennouna2021learning}. Recovering the error of f-Divergence distributionally robust estimators would enable a direct comparison between the performances of these two very popular distributionally robust formulations. 
    
    \item[6.] \emph{Fundamental trade-offs in distributional adversarial training.} Most machine learning models are known to be highly vulnerable to small adversarial perturbations to their inputs \citep{szegedy2013intriguing}. For example, in the context of image classification, even small perturbations of the image, which are imperceptible to a human, can lead to incorrect classification by these models \citep{goodfellow2014explaining, moosavi2016deepfool}. To solve this issue, many works have proposed \emph{adversarial training} as a principled and effective technique to improve robustness of machine learning models against adversarial perturbations (see \citep{silva2020opportunities} for a survey). Nonetheless, while adversarial training has been successful at improving the accuracy of the trained model on adversarially perturbed inputs (\emph{robust accuracy}), often this success comes at the cost of decreasing the accuracy on natural unperturbed inputs (\emph{standard accuracy}). Understanding the tradeoff between the standard and robust accuracies has been the subject of many recent works, which considered the adversarial setting with $\ell_p$-norm-bounded perturbations \citep{madry2017towards,schmidt2018adversarially,tsipras2018robustness,raghunathan2019adversarial,zhang2019theoretically,javanmard2020precisetradeoffs,javanmard2020precisestatistical,dobriban2020provable,taheri2020asymptotic,dong2020adversarial,min2021curious}. However, the $\ell_p$-norm can be a poor adversarial setting when dealing, for example, with images: two similar images for human perception are not necessarily close under the $\ell_p$-norm, and a slight translation or rotation of an image may change the $\ell_p$-distance drastically. Interestingly, these limitations are generally not present when the adversarial attacks are modeled using the Wasserstein distance, which has been shown to effectively capture the geometric information in the image space (see \citep{wong2019wasserstein,wu2020stronger} and the references therein). Consequently, the Wasserstein adversarial model, often referred to as \emph{distribution shifts}, has recently received considerable attention  \citep{staib2017distributionally,sinha2018certifying,pydi2020adversarial,mehrabi2021fundamental,li2021internal}. Despite this, an understanding of the fundamental tradeoff between the standard and robust accuracies in the \emph{Wasserstein adversarial training} is still lacking. The results presented in this paper could serve as a starting point for this problem. Specifically, we believe that this problem could be solved by employing the same reasoning as in \cite{javanmard2020precisetradeoffs}, which characterized this fundamental tradeoff for squared losses and $\ell_2$-norm-bounded adversarial attacks by: (i) analyzing the Pareto optimal points of the two dimensional region consisting of all the achievable (standard risk, robust risk) pairs; (ii) characterizing precisely the algorithmic standard and robust risks achieved by the $\ell_2$-norm adversarially trained estimator (this has been done in the high-dimensional regime, where $d,n \to \infty$ with $d/n \to \rho$, using the CGMT framework); (iii) showing that, as $\rho$ decreases, the algorithmic tradeoff curve approaches the Pareto-optimal tradeoff curve. In the context of Wasserstein adversarial training, the Pareto-optimal tradeoff curve from step (i) has been characterized in \cite{mehrabi2021fundamental} for the particular case of type-$2$ Wasserstein discrepancy and squared losses, where the distributional robustness simplifies to the norm-regularized formulation presented in equation \eqref{eq:reg:for:squared:loss}.
\end{itemize}

\subsection*{Statements and Declarations}
The authors declare that they have no competing interests.

\bibliographystyle{abbrvnat} 
\bibliography{bibfile.bib}

\appendix

\section{Proofs for Section~\ref{sec:cgmt}}
\label{appendix:additional:proofs:2}

\begin{proof}[Proof of Fact~\ref{prop:cgmt}]
Let $\Phi_{\mathcal S^c}(A)$ be the optimal value of \eqref{eq:cgmt:po} when the minimization over $w$ is restricted as $w \in \mathcal S^c$. Then, in order to prove \eqref{eq:asympt:cgmt}, we will show that $\Phi(A) < \Phi_{\mathcal S^c}(A)$ w.p.a.\ 1 as $n,d \to \infty$. While doing so, the two inequalities \eqref{eq:cgmt:00} and \eqref{eq:cgmt:01} will be instrumental. Specifically, inequality \eqref{eq:cgmt:00} will help us prove that $\Phi_{\mathcal S^c}(A) \geq \overline{\phi'}_{\mathcal S^c} - \eta$ w.p.a.\ 1, for all $\eta > 0$, and inequality \eqref{eq:cgmt:01} will be used to show that $\Phi(A) \leq \overline{\phi'} + \eta$ w.p.a.\ 1, for all $\eta > 0$. Then, from $\overline{\phi'} < \overline{\phi'}_{\mathcal S^c}$, the result easily follows with $\eta < (\overline{\phi'}_{\mathcal S^c} - \overline{\phi'})/3$.

We first concentrate on proving that $\Phi_{\mathcal S^c}(A) \geq \overline{\phi'}_{\mathcal S^c} - \eta$ w.p.a.\ 1. For ease of notation, we denote by $E(w,u)$ the objective function of the modified (AO) problem \eqref{eq:cgmt:ao:1}. First notice that
\begin{align*}
    \phi'_{\mathcal S^c}(g,h) = \max_{0 \leq \beta \leq K_\beta}\; \min_{w \in \mathcal S^c}\; \max_{\|u\|= \beta}\; E(w,u) \leq \min_{w \in \mathcal S^c}\; \max_{u \in \mathcal S_u}\; E(w,u) = \phi_{\mathcal S^c}(g,h),
\end{align*}
and therefore $\phi'_{\mathcal S^c}(g,h) \leq \phi_{\mathcal S^c}(g,h)$, using the minimax inequality. As a consequence, for every $c \in \mathbb R$,
\begin{align*}
    \text{Pr}(\phi_{\mathcal S^c}(g,h) \leq c) \leq \text{Pr}(\phi'_{\mathcal S^c}(g,h) \leq c),
\end{align*}
and therefore, using inequality \eqref{eq:cgmt:00}, we have that 
\begin{align*}
    \text{Pr}(\Phi_{\mathcal S^c}(A) < c) \leq 2\text{Pr}(\phi'_{\mathcal S^c}(g,h) \leq c).
\end{align*}
Using $c = \overline{\phi'}_{\mathcal S^c} - \eta$, we have that $\text{Pr}(\phi'_{\mathcal S^c}(g,h) \leq \overline{\phi'}_{\mathcal S^c} - \eta) \to 0$ as $n,d \to \infty$, and therefore $\Phi_{\mathcal S^c}(A) \geq \overline{\phi'}_{\mathcal S^c} -\eta$ w.p.a.\ 1, for all $\eta > 0$.

We now proceed to proving that $\Phi(A) \leq \overline{\phi'} + \eta$ w.p.a.\ 1. First notice that
\begin{align*}
    \phi'(g,h) = \max_{0 \leq \beta \leq K_\beta}\; \min_{w \in \mathcal S_w}\; \max_{\|u\|= \beta}\; E(w,u) \geq \max_{u \in \mathcal S_u}\; \min_{w \in \mathcal S_w}\;  E(w,u) =: \phi^D(g,h),
\end{align*}
where we have denoted by $\phi^D(g,h)$ the dual problem of $\phi(g,h)$. As a consequence, for every $c \in \mathbb R$,
\begin{align}
\label{proof:lemma:prop:cgmt:0}
    \text{Pr}(\phi^D(g,h) \geq c) \leq \text{Pr}(\phi'(g,h) \geq c).
\end{align}

Now, from \cite{thrampoulidis2015regularized} (see equation (32) in the proof of Theorem~3) we know that $\phi^D(g,h)$ is related to $\Phi(A)$ as follows 
\begin{align}
\label{proof:lemma:prop:cgmt:1}
    \text{Pr}(\Phi(A) > c) \leq 2 \text{Pr}(\phi^D(g,h) \geq c).
\end{align}

Then, from inequalities \eqref{proof:lemma:prop:cgmt:0} and \eqref{proof:lemma:prop:cgmt:1} it follows that
\begin{align*}
    \text{Pr}(\Phi(A) > c) \leq \text{Pr}(\phi'(g,h) \geq c),
\end{align*}
Now, using $c = \overline{\phi'} + \eta$, we have that $\text{Pr}(\phi'(g,h) \geq \overline{\phi'} + \eta) \to 0$ as $n,d \to \infty$, and therefore $\Phi(A) \leq \overline{\phi'}+ \eta$ w.p.a.\ 1, for all $\eta > 0$, which concludes the proof.
\end{proof}

\section{Proofs for Section~\ref{sec:dro:M-est}}
\label{appendix:additional:proofs:3}

\begin{proof}[Proof of Lemma~\ref{lemma:properties:ambiguity}]
Notice that \eqref{eq:optimal:transport} reduces to
\begin{align*}
W_p \left( \mathbb Q, \widehat{\mathbb P}_n \right) &= \inf_{\gamma \in \Gamma(\mathbb Q, \widehat{\mathbb P}_n)}\; \mathbb E_{\gamma} [\|x_1-x_2\|^p + \infty |y_1-y_2|] \\ &= \inf_{\gamma \in \Gamma(\mathbb Q, \widehat{\mathbb P}_n)}\; \mathbb E_{(\pi_{(X_1, X_2)})_\# \gamma} [\|x_1-x_2\|^p] + \mathbb E_{(\pi_{(Y_1, Y_2)})_\# \gamma} [ \infty |y_1-y_2|],
\end{align*}
where $(\pi_{(X_1, X_2)})_\# \gamma$ and $(\pi_{(Y_1, Y_2)})_\# \gamma$ are the marginal distributions of $(X_1,X_2)$ and $(Y_1,Y_2)$, respectively. From this we see that the marginal distributions $(\pi_{Y})_\# \mathbb Q$ and $(\pi_{Y})_\# \widehat{\mathbb P}_n$ need to be equal, otherwise the term $E_{(\pi_{(Y_1, Y_2)})_\# \gamma} [ \infty |y_1-y_2|]$ would become infinity. As a consequence, the ambiguity set $\mathbb B_\varepsilon(\widehat{\mathbb P}_n)$ only contains distributions $\mathbb Q$ whose marginal distribution of $Y$ is $n^{-1}\sum_{i=1}^{n}\delta_{y_i}$. In this case, the term $\mathbb E_{(\pi_{(Y_1, Y_2)})_\# \gamma} [ \infty |y_1-y_2|]$ becomes zero (for the optimal coupling), and from the ambiguity set definition, we see that the marginal distribution of $X$ satisfies
\begin{align*}
\inf_{\tilde\gamma \in \Gamma((\pi_X)_\#\mathbb Q, (\pi_X)_\# \widehat{\mathbb P}_n)}\; \mathbb E_{\tilde\gamma} [\|X_1 - X_2\|^p] \leq \varepsilon,
\end{align*}
for all distributions $\mathbb Q \in \mathbb B_\varepsilon(\widehat{\mathbb P}_n)$. 
As a consequence, the ambiguity set $\mathbb B_\varepsilon(\widehat{\mathbb P}_n)$ only contains distributions $\mathbb Q$ whose marginal distribution of $X$ is $\varepsilon$ close to $n^{-1}\sum_{i=1}^{n}\delta_{x_i}$, when evaluated with respect to $d_p$ defined in the statement of the lemma. This concludes the proof.
\end{proof}

\begin{proof}[Proof of Lemma~\ref{lemma:convexity}]
Notice first that problem \eqref{eq:dro:dual:uni} is always convex in $(\theta,\lambda)$. This follows from the fact that $\|\theta\|^2/\lambda$ is the perspective function of $\|\theta\|^2$, and therefore it is jointly convex in $(\theta,\lambda)$. In what follows, we will show that assumption $\varepsilon_0 \leq \rho^{-1} M^{-2} R_{\theta}^{-2}\underline{L}_n$ guarantees the concavity in each $u_i$ of the function
\begin{align}
    \label{eq:proof:conc:1}
    u_i(y_i - \theta^\top x_i) + \frac{u_i^2}{4 \lambda}\|\theta\|^2 - L^*(u_i).
\end{align}

Now, from Assumption~\ref{assump:dro}\ref{assump:ell:diff} we know that $L$ is $M$-smooth. As a consequence, its convex conjugate $L^*$ is $1/M$-strongly convex, and therefore can be decomposed as follows
\begin{align*}
    L^*(u_i) = \frac{u_i^2}{2M} + f(u_i),
\end{align*}
for some convex function $f$. Introducing this into \eqref{eq:proof:conc:1}, we obtain
\begin{align*}
    u_i(y_i - \theta^\top x_i) + \frac{u_i^2}{4 \lambda}\|\theta\|^2 - \frac{u_i^2}{2M} - f(u_i),
\end{align*}
which can be seen to be concave in $u_i$ whenever $\lambda$ satisfies 
\begin{align}
    \label{eq:proof:conc:4}
    \lambda \geq \frac{M \|\theta\|^2}{2}.
\end{align}

We will now translate this lower bound on $\lambda$ in the upper bound on the ambiguity radius presented in the statement of the lemma. To do so, we rely on the following result from \cite{blanchet2018optimal} which bounds the optimal solution $\lambda_\star(\theta)$ as a function of $\theta$
\begin{align}
    \label{eq:proof:conc:5}
    \lambda_\star(\theta) \geq \frac{1}{2\sqrt{\varepsilon}}\sqrt{\underline{L}_n} \|\theta\| 
\end{align}
Notice that the lower bound presented in \eqref{eq:proof:conc:5} contains the additional $\sqrt{\varepsilon}$ compared to the one presented in \cite{blanchet2018optimal}. This is because of the scaling by $\sqrt{\varepsilon}$ of $\lambda$ used in \cite{blanchet2018optimal}.

Now, using the bound $\|\theta\| \leq R_{\theta} \sqrt{d}$ from Assumption~\ref{assump:dro}\ref{assump:Theta} in \eqref{eq:proof:conc:4}, and the lower bound \eqref{eq:proof:conc:5}, we can conclude concavity in $u_i$ if
\begin{align*}
    \frac{M R_\theta \sqrt{d} \|\theta\|}{2} \leq \frac{1}{2\sqrt{\varepsilon}} \sqrt{\underline{L}_n}\|\theta\|
\end{align*}
or, equivalently, the ambiguity radius $\varepsilon$ satisfies
\begin{align*}
    \varepsilon \leq \frac{\underline{L}_n}{d M^2 R_\theta^2}
\end{align*}
from which we obtain the upper bound on $\varepsilon_0$ using the fact that $d/n \to \rho$.
\end{proof}

\begin{proof}[Proof of Lemma~\ref{lemma:growthrates}]
We will recover the stated bound on the growth rate of $u_\star$ from the growth rates of $\lambda_\star$ and $\theta$, and the relationship between these three variables. For a bound on the growth rate of $\lambda_\star$, we rely on the following result from \cite{blanchet2018optimal}, which upper bounds $\lambda_\star$ by a function of the norm of $\theta$, as follows
\begin{align}
    \label{eq:proof:growth:1}
    \lambda_\star \leq \left(\frac{1}{2} M R_\theta \sqrt{d} + \sqrt{\overline{L}_n}\right)\|\theta\| = \left(\frac{1}{2} \sqrt{\rho} M R_\theta + \sqrt{\frac{\overline{L}_n}{\sqrt{n}}}\right) \|\theta\| \sqrt{n}.
\end{align}
Notice that, different from the upper bound presented in \cite{blanchet2018optimal}, the multiplication by $\sqrt{\varepsilon}$ is missing from the right-hand side. This is because of the scaling by $\sqrt{\varepsilon}$ of $\lambda$ used in \cite{blanchet2018optimal}.

We will now turn to \eqref{eq:dro:dual:uni} and find the precise relationship between $u_\star$, $\lambda_\star$, and $\theta_\star$. Due to Lemma~\ref{lemma:convexity}, we know that the objective function in \eqref{eq:dro:dual:uni} is concave in $u$. Moreover, it is clear that the objective function is convex in $\theta$ and $\lambda$. These, together with the upper bound \eqref{eq:proof:growth:1}, allow us to rely on Sion's minimax principle to exchange the order of the minimization over $\lambda$ and the maximization over $u$ in \eqref{eq:dro:dual:uni} without changing the optimal value. We thus obtain
\begin{align*}
    \min_{\theta \in \Theta}\; \sup_{u \in \mathbb R^n}\; \inf_{ \lambda \in \Lambda} \; \lambda \varepsilon + \frac{1}{n} \sum_{i=1}^{n}\; u_i(y_i - \theta^\top x_i) + \frac{u_i^2}{4 \lambda}\|\theta\|^2 - L^*(u_i).
\end{align*}

We can now solve the inner minimization over $\lambda$, and recover the optimal solution
\begin{align*}
    \lambda_\star(\theta,u_\star) = \begin{cases} M R_\theta \sqrt{d} \|\theta\|/2 & \quad\quad \text{if} \quad \|\theta\|\|u_\star\|/(2\sqrt{n\varepsilon}) \leq M R_\theta \sqrt{d} \|\theta\|/2 \\ \|\theta\|\|u_\star\|/(2\sqrt{n\varepsilon}) & \quad\quad \text{otherwise}. \end{cases}
\end{align*}
In the first case, when $\|\theta\|\|u_\star\|/(2\sqrt{n\varepsilon}) \leq M R_\theta \sqrt{d} \|\theta\|/2$, we have that
\begin{align*}
    \|u_\star\| \leq \sqrt{\varepsilon_0 d} M R_\theta = \sqrt{\varepsilon_0 \rho} M R_\theta \sqrt{n},
\end{align*}
which gives the desired bound on $\|u_\star\|$. In the second case, the bound on $u_\star$ can be obtained from the upper bound \eqref{eq:proof:growth:1}, as follows
\begin{align*}
    \lambda_\star(\theta,u_\star) = \frac{1}{(2\sqrt{n\varepsilon})} \|\theta\|\|u_\star\| \leq  \left(\frac{1}{2} \sqrt{\rho} M R_\theta + \sqrt{\frac{\overline{L}_n}{n}}\right) \|\theta\| \sqrt{n},
\end{align*}
from which we recover the upper bound on the norm of $u_\star$,
\begin{align*}
    \|u_\star\| \leq 2\sqrt{\varepsilon_0} \left(\frac{1}{2} \sqrt{\rho} M R_\theta + \sqrt{\frac{\overline{L}_n}{n}}\right) \sqrt{n}.
\end{align*}
Finally, the desired result follows by noticing that $\overline{L}_n$ has growth rate at most $n$, due to Assumption~\ref{assump:ell2}.
\end{proof}

\begin{proof}[Proof of Lemma~\ref{lemma:convexity:3}]
We will prove the concavity in each $u_i$ of the objective function in the inner supremum in \eqref{eq:dro:dual:uni:3}, i.e.,
\begin{align}
    \label{eq:proof:conc:3:1}
    \sup_{u_i \in \mathbb R^n} \; u_i(y_i - \theta^\top x_i) + \frac{u_i^2}{4 \lambda}\|\theta\|^2 - L^*(u_i).
\end{align}
From Assumption~\ref{assump:dro}\ref{assump:ell:diff} we know that $L$ is $M$-smooth. As a consequence, its convex conjugate $L^*$ is $1/M$-strongly convex, and therefore can be decomposed as follows
\begin{align*}
    L^*(u_i) = \frac{u_i^2}{2M} + f(u_i),
\end{align*}
for some convex function $f$. Introducing this into \eqref{eq:proof:conc:1}, we obtain
\begin{align*}
    u_i(y_i - \theta^\top x_i) + \frac{u_i^2}{4 \lambda}\|\theta\|^2 - \frac{u_i^2}{2M} - f(u_i),
\end{align*}
which can be seen to be concave in $u_i$ whenever $\lambda$ satisfies 
\begin{align*}
    \lambda \geq \frac{M \|\theta\|^2}{2}.
\end{align*}
However, this follows automatically from the assumption that $\lambda \geq M R_{\theta}^{2} d/2$, since $\|\theta\| \leq  R_\theta \sqrt{d}$.
\end{proof}

\begin{proof}[Proof of Lemma~\ref{lemma:growthrates:3}]
Since this result will be used in the proof of Theorem~\ref{thm:estimationerror:3}, we will use the same formulation as there. Therefore, we introduce the change of variable $w = (\theta-\theta_0)/\sqrt{d}$, and express \eqref{eq:dro:dual:uni:3} in vector form, resulting in
\begin{align}
\label{eq:proof:lemma:growthrates:3:1}
    \min_{w \in \mathcal W}\; \max_{u \in \mathbb R^n} \; - \frac{1}{n} u^\top (\sqrt{d}A)w +\frac{1}{n} u^\top z + \frac{1}{4\lambda n}\|\theta_0 + \sqrt{d}w\|^2\|u\|^2-\frac{1}{n}\sum_{i=1}^{n}L^*(u_i).
\end{align}
Here, $\mathcal W$ is the feasible set of $w$ obtained from $\Theta$ after the change of variable, $A \in \mathbb R^{n \times d}$ is a matrix whose rows are the vectors $x_i$, for $i=1,\ldots,n$, and $z \in \mathbb R^n$ is the measurement noise vector with entries i.i.d.\ distributed according to $\mathbb P_Z$. 

We will prove that the constraint $u \in \mathbb R^n$ can be restricted, without loss of generality, to $\{ u \in \mathbb R^n:\, \|u\|\leq K_\beta \sqrt{n}\}$, for some constant $K_\beta>0$. For this, we will proceed in two steps. We will first show that for any $w \in \mathcal W$, the optimizer $u_{\star,1}$ of
\begin{align}
\label{eq:proof:lemma:growthrates:3:2}
    \max_{u \in \mathbb R^n} \; - \frac{1}{n} u^\top (\sqrt{d}A)w +\frac{1}{n} u^\top z + \frac{1}{4\lambda n}\|\theta_0 + \sqrt{d}w\|^2\|u\|^2-\frac{1}{n}\sum_{i=1}^{n}L^*(u_i)
\end{align}
is upper bounded by the optimizer $u_{\star,2}$ of the following optimization problem
\begin{align}
\label{eq:proof:lemma:growthrates:3:3}
    \max_{u \in \mathbb R^n} \; - \frac{1}{n} u^\top (\sqrt{d}A)w +\frac{1}{n} u^\top z + \frac{1}{4\lambda n}\|\theta_0 + \sqrt{d}w\|^2\|u\|^2- \frac{\|u\|^2}{2 n M}.
\end{align}
Secondly, we will show that the optimizer $u_{\star,2}$ satisfies $\|u_{\star,2}\|\leq K_\beta \sqrt{n}$, for some $K_\beta>0$. 

The first step follows easily by noticing that the objective function in \eqref{eq:proof:lemma:growthrates:3:2} is a difference of convex functions, with the subtracting convex function $\sum_{i=1}^{n}L^*(u_i)/n$ which satisfies the following two properties $\min_{v \in \mathbb R} L^*(v) = L^*(0) =0$, and $L^*(v) \geq v^2/(2M) \geq 0$. In particular, the first property follows from the fact that $\min_{v \in \mathbb R} L(v) = L(0) =0$, and the second property follows from the additional fact that $L^*$ is $1/M$-strongly convex. Consequently, due to these properties, we have that $u_{\star,2} \geq u_{\star_1}$.

We will now focus on the second step. Notice first that the assumptions $\lambda > M R_{\theta}^{2} d/2$ and $\|\theta\|^2 = \|\theta_0 + \sqrt{d}w\|^2 \leq R_{\theta}^2 d$ ensure that the objective function in \eqref{eq:proof:lemma:growthrates:3:3} is concave in $u$. Indeed, can be easily seen from the concavity in $u$ of
\begin{align*}
    \frac{1}{4\lambda}\|\theta_0 + \sqrt{d}w\|^2\|u\|^2- \frac{\|u\|^2}{2M}.
\end{align*}
From the first order optimality conditions in \eqref{eq:proof:lemma:growthrates:3:3}, we have that
\begin{align}
\label{eq:proof:lemma:growthrates:3:4} 
    u_{\star,2} = \left( \frac{\|\theta_0 + \sqrt{d}w\|^2}{2 \lambda} -\frac{1}{M} \right)^{-1} \left( \sqrt{d}A w - z \right).
\end{align}
Now, since $\lambda > M R_\theta^2 d/2 \geq M\|\theta_0 + \sqrt{d}w_\star\|^2/2$, we have that there exists a constant $c_1$ such that
\begin{align*}
    \left(\frac{\|\theta_0 + \sqrt{d}w\|^2}{2 \lambda} - \frac{1}{M}\right)^{-1} < c_1.
\end{align*}
Moreover, we have that there exist constants $c_2, c_3$ such that, w.p.a.\ $1$, $\|A\| \leq c_2$ (since $A$ is a Gaussian matrix with entries i.i.d.\ $\mathcal N(0,1/d)$), and $\|z\| \leq c_3\sqrt{n}$ (due to the weak law of large numbers). Finally, using the fact that $w$ lives in the compact set $\mathcal W$, we have that there exists some constant $K_\beta$ such that $\|u_{\star,2}\|\leq K_\beta \sqrt{n}$. This concludes the proof.
\end{proof}

\section{Proofs for Section~\ref{sec:linear}}
\label{appendix:additional:proofs:4}

\begin{proof}[Proof of Theorem~\ref{thm:estimationerror:1}]
The proof is an application of the main results of \cite{thrampoulidis2018precise} to the particular regularized estimation problem \eqref{eq:dual:p=1}. In what follows, we explain why all the assumptions required by \cite{thrampoulidis2018precise} hold, and why their result simplifies to \eqref{eq:est:1:minimax} in our case. 

First of all, notice that in \eqref{eq:dual:p=1} the loss function is separable, while the regularizer function is not. As a consequence, the minimax scalar problem \eqref{eq:est:1:minimax} is obtained as a combination of Theorem~$1$ (general case) and Theorem~$2$ (separable case) from \cite{thrampoulidis2018precise}. Therefore, we will verify that the loss function and the noise satisfy the assumptions of Theorem~$2$ and that the regularizer function satisfies the assumptions of Theorem~1.

We start by rewriting \eqref{eq:dual:p=1} in the following form
\begin{align}
    \label{eq:proof:thm:estimationerror:1:1}
    \min_{\theta \in \mathbb R^d}\;  \frac{\rho}{d} \left( \sum_{i=1}^{n}\, L(y_i-\theta^\top x_i)  +  \sqrt{n} \varepsilon_0 \text{Lip}(L) \|\theta\|\right).
\end{align}
The normalization factor $1/d$ guarantees that the optimal cost is of constant order. Moreover, the formulation \eqref{eq:dual:p=1} is now in the form required by \cite{thrampoulidis2018precise} (see the proof of Theorem~1 for details), with loss function $\sum_{i=1}^{n}\, L(\theta^\top x_i-y_i)$ and regularizer $\sqrt{n} \varepsilon_0 \text{Lip}(L) \|\theta\|$.

We first show that the loss function and noise satisfy the assumptions of Theorem~$2$ from \cite{thrampoulidis2018precise}:
\begin{enumerate}
    \item The loss function and noise are such that
    \begin{align}
    \label{eq:proof:thm:estimationerror:1:2}
    \mathbb E_{\mathcal N(0,1) \otimes \mathbb P_Z}\left[ L'_+(cG+Z)^2 \right] < \infty.
    \end{align}
    
    \emph{Proof}: From Assumptions~\ref{assump:dro}\ref{assump:ell1}-\ref{assump:ell2} we know that $L$ is continuous, convex and grows linearly at infinity, with slope at most $C$. Therefore, $L'_+$ is bounded and the quantity in \eqref{eq:proof:thm:estimationerror:1:2} is finite. Alternatively, this can also be seen from Assumption~\ref{assump:ell:diff}, which guarantees that $L'$ is continuous, and the fact that $L'$ is at most $C$ at infinity (due to Assumption~\ref{assump:dro}\ref{assump:ell2}). This assumption has been shown in \cite{thrampoulidis2018precise} to guarantee that the quantity $e_L(cG+Z;\tau) - L(Z)$ under the expectation in \eqref{eq:L} is absolutely integrable. 
    
    \item Either $\mathbb E_{\mathbb P_Z}[Z^2]<\infty$, i.e., $Z$ has finite second moment, or $\sup_{u \in \mathbb R} |L(u)|/|u| < \infty$, i.e., $L$ grows at most linearly at infinity.
    
    \emph{Proof}: As discussed above, in our case the latter condition holds. Notice that from Assumption~\ref{assump:dro}\ref{assump:pstar}, which states that the true distribution $\mathbb P$ has finite first moment, $Z$ necessarily has finite first moment, but the second moment might be infinite.
\end{enumerate}

We now show that the objective function from Theorem~$1$ in \cite{thrampoulidis2018precise} reduces to \eqref{eq:est:1:minimax} in our case. For this, we focus on \eqref{eq:proof:thm:estimationerror:1:1}, and consider $\varepsilon_0 \text{Lip}(L)$ as the regularization parameter, and $\sqrt{n}\|\theta\|$ as the regularizer. Consequently, we need to study the convergence in probability of the quantity
\begin{align}
    \label{eq:proof:thm:estimationerror:1:3}
    \frac{1}{d} \left( e_{\sqrt{n}\|\cdot\|}(c h+\theta_0;\tau) - \sqrt{n}\|\theta_0\| \right)
\end{align}
where $h$ a standard Gaussian vector with entries $\mathcal N(0,1)$, and show that its limit is exactly the function $\mathcal G$ defined in \eqref{eq:R}. First notice that, due to Assumption~\ref{assump:cgmt}\ref{assump:theta0}, we have
\begin{align}
    \label{eq:proof:thm:estimationerror:1:4}
    \frac{1}{d} \sqrt{n}\|\theta_0\| \overset{P}{\to} \frac{\sigma_{\theta_0}}{\sqrt{\rho}}.
\end{align}

Now let's concentrate on the first term, which involves the Moreau envelope of $\sqrt{n}\|\cdot\|$. After some basic convex optimization steps, it can be seen that
\begin{align}
    \label{eq:proof:thm:estimationerror:1:5}
    e_{\sqrt{n}\|\cdot\|}(c h+\theta_0;\tau) = \begin{cases} \sqrt{n}\|c h+\theta_0\|-{n\tau}/{2} \quad\quad &\mbox{if }\; \|c h+\theta_0\|>\sqrt{n}\tau, \\ \|c h+\theta_0\|^2/{(2 \tau)}
 \quad\quad &\mbox{if }\; \|c h+\theta_0\|\leq\sqrt{n}\tau. \end{cases}
\end{align}
Finally, the convergence in probability to $\mathcal G(c,\tau)$ follows from 
\begin{align}
    \label{eq:proof:thm:estimationerror:1:6}
    \frac{\|c h+\theta_0\|^2}{d} \overset{P}{\to} c^2 + \sigma_{\theta_0}^2,
\end{align}
using Assumption~\ref{assump:cgmt}\ref{assump:theta0} and Lemma~\ref{lemma:triangular:array}.

We now show that $\mathcal G(c,\tau)$ satisfies the following two assumptions required by Theorem~1 in \cite{thrampoulidis2018precise}:
\begin{enumerate}
    \item $\lim_{\tau \to 0^+}  \mathcal G(\tau,\tau) = 0$.
    
    \emph{Proof}: For $\tau \to 0^+$, we have $\|\tau h+\theta_0\|>\sqrt{n}\tau$, and thus $\mathcal G(\tau,\tau)$ satisfies
    \begin{align*}
        \lim_{\tau \to 0^+} \mathcal G(\tau,\tau) = \lim_{\tau \to 0^+} \frac{\sqrt{\tau^2 + \sigma_{\theta_0}^2}}{\sqrt{\rho}}  - \frac{\tau}{2 \rho} - \frac{\sigma_{\theta_0}}{\sqrt{\rho}} = \frac{\sigma_{\theta_0}}{\sqrt{\rho}} - \frac{\sigma_{\theta_0}}{\sqrt{\rho}}= 0.
    \end{align*}
    
    \item $\lim_{c \to +\infty} (c^2/(2\tau)-\mathcal G(c,\tau)) = +\infty$ for all $\tau>0$.
    
    \emph{Proof}: For $c \to +\infty$, we have $\|c h+\theta_0\|>\sqrt{n}\tau$, and thus $\mathcal G(c,\tau)$ satisfies
    \begin{align*}
        \mathcal G(c,\tau) = \frac{\sqrt{c^2 + \sigma_{\theta_0}^2}}{\sqrt{\rho}}  - \frac{\tau}{2 \rho} - \frac{\sigma_{\theta_0}}{\sqrt{\rho}},
    \end{align*}
    from which the desired limit follows immediately.
\end{enumerate}

Finally, we show that for the particular case of regularized estimators of the form \eqref{eq:dual:p=1}, the definition of the expected Moreau envelope $\mathbb E_{\mathcal N(0,1) \otimes \mathbb P_Z}\left[ e_L(cG+Z;\tau) - L(Z) \right]$ used in \cite{thrampoulidis2018precise} can be simplified to the expected Moreau envelope $\mathcal L(c,\tau)$ defined in \eqref{eq:L}, i.e., the term $-L(Z)$ can be dropped. In \cite{thrampoulidis2018precise}, the term $-L(Z)$ was added to account for the case when the noise $Z$ has unbounded moments. In that case, the sequence 
\begin{align}
\label{eq:conv:moreau:envelope}
    \frac{1}{n} \sum_{i=1}^n e_{L}(\alpha G + Z,\frac{\tau_1}{\beta})
\end{align}
might not converge as $d \to \infty$, while by subtracting $L(Z)$ it is shown to converge. In our case, the convergence of \eqref{eq:conv:moreau:envelope} is automatically guaranteed by Assumption~\ref{assump:dro}\ref{assump:ell2} on the growth rate of the loss function $L$, even though the noise $Z$ might have unbounded second moment. Indeed, its limit (in probability) can be immediately recovered from the weak law of large numbers as
\begin{align}
    \label{eq:conv:moreau:envelope:1}
    \frac{1}{n} \sum_{i=1}^n e_{L}(\alpha G + Z,\frac{\tau_1}{\beta}) \overset{P}{\to} \mathbb E_{\mathcal N(0,1) \otimes \mathbb P_Z}\left[ e_{L}(\alpha G + Z;\frac{\tau_1}{\beta})\right],
\end{align}
which is precisely the expected Moreau envelope. The quantity under the expectation in \eqref{eq:conv:moreau:envelope:1} is absolutely integrable, and therefore the expected Moreau envelope is well defined, as explained in what follows. For every $\beta \geq 0$ and $\tau_1>0$ we have
\begin{align*}
    \left|e_{L}(\alpha G + Z;\frac{\tau_1}{\beta})\right| = \min_{v \in \mathbb R}\; \frac{\beta}{2 \tau_1} \left( \alpha G + Z - v \right)^2 + L(v) &\leq L(\alpha G + Z)
\end{align*}
which is integrable due to the fact that both $G$ and $Z$ have finite first moment. In particular, the fact that $Z$ has finite first moment is a direct consequence of Assumption~\ref{assump:dro}\ref{assump:pstar}.

The result now follows from Theorem~$1$ and Theorem~$2$ in \cite{thrampoulidis2018precise}.
\end{proof}

\begin{proof}[Proof of Theorem~\ref{thm:estimationerror:2}]
Since the proof is long, we divide it into steps, which we briefly explain before jumping into the technical details.

\paragraph{Step 1: We show that problem \eqref{eq:dro} can be equivalently rewritten as a (PO) problem.} The proof builds upon the dual formulation \eqref{eq:dro:dual:uni} presented in Fact~\ref{thm:nonconvexduality}. After introducing the change of variable $w = (\theta-\theta_0)/\sqrt{d}$ and expressing it in vector form, \eqref{eq:dro:dual:uni} becomes
\begin{align}
\label{eq:proof:thm:estimationerror:2:0}
    \min_{w \in \mathcal W, \lambda \geq 0}\; \max_{u \in \mathbb R^n} \; \lambda \varepsilon - \frac{1}{n} u^\top (\sqrt{d}A)w +\frac{1}{n} u^\top z + \frac{1}{4\lambda n}\|\theta_0 + \sqrt{d}w\|^2\|u\|^2-\frac{1}{n}\sum_{i=1}^{n}L^*(u_i)
\end{align}
where $\mathcal W$ is the feasible set of $w$ obtained from $\Theta$ after the change of variable, $A \in \mathbb R^{n \times d}$ denotes the matrix whose rows are the vectors $x_i$, for $i=1,\ldots,n$, and $z \in \mathbb R^n$ is the measurement noise vector with entries i.i.d.\ distributed according to $\mathbb P_Z$. Since we are interested only in the case where $w$ satisfies $\|w\| < \sigma_{\theta_0}$, instead of the constraint set $\mathcal W$, we will work with a set of the form $\mathcal S_w := \{w \in \mathbb R^d: \, \|w\| \leq K_\alpha\}$, for some $K_\alpha < \sigma_{\theta_0}$. This limitation on the magnitude of $w$, which might seem arbitrary and not necessary at this point, will be shown to be fundamental later in the proof. Therefore, from now on we impose such a constraint on $w$ (i.e., $w \in \mathcal S_w$). Notice that if $K_\alpha < \sigma_{\theta_0}$, then the condition $R_\theta \geq 2 \sigma_{\theta_0}$ ensures that $\mathcal S_w \subset \mathcal W$ w.p.a.\ $1$ as $d,n \to \infty$.

Due to Lemma~\ref{lemma:convexity}, we know that $\varepsilon_0 \leq \rho^{-1} M^{-2} R_{\theta}^{-2}\underline{L}_n$ guarantees that $\lambda \geq M R_\theta \sqrt{d} \|\theta_0 + \sqrt{d}w\|/2$. As a consequence, we define the set $\Lambda := \{\lambda \in \mathbb R:\, \lambda \geq M R_\theta \sqrt{d} \|\theta_0 + \sqrt{d}w\|/2\}$, and equivalently rewrite \eqref{eq:proof:thm:estimationerror:2:1} as
\begin{align}
\label{eq:proof:thm:estimationerror:2:1}
    \min_{w \in \mathcal S_w, \lambda \in \Lambda}\; \max_{u \in \mathbb R^n} \; \lambda \varepsilon - \frac{1}{n} u^\top (\sqrt{d}A)w +\frac{1}{n} u^\top z + \frac{1}{4\lambda n}\|\theta_0 + \sqrt{d}w\|^2\|u\|^2-\frac{1}{n}\sum_{i=1}^{n}L^*(u_i)
\end{align}
Moreover, from Lemma~\ref{lemma:growthrates} we know that the optimal solution $u_\star$ is in the order of $\sqrt{n}$. As a consequence, by rescaling the variable $u$ as $u \to u \sqrt{n}$, and introducing the convex compact set $\mathcal S_u := \{u \in \mathbb R^n:\, \|u\|\leq K_\beta\}$, for some sufficiently large $K_\beta > 0$, we can equivalently rewrite \eqref{eq:proof:thm:estimationerror:2:1} as
\begin{align}
\label{eq:proof:thm:estimationerror:2:1:1}
    \min_{w \in \mathcal S_w, \lambda \in \Lambda}\; \max_{u \in \mathcal S_u} \; \lambda \varepsilon - \frac{1}{\sqrt{n}} u^\top (\sqrt{d}A)w +\frac{1}{\sqrt{n}} u^\top z + \frac{1}{4\lambda}\|\theta_0 + \sqrt{d}w\|^2\|u\|^2-\frac{1}{n}\sum_{i=1}^{n}L^*(u_i \sqrt{n})
\end{align}

We will now show that the assumption that $L$ can be written as the Moreau envelope with parameter $M$ of a convex function $f$ is natural, and always satisfied for $M$-smooth functions (which is precisely our case, due to Assumption~\ref{assump:dro}\ref{assump:ell:diff}). Since $L$ is $M$-smooth, we have that its convex conjugate $L^*$ is $1/M$-strongly convex, and therefore $L^*$ can be written as 
\begin{align}
    \label{eq:proof:thm:estimationerror:2:2}
    L^*(\cdot) = \frac{1}{2M}(\cdot)^2 + f^*(\cdot),
\end{align}
where $f^*$ is the conjugate of a convex function $f$. Notice that $f$ can be easily obtained (as a function of $L$) from \eqref{eq:proof:thm:estimationerror:2:2} as $f = (L^* - 1/(2M)(\cdot)^2)^*$. Conversely, the loss function $L$ is nothing but the Moreau envelope of $f$, as can be seen from the following
\begin{align*}
    L(\cdot) = \left(\frac{1}{2M}(\cdot)^2 + f^*(\cdot)\right)^*(\cdot) = \left(\frac{M}{2}(\cdot)^2 \star_{\text{inf}} f(\cdot)\right)(\cdot) = e_f\left(\cdot,\frac{1}{M}\right),
\end{align*}
where $\star_{\text{inf}}$ denotes the infimal convolution operation. Since $L$ is continuous and convex (notice that $L$ is trivially proper due to Assumption~\ref{assump:cgmt}\ref{assump:ell:0}), with $\min_{v \in \mathbb R} L(v) = 0$, we have that $L^*$ is lower semicontinuous and convex, and therefore both $f$ and $f^*$ are lower semicontinuous and convex.

Using the decomposition \eqref{eq:proof:thm:estimationerror:2:2}, problem \eqref{eq:proof:thm:estimationerror:2:1:1} can be rewritten as
\begin{align}
\label{eq:proof:thm:estimationerror:2:3}
    \min_{w \in \mathcal S_w, \lambda \in \Lambda}\; \max_{u \in \mathcal S_u} \; \lambda \varepsilon - \frac{1}{\sqrt{n}} u^\top (\sqrt{d}A)w +\frac{1}{\sqrt{n}} u^\top z + \frac{1}{4\lambda}\|\theta_0 + \sqrt{d}w\|^2\|u\|^2- \frac{1}{2 M}\|u\|^2- \frac{1}{n}\sum_{i=1}^{n}f^*(u_i \sqrt{n}).
\end{align}
It is important now to highlight that the expression
\begin{align*}
    \frac{1}{4\lambda}\|\theta_0 + \sqrt{d}w\|^2\|u\|^2- \frac{1}{2M}\|u\|^2
\end{align*}
is concave is $u$. This follows from the constraint $\lambda \in \Lambda$, which ensures that $\lambda \geq M R_\theta \sqrt{d} \|\theta\|/2 \geq M\|\theta\|^2/2 = M\|\theta_0 + \sqrt{d}w\|^2/2$. As will be shown later, the decomposition \eqref{eq:proof:thm:estimationerror:2:2} and the concavity in $u$ of the above-mentioned expression will allow us to use the CGMT.

For ease of notation, we introduce the function $F^*:\mathbb R^n \to \mathbb R$, defined as follows
\begin{align*}
    F^*(v):=\sum_{i=1}^{n}f^*(v_i).
\end{align*}
Since $f,f^*$ are lower semicontinuous and convex, we have that $F^*$ is lower semicontinuous, and convex, and therefore its convex conjugate $F = F^{**}$ can be easily recovered as follows
\begin{align*}
    F(u) = \sup_{v \in \mathbb R^n} v^\top u - F^*(v) = \sum_{i=1}^n\; \sup_{v_i \in \mathbb R}\; v_i u_i - f^*(v_i) = \sum_{i=1}^n f(u_i).
\end{align*}
We now rewrite $F^*(u \sqrt{n})$ using its convex conjugate as follows
\begin{align}
\label{eq:proof:thm:estimationerror:2:4}
\begin{split}
    \min_{\substack{w \in \mathcal S_w \\ \lambda \in \Lambda \\ s \in \mathbb R^n}}\; \max_{u \in \mathcal S_u}\;  \lambda \varepsilon - \frac{1}{\sqrt{n}} u^\top (\sqrt{d}A)w +\frac{1}{\sqrt{n}} u^\top z + \frac{1}{4\lambda}\|\theta_0 + \sqrt{d}w &\|^2 \|u\|^2 -\frac{1}{2 M}\|u\|^2 - \\&-\frac{1}{\sqrt{n}}s^\top u + \frac{1}{n} F(s)
\end{split}
\end{align}
where we have used Sion's minimax principle to exchange the minimization over $s$ with the maximization over $u$. In particular, this last step is possible only because the objective function in \eqref{eq:proof:thm:estimationerror:2:4} is concave in $u$, due to the decomposition \eqref{eq:proof:thm:estimationerror:2:2}. 

Since the objective function in \eqref{eq:proof:thm:estimationerror:2:4} is convex-concave in $(\lambda,u)$ and $\mathcal S_u$ is a convex compact set, we can use Sion's minimax principle to exchange the minimization over $\lambda$ to obtain the equivalent optimization problem
\begin{align}
\label{eq:proof:thm:estimationerror:2:6}
\begin{split}
    \min_{\substack{w \in \mathcal S_w \\ s \in \mathbb R^n}}\; \max_{u \in \mathcal S_u}\;   - \frac{1}{\sqrt{n}} u^\top (\sqrt{d}A)w +\frac{1}{\sqrt{n}} u^\top z &-\frac{1}{2M} \|u\|^2-\frac{1}{\sqrt{n}}s^\top u + \frac{1}{n} F(s) + \\& +\min_{\lambda \in \Lambda}\; \lambda \varepsilon + \frac{1}{4\lambda}\|\theta_0 + \sqrt{d}w\|^2\|u\|^2
\end{split}
\end{align}

Notice that the new objective function in \eqref{eq:proof:thm:estimationerror:2:6} (which contains the minimization over $\lambda$) is convex in $(w,s)$ and concave in $u$. Indeed, the concavity in $u$ follows easily, since the objective function in \eqref{eq:proof:thm:estimationerror:2:6} is the result of minimization of concave functions. Moreover, the convexity in $w$ follows from the fact that the objective function in \eqref{eq:proof:thm:estimationerror:2:4} is jointly convex in $(w,\lambda)$ and partial minimization of jointly convex functions remains convex. Indeed, the joint convexity in $(w,\lambda)$ can be seen by noticing that $1/(4\lambda)\|\theta_0 + \sqrt{d}w\|^2\|u\|^2$ is nothing but a shifted version of the perspective function of $1/4 \|\sqrt{d}w\|^2\|u\|^2$. Finally, the convexity in $s$ can be concluded from the convexity of $F$, and the joint convexity in $(w,s)$ follows easily since $w$ and $s$ are decoupled.

Using Assumption~\ref{assump:0}, we can see that $\sqrt{d}A$ has entries i.i.d.\ $\mathcal N(0,1)$. Moreover, the objective function \eqref{eq:proof:thm:estimationerror:2:6} is convex-concave in $(w,u)$, with $\mathcal S_w, \mathcal S_u$ convex compact sets. As a consequence, problem \eqref{eq:proof:thm:estimationerror:2:6} is a (PO) problem, in the form \eqref{eq:cgmt:po}. 

\paragraph{Step 2: We apply CGMT and obtain two modified (AO) problems.} 

Since \eqref{eq:proof:thm:estimationerror:2:6} is a (PO) problem, we can associate to it the following (AO) problem
\begin{align}
\label{eq:proof:thm:estimationerror:2:7}
\begin{split}
    \min_{\substack{w \in \mathcal S_w \\ s \in \mathbb R^n}}\; \max_{u \in \mathcal S_u}\;   \frac{1}{\sqrt{n}} \|w\|g^\top u - \frac{1}{\sqrt{n}} \|u\|h^\top w +\frac{1}{\sqrt{n}} u^\top z -\frac{1}{2M}\|u\|^2-\frac{1}{\sqrt{n}}s^\top u + \frac{1}{n} F(s) + \\ + \min_{\lambda \in \Lambda}\; \lambda \varepsilon + \frac{1}{4\lambda}\|\theta_0 + \sqrt{d}w\|^2\|u\|^2
\end{split}
\end{align}
which is tightly related to the (PO) problem in the high-dimensional regime, as explained in Section~\ref{sec:cgmt}. At this point, following the discussion presented in Section~\ref{sec:cgmt}, we consider the following modified (AO) problem,
\begin{align}
\label{eq:proof:thm:estimationerror:2:8}
\begin{split}
    \max_{0 \leq \beta \leq K_\beta}\; \min_{\substack{w \in \mathcal S_w\\ s \in \mathbb R^n}}\; \max_{\|u\|= \beta}\;  \frac{1}{\sqrt{n}} \|w\|g^\top u - \frac{1}{\sqrt{n}} \|u\|h^\top w +\frac{1}{\sqrt{n}} u^\top z -\frac{1}{2M}\|u\|^2-\frac{1}{\sqrt{n}}s^\top u + \frac{1}{n} F(s) + \\ + \min_{\lambda \in \Lambda}\; \lambda \varepsilon + \frac{1}{4\lambda}\|\theta_0 + \sqrt{d}w\|^2\|u\|^2,
\end{split}
\end{align}
where we have separated the maximization over $u \in \mathcal S_u = \{u \in \mathbb R^n:\, \|u\|\leq K_\beta\}$ into the maximization over the magnitude of $u$, i.e., $0 \leq \beta \leq K_\beta$, and the maximization over $\|u\| = \beta$. Recall that, since the (AO) problem is not convex-concave (due to the random vectors $g$ and $h$), the exchange of the minimization over $(w,s)$ and the maximization over $\beta$ is not justified. Therefore the two problems \eqref{eq:proof:thm:estimationerror:2:7} and \eqref{eq:proof:thm:estimationerror:2:8} might not be equivalent. However, as seen in Fact~\ref{prop:cgmt}, the modified (AO) problem \eqref{eq:proof:thm:estimationerror:2:8} can be used to study the optimal solution of the (PO) problem \eqref{eq:proof:thm:estimationerror:2:6} in the asymptotic regime, based solely on the optimal value of the modified (AO) problem.

We now proceed by eliminating the variable $\lambda$ from \eqref{eq:proof:thm:estimationerror:2:8}. The objective function is convex in $\lambda$, and its optimizer $\lambda_\star$ can be easily computed as follows
\begin{align*}
    \lambda_\star(w,u) = \begin{cases} \|\theta_0+\sqrt{d}w\|\|u\|/(2\sqrt{\varepsilon}) & \quad\quad \text{if} \quad \|\theta_0+\sqrt{d}w\|\|u\|/(2\sqrt{\varepsilon}) > M R_\theta \sqrt{d} \|\theta_0+\sqrt{d}w\|/2 \\ M R_\theta \sqrt{d} \|\theta_0+\sqrt{d}w\|/2 & \quad\quad \text{otherwise}, \end{cases}
\end{align*}
or, equivalently,
\begin{align}
\label{eq:proof:thm:estimationerror:2:9}
    \lambda_\star(w,u) = \begin{cases} \sqrt{n} \|\theta_0+\sqrt{d}w\|\|u\|/(2\sqrt{\varepsilon_0}) & \quad\quad \text{if} \quad \|u\| > B := \sqrt{\varepsilon_0 \rho}\, M R_\theta \\ M R_\theta \sqrt{d} \|\theta_0+\sqrt{d}w\|/2 & \quad\quad \text{otherwise}, \end{cases}
\end{align}
where we have used the fact that $\varepsilon = \varepsilon_0/n$.

As a consequence, by introducing the optimal solution $\lambda_\star$ into \eqref{eq:proof:thm:estimationerror:2:8}, we can equivalently rewrite problem \eqref{eq:proof:thm:estimationerror:2:8} as
\begin{align}
\label{eq:proof:thm:estimationerror:2:9:1}
    \max \left\{\mathcal V_{d,1}, \mathcal V_{d,2}\right\}
\end{align}
with
\begin{align}
\label{eq:proof:thm:estimationerror:2:9:2}
\begin{split}
    \mathcal V_{d,1} := \max_{B < \beta \leq K_\beta}\; \min_{\substack{w \in \mathcal S_w\\ s \in \mathbb R^n}}\; \max_{\|u\|= \beta}\;  \frac{1}{\sqrt{n}} \|w\|g^\top u - \frac{1}{\sqrt{n}} \|u\|h^\top w +\frac{1}{\sqrt{n}} u^\top z -\frac{1}{2M}\|u\|^2-\frac{1}{\sqrt{n}}s^\top u  + \\ + \frac{1}{n} F(s) + \frac{\sqrt{\varepsilon_0}}{\sqrt{n}}\|\theta_0 + \sqrt{d}w\|\|u\|,
\end{split}
\end{align}
and
\begin{align}
\label{eq:proof:thm:estimationerror:2:9:3}
\begin{split}
    \mathcal V_{d,2} := \max_{0 \leq \beta \leq B}\; \min_{\substack{w \in \mathcal S_w\\ s \in \mathbb R^n}}\; \max_{\|u\|= \beta}\;  \frac{1}{\sqrt{n}} \|w\|g^\top u - \frac{1}{\sqrt{n}} \|u\|h^\top w +\frac{1}{\sqrt{n}} u^\top z -\frac{1}{2M}\|u\|^2-\frac{1}{\sqrt{n}}s^\top u  + \\ + \frac{1}{n} F(s) + \frac{\|\theta_0 + \sqrt{d}w\|}{\sqrt{n}} \left( p + \frac{\|u\|^2}{q} \right),
\end{split}
\end{align}
where, for ease of notation, we have introduced the two constants $p := (\varepsilon_0\sqrt{\rho}M R_\theta)/2$ and $q := 2 \sqrt{\rho} M R_\theta$.

In what follows, we will first simplify the two problems \eqref{eq:proof:thm:estimationerror:2:9:2} and \eqref{eq:proof:thm:estimationerror:2:9:3}, by reducing them to scalar problems. Then, following the reasoning of Fact~\ref{prop:cgmt}, we will study their asymptotic optimal values for $d,n \to \infty$. We first focus on problem \eqref{eq:proof:thm:estimationerror:2:9:2}, but similar reasoning will be used subsequently for problem \eqref{eq:proof:thm:estimationerror:2:9:3}.

\paragraph{Step 3.1: We consider the first modified (AO) problem \eqref{eq:proof:thm:estimationerror:2:9:2}, and show that it can be reduced to a scalar problem, which involves only four scalar variables.}

We start by scalarizing problem \eqref{eq:proof:thm:estimationerror:2:9:2} over the magnitude of $u$, leading to the following problem
\begin{align}
\label{eq:proof:thm:estimationerror:2:10}
    \max_{B < \beta \leq K_\beta}\; \min_{\substack{w \in \mathcal S_w\\ s \in \mathbb R^n}}\;   \frac{\beta}{\sqrt{n}}\| \|w\|g + z - s \| - \frac{\beta}{\sqrt{n}} h^\top w + \frac{\sqrt{\varepsilon_0}\beta}{\sqrt{n}}\|\theta_0 + \sqrt{d}w\| -\frac{\beta^2}{2M} + \frac{1}{n} F(s).
\end{align}
We now employ the square-root trick to rewrite the first term in the objective function of \eqref{eq:proof:thm:estimationerror:2:10} as follows
\begin{align*}
    \frac{1}{\sqrt{n}}\|\|w\|g + z - s \| = \inf_{\tau_1>0} \frac{\tau_1}{2} + \frac{1}{2n\tau_1}\|\|w\|g + z - s \|^2.
\end{align*}
Introducing this in \eqref{eq:proof:thm:estimationerror:2:10} and re-organizing the terms gives
\begin{align*}
    \max_{B < \beta \leq K_\beta}\; \min_{\substack{w \in \mathcal S_w\\ s \in \mathbb R^n \\ \tau_1>0}}\;   \frac{\beta \tau_1}{2} -\frac{\beta^2}{2M} + \frac{\beta}{2n\tau_1}\| \|w\|g + z - s \|^2  + \frac{1}{n} F(s) - \frac{\beta}{\sqrt{n}} h^\top w + \frac{\sqrt{\varepsilon_0}\beta}{\sqrt{n}}\|\theta_0 + \sqrt{d}w\|,
\end{align*}
which allows us to introduce the Moreau envelope of $F$, i.e.,
\begin{align}
\label{eq:proof:thm:estimationerror:2:11}
    e_{F}(\alpha g + z,\frac{\tau_1}{\beta}) := \min_{s \in \mathbb R^n}\; \frac{\beta}{2\tau_1}\|\alpha g + z - s \|^2 + F(s),
\end{align}
giving rise to
\begin{align}
\label{eq:proof:thm:estimationerror:2:12}
    \max_{B < \beta \leq K_\beta}\; \min_{\substack{w \in \mathcal S_w \\ \tau_1>0}}\;   \frac{\beta \tau_1}{2}  -\frac{\beta^2}{2M} + \frac{1}{n} e_{F}(\|w\| g + z,\frac{\tau_1}{\beta}) - \frac{\beta}{\sqrt{n}} h^\top w + \frac{\sqrt{\varepsilon_0}\beta}{\sqrt{n}}\|\theta_0 + \sqrt{d}w\|.
\end{align}
We employ the square-root trick a second time and rewrite the last term in the objective function of problem \eqref{eq:proof:thm:estimationerror:2:12} as
\begin{align}
\label{eq:proof:thm:estimationerror:2:13}
    \frac{1}{\sqrt{n}}\|\theta_0 + \sqrt{d}w\| &= \inf_{\tau_2>0} \frac{\tau_2}{2} + \frac{1}{2n\tau_2}\|\theta_0 + \sqrt{d}w\|^2.
\end{align}
leading to
\begin{align}
\label{eq:proof:thm:estimationerror:2:14}
    \max_{B < \beta \leq K_\beta}\; \min_{\substack{w \in \mathcal S_w \\ \tau_1, \tau_2>0}}\; \frac{\beta \tau_1}{2}  -\frac{\beta^2}{2M} + \frac{1}{n} e_{F}(\|w\| g + z,\frac{\tau_1}{\beta}) - \frac{\beta}{\sqrt{n}} h^\top w + \frac{\sqrt{\varepsilon_0}\beta\tau_2}{2} + \frac{\sqrt{\varepsilon_0}\beta}{2n\tau_2}\|\theta_0 + \sqrt{d}w\|^2.
\end{align}

We now proceed with the aim of scalarizing the problem over the magnitude of $w$. Similarly to what was done for $u$, we can separate the constraint $w \in \mathcal S_w$ into the two constraints $0 \leq \alpha \leq K_\alpha$ and $\|w\| = \alpha$, and by expressing $\|\theta_0 + \sqrt{d}w\|^2$ as $\|\theta_0\|^2 + d\|w\|^2 + 2\sqrt{d}\theta_0^\top w$, we can now scalarize the problem over the magnitude of $w$ and obtain
\begin{align}
\label{eq:proof:thm:estimationerror:2:15}
\begin{split}
    \max_{B < \beta \leq K_\beta}\; \inf_{\substack{0 \leq \alpha \leq K_\alpha \\ \tau_1,\tau_2>0}}\; \frac{\beta \tau_1}{2} + \frac{\sqrt{\varepsilon_0}\beta\tau_2}{2}  -\frac{\beta^2}{2M} + \frac{1}{n} e_{F}(\alpha g + z,\frac{\tau_1}{\beta}) &- \frac{\alpha \beta}{\sqrt{n}}\|\frac{\sqrt{\rho \varepsilon_0}}{\tau_2} \theta_0 - h \| + \\&+ \frac{\sqrt{\varepsilon_0}\beta}{2n\tau_2}\left(\|\theta_0\|^2 + d \alpha^2\right).
\end{split}
\end{align}
We denote by $\mathcal O_{d,1}(\alpha,\tau_1,\tau_2,\beta)$ the objective function in problem \eqref{eq:proof:thm:estimationerror:2:15}, parametrized by the dimension $d$, where the index $1$ recalls the fact that we are focusing on the first problem \eqref{eq:proof:thm:estimationerror:2:9:2}.

\paragraph{Step 3.2: We show that $\mathcal O_{d,1}$ is continuous on its domain, jointly convex in $(\alpha,\tau_1,\tau_2)$, and concave in $\beta$.} 

Let's first concentrate on the continuity property. Notice that we only need to show that the Moreau envelope $e_{F}(\alpha g + z,\tau_1/\beta)$ is continuous, since all the other terms in the objective function are trivially continuous. Since $F$ is lower semicontinuous and convex (recall that $F(u) = \sum_{i=1}^n f(u_i)$), the continuity of the Moreau envelope $e_{F}(\alpha g + z,\tau_1/\beta)$ follows from \cite[Theorem~2.26(b)]{rockafellar2009variational}.

Let's now focus on the joint convexity in $(\alpha,\tau_1,\tau_2)$. Since $\tau_1$ and $\tau_2$ are decoupled, it will be enough to prove the joint convexity in $(\alpha,\tau_1)$ and $(\alpha,\tau_2)$. We first concentrate on the pair $(\alpha,\tau_1)$, and notice that it suffices to prove that the objective function on the right-hand side of \eqref{eq:proof:thm:estimationerror:2:11} is jointly convex in $(\alpha,\tau_1,s)$. Then, after minimizing over $s \in \mathbb R^n$, the Moreau envelope $e_{F}(\alpha g + z,\tau_1/\beta)$ remains jointly convex in $(\alpha,\tau_1)$. But this is certainly the case since $\|\alpha g - s \|^2$ is jointly convex in $(\alpha,s)$, $1/\tau_1\|\alpha g - s \|^2$ is the perspective function of $\|\alpha g - s \|^2$, and therefore jointly convex in $(\alpha,\tau_1,s)$, and the shifted function $1/\tau_1\|\alpha g + z - s \|^2$ remains jointly convex.

We now concentrate on the pair $(\alpha, \tau_2)$. Proving the joint convexity will not be as straightforward, due primarily to the minus sign in front of the fifth term of $\mathcal O_{d,1}$. Moreover, due to the randomness of the objective function (coming from the random vector $h$), the joint convexity will be shown to hold true w.p.a.\ $1$ as $d \to \infty$. However, this is not restrictive for the asymptotic study of interest here. We start by considering the last two terms in the objective function $\mathcal O_{d,1}$ (notice that the linear term $\sqrt{\varepsilon_0}\beta \tau_2/2$ does not affect the convexity). Since $\beta,\varepsilon_0,\rho,\|\theta_0\|$ have no influence on the convexity, and $\beta > 0$, we can simplify the part of the objective function containing the last two terms, which after some manipulations becomes
\begin{align*}
    -\alpha \left\|\frac{1}{\tau_2} \frac{\theta_0}{\|\theta_0\|}-\frac{1}{\sqrt{\rho \varepsilon_0}} \frac{h}{\|\theta_0\|} \right\| + \frac{1}{\tau_2}\left( \frac{\|\theta_0\|^2/d + \alpha^2}{2\|\theta_0\|/\sqrt{d}}\right)
\end{align*}
or, equivalently,
\begin{align}
\label{eq:proof:thm:estimationerror:2:17}
    -\alpha \sqrt{\frac{1}{\tau_2^2} +\frac{\|h\|^2}{\rho \varepsilon_0\|\theta_0\|^2} - \frac{1}{\tau_2}\frac{2 \theta_0^\top h}{\sqrt{\rho \varepsilon_0}\|\theta_0\|^2}} + \frac{1}{\tau_2}\left( \frac{\|\theta_0\|^2/d + \alpha^2}{2\|\theta_0\|/\sqrt{d}}\right).
\end{align}

Due to Assumption~\ref{assump:cgmt}\ref{assump:theta0} we know that $\|\theta_0\|/\sqrt{d} \overset{P}{\to} \sigma_{\theta_0}$. In addition, $\|h\|/\sqrt{d} \overset{P}{\to} 1$ follows from the weak law of large numbers. As a consequence, the term $\|h\|^2/(\rho \varepsilon_0 \|\theta_0\|^2)$ converges in probability to $1/(\rho \varepsilon_0 \sigma_{\theta_0}^2)$. Moreover, from Lemma~\ref{lemma:triangular:array} we know that
\begin{align}
\label{eq:proof:thm:estimationerror:2:18}
    \frac{\theta_0^\top h}{d} \overset{P}{\to} 0,
\end{align}
and therefore the term $2\theta_0^\top h/(\sqrt{\rho \varepsilon_0}\|\theta_0\|^2)$ which multiplies $1/\tau_2$ converges in probability to $0$. 

These facts are employed to prove that the Hessian (with respect to $(\alpha,\tau_2)$) of \eqref{eq:proof:thm:estimationerror:2:17} is positive semidefinite w.p.a.\ $1$ ad $d \to \infty$. After some algebraic manipulations, it can be checked that the diagonal terms of the Hessian are positive for all $\alpha \geq 0,\tau_2>0$ w.p.a.\ $1$ ad $d \to \infty$. 
Moreover, it can be checked that the determinant of the Hessian is nonnegative (and therefore the Hessian is positive semidefinite) w.p.a.\ $1$ as $d \to \infty$ if and only if the following condition holds
\begin{align}
\label{eq:proof:thm:estimationerror:2:20}
    \alpha \leq \frac{\|\theta_0\|}{\sqrt{d}}\sqrt{1+\tau_2^2 \frac{\|h\|^2}{\rho \varepsilon_0\|\theta_0\|^2} - \tau_2 \frac{2 \theta_0^\top h}{\sqrt{\rho \varepsilon_0}\|\theta_0\|^2}}.
\end{align}
From \eqref{eq:proof:thm:estimationerror:2:20}, we recover the following sufficient condition,
\begin{align}
\label{eq:proof:thm:estimationerror:2:21}
    \alpha \leq \sigma_{\theta_0} - \delta,
\end{align}
for some arbitrarily small $\delta>0$, which guarantees that $\mathcal O_{d,1}$ is jointly convex in $(\alpha,\tau_2)$ w.p.a.\ $1$ as $d \to \infty$.  As a consequence, from now on, we will work with $K_\alpha := \sigma_{\theta_0} - \delta$.

Recalling the definition of $\alpha$, this condition is equivalent to imposing the following relative error bound 
\begin{align*}
    \frac{\|\hat{\theta}_{\mathrm{DRE}} -\theta_0\|}{\|\theta_0\|} < 1,
\end{align*}
which is practically desirable in any estimation problem.

We will now prove the concavity in $\beta$ of $\mathcal O_{d,1}$. This follows easily from the concavity of the Moreau envelope $e_F(\alpha g + z,\tau_1/\beta)$ in $\beta$ (as a minimization of affine functions), the concavity of the term $-\beta^2/(2M)$, and the linearity in $\beta$ of all the other terms.

As a consequence of the continuity and convexity-concavity, we can apply Sion's minimax principle to exchange the minimization and the maximization in \eqref{eq:proof:thm:estimationerror:2:15} and obtain the following (equivalent w.p.a.\ $1$) problem
\begin{align}
\label{eq:proof:thm:estimationerror:2:22}
    \inf_{\substack{0 \leq \alpha \leq \sigma_{\theta_0} - \delta \\ \tau_1,\tau_2>0}}\; \max_{B < \beta \leq K_\beta}\; \frac{\beta \tau_1}{2} + \frac{\sqrt{\varepsilon_0}\beta\tau_2}{2}  -\frac{\beta^2}{2M} + \frac{1}{n} e_{F}(\alpha g + z,\frac{\tau_1}{\beta}) - \frac{\alpha \beta}{\sqrt{n}}\|\frac{\sqrt{\rho \varepsilon_0}}{\tau_2} \theta_0 - h \| + \frac{\sqrt{\varepsilon_0}\beta}{2n\tau_2}\left(\|\theta_0\|^2 + d \alpha^2\right).
\end{align}

\paragraph{Step 3.3: We study the convergence in probability of $\mathcal O_{d,1}$.}

We have now arrived at the scalar formulation \eqref{eq:proof:thm:estimationerror:2:22}, which has the same optimal value as the modified (AO) problem \eqref{eq:proof:thm:estimationerror:2:9:2} (under the additional constraint \eqref{eq:proof:thm:estimationerror:2:21}), w.p.a.\ $1$ as $d \to \infty$. Therefore, following the reasoning presented in Fact~\ref{prop:cgmt}, the next step is to study the convergence in probability of its optimal value. To do so, we start by studying the convergence in probability of its objective function $\mathcal O_{d,1}$. 

Notice first that the last two terms of $\mathcal O_{d,1}$ converge in probability as follows.
\begin{align}
    \label{eq:proof:thm:estimationerror:2:23}    
    \frac{\alpha \beta}{\sqrt{n}}\| \frac{\sqrt{\rho \varepsilon_0}}{\tau_2} \theta_0 - h \| &\overset{P}{\to} \alpha \beta \sqrt{\rho} \sqrt{\frac{\rho \varepsilon_0}{\tau_2^2}\sigma_{\theta_0}^2 + 1}, \\
    \label{eq:proof:thm:estimationerror:2:24} 
    \frac{\sqrt{\varepsilon_0}\beta}{2n\tau_2}\left(\|\theta_0\|^2 + d \alpha^2\right) &\overset{P}{\to} \frac{ \sqrt{\varepsilon_0}\beta \rho}{2\tau_2} \left( \sigma_{\theta_0}^2 + \alpha^2 \right),
\end{align}
where \eqref{eq:proof:thm:estimationerror:2:23} can be recovered using \eqref{eq:proof:thm:estimationerror:2:18} and the continuous mapping theorem (which states that continuous functions preserve limits in probability, when their arguments are sequences of random variables). 

We will now study the convergence in probability of the normalized Moreau envelope $e_{F}(\alpha g + z,{\tau_1}/{\beta})/n$. Since $F(u):=\sum_{i=1}^{n}f(u_i)$, \eqref{eq:proof:thm:estimationerror:2:11} reduces to
\begin{align*}
    \frac{1}{n} e_{F}(\alpha g + z,\frac{\tau_1}{\beta}) = \frac{1}{n} \sum_{i=1}^{n} \;\min_{s_i \in \mathbb R}\; \frac{\beta}{2 \tau_1} (\alpha G + Z - s_i)^2 + f(s_i) = \frac{1}{n} \sum_{i=1}^{n} \; e_{f}(\alpha G + Z;\frac{\tau_1}{\beta}),
\end{align*}
and its limit (in probability) can be immediately recovered from the weak law of large numbers as
\begin{align}
    \label{eq:proof:thm:estimationerror:2:25}
    \frac{1}{n}  e_{F}(\alpha g + z,\frac{\tau_1}{\beta}) \overset{P}{\to} \mathbb E_{\mathcal N(0,1) \otimes \mathbb P_Z}\left[ e_{f}(\alpha G + Z;\frac{\tau_1}{\beta})\right],
\end{align}
which is precisely the expected Moreau envelope $\mathcal F(\alpha,\tau_1/\beta)$ defined in \eqref{eq:L_*}. The quantity under the expectation in \eqref{eq:proof:thm:estimationerror:2:25} is absolutely integrable, and therefore the expected Moreau envelope $\mathcal F$ is well defined, as explained in what follows. For every $\beta > B$ and $\tau_1>0$ we have
\begin{align*}
    \left|e_{f}(\alpha G + Z;\frac{\tau_1}{\beta})\right| = \min_{v \in \mathbb R}\; \frac{\beta}{2 \tau_1} \left( \alpha G + Z - v \right)^2 + f(v) &\leq \frac{\beta}{2 \tau_1} \left( \alpha G + Z \right)^2 + f(0) \\ &= \frac{\beta}{2 \tau_1} (\alpha G + Z)^2
\end{align*}
which is integrable due to the fact that both $G$ and $Z$ have finite second moment. In particular, the fact that $Z$ has finite second moment follows directly from Assumption~\ref{assump:dro}\ref{assump:pstar}. In the first and last equality we have used the fact that $f(0) = \min_{v \in \mathbb R} f(v) = 0$, which can be shown to hold true as follows. Since $L(0) = \min_{v \in \mathbb R} L(v) = 0$ (from Assumption~\ref{assump:cgmt}\ref{assump:ell:0}), we have that $L^*(0) = \min_{v \in \mathbb R} L^*(v) = 0$, and from $f^*(\cdot) = L^*(\cdot) - 1/(2M)(\cdot)^2$, with $L^*$ that is $1/M$-strongly convex, we have that $f^*(0) = \min_{v \in \mathbb R} f^*(v) = 0$, which results in the desired $f(0) = \min_{v \in \mathbb R} f(v) = 0$. Moreover, for $\tau_1 \to 0^+$,  
\begin{align*}
    \lim_{\tau_1 \to 0^+}\; \left|e_{f}(\alpha G + Z;\frac{\tau_1}{\beta})\right| = \lim_{\tau_1 \to 0^+}\; \min_{v \in \mathbb R}\; \frac{\beta}{2 \tau_1} \left( \alpha G + Z - v \right)^2 + f(v) =  f(\alpha G + Z),
\end{align*}
whose expectation is finite by assumption. In particular, the second equality follows from \cite[Theorem~1.25]{rockafellar2009variational}.

Therefore, from \eqref{eq:proof:thm:estimationerror:2:23}-\eqref{eq:proof:thm:estimationerror:2:25}, we have that $\mathcal O_{d,1}(\alpha,\tau_1,\tau_2,\beta)$ converges in probability to the function
\begin{align}
\label{eq:proof:thm:estimationerror:2:26}
    \mathcal O_1(\alpha,\tau_1,\tau_2,\beta) := \frac{\beta \tau_1}{2} + \frac{\varepsilon_0 \beta \tau_2}{2} - \frac{\beta^2}{2M} + \mathcal F(\alpha,\tau_1/\beta) - \alpha \beta \sqrt{\rho} \sqrt{\frac{\rho \varepsilon_0}{\tau_2^2}\sigma_{\theta_0}^2 + 1} + \frac{ \sqrt{\varepsilon_0}\beta \rho}{2\tau_2} \left( \sigma_{\theta_0}^2 + \alpha^2 \right).
\end{align}
Notice that $\mathcal O_1$ is precisely the first objective function of the minimax problem \eqref{eq:est:2:minimax} from the statement of Theorem~\ref{thm:estimationerror:2}.

\paragraph{Step 3.4: We show that the asymptotic objective function $\mathcal O_{1}$ is jointly convex in $(\alpha,\tau_1,\tau_2)$, jointly strictly convex in $(\alpha,\tau_2)$, and concave in $\beta$.}

Since $\mathcal O_1$ is the pointwise limit (in probability, for each ($\alpha, \tau_1, \tau_2, \beta$)) of the sequence of objective functions $\mathcal O_{d,1}$ which are convex-concave (w.p.a.\ $1$ as $d \to \infty$), and convexity is preserved by pointwise limits (see Lemma~\ref{lemma:convexity:lemma}), we have that $\mathcal O_1$ is jointly convex in $(\alpha,\tau_1,\tau_2)$ and concave in $\beta$. The convexity-concavity of $\mathcal O_1$ can also be checked directly from \eqref{eq:proof:thm:estimationerror:2:26}, following similar lines as the proof of convexity-concavity of $\mathcal O_1$. In fact, in what follows we will use such arguments to prove that $\mathcal O_1$ is jointly strictly convex in $(\alpha,\tau_2)$. This, in turn, will help us later to prove the uniqueness of the optimal solution $\alpha_{\star,1}$, which will be fundamental in the convergence analysis required by Fact~\ref{prop:cgmt}.

For the strict convexity of $\mathcal O_1$ in $(\alpha, \tau_2)$ we can restrict attention only to the last two terms of $\mathcal O_1$. Once again, since $\beta, \rho, \varepsilon_0, \sigma_{\theta_0}$ have no influence on the convexity, we can simplify these terms, which after some manipulations become
\begin{align}
    \label{eq:proof:thm:estimationerror:2:27}
    -\alpha \sqrt{\frac{1}{\tau_2^2}+\frac{1}{\rho \varepsilon_0 \sigma_{\theta_0}^2}} + \frac{1}{\tau_2}\left( \frac{\sigma_{\theta_0}^2 + \alpha^2}{2 \sigma_{\theta_0}} \right)
\end{align}

Notice that \eqref{eq:proof:thm:estimationerror:2:27} is nothing but the limit in probability of \eqref{eq:proof:thm:estimationerror:2:17}, for every $\alpha \geq 0, \tau_2>0$. We will prove the strict convexity in $(\alpha,\tau_2)$ of $\mathcal O_1$ by showing that the Hessian of $\mathcal O_1$ with respect to $(\alpha,\tau_2)$ of \eqref{eq:proof:thm:estimationerror:2:27} is positive definite. After some algebraic manipulations, it can be checked that the diagonal terms of the Hessian are positive for all $\alpha \geq 0, \tau_2>0$. Moreover, it can be checked that the determinant of the Hessian is positive if and only if the following condition holds
\begin{align}
\label{eq:proof:thm:estimationerror:2:28}
    \alpha < \sigma_{\theta_0} \sqrt{1+\tau_2^2 \frac{1}{\rho \varepsilon_0 \sigma_{\theta_0}^2}}.
\end{align}
As a consequence of this inequality and \eqref{eq:proof:thm:estimationerror:2:21}, we have that the constraint $\alpha \leq \sigma_{\theta_0} - \delta$ also ensures the joint strict convexity in $(\alpha,\tau_2)$ of $\mathcal O_1$. We will now show that the strict convexity in $(\alpha,\tau_2)$ of $\mathcal O_1$ guarantees the uniqueness of the optimizer $\alpha_{\star,1}$. With this aim in mind, consider the asymptotic minimax optimization problem
\begin{align}
\label{eq:proof:thm:estimationerror:2:30}
    \inf_{\substack{0 \leq \alpha \leq \sigma_{\theta_0}-\delta \\ \tau_1,\tau_2>0}}\; \max_{B < \beta \leq K_\beta}\; \mathcal O_1(\alpha, \tau_1, \tau_2, \beta).
\end{align}

\paragraph{Step 3.5: We show that the asymptotic problem \eqref{eq:proof:thm:estimationerror:2:30} has an unique minimizer $\alpha_{\star,1}$.}

In order to prove uniqueness of $\alpha_{\star,1}$, we will show that $\inf_{\substack{\tau_1,\tau_2>0}}\; \max_{B < \beta \leq K_\beta}\; \mathcal O_1(\alpha, \tau_1, \tau_2, \beta)$ is strictly convex in $\alpha$, for $\alpha > 0$. First, we consider the inner maximization in \eqref{eq:proof:thm:estimationerror:2:30}, i.e., $\mathcal O_1^{\beta}(\alpha, \tau_1, \tau_2):= \max_{B < \beta \leq K_\beta}\; \mathcal O_1(\alpha, \tau_1, \tau_2, \beta)$. Since the function $\mathcal O_1(\alpha, \tau_1, \tau_2, \beta)$ is continuous in $\beta$, we can extend it $\beta = B$, by setting $\lim_{\beta \to B} \mathcal O_1(\alpha, \tau_1, \tau_2, \beta) = \mathcal O_1(\alpha, \tau_1, \tau_2, B)$. Therefore, we have $\mathcal O_1^{\beta}(\alpha, \tau_1, \tau_2)= \max_{B \leq \beta \leq K_\beta}\; \mathcal O_1(\alpha, \tau_1, \tau_2, \beta)$. Notice that $\mathcal O_1^{\beta}(\alpha, \tau_1, \tau_2)$ is jointly convex in $(\alpha,\tau_1,\tau_2)$ since we have seen that $\mathcal O_1$ is jointly convex in $(\alpha,\tau_1,\tau_2)$, and pointwise supremum of convex functions is convex. We will now show that $\mathcal O_1^{\beta}(\alpha, \tau_1, \tau_2)$ is jointly strictly convex in $(\alpha,\tau_2)$. From the strict convexity of $\mathcal O_1$ in $(\alpha,\tau_2)$ it follows that for any $\lambda \in (0,1)$, $(\alpha^{(1)},\tau_{2}^{(1)}) \neq (\alpha^{(2)},\tau_{2}^{(2)})$, and $\beta$ we have that
\begin{align*}
    \mathcal O_1(\lambda\alpha^{(1)} + (1-\lambda)\alpha^{(2)},\tau_1,\lambda\tau_{2}^{(1)} + (1-\lambda)\tau_{2}^{(2)},\beta) &< \lambda\mathcal O_1(\alpha^{(1)},\tau_1,\tau_{2}^{(1)} ,\beta) + (1-\lambda) \mathcal O_1(\alpha^{(2)},\tau_1,\tau_{2}^{(2)},\beta)\\ &\leq \lambda \mathcal O_1^{\beta}(\alpha^{(1)},\tau_1,\tau_{2}^{(1)}) + (1-\lambda) \mathcal O_1^{\beta} (\alpha^{(2)},\tau_1,\tau_{2}^{(2)}).
\end{align*}

We can now take the supremum over $\beta \in [B,K_\beta]$ on the left-hand side, and since this will be attained (due to the continuity in $\beta$ of $\mathcal O_1$, and the compactness of $[B,K_\beta]$), we can conclude that
\begin{align*}
    \mathcal O_1^\beta(\lambda\alpha^{(1)} + (1-\lambda)\alpha^{(2)},\tau_1,\lambda\tau_{2}^{(1)} + (1-\lambda)\tau_{2}^{(2)}) < \lambda \mathcal O_1^{\beta}(\alpha^{(1)},\tau_1,\tau_{2}^{(1)}) + (1-\lambda) \mathcal O_1^{\beta} (\alpha^{(2)},\tau_1,\tau_{2}^{(2)}).
\end{align*}
This shows that $\mathcal O_1^{\beta}(\alpha, \tau_1, \tau_2)$ is strictly convex in $(\alpha,\tau_2)$.

Secondly, we consider $\mathcal O_1^{\tau_1,\beta}(\alpha,\tau_2) :=\inf_{\substack{\tau_1>0}}\; \mathcal O_1^{\beta}(\alpha, \tau_1, \tau_2)$. Since $\mathcal O_1^{\beta}(\alpha, \tau_1, \tau_2)$ is jointly convex in $(\alpha,\tau_1,\tau_2)$, and partial minimization of convex functions is convex, we have that $\mathcal O_1^{\tau_1,\beta}(\alpha,\tau_2)$ is convex in $(\alpha,\tau_2)$. Moreover, recall that the function $\mathcal O_1$ is decoupled in $\tau_1$ and $\tau_2$. As a consequence, taking the infimum over $\tau_1$ will not alter the joint strict convexity in $(\alpha,\tau_2)$.

Finally, we consider the function $\mathcal O_1^{\tau_2,\tau_1,\beta}(\alpha) :=\inf_{\substack{\tau_2>0}}\; \mathcal O_1^{\tau_1,\beta}(\alpha, \tau_2)$. Before proving that $\mathcal O_1^{\tau_2,\tau_1,\beta}$ is strictly convex, we first need to show that for any $\alpha\leq \sigma_{\theta_0} - \delta$, $\tau_1 > 0$, and $\beta > B$, the infimum is attained by some $\tau_{2\star}>0$. Indeed, this follows by noticing that
\begin{align*}
    \lim_{\tau_2 \to 0^+}\; \mathcal O_1(\alpha,\tau_1,\tau_2,\beta) &= \lim_{\tau_2 \to 0^+}\; \frac{\varepsilon_0 \beta \tau_2}{2} - \alpha \beta \sqrt{\rho} \sqrt{\frac{\rho \varepsilon_0}{\tau_2^2}\sigma_{\theta_0}^2 + 1} + \frac{ \sqrt{\varepsilon_0}\beta \rho}{2\tau_2} \left( \sigma_{\theta_0}^2 + \alpha^2 \right)\\ &= \lim_{\tau_2 \to 0^+}\; \frac{\sqrt{\varepsilon_0} \beta \rho }{2} \frac{(\sigma_{\theta_0}-\alpha)^2}{\tau_2} = \infty,
\end{align*}
and that
\begin{align}
\label{eq:proof:thm:estimationerror:2:tau2infty}
\begin{split}
    \lim_{\tau_2 \to \infty}\; \mathcal O_1(\alpha,\tau_1,\tau_2,\beta) &= \lim_{\tau_2 \to \infty}\; \frac{\varepsilon_0 \beta \tau_2}{2} - \alpha \beta \sqrt{\rho} \sqrt{\frac{\rho \varepsilon_0}{\tau_2^2}\sigma_{\theta_0}^2 + 1} + \frac{ \sqrt{\varepsilon_0}\beta \rho}{2\tau_2} \left( \sigma_{\theta_0}^2 + \alpha^2 \right)\\ &= \lim_{\tau_2 \to \infty}\; \frac{\varepsilon_0 \beta \tau_2}{2} - \alpha \beta \sqrt{\rho} = \infty.
\end{split}
\end{align}

Consider now $\alpha^{(1)}, \alpha^{(2)}>0$, and choose $\tau_2^{(1)},\tau_2^{(2)}$ satisfying $\mathcal O_1^{\tau_1,\beta}(\alpha^{(1)}, \tau_2^{(1)}) =\mathcal O_1^{\tau_2,\tau_1,\beta}(\alpha)$ and $\mathcal O_1^{\tau_1,\beta}(\alpha^{(2)}, \tau_2^{(2)}) =\mathcal O_1^{\tau_2,\tau_1,\beta}(\alpha)$, respectively. Then, for any $\lambda \in (0,1)$ we have
\begin{align*}
    \mathcal O_1^{\tau_2,\tau_1,\beta}(\lambda\alpha^{(1)}+(1-\lambda)\alpha^{(2)}) &\leq \mathcal O_1^{\tau_1,\beta}(\lambda\alpha^{(1)}+(1-\lambda)\alpha^{(2)}, \lambda\tau_2^{(1)}+(1-\lambda)\tau_2^{(2)})\\ &< \lambda \mathcal O_1^{\tau_1,\beta}(\alpha^{(1)}, \tau_2^{(1)}) + (1-\lambda) \mathcal O_1^{\tau_1,\beta}(\alpha^{(2)}, \tau_2^{(2)}) \\ &= \lambda \mathcal O_1^{\tau_2,\tau_1,\beta}(\alpha^{(1)}) + (1-\lambda) \mathcal O_1^{\tau_2,\tau_1,\beta}(\alpha^{(2)}),
\end{align*}
from which we conclude that $\mathcal O_1^{\tau_2,\tau_1,\beta}(\alpha) = \inf_{\substack{\tau_1,\tau_2>0}}\; \max_{B < \beta \leq K_\beta}\; \mathcal O_1(\alpha, \tau_1, \tau_2, \beta)$ is strictly convex in $\alpha$, for $\alpha>0$. As a consequence, the optimizer $\alpha_{\star,1}$ of \eqref{eq:proof:thm:estimationerror:2:30} is unique.

\paragraph{Step 3.6: We anticipate how the uniqueness of $\alpha_{\star,1}$, together with Fact~\ref{prop:cgmt}, can be used to conclude the proof.}

The uniqueness of $\alpha_{\star,1}$ is a key element in the convergence analysis required by Fact~\ref{prop:cgmt}, as explained in what follows. We first define, for arbitrary $\eta>0$, the sets $\mathcal S_\eta := \{\alpha \in [0,\sigma_{\theta_0}-\delta]:\, |\alpha-\alpha_{\star,1}|<\eta\}$, with $\alpha_{\star,1}$ the unique solution of \eqref{eq:proof:thm:estimationerror:2:30}, and $\mathcal S_\eta^c := [0,\sigma_{\theta_0}-\delta]\setminus \mathcal S_\eta$. Since $\alpha_{\star,1}$ is the unique solution of \eqref{eq:proof:thm:estimationerror:2:30}, we have that
\begin{align}
\label{eq:proof:thm:estimationerror:2:31}
    \inf_{\substack{0 \leq \alpha \leq \sigma_{\theta_0} - \delta \\ \tau_1,\tau_2>0}}\; \max_{B < \beta \leq K_\beta}\; \mathcal O_1(\alpha, \tau_1, \tau_2, \beta) < \inf_{\substack{\alpha \in \mathcal S_\eta^c \\ \tau_1,\tau_2>0}}\; \max_{B < \beta \leq K_\beta}\; \mathcal O_1(\alpha, \tau_1, \tau_2, \beta)
\end{align}
Now, due to \eqref{eq:proof:thm:estimationerror:2:31}, if we prove that the optimal value of the (scalarized version of the) modified (AO) problem \eqref{eq:proof:thm:estimationerror:2:22} satisfies
\begin{align}
\label{eq:proof:thm:estimationerror:2:32}
    \inf_{\substack{0 \leq \alpha \leq \sigma_{\theta_0} - \delta \\ \tau_1,\tau_2>0}}\; \max_{B < \beta \leq K_\beta}\; \mathcal O_{d,1}(\alpha, \tau_1, \tau_2, \beta) \overset{P}{\to} \inf_{\substack{0 \leq \alpha \leq \sigma_{\theta_0} - \delta \\ \tau_1,\tau_2>0}}\; \max_{B < \beta \leq K_\beta}\; \mathcal O_1(\alpha, \tau_1, \tau_2, \beta),
\end{align}
and that, when additionally restricted to $\alpha \in \mathcal S_\eta^c$, for arbitrary $\eta>0$, it satisfies
\begin{align}
\label{eq:proof:thm:estimationerror:2:33}
    \inf_{\substack{\alpha \in \mathcal S_\eta^c \\ \tau_1,\tau_2>0}}\; \max_{B < \beta \leq K_\beta}\; \mathcal O_{d,1}(\alpha, \tau_1, \tau_2, \beta) \overset{P}{\to} \inf_{\substack{\alpha \in \mathcal S_\eta^c \\ \tau_1,\tau_2>0}}\; \max_{B < \beta \leq K_\beta}\; \mathcal O_1(\alpha, \tau_1, \tau_2, \beta),
\end{align}
we can directly conclude the desired result \eqref{eq:est:2:error} for the first case \eqref{eq:proof:thm:estimationerror:2:9:2} from Fact~\ref{prop:cgmt}. Therefore, in order to finish the first case \eqref{eq:proof:thm:estimationerror:2:9:2}, we only need to prove the two convergences in probability \eqref{eq:proof:thm:estimationerror:2:32} and \eqref{eq:proof:thm:estimationerror:2:33}. It will be easier to prove the two convergences in probability in the following equivalent form (by Sion's minimax theorem)
\begin{align*}
    \min_{\alpha}\; \max_{B < \beta \leq K_\beta}\;\inf_{\tau_1,\tau_2>0}\; \mathcal O_{d,1}(\alpha, \tau_1, \tau_2, \beta) \overset{P}{\to} \min_{\alpha}\; \max_{B < \beta \leq K_\beta}\;\inf_{\tau_1,\tau_2>0}\; \mathcal O_1(\alpha, \tau_1, \tau_2, \beta).
\end{align*}

The two convergences in probability \eqref{eq:proof:thm:estimationerror:2:32} and \eqref{eq:proof:thm:estimationerror:2:33} are a consequence of Lemma~\ref{lemma:conv:opt}, as explained in what follows. 
For convenience, we will drop the \say{w.p.a.\ $1$ as $d \to \infty$} whenever we refer to the convexity of the functions $\mathcal O_{d,1}$ in the convergence analysis that follows, but the reader should remember that this is implicit. Moreover, we will drop the \say{under the condition $\alpha \leq \sigma_{\theta_0} - \delta$} whenever we will say that $\mathcal O_{d,1}$, $\mathcal O_{1}$, and their partial minimizations are convex.

\paragraph{Step 3.7: We prove the two convergences in probability \eqref{eq:proof:thm:estimationerror:2:32} and \eqref{eq:proof:thm:estimationerror:2:33}, and conclude the proof for the first modified (AO) problem \eqref{eq:proof:thm:estimationerror:2:9:2}.}

First, for fixed $(\alpha,\beta,\tau_1)$, $\{\mathcal O_{d,1}(\alpha,\tau_1,\cdot,\beta)\}_{d \in \mathbb N}$ is a sequence of random real-valued convex functions, converging in probability (pointwise, for every $\tau_2>0$) to the function $\mathcal O_1(\alpha,\tau_1,\cdot,\beta)$. Moreover, 
\begin{align*}
    \lim_{\tau_2 \to \infty} \mathcal O_1(\alpha,\tau_1,\tau_2,\beta) &=  \infty ,
\end{align*}
as shown in \eqref{eq:proof:thm:estimationerror:2:tau2infty}. As a consequence, we can apply statement (iii) of Lemma~\ref{lemma:conv:opt} to conclude that
\begin{align*}
    \inf_{\tau_2>0}\; \mathcal O_{d,1}(\alpha, \tau_1, \tau_2, \beta) \overset{P}{\to} \inf_{\tau_2>0}\; \mathcal O_1(\alpha, \tau_1, \tau_2, \beta).
\end{align*}

We now define $\mathcal O_1^{\tau_2}(\alpha, \tau_1, \beta):= \inf_{\tau_2>0}\; \mathcal O_1(\alpha, \tau_1, \tau_2, \beta)$ and $\mathcal O_{d,1}^{\tau_2}(\alpha, \tau_1, \beta):= \inf_{\tau_2>0}\; \mathcal O_{d,1}(\alpha, \tau_1, \tau_2, \beta)$. For fixed $(\alpha,\beta)$, $\{\mathcal O_{d,1}^{\tau_2}(\alpha, \cdot, \beta)\}_{d \in \mathbb N}$ is a sequence of random, real-valued convex functions (since they are defined as the partial minimization of jointly convex functions), converging in probability (pointwise, for every $\tau_1 > 0$) to the function $\mathcal O_1^{\tau_2}(\alpha, \cdot, \beta)$. Moreover, since $\tau_1$ and $\tau_2$ are decoupled, we have
\begin{align*}
    \lim_{\tau_1 \to \infty} \mathcal O_1^{\tau_2}(\alpha, \tau_1, \beta) = \infty \; \iff \; \lim_{\tau_1 \to \infty} \frac{\beta \tau_1}{2}  + \mathcal F(\alpha,\frac{\tau_1}{\beta})= \infty.
\end{align*}
Notice now that
\begin{align*}
    \lim_{\tau_1 \to \infty} \frac{\beta \tau_1}{2}  + \mathcal F(\alpha,\frac{\tau_1}{\beta}) \geq \lim_{\tau_1 \to \infty} \frac{\beta \tau_1}{2} = \infty,
\end{align*}
where the inequality follows from the fact that $\mathcal F(\alpha,\tau_1/\beta) \geq 0$ (recall that $f(v) \geq 0$ for all $v \in \mathbb R$). As a consequence, we can apply again statement (iii) of Lemma~\ref{lemma:conv:opt} to conclude that
\begin{align*}
    \inf_{\tau_1>0}\; \mathcal O_{d,1}^{\tau_2}(\alpha, \tau_1, \beta) \overset{P}{\to} \inf_{\tau_1>0}\; \mathcal O_1^{\tau_2}(\alpha, \tau_1, \beta).
\end{align*}

We now define $\mathcal O_1^{\tau_1,\tau_2}(\alpha, \beta):= \inf_{\tau_1>0}\; \mathcal O_1^{\tau_2}(\alpha, \tau_1, \beta)$ and $\mathcal O_{d,1}^{\tau_1,\tau_2}(\alpha, \beta):= \inf_{\tau_1>0}\; \mathcal O_{d,1}^{\tau_2}(\alpha, \tau_1, \beta)$. For fixed $\alpha$, $\{\mathcal O_{d,1}^{\tau_1,\tau_2}(\alpha, \cdot)\}_{d \in \mathbb N}$ is a sequence of random, real-valued concave functions (since they are defined as a minimization of concave functions), converging in probability (pointwise, for every $\beta > B$) to the function $\mathcal O_1^{\tau_1, \tau_2}(\alpha, \cdot)$. Therefore, we can apply statement (ii) of Lemma~\ref{lemma:conv:opt} (for $0 < \beta - B \leq K_\beta - B$) to conclude that
\begin{align}
\label{eq:proof:thm:estimationerror:2:alpha:0}
    \sup_{B < \beta \leq K_\beta}\; \mathcal O_{d,1}^{\tau_1,\tau_2}(\alpha, \beta) \overset{P}{\to} \sup_{B < \beta \leq K_\beta}\; \mathcal O_1^{\tau_1,\tau_2}(\alpha, \beta).
\end{align}

We will now show that the constraint $B < \beta \leq K_\beta$ can be relaxed to $\beta > B$ without loss of generality. Recall that $K_\beta$ was inserted for convenience, in equation \eqref{eq:proof:thm:estimationerror:2:1:1}, to be able to use the CGMT (which required $u$ to live in a compact set). Now, notice that
\begin{align*}
    \lim_{\beta \to \infty}\; \mathcal O_{1}^{\tau_1,\tau_2}(\alpha, \beta) &\leq \lim_{\beta \to \infty}\; \mathcal O_{1}^{\bar{\tau}_1,\bar{\tau}_2}(\alpha, \beta) 
    \\ &= \lim_{\beta \to \infty}\; \frac{\beta \bar\tau_1}{2} + \frac{\varepsilon_0 \beta \bar\tau_2}{2} - \frac{\beta^2}{2M} + \mathcal F(\alpha,\bar\tau_1/\beta) - \alpha \beta \sqrt{\rho} \sqrt{\frac{\rho \varepsilon_0}{\bar\tau_2^2}\sigma_{\theta_0}^2 + 1} + \frac{ \sqrt{\varepsilon_0}\beta \rho}{2\bar\tau_2} \left( \sigma_{\theta_0}^2 + \alpha^2 \right)
    \\ &= \lim_{\beta \to \infty}\; - \frac{\beta^2}{2M} + O(\beta) = -\infty,
\end{align*}
for some constants $\bar{\tau}_1, \bar{\tau}_1 > 0$ (let's say, for example, equal to $1$). Here, $O(\beta)$ encapsulates the remaining terms, which grow (at most) linearly in $\beta$. The inequality above follows from the fact that $\mathcal O_{1}^{\tau_1,\tau_2}$ is defined as a minimization over $\tau_1,\tau_2>0$. The assumption $\mathbb E_{\mathcal N(0,1) \otimes \mathbb P_Z}\left[ f(\alpha G + Z)\right] < \infty$ is very important here, since it guarantees that
\begin{align*}
    \lim_{\beta \to \infty}\; \mathcal F(\alpha,\frac{\bar\tau_1}{\beta}) = \lim_{\beta \to \infty}\; \mathbb E_{\mathcal N(0,1) \otimes \mathbb P_Z}\left[ e_{f}(\alpha G + Z;\frac{\bar\tau_1}{\beta})\right] &= \mathbb E_{\mathcal N(0,1) \otimes \mathbb P_Z}\left[\lim_{\beta \to \infty}\; e_{f}(\alpha G + Z;\frac{\bar\tau_1}{\beta})\right] \\ &= \mathbb E_{\mathcal N(0,1) \otimes \mathbb P_Z}\left[f(\alpha G + Z)\right] < \infty,
\end{align*}
where the second equality follows from the monotone convergence theorem, and the last equality follows from \cite[Theorem~1.25]{rockafellar2009variational}.

Therefore, there exists some sufficiently large $K_{\beta}$ such that
\begin{align*}
    \sup_{B < \beta \leq K_{\beta}}\; \mathcal O_{1}^{\tau_1,\tau_2}(\alpha, \beta) =  \sup_{\beta > B}\; \mathcal O_{1}^{\tau_1,\tau_2}(\alpha, \beta),
\end{align*}
leading to
\begin{align}
\label{eq:proof:thm:estimationerror:2:alpha:0:0}
    \sup_{B < \beta \leq K_{\beta}}\; \mathcal O_{d,1}^{\tau_1,\tau_2}(\alpha, \beta) \overset{P}{\to} \sup_{\beta > B}\; \mathcal O_1^{\tau_1,\tau_2}(\alpha, \beta).
\end{align}
Notice that the convergence \eqref{eq:proof:thm:estimationerror:2:alpha:0:0} can also be proven using statement (iii) of Lemma~\ref{lemma:conv:opt}, using the fact that $\lim_{\beta \to \infty}\; \mathcal O_{1}^{\tau_1,\tau_2}(\alpha, \beta) = -\infty$.

We now define $\mathcal O_1^{\beta,\tau_1,\tau_2}(\alpha):= \sup_{\beta > B}\; \mathcal O_1^{\tau_1,\tau_2}(\alpha, \beta)$ and $\mathcal O_{d,1}^{\beta,\tau_1,\tau_2}(\alpha):= \sup_{B < \beta \leq K_{\beta}}\; \mathcal O_{d,1}^{\tau_1,\tau_2}(\alpha, \beta)$. Each of these functions is convex in $\alpha$, since they were obtained by first minimizing over $\tau_1,\tau_2$ a jointly convex function in $(\alpha,\tau_1,\tau_2)$, and then maximizing over $\beta$ a convex function in $\alpha$. 

For the final step of the proof, we will distinguish between the two cases $\alpha_{\star,1}=0$ and $\alpha_{\star,1}>0$. This is due to the Convexity Lemma~\ref{lemma:convexity:lemma}, which stands at the core of the convergence analysis, and which requires the domain of the functions of interest to be open. We first consider the case $\alpha_{\star,1}=0$. Since \eqref{eq:proof:thm:estimationerror:2:alpha:0} is valid for any $\alpha \geq 0$, it is in particular valid for $\alpha = 0$, and thus we have that
\begin{align}
\label{eq:proof:thm:estimationerror:2:alpha:1}
    \mathcal O_{d,1}^{\beta,\tau_1,\tau_2}(0) \overset{P}{\to} \mathcal O_1^{\beta,\tau_1,\tau_2}(0).
\end{align}
Moreover, using statement (i) of Lemma~\ref{lemma:conv:opt}, for any $K>0$, we have that 
\begin{align}
\label{eq:proof:thm:estimationerror:2:alpha:2}
    \min_{K \leq\alpha\leq \sigma_{\theta_0} - \delta}\; \mathcal O_{d,1}^{\beta,\tau_1,\tau_2}(\alpha) \overset{P}{\to} \min_{K \leq \alpha\leq \sigma_{\theta_0} - \delta}\; \mathcal O_1^{\beta,\tau_1,\tau_2}(\alpha).
\end{align}
The result for $\alpha_{\star,1}=0$ now follows from \eqref{eq:proof:thm:estimationerror:2:alpha:1} and \eqref{eq:proof:thm:estimationerror:2:alpha:2} and the fact that 
\begin{align*}
    \mathcal O_1^{\beta,\tau_1,\tau_2}(0) < \min_{K \leq \alpha\leq \sigma_{\theta_0} - \delta}\; \mathcal O_1^{\beta,\tau_1,\tau_2}(\alpha),
\end{align*}
for any $K > 0$, due to the uniqueness of $\alpha_{\star,1}$.

We now focus on the case $\alpha_{\star,1} > 0$. We need to prove \eqref{eq:proof:thm:estimationerror:2:32} and \eqref{eq:proof:thm:estimationerror:2:33}, which are equivalent to the following two convergences in probability
\begin{align}
\label{eq:proof:thm:estimationerror:2:34}
    \inf_{0<\alpha\leq \sigma_{\theta_0} - \delta}\; \mathcal O_{d,1}^{\beta,\tau_1,\tau_2}(\alpha) \overset{P}{\to} \inf_{0<\alpha\leq \sigma_{\theta_0} - \delta}\; \mathcal O_1^{\beta,\tau_1,\tau_2}(\alpha),
\end{align}
and
\begin{align}
\label{eq:proof:thm:estimationerror:2:35}
    \inf_{\alpha \in \mathcal S_\eta^c}\; \mathcal O_{d,1}^{\beta,\tau_1,\tau_2}(\alpha) \overset{P}{\to} \inf_{\alpha \in \mathcal S_\eta^c}\; \mathcal O_1^{\beta,\tau_1,\tau_2}(\alpha),
\end{align}
respectively. Since $\{\mathcal O_{d,1}^{\beta,\tau_1,\tau_2}(\cdot)\}_{d \in \mathbb N}$ is a sequence of random, real-valued convex functions, converging in probability (pointwise, for every $\alpha > 0$) to the function $\mathcal O_1^{\beta,\tau_1,\tau_2}(\cdot)$, \eqref{eq:proof:thm:estimationerror:2:34} follows immediately from  statement (ii) of Lemma~\ref{lemma:conv:opt}. Moreover, since $\mathcal S_\eta^c = (0,\alpha_{\star,1}-\eta]\cup[\alpha_{\star,1}+\eta,\sigma_{\theta_0} - \delta]$, \eqref{eq:proof:thm:estimationerror:2:35} follows from statement (i) of Lemma~\ref{lemma:conv:opt}, for the interval $[\alpha_{\star,1}+\eta,\sigma_{\theta_0} - \delta]$, and statement (ii) of the same lemma, for the interval $(0,\alpha_{\star,1}-\eta]$. 

This concludes the proof of the two convergences in probability  \eqref{eq:proof:thm:estimationerror:2:32} and \eqref{eq:proof:thm:estimationerror:2:33}, and with it, the analysis of the first case \eqref{eq:proof:thm:estimationerror:2:9:2}. We will now consider the second case \eqref{eq:proof:thm:estimationerror:2:9:3}, and proceed by first scalarizing the problem over the magnitudes of $u$ and $w$, and subsequently studying its asymptotic optimal value, similarly to what we have done for the first case. Before doing this, however, we would like to anticipate the results that will be obtained in the second case, and briefly explain how everything can be used to conclude the proof of this theorem.

\paragraph{Step 4: We anticipate the results that will be obtained for the second modified (AO) problem \eqref{eq:proof:thm:estimationerror:2:9:3}, and briefly explain how everything can be used to conclude the proof of this theorem.}

Recall from the sequence of arguments \eqref{eq:proof:thm:estimationerror:2:8}-\eqref{eq:proof:thm:estimationerror:2:9:3} that the optimal value of the modified (AO) problem \eqref{eq:proof:thm:estimationerror:2:8} is equal to $\max\{\mathcal V_{d,1}, \mathcal V_{d,2}\}$, with $\mathcal V_{d,1}$ the optimal value of problem \eqref{eq:proof:thm:estimationerror:2:9:2}, and $\mathcal V_{d,2}$ the optimal value of problem \eqref{eq:proof:thm:estimationerror:2:9:3}. So far, we have shown that $\mathcal V_{d,1} \overset{P}{\to} \mathcal V_1$, with
\begin{align*}
    \mathcal V_1 := \inf_{\substack{0 \leq \alpha \leq \sigma_{\theta_0} - \delta \\ \tau_1,\tau_2>0}}\; \max_{B < \beta \leq K_\beta}\; \mathcal O_1(\alpha, \tau_1, \tau_2, \beta),
\end{align*}
and that the optimal solution $\alpha_{\star,1}$, which attains $\mathcal V_1$, is unique. In what follows, we will show that similar things happen in the second case, i.e., $\mathcal V_{d,2} \overset{P}{\to} \mathcal V_2$, for some $\mathcal V_2$ that will be defined later, and that the solution $\alpha_{\star,2}$, which attains $\mathcal V_2$, is unique. The two converges $\mathcal V_{d,1} \overset{P}{\to} \mathcal V_1$ and $\mathcal V_{d,2} \overset{P}{\to} \mathcal V_2$ are very important because they allow us to conclude what is the asymptotic optimal value of the modified (AO) problem. Indeed, if $\mathcal V_1 > \mathcal V_2$, then we know that $\mathcal V_{d,1} > \mathcal V_{d,2}$ w.p.a.\ $1$ as $d \to \infty$. In that case, we have that $\max\{\mathcal V_{d,1}, \mathcal V_{d,2}\} = \mathcal V_{d,1}$ asymptotically, and from Fact~\ref{prop:cgmt} and the analysis performed in \eqref{eq:proof:thm:estimationerror:2:31}-\eqref{eq:proof:thm:estimationerror:2:35}, we can conclude the desired result, i.e., $\|\hat{\theta}_{\mathrm{DRE}}-\theta_0\|/\sqrt{d} \overset{P}{\to} \alpha_{\star,1}$. Similarly, if $\mathcal V_{2} > \mathcal V_{1}$, we can conclude that $\|\hat{\theta}_{\mathrm{DRE}}-\theta_0\|/\sqrt{d} \overset{P}{\to} \alpha_{\star,2}$. Finally, if $\mathcal V_{2} = \mathcal V_{1}$, we cannot assess which of the two values $\mathcal V_{d,1}$ or $\mathcal V_{d,2}$ is larger, and therefore we can only conclude that $\|\hat{\theta}_{\mathrm{DRE}}-\theta_0\|/\sqrt{d} \leq \max\{\alpha_{\star,1},\alpha_{\star,2}\}$ w.p.a.\ $1$ as $d \to \infty$. 

Therefore, in order to finish the proof, we still need to prove the convergence $\mathcal V_{d,2} \overset{P}{\to} \mathcal V_2$, together with the uniqueness of $\alpha_{\star,2}$. We will do this in what follows, at a slightly faster pace when the steps are similar to the ones done for $\mathcal V_{d,1}$.

\paragraph{Step 5: We re-do all the steps for the second modified (AO) problem \eqref{eq:proof:thm:estimationerror:2:9:3}, following similar reasoning as above.}

Recall that $\mathcal V_{d,2}$ is the optimal value of the following optimization problem
\begin{align*}
    \max_{0 \leq \beta \leq B}\; \min_{\substack{w \in \mathcal S_w\\ s \in \mathbb R^n}}\; \max_{\|u\|= \beta}\;  \frac{1}{\sqrt{n}} (\|w\|g + z - s)^\top u - \frac{1}{\sqrt{n}} \|u\|h^\top w -\frac{\|u\|^2}{2M} + \frac{1}{n} F(s) + \frac{\|\theta_0 + \sqrt{d}w\|}{\sqrt{n}} \left( p + \frac{\|u\|^2}{q} \right),
\end{align*}
with $p := (\varepsilon_0\sqrt{\rho}M R_\theta)/2$ and $q := 2 \sqrt{\rho} M R_\theta$.

We start by scalarizing over the magnitude of $u$, leading to
\begin{align*}
    \max_{0 \leq \beta \leq B}\; \min_{\substack{w \in \mathcal S_w\\ s \in \mathbb R^n}}\;  \frac{\beta}{\sqrt{n}} \|\|w\|g + z - s\| - \frac{1}{\sqrt{n}} \beta h^\top w -\frac{\beta^2}{2M} + \frac{1}{n} F(s) + \frac{\|\theta_0 + \sqrt{d}w\|}{\sqrt{n}} \left( p + \frac{\beta^2}{q} \right),
\end{align*}
and by rewriting the first term using the square-root trick, and introducing the Moreau envelope of $F$, we obtain
\begin{align}
\label{eq:proof:thm:estimationerror:2:36}
    \max_{0 \leq \beta \leq B}\; \inf_{\substack{w \in \mathcal S_w\\ \tau_1>0}}\;   \frac{\beta \tau_1}{2}  -\frac{\beta^2}{2M} + \frac{1}{n} e_{F}(\|w\| g + z,\frac{\tau_1}{\beta}) - \frac{\beta}{\sqrt{n}} h^\top w + \frac{\|\theta_0 + \sqrt{d}w\|}{\sqrt{n}} \left( p + \frac{\beta^2}{q} \right).
\end{align}
From
\begin{align*}
    -\frac{\beta^2}{2M} + \frac{\|\theta_0 + \sqrt{d}w\|}{\sqrt{n}} \frac{\beta^2}{2 \sqrt{\rho} M R_\theta} = -\frac{\beta^2}{2M} \left( 1 - \frac{\|\theta_0 + \sqrt{d}w\|}{R_\theta\sqrt{d}} \right),
\end{align*}
and $\|\theta_0 + \sqrt{d}w\| \leq R_\theta\sqrt{d}$, we have that the objective function in \eqref{eq:proof:thm:estimationerror:2:36} is concave in $\beta$ (recall that $e_{F}(\|w\| g + z,\tau_1/\beta)$ is concave in $\beta$).

We now employ again the square-root trick on the last term in \eqref{eq:proof:thm:estimationerror:2:36}, leading to
\begin{align}
\label{eq:proof:thm:estimationerror:2:37}
    \max_{0 \leq \beta \leq B}\; \inf_{\substack{w \in \mathcal S_w\\ \tau_1,\tau_2>0}}\;   \frac{\beta \tau_1}{2} -\frac{\beta^2}{2M} + \frac{1}{n} e_{F}(\|w\| g + z,\frac{\tau_1}{\beta}) - \frac{\beta}{\sqrt{n}} h^\top w + \left(\frac{\tau_2}{2} + \frac{\|\theta_0 + \sqrt{d}w\|^2}{2n\tau_2}\right) \left( p + \frac{\beta^2}{q} \right).
\end{align}

Notice that after introducing the square-root trick, the objective function in \eqref{eq:proof:thm:estimationerror:2:37} is not concave in $\beta$ anymore. However, since the objective function in \eqref{eq:proof:thm:estimationerror:2:36} was concave in $\beta$, we have that the function
\begin{align}
\label{eq:proof:thm:estimationerror:2:38}
    \inf_{\tau_2>0}\;   \frac{\beta \tau_1}{2} -\frac{\beta^2}{2M} + \frac{1}{n} e_{F}(\|w\| g + z,\frac{\tau_1}{\beta}) - \frac{\beta}{\sqrt{n}} h^\top w + \left(\frac{\tau_2}{2} + \frac{\|\theta_0 + \sqrt{d}w\|^2}{2n\tau_2}\right) \left( p + \frac{\beta^2}{q} \right)
\end{align}
is concave in $\beta$.

We now separate the constraint $w \in \mathcal S_w$ into the two constraints $0 \leq \alpha \leq K_\alpha$ and $\|w\| = \alpha$, and by expressing $\|\theta_0 + \sqrt{d}w\|^2$ as $\|\theta_0\|^2 + d\|w\|^2 + 2\sqrt{d}\theta_0^\top w$, we can now scalarize the problem over the magnitude of $w$ and obtain
\begin{align}
\label{eq:proof:thm:estimationerror:2:39}
\begin{split}
    \max_{0 \leq \beta \leq B}\; \inf_{\substack{0 \leq \alpha \leq K_\alpha\\ \tau_1,\tau_2>0}}\; \frac{\beta \tau_1}{2} + \frac{p\tau_2}{2}+\frac{\beta^2 \tau_2}{2q} -\frac{\beta^2}{2M} + \frac{1}{n} e_{F}(\alpha g + z,\frac{\tau_1}{\beta}) - \frac{\alpha}{\sqrt{n}}\left\| \sqrt{\rho}\left( p + \frac{\beta^2}{q} \right) \frac{\theta_0}{\tau_2} - \beta h \right\| + \\ + \frac{\|\theta_0\|^2+d \alpha^2}{2 n \tau_2}\left( p + \frac{\beta^2}{q} \right).
\end{split}
\end{align}

We denote by $\mathcal O_{d,2}(\alpha,\tau_1,\tau_2,\beta)$ the objective function in \eqref{eq:proof:thm:estimationerror:2:39}, where the index $2$ recalls the fact that we are in the second case. Before proceeding, we would like to highlight that the function 
\begin{align}
\label{eq:proof:thm:estimationerror:2:40}
\begin{split}
    \inf_{\tau_2>0}\; \frac{\beta \tau_1}{2} + \frac{p\tau_2}{2}+\frac{\beta^2 \tau_2}{2q} -\frac{\beta^2}{2M} + \frac{1}{n} e_{F}(\alpha g + z,\frac{\tau_1}{\beta}) - \frac{\alpha}{\sqrt{n}}\left\| \sqrt{\rho}\left( p + \frac{\beta^2}{q} \right) \frac{\theta_0}{\tau_2} - \beta h \right\| + \\ + \frac{\|\theta_0\|^2+d \alpha^2}{2 n \tau_2}\left( p + \frac{\beta^2}{q} \right)
\end{split}
\end{align}
is concave in $\beta$, since it is the result of a minimization (over $\|w\| = \alpha$) of concave functions (recall that \eqref{eq:proof:thm:estimationerror:2:38} is concave in $\beta$). Since we have concavity in $\beta$ only after the minimization over $\tau_2$, and ultimately we are interested in exchanging the minimization over $\alpha$ with the maximization over $\beta$ (in order to recover the optimal solution $\alpha_{\star,2}$), we define by $\mathcal O_{d,2}^{\tau_2}(\alpha,\tau_1,\beta)$ the function in \eqref{eq:proof:thm:estimationerror:2:40}, and therefore, in this second case, we will be working with the optimization problem 
\begin{align}
\label{eq:proof:thm:estimationerror:2:41}
\begin{split}
    \max_{0 \leq \beta \leq B}\; \inf_{\substack{0 \leq \alpha \leq K_\alpha\\ \tau_1>0}}\;  \mathcal O_{d,2}^{\tau_2}(\alpha,\tau_1,\beta).
\end{split}
\end{align}

Therefore, we have that $\mathcal O_{d,2}^{\tau_2}(\alpha,\tau_1,\beta)$ is concave in $\beta$. Furthermore, we will show that $\mathcal O_{d,2}^{\tau_2}(\alpha,\tau_1,\beta)$ is jointly convex in $(\alpha,\tau_1)$, and continuous on its domain. Let's first concentrate on the joint convexity in $(\alpha,\tau_1)$. For this, we will first show that $\mathcal O_{d,2}(\alpha,\tau_1,\tau_2,\beta)$ is jointly convex in $(\alpha,\tau_1,\tau_2)$, and therefore after the partial minimization over $\tau_2$, we will have that $\mathcal O_{d,2}^{\tau_2}(\alpha,\tau_1,\beta)$ is jointly convex in $(\alpha,\tau_1)$. Since $\tau_1$ and $\tau_2$ are decoupled, it will be enough to prove the joint convexity in $(\alpha,\tau_1)$ and $(\alpha,\tau_2)$. The joint convexity in $(\alpha,\tau_1)$ follows from the joint convexity in $(\alpha,\tau_1)$ of the Moreau envelope $e_{F}(\alpha g + z,\tau_1/\beta)$, which we have proven previously, in the first case. Moreover, the joint convexity in $(\alpha, \tau_2)$ will also be shown to follow from the analysis performed in the first case. In particular, we will now show that the last two terms in $\mathcal O_{d,2}(\alpha,\tau_1,\tau_2,\beta)$ can be rewritten in the form \eqref{eq:proof:thm:estimationerror:2:17}, and therefore the reasoning used in the first case can be re-used here. Since $\beta,p,q,\rho,\|\theta_0\|$ have no influence on the convexity, and $(p+\beta/q) > 0$, we can simplify the last two terms in $\mathcal O_{d,2}(\alpha,\tau_1,\tau_2,\beta)$, and obtain
\begin{align*}
    -\alpha \left\|\frac{1}{\tau_2} \frac{\theta_0}{\|\theta_0\|}-\frac{\beta}{\sqrt{\rho}(p+\beta^2/q)} \frac{h}{\|\theta_0\|} \right\| + \frac{1}{\tau_2}\left( \frac{\|\theta_0\|^2/d + \alpha^2}{2\|\theta_0\|/\sqrt{d}}\right)
\end{align*}
or, equivalently,
\begin{align}
\label{eq:proof:thm:estimationerror:2:42}
    -\alpha \sqrt{\frac{1}{\tau_2^2} +\frac{\beta^2}{\rho(p+\beta^2/q)^2} \frac{\|h\|^2}{\|\theta_0\|^2} - \frac{1}{\tau_2} \frac{2\beta}{\sqrt{\rho}(p+\beta^2/q)}\frac{\theta_0^\top h}{\|\theta_0\|^2}} + \frac{1}{\tau_2}\left( \frac{\|\theta_0\|^2/d + \alpha^2}{2\|\theta_0\|/\sqrt{d}}\right).
\end{align}

Notice now that the only differences between \eqref{eq:proof:thm:estimationerror:2:42} and \eqref{eq:proof:thm:estimationerror:2:17} are the two coefficients $\beta^2/(\rho(p+\beta^2/q)^2)$ instead of $1/(\rho\varepsilon_0)$, and $2\beta/(\sqrt{\rho}(p+\beta^2/q))$ instead of $1/\sqrt{\rho\varepsilon_0}$. However, since $0 \leq \beta \leq B$, we have that the new coefficients are bounded, and therefore they do not affect in any way the convexity analysis performed for \eqref{eq:proof:thm:estimationerror:2:17}. As a consequence, following the same exact steps as in \eqref{eq:proof:thm:estimationerror:2:17}-\eqref{eq:proof:thm:estimationerror:2:21}, we have that \eqref{eq:proof:thm:estimationerror:2:42} is jointly convex in $(\alpha,\tau_2)$ w.p.a.\ $1$ as $d \to \infty$ if and only if the following condition holds
\begin{align}
\label{eq:proof:thm:estimationerror:2:43}
    \alpha \leq \frac{\|\theta_0\|}{\sqrt{d}}\sqrt{1+\tau_2^2 \frac{\beta^2}{\rho(p+\beta^2/q)^2}\frac{\|h\|^2}{\|\theta_0\|^2} - \tau_2 \frac{2\beta}{\sqrt{\rho}(p+\beta^2/q)}\frac{\theta_0^\top h}{\|\theta_0\|^2}}.
\end{align}

From \eqref{eq:proof:thm:estimationerror:2:43}, we recover the same sufficient condition as in the first case, i.e.,
\begin{align}
\label{eq:proof:thm:estimationerror:2:44}
    \alpha \leq \sigma_{\theta_0} - \delta,
\end{align}
for some arbitrarily small $\delta>0$, which guarantees that both $\mathcal O_{d,1}$ and $\mathcal O_{d,2}$ are jointly convex in $(\alpha,\tau_2)$ w.p.a.\ $1$ as $d \to \infty$. As a consequence, from now on, we will work with $K_\alpha := \sigma_{\theta_0} - \delta$. Similarly to the first case, in what follows we will drop the \say{w.p.a.\ $1$ as $d \to \infty$} whenever we refer to the convexity of the functions $\mathcal O_{d,2}$. Moreover, we will drop the \say{under the condition $\alpha \leq \sigma_{\theta_0} - \delta$} whenever we will say that $\mathcal O_{d,2}$, and its partial minimizations, are convex. 

We have just shown that $\mathcal O_{d,2}(\alpha,\tau_1,\tau_2,\beta)$ is jointly convex in $(\alpha,\tau_1,\tau_2)$, and therefore, after the partial minimization over $\tau_2$, we have that $\mathcal O_{d,2}^{\tau_2}(\alpha,\tau_1,\beta)$ is jointly convex in $(\alpha,\tau_1)$. 

We will now prove that $\mathcal O_{d,2}^{\tau_2}(\alpha,\tau_1,\beta)$ is continuous on its domain. First notice that $\mathcal O_{d,2}(\alpha,\tau_1,\tau_2,\beta)$ is continuous on its domain (the proof of this follows exactly the same lines as the proof of continuity for $\mathcal O_{d,1}(\alpha,\tau_1,\tau_2,\beta)$). Then, we will show that the minimization over $\tau_2 > 0$ can be equivalently rewritten as a minimization over $\tau_2 \in \mathcal T_2$, with $\mathcal T_2$ a compact set. Consequently, $\mathcal O_{d,2}^{\tau_2}(\alpha,\tau_1,\beta)$ is a continuous function. Now, in order to show that we can restrict $\tau_2>0$ to $\tau_2 \in \mathcal T_2$ without loss of generality, we will show that the optimal solution $\tau_{2\star}$ satisfies $0<\tau_{2\star}<\infty$. The fact that $\tau_{2\star}>0$ follows from
\begin{align*}
    \lim_{\tau_2 \to 0^+}\; \mathcal O_{d,2}(\alpha,\tau_1,\tau_2,\beta) &= \lim_{\tau_2 \to 0^+}\; \frac{p\tau_2}{2}+\frac{\beta^2 \tau_2}{2q} - \frac{\alpha}{\sqrt{n}}\left\| \sqrt{\rho}\left( p + \frac{\beta^2}{q} \right) \frac{\theta_0}{\tau_2} - \beta h \right\| + \frac{\|\theta_0\|^2+d \alpha^2}{2 n \tau_2}\left( p + \frac{\beta^2}{q} \right) \\ &= \lim_{\tau_2 \to 0^+}\; - \frac{\alpha \sqrt{d}}{n} \left( p + \frac{\beta^2}{q} \right) \frac{\|\theta_0\|}{\tau_2} + \frac{\|\theta_0\|^2+d \alpha^2}{2 n \tau_2}\left( p + \frac{\beta^2}{q} \right) \\ &= \lim_{\tau_2 \to 0^+}\; \frac{\rho}{2 \tau_2} \left( p + \frac{\beta^2}{q} \right) \left( \frac{\|\theta_0\|}{\sqrt{d}} - \alpha \right)^2,
\end{align*}
which is equal to $\infty$, w.p.a.\ $1$ as $d \to \infty$, due to the condition \eqref{eq:proof:thm:estimationerror:2:44}. Moreover, $\tau_{2\star}< \infty$ follows from
\begin{align*}
    \lim_{\tau_2 \to \infty}\; \mathcal O_{d,2}(\alpha,\tau_1,\tau_2,\beta) &= \lim_{\tau_2 \to \infty}\; \frac{p\tau_2}{2}+\frac{\beta^2 \tau_2}{2q} - \frac{\alpha}{\sqrt{n}}\left\| \sqrt{\rho}\left( p + \frac{\beta^2}{q} \right) \frac{\theta_0}{\tau_2} - \beta h \right\| + \frac{\|\theta_0\|^2+d \alpha^2}{2 n \tau_2}\left( p + \frac{\beta^2}{q} \right) \\ &= \lim_{\tau_2 \to \infty}\; \frac{p\tau_2}{2}+\frac{\beta^2 \tau_2}{2q} - \alpha \beta\frac{\|h\|}{\sqrt{n}},
\end{align*}
which is equal to $\infty$, w.p.a.\ $1$ as $d \to \infty$, since $\|h\|/\sqrt{n} \overset{P}{\to} \sqrt{\rho}$.

As a consequence of the continuity and convexity-concavity, we can apply Sion's minimax principle to exchange the minimization and the maximization in \eqref{eq:proof:thm:estimationerror:2:41} and obtain 
\begin{align}
\label{eq:proof:thm:estimationerror:2:45}
    \inf_{\substack{0 \leq \alpha \leq \sigma_{\theta_0} - \delta\\ \tau_1>0}}\; \max_{0 \leq \beta \leq B}\; \mathcal O_{d,2}^{\tau_2}(\alpha,\tau_1,\beta).
\end{align}

We have now arrived at the scalar formulation \eqref{eq:proof:thm:estimationerror:2:45}, which has the same optimal value as the modified (AO) problem \eqref{eq:proof:thm:estimationerror:2:9:3}, w.p.a.\ $1$ as $d \to \infty$. Therefore, following the same reasoning as in the first case, the next step is to study the convergence in probability of its optimal value, and to show that its optimal solution $\alpha_{\star,2}$ is unique. The convergence in probability of its objective function $\mathcal O_{d,2}^{\tau_2}(\alpha,\tau_1,\beta)$ will be done in two steps: first, we will study the convergence in probability of the function $\mathcal O_{d,2}(\alpha,\tau_1,\tau_2,\beta)$, and secondly, we will employ statement (iii) of Lemma~\ref{lemma:conv:opt} to study the convergence in probability of $\mathcal O_{d,2}^{\tau_2}(\alpha,\tau_1,\beta)$. Following this, we will prove that $\alpha_{\star,2}$ is unique.

Following similar arguments to the convergence in probability $\mathcal O_{d,1}(\alpha,\tau_1,\tau_2,\beta) \overset{P}{\to} \mathcal O_1(\alpha,\tau_1,\tau_2,\beta)$ previously treated, we have that for fixed $(\alpha,\tau_1,\tau_2,\beta)$ satisfying the constraints in \eqref{eq:proof:thm:estimationerror:2:45}, $\mathcal O_{d,2}(\alpha,\tau_1,\tau_2,\beta)$ converges in probability to
\begin{align}
\label{eq:proof:thm:estimationerror:2:46}
\begin{split}
    \mathcal O_2(\alpha,\tau_1,\tau_2,\beta) := \frac{\beta \tau_1}{2} + \frac{p\tau_2}{2}+\frac{\beta^2 \tau_2}{2q} -\frac{\beta^2}{2M} + \mathcal F(\alpha,\tau_1/\beta) - \alpha\sqrt{\rho} \sqrt{\left( p + \frac{\beta^2}{q} \right)^2 \frac{\rho \sigma_{\theta_0}^2}{\tau_2^2} + \beta^2} + \\ + \rho \left( p + \frac{\beta^2}{q} \right) \frac{\sigma_{\theta_0}^2+\alpha^2}{2 \tau_2}.
\end{split}
\end{align}

Now, for fixed $(\alpha,\tau_1,\beta)$ satisfying the constraints in \eqref{eq:proof:thm:estimationerror:2:45}, $\{\mathcal O_{d,2}(\alpha,\tau_1,\cdot,\beta)\}_{d \in \mathbb N}$ is a sequence of random real-valued convex functions, converging in probability (pointwise, for every $\tau_2>0$) to the function $\mathcal O_{2}(\alpha,\tau_1,\cdot,\beta)$. Moreover,
\begin{align}
\label{eq:proof:thm:estimationerror:2:47}
\begin{split}
    \lim_{\tau_2 \to \infty}\; \mathcal O_{2}(\alpha,\tau_1,\tau_2,\beta) &= \lim_{\tau_2 \to \infty}\; \frac{p\tau_2}{2}+\frac{\beta^2 \tau_2}{2q} - \alpha\sqrt{\rho} \sqrt{\left( p + \frac{\beta^2}{q} \right)^2 \frac{\rho \sigma_{\theta_0}^2}{\tau_2^2} + \beta^2} + \rho \left( p + \frac{\beta^2}{q} \right) \frac{\sigma_{\theta_0}^2+\alpha^2}{2 \tau_2} \\ &= \lim_{\tau_2 \to \infty}\; \frac{p\tau_2}{2}+\frac{\beta^2 \tau_2}{2q} - \alpha\beta \sqrt{\rho} = \infty.
    \end{split}
\end{align}

As a consequence, using statement (iii) of Lemma~\ref{lemma:conv:opt}, we have that
\begin{align}
\label{eq:proof:thm:estimationerror:2:48}
    \mathcal O_{d,2}^{\tau_2}(\alpha,\tau_1,\beta) \overset{P}{\to} \mathcal O_{2}^{\tau_2}(\alpha,\tau_1,\beta) : = \inf_{\tau_2 > 0}\; \mathcal O_2(\alpha,\tau_1,\tau_2,\beta)
\end{align}

Notice that $\mathcal O_{2}^{\tau_2}$ is precisely the second objective function of the minimax problem \eqref{eq:est:2:minimax} from the statement of Theorem~\ref{thm:estimationerror:2}. Since $\mathcal O_{2}^{\tau_2}$ is the pointwise limit (in probability, for each ($\alpha, \tau_1, \beta$)) of the sequence of objective functions $\mathcal O_{d,2}^{\tau_2}$ which are convex-concave, and convexity is preserved by pointwise limits, we have that $\mathcal O_2^{\tau_2}$ is jointly convex in $(\alpha,\tau_1)$ and concave in $\beta$. 

We now consider the asymptotic minimax optimization problem
\begin{align}
\label{eq:proof:thm:estimationerror:2:49}
    \inf_{\substack{0 \leq \alpha \leq \sigma_{\theta_0} - \delta\\ \tau_1>0}}\; \max_{0 \leq \beta \leq B}\; \mathcal O_{2}^{\tau_2}(\alpha,\tau_1,\beta),
\end{align}
and prove that its optimal solution $\alpha_{\star,2}$ is unique. To prove this, we will show that the function $\inf_{\tau_1>0}\; \max_{0 \leq \beta \leq B}\; \mathcal O_{2}^{\tau_2}(\alpha,\tau_1,\beta)$ is strictly convex in $\alpha$. As in the case of $\mathcal O_1(\alpha,\tau_1,\tau_2,\beta)$, and following the same reasoning as in \eqref{eq:proof:thm:estimationerror:2:27}-\eqref{eq:proof:thm:estimationerror:2:28}, we have that $\mathcal O_2(\alpha,\tau_1,\tau_2,\beta)$ is jointly strictly convex in $(\alpha,\tau_2)$ if the condition \eqref{eq:proof:thm:estimationerror:2:44} (i.e $\alpha \leq \sigma_{\theta_0} - \delta$) is satisfied. Indeed, the joint strict convexity in $(\alpha,\tau_2)$ depends only on the last two terms in \eqref{eq:proof:thm:estimationerror:2:46}, which after a simplification (over $\beta,p,q,\rho,\sigma_{\theta_0}$ which have no influence on the convexity), can be rewritten as
\begin{align*}
    -\alpha \sqrt{\frac{1}{\tau_2^2}+\frac{\beta^2}{\rho (p+\beta^2/q)^2 \sigma_{\theta_0}^2}} + \frac{1}{\tau_2}\left( \frac{\sigma_{\theta_0}^2 + \alpha^2}{2 \sigma_{\theta_0}} \right),
\end{align*}
from which we recover (similarly as before) the following necessary and sufficient condition for the joint strict convexity in $(\alpha,\tau_2)$,
\begin{align*}
    \alpha < \sigma_{\theta_0} \sqrt{1+\tau_2^2 \frac{\beta^2}{\rho (p+\beta^2/q)^2 \sigma_{\theta_0}^2}},
\end{align*}
which holds whenever condition \eqref{eq:proof:thm:estimationerror:2:44} holds. 

This can be used to prove that the optimal solution $\alpha_{\star,2}$ of problem \eqref{eq:proof:thm:estimationerror:2:49} is unique, as we have previously seen for $\mathcal O_1$. For completeness, however, we will briefly remind the main steps of the proof of uniqueness. First, from
\begin{align*}
    \lim_{\tau_2 \to 0^+}\; \mathcal O_{2}(\alpha,\tau_1,\tau_2,\beta) &= \lim_{\tau_2 \to 0^+}\; \frac{p\tau_2}{2}+\frac{\beta^2 \tau_2}{2q} - \alpha\sqrt{\rho} \sqrt{\left( p + \frac{\beta^2}{q} \right)^2 \frac{\rho \sigma_{\theta_0}^2}{\tau_2^2} + \beta^2} + \rho \left( p + \frac{\beta^2}{q} \right) \frac{\sigma_{\theta_0}^2+\alpha^2}{2 \tau_2} \\ &= \lim_{\tau_2 \to 0^+}\; \frac{\rho}{2} \left( p + \frac{\beta^2}{q} \right) \frac{(\sigma_{\theta_0}-\alpha)^2}{\tau_2} = \infty,
\end{align*}
and
\begin{align*}
    \lim_{\tau_2 \to \infty}\; \mathcal O_{2}(\alpha,\tau_1,\tau_2,\beta) = \infty,
\end{align*}
(which was shown in \eqref{eq:proof:thm:estimationerror:2:47}), we have that the minimization of $\mathcal O_2(\alpha,\tau_1,\tau_2,\beta)$ over $\tau_2$ is attained by some $0<\tau_{2\star}<\infty$. From this and the joint strict convexity in $(\alpha,\tau_2)$ of $\mathcal O_2(\alpha,\tau_1,\tau_2,\beta)$, we have that the function $\mathcal O_2^{\tau_2}(\alpha,\tau_1,\beta)$ is strictly convex in $\alpha$. In particular, $\mathcal O_2^{\tau_2}(\alpha,\tau_1,\beta)$ can be written as the sum of a strictly convex function in $\alpha$ and a jointly convex function in $(\alpha,\tau_1)$. Then, after minimizing over $\tau_1>0$ (notice that, by Sion's minimax principle, we can exchange the maximization over $\beta$ with the minimization over $\tau_1$ in \eqref{eq:proof:thm:estimationerror:2:49}) we are left with a function that is strictly convex in $\alpha$ (since it is the sum of a strictly convex and a convex function). Finally, since the maximization over $\beta$ is attained, we have that $ \max_{0 \leq \beta \leq B}\; \inf_{\tau_1>0}\; \mathcal O_{2}^{\tau_2}(\alpha,\tau_1,\beta)$ is strictly convex in $\alpha$, and therefore $\alpha_{\star,2}$ is unique.

Similarly to the first case, the uniqueness of $\alpha_{\star,1}$ is the key element in the convergence analysis required by Fact~\ref{prop:cgmt}. Again, we define, for arbitrary $\eta>0$, the sets $\mathcal S_\eta := \{\alpha \in [0,\sigma_{\theta_0}-\delta]:\, |\alpha-\alpha_{\star,2}|<\eta\}$, with $\alpha_{\star,2}$ the unique solution of \eqref{eq:proof:thm:estimationerror:2:49}, and $\mathcal S_\eta^c := [0,\sigma_{\theta_0}-\delta]\setminus \mathcal S_\eta$. Since $\alpha_{\star,2}$ is the unique solution of \eqref{eq:proof:thm:estimationerror:2:49}, we have that
\begin{align}
\label{eq:proof:thm:estimationerror:2:50}
    \inf_{\substack{0 \leq \alpha \leq \sigma_{\theta_0} - \delta \\ \tau_1>0}}\; \max_{0 \leq \beta \leq B}\; \mathcal O_2^{\tau_2}(\alpha, \tau_1, \beta) < \inf_{\substack{\alpha \in \mathcal S_\eta^c \\ \tau_1>0}}\; \max_{0 \leq \beta \leq B}\; \mathcal O_2^{\tau_2}(\alpha, \tau_1, \beta)
\end{align}

Now, in order to finish the proof for the second case, we still have to prove the following two convergences in probability
\begin{align}
\label{eq:proof:thm:estimationerror:2:51}
    \inf_{\substack{0 \leq \alpha \leq \sigma_{\theta_0} - \delta \\ \tau_1>0}}\; \max_{0 \leq \beta \leq B}\; \mathcal O_{d,2}^{\tau_2}(\alpha, \tau_1, \beta) \overset{P}{\to} \inf_{\substack{0 \leq \alpha \leq \sigma_{\theta_0} - \delta \\ \tau_1>0}}\; \max_{0 \leq \beta \leq B}\; \mathcal O_2^{\tau_2}(\alpha, \tau_1,  \beta),
\end{align}
and
\begin{align}
\label{eq:proof:thm:estimationerror:2:52}
    \inf_{\substack{\alpha \in \mathcal S_\eta^c \\ \tau_1>0}}\; \max_{0 \leq \beta \leq B}\; \mathcal O_{d,2}^{\tau_2}(\alpha, \tau_1, \beta) \overset{P}{\to} \inf_{\substack{\alpha \in \mathcal S_\eta^c \\ \tau_1>0}}\; \max_{0 \leq \beta \leq B}\; \mathcal O_2^{\tau_2}(\alpha, \tau_1, \beta).
\end{align}
Then, the result will simply follow from Fact~\ref{prop:cgmt}. Again, it will be easier to prove the two convergences in probability in the following equivalent form (by Sion's minimax theorem)
\begin{align*}
    \min_{\alpha}\; \max_{0 \leq \beta \leq B}\;\inf_{\tau_1>0}\; \mathcal O_{d,2}^{\tau_2}(\alpha, \tau_1, \beta) \overset{P}{\to} \min_{\alpha}\; \max_{0 \leq \beta \leq B}\;\inf_{\tau_1>0}\; \mathcal O_2^{\tau_2}(\alpha, \tau_1, \beta).
\end{align*}

For fixed $(\alpha,\beta)$, with $\beta>0$, $\{\mathcal O_{d,2}^{\tau_2}(\alpha, \cdot, \beta)\}_{d \in \mathbb N}$ is a sequence of random, real-valued convex functions (since they are defined as the partial minimization of jointly convex functions), converging in probability (pointwise, for every $\tau_1 > 0$) to the function $\mathcal O_2^{\tau_2}(\alpha, \cdot, \beta)$.  Moreover, we have that
\begin{align*}
    \lim_{\tau_1 \to \infty} \mathcal O_2^{\tau_2}(\alpha, \tau_1, \beta) = \infty.
\end{align*}
This follows, as in the first case, from
\begin{align*}
    \lim_{\tau_1 \to \infty} \frac{\beta \tau_1}{2}  + \mathcal F(\alpha,\frac{\tau_1}{\beta})  \geq \lim_{\tau_1 \to \infty} \frac{\beta \tau_1}{2} = \infty.
\end{align*}
As a consequence, when $\beta>0$, we can apply again statement (iii) of Lemma~\ref{lemma:conv:opt} to conclude that
\begin{align*}
    \inf_{\tau_1>0}\; \mathcal O_{d,2}^{\tau_2}(\alpha, \tau_1, \beta) \overset{P}{\to} \inf_{\tau_1>0}\; \mathcal O_2^{\tau_2}(\alpha, \tau_1, \beta).
\end{align*}
Moreover, for $\beta=0$ the objective functions $O_{d,2}^{\tau_2}$ are reduced to
\begin{align*}
    \mathcal O_{d,2}^{\tau_2}(\alpha,\tau_1,0) = \frac{p\tau_{2\star}}{2} + \frac{p \rho}{2 \tau_{2\star}} \left( \frac{\|\theta_0\|}{\sqrt{d}} - \alpha \right)^2,
\end{align*}
and therefore the terms involving $\tau_1$ are cancelled. Here, $\tau_{2\star}$ is the optimal solution attaining the infimum (which exists as previously explained).

We now define $\mathcal O_2^{\tau_1,\tau_2}(\alpha, \beta):= \inf_{\tau_1>0}\; \mathcal O_2^{\tau_2}(\alpha, \tau_1, \beta)$ and $\mathcal O_{d,2}^{\tau_1,\tau_2}(\alpha, \beta):= \inf_{\tau_1>0}\; \mathcal O_{d,2}^{\tau_2}(\alpha, \tau_1, \beta)$. For fixed $\alpha$, $\{\mathcal O_{d,2}^{\tau_1,\tau_2}(\alpha, \cdot)\}_{d \in \mathbb N}$ is a sequence of random, real-valued concave functions (since they are defined as a minimization of concave functions), converging in probability (pointwise, for every $0<\beta\leq B$) to the function $\mathcal O_2^{\tau_1, \tau_2}(\alpha, \cdot)$. Therefore, we can apply statement (ii) of Lemma~\ref{lemma:conv:opt} to conclude that
\begin{align*}
    \sup_{0 < \beta \leq B}\; \mathcal O_{d,2}^{\tau_1,\tau_2}(\alpha, \beta) \overset{P}{\to} \sup_{0 < \beta \leq B}\; \mathcal O_2^{\tau_1,\tau_2}(\alpha, \beta),
\end{align*}
Moreover, for $\beta = 0$, it can be easily seen that
\begin{align*}
    \mathcal O_{d,2}^{\tau_2}(\alpha,\tau_1,0) \overset{P}{\to} \mathcal O_{2}^{\tau_2}(\alpha,\tau_1,0) = \frac{p\tau_{2\star}}{2} + \frac{p \rho}{2 \tau_{2\star}} \left( \sigma_{\theta_0} - \alpha \right)^2,
\end{align*}
where $\tau_{2\star}$ is the optimal solution attaining the infimum (which exists as previously explained). As a consequence, we have that 
\begin{align}
\label{eq:proof:thm:estimationerror:2:alpha2:0}
    \sup_{0 \leq \beta \leq B}\; \mathcal O_{d,2}^{\tau_1,\tau_2}(\alpha, \beta) \overset{P}{\to} \sup_{0 \leq \beta \leq B}\; \mathcal O_2^{\tau_1,\tau_2}(\alpha, \beta),
\end{align}

We have now arrived at the final step of the convergence proof, where again, due to the Convexity Lemma~\ref{lemma:convexity:lemma}, we will distinguish between the case $\alpha_{\star,2}=0$ and $\alpha_{\star,2}>0$. The proof follows exactly the same lines as in \eqref{eq:proof:thm:estimationerror:2:alpha:1}-\eqref{eq:proof:thm:estimationerror:2:35}, which we report in what follows for completeness.

We first consider the case $\alpha_{\star,2}=0$. Since \eqref{eq:proof:thm:estimationerror:2:alpha2:0} is valid for any $\alpha \geq 0$, it is in particular valid for $\alpha = 0$, and thus we have that
\begin{align}
\label{eq:proof:thm:estimationerror:2:alpha2:1}
    \mathcal O_{d,2}^{\beta,\tau_1,\tau_2}(0) \overset{P}{\to} \mathcal O_2^{\beta,\tau_1,\tau_2}(0).
\end{align}
Moreover, using statement (i) of Lemma~\ref{lemma:conv:opt}, for any $K>0$, we have that 
\begin{align}
\label{eq:proof:thm:estimationerror:2:alpha2:2}
    \min_{K \leq\alpha\leq \sigma_{\theta_0} - \delta}\; \mathcal O_{d,2}^{\beta,\tau_1,\tau_2}(\alpha) \overset{P}{\to} \min_{K \leq \alpha\leq \sigma_{\theta_0} - \delta}\; \mathcal O_2^{\beta,\tau_1,\tau_2}(\alpha).
\end{align}
The result for $\alpha_{\star,1}=0$ now follows from \eqref{eq:proof:thm:estimationerror:2:alpha2:1} and \eqref{eq:proof:thm:estimationerror:2:alpha2:2} and the fact that 
\begin{align*}
    \mathcal O_2^{\beta,\tau_1,\tau_2}(0) < \min_{K \leq \alpha\leq \sigma_{\theta_0} - \delta}\; \mathcal O_2^{\beta,\tau_1,\tau_2}(\alpha),
\end{align*}
for any $K > 0$, due to the uniqueness of $\alpha_{\star,2}$.

We now focus on the case $\alpha_{\star,2} > 0$. We need to prove \eqref{eq:proof:thm:estimationerror:2:51} and \eqref{eq:proof:thm:estimationerror:2:52}, which are equivalent to the following two convergences in probability
\begin{align}
\label{eq:proof:thm:estimationerror:2:53}
    \inf_{0<\alpha\leq \sigma_{\theta_0} - \delta}\; \mathcal O_{d,2}^{\beta,\tau_1,\tau_2}(\alpha) \overset{P}{\to} \inf_{0<\alpha\leq \sigma_{\theta_0} - \delta}\; \mathcal O_2^{\beta,\tau_1,\tau_2}(\alpha),
\end{align}
and
\begin{align}
\label{eq:proof:thm:estimationerror:2:54}
    \inf_{\alpha \in \mathcal S_\eta^c}\; \mathcal O_{d,2}^{\beta,\tau_1,\tau_2}(\alpha) \overset{P}{\to} \inf_{\alpha \in \mathcal S_\eta^c}\; \mathcal O_2^{\beta,\tau_1,\tau_2}(\alpha),
\end{align}
respectively. Since $\{\mathcal O_{d,2}^{\beta,\tau_1,\tau_2}(\cdot)\}_{d \in \mathbb N}$ is a sequence of random, real-valued convex functions, converging in probability (pointwise, for every $\alpha > 0$) to the function $\mathcal O_2^{\beta,\tau_1,\tau_2}(\cdot)$, \eqref{eq:proof:thm:estimationerror:2:53} follows immediately from  statement (ii) of Lemma~\ref{lemma:conv:opt}. Moreover, since $\mathcal S_\eta^c = (0,\alpha_{\star,2}-\eta]\cup[\alpha_{\star,2}+\eta,\sigma_{\theta_0} - \delta]$, \eqref{eq:proof:thm:estimationerror:2:54} follows from statement (i) of Lemma~\ref{lemma:conv:opt}, for the interval $[\alpha_{\star,2}+\eta,\sigma_{\theta_0} - \delta]$, and statement (ii) of the same lemma, for the interval $(0,\alpha_{\star,2}-\eta]$. 
This concludes the proof of the two convergences in probability  \eqref{eq:proof:thm:estimationerror:2:51} and \eqref{eq:proof:thm:estimationerror:2:52}, and with it, the analysis of the second case \eqref{eq:proof:thm:estimationerror:2:9:3}, and the proof.
\end{proof}

\begin{proof}[Proof of Proposition~\ref{prop:est:error:squared:loss}]
The proof follows similar lines as the proof of Theorem~\ref{thm:estimationerror:1}, and therefore we only provide a sketch. We start by rewriting the minimization on right-hand side of \eqref{eq:reg:for:squared:loss} in the following form
\begin{align}
\label{proof:est:error:squared:loss:1}
    \min_{\theta \in \mathbb R^d}\;  \frac{\rho}{d} \left( \sqrt{n}\|y-Ax\|  +  \sqrt{\varepsilon_0} \sqrt{n}\|\theta\|\right),
\end{align}
which enables us to quantify the high-dimensional estimation error using Theorem~$1$ in \cite{thrampoulidis2018precise}. Here, the loss function and the regularization function are both equal to $\sqrt{n} \|\cdot\|$. Therefore, similarly to the proof of Theorem~\ref{thm:estimationerror:1}, we can compute the Moreau envelope of $\sqrt{n} \|\cdot\|$ as
\begin{align*}
    e_{\sqrt{n}\|\cdot\|}(c;\tau) = \begin{cases} \sqrt{n}\|c\|-{n\tau}/{2} \quad\quad &\mbox{if }\; \|c\|>\sqrt{n}\tau, \\ \|c\|^2/{(2 \tau)}
 \quad\quad &\mbox{if }\; \|c\|\leq\sqrt{n}\tau, \end{cases}
\end{align*}
from which the two limits in probability from Theorem~$1$ in \cite{thrampoulidis2018precise} can be computed as follows
\begin{align*}
    \frac{1}{n} \left( e_{\sqrt{n}\|\cdot\|}(c g + z;\tau) - \sqrt{n}\|z\| \right) \overset{P}{\to} \begin{cases} \sqrt{(c^2 + \sigma_{Z}^2)}  - {\tau}/{2} - {\sigma_{Z}}/{\sqrt{\rho}} \quad\quad &\mbox{if }\; \sqrt{c^2 + \sigma_{Z}^2} > \tau, \\  \left( c^2 + \sigma_{Z}^2 \right)/(2 \tau)- {\sigma_{Z}}
 \quad\quad &\mbox{if }\; \sqrt{c^2 + \sigma_{Z}^2} \leq \tau, \end{cases}
\end{align*}
which is equal to the function $\mathcal E(c,\tau)$ defined above, and
\begin{align*}
    \frac{1}{d} \left( e_{\sqrt{n}\|\cdot\|}(c h + \theta_0;\tau) - \sqrt{n}\|\theta_0\| \right) \overset{P}{\to} \begin{cases} \sqrt{(c^2 + \sigma_{\theta_0}^2)/\rho}  - {\tau}/{(2 \rho)} - {\sigma_{\theta_0}}/{\sqrt{\rho}} \quad\quad &\mbox{if }\; \sqrt{\rho}\sqrt{c^2 + \sigma_{\theta_0}^2} > \tau, \\  \left( c^2 + \sigma_{\theta_0}^2 \right)/(2 \tau)- {\sigma_{\theta_0}}/{\sqrt{\rho}} 
 \quad\quad &\mbox{if }\; \sqrt{\rho}\sqrt{c^2 + \sigma_{\theta_0}^2} \leq \tau. \end{cases}
\end{align*}
which is equal to the function $\mathcal G(c,\tau)$ defined in \eqref{eq:R}. The proofs of these convergences follow the same arguments as in Theorems~\ref{thm:estimationerror:1} and \ref{thm:estimationerror:2}, thus are omitted. Moreover, all the assumptions required by Theorem~$1$ in \cite{thrampoulidis2018precise} can be easily shown to hold (as in the proof of Theorems~\ref{thm:estimationerror:1}, or as in Section~$\text{V.E}$ in \cite{thrampoulidis2018precise}). This concludes the sketch of the proof.
\end{proof}

\begin{proof}[Proof of Theorem~\ref{thm:estimationerror:3}]
Similarly to Theorem~\ref{thm:estimationerror:2}, since the proof is quite long and intricate, we divide it into steps, which we briefly explain before jumping into the technical details.

\paragraph{Step 1: We show that problem \eqref{eq:regdro} can be equivalently rewritten as a (PO) problem.} The proof builds upon the dual formulation \eqref{eq:dro:dual:uni:3} presented in Fact~\ref{lemma:nonconvexduality:3}. After introducing the change of variable $w = (\theta-\theta_0)/\sqrt{d}$ and expressing it in vector form, \eqref{eq:dro:dual:uni:3} becomes
\begin{align}
\label{eq:proof:thm:estimationerror:3:0}
    \min_{w \in \mathcal W}\; \max_{u \in \mathbb R^n} \; - \frac{1}{n} u^\top (\sqrt{d}A)w +\frac{1}{n} u^\top z + \frac{1}{4\lambda n}\|\theta_0 + \sqrt{d}w\|^2\|u\|^2-\frac{1}{n}\sum_{i=1}^{n}L^*(u_i)
\end{align}
where $\mathcal W$ is the feasible set of $w$ obtained from $\Theta$ after the change of variable, $A \in \mathbb R^{n \times d}$ denotes the matrix whose rows are the vectors $x_i$, for $i=1,\ldots,n$, and $z \in \mathbb R^n$ is the measurement noise vector with entries i.i.d.\ distributed according to $\mathbb P_Z$. Notice that, due to Assumption~\ref{assump:Theta}, the set $\mathcal W$ is convex and compact. Moreover, following the assumptions~\ref{assump:Theta} and \ref{assump:cgmt}\ref{assump:theta0}, we have that $\mathcal W \subset \{w \in \mathbb R^d: \,  \|w\| \leq R_\theta + \sigma_{\theta_0}\}$ w.p.a.\ 1 as $d\to \infty$.

From Lemma~\ref{lemma:growthrates:3} we know that the optimal solution $u_\star$ is in the order of $\sqrt{n}$. As a consequence, by rescaling the variable $u$ as $u \to u \sqrt{n}$, and introducing the convex compact set $\mathcal S_u := \{u \in \mathbb R^n:\, \|u\|\leq K_\beta\}$, for some sufficiently large $K_\beta > 0$, we can equivalently rewrite \eqref{eq:proof:thm:estimationerror:3:0} as
\begin{align}
\label{eq:proof:thm:estimationerror:3:1}
    \min_{w \in \mathcal W}\; \max_{u \in \mathcal S_u} \; - \frac{1}{\sqrt{n}} u^\top (\sqrt{d}A)w +\frac{1}{\sqrt{n}} u^\top z + \frac{1}{4\lambda}\|\theta_0 + \sqrt{d}w\|^2\|u\|^2-\frac{1}{n}\sum_{i=1}^{n}L^*(u_i \sqrt{n})
\end{align}

We will now show that the assumption that $L$ can be written as the Moreau envelope with parameter $M$ of a convex function $f$ is natural, and always satisfied by $M$-smooth functions (which is precisely our case, due to Assumption~\ref{assump:dro}\ref{assump:ell:diff}). Since $L$ is $M$-smooth, we have that its convex conjugate $L^*$ is $1/M$-strongly convex, and therefore $L^*$ can be written as
\begin{align}
    \label{eq:proof:thm:estimationerror:3:2}
    L^*(\cdot) = \frac{1}{2M}(\cdot)^2 + f^*(\cdot),
\end{align}
where $f^*$ is the conjugate of a convex function $f$. Notice that $f$ can be easily obtained (as a function of $L$) from \eqref{eq:proof:thm:estimationerror:3:2} as $f = (L^* - 1/(2M)(\cdot)^2)^*$. Conversely, the loss function $L$ is nothing but the Moreau envelope of $f$, as shown in what follows
\begin{align*}
    L(\cdot) = \left(\frac{1}{2M}(\cdot)^2 + f^*(\cdot)\right)^*(\cdot) = \left(\frac{M}{2}(\cdot)^2 \star_{\text{inf}} f(\cdot)\right)(\cdot) = e_f\left(\cdot,\frac{1}{M}\right),
\end{align*}
where $\star_{\text{inf}}$ denotes the infimal convolution operation. Since $L$ is continuous and convex (notice that $L$ is trivially proper due to Assumption~\ref{assump:cgmt}\ref{assump:ell:0}), with $\min_{v \in \mathbb R} L(v) = 0$, we have that $L^*$ is lower semicontinuous and convex, and therefore both $f$ and $f^*$ are lower semicontinuous and convex.

Using the decomposition \eqref{eq:proof:thm:estimationerror:3:2}, problem \eqref{eq:proof:thm:estimationerror:3:1} can be rewritten as
\begin{align}
\label{eq:proof:thm:estimationerror:3:3}
    \min_{w \in \mathcal W}\; \max_{u \in \mathcal S_u} \; - \frac{1}{\sqrt{n}} u^\top (\sqrt{d}A)w +\frac{1}{\sqrt{n}} u^\top z + \frac{1}{4\lambda}\|\theta_0 + \sqrt{d}w\|^2\|u\|^2- \frac{1}{2M}\|u\|^2- \frac{1}{n}\sum_{i=1}^{n}f^*(u_i \sqrt{n}).
\end{align}
It is important now to highlight that the expression
\begin{align*}
    \frac{1}{4\lambda}\|\theta_0 + \sqrt{d}w\|^2\|u\|^2- \frac{1}{2M}\|u\|^2
\end{align*}
is concave is $u$. This follows from the assumption that $\lambda > M R_\theta^2 d/2 \geq  M\|\theta_0 + \sqrt{d}w\|^2/2$. As will be shown later, the decomposition \eqref{eq:proof:thm:estimationerror:3:2} and the concavity in $u$ of the above-mentioned expression will allow us to use the CGMT.

For ease of notation, we introduce the function $F^*:\mathbb R^n \to \mathbb R$, which is defined as
\begin{align*}
    F^*(u):=\sum_{i=1}^{n}f^*(u_i).
\end{align*}
Since $f,f^*$ are lower semicontinuous and convex, we have that $F^*$ is lower semicontinuous and convex, and therefore its convex conjugate $F = F^{**}$ can be easily recovered as follows
\begin{align*}
    F(u) = \sup_{v \in \mathbb R^n} v^\top u - F^*(v) = \sum_{i=1}^n\; \sup_{v_i \in \mathbb R}\; v_i u_i - f^*(v_i) = \sum_{i=1}^n f(u_i).
\end{align*}
We now rewrite $F^*(u \sqrt{n})$ using its convex conjugate as follows
\begin{align}
\label{eq:proof:thm:estimationerror:3:5}
    \min_{\substack{w \in \mathcal W \\ s \in \mathbb R^n}}\; \max_{u \in \mathcal S_u}\;  - \frac{1}{\sqrt{n}} u^\top (\sqrt{d}A)w +\frac{1}{\sqrt{n}} u^\top z + \frac{1}{4\lambda}\|\theta_0 + \sqrt{d}w\|^2\|u\|^2 -\frac{1}{2 M}\|u\|^2-\frac{1}{\sqrt{n}}s^\top u + \frac{1}{n} F(s)
\end{align}
where we have used Sion's minimax principle to exchange the minimization over $s$ with the maximization over $u$. In particular, this last step is possible only because the objective function in \eqref{eq:proof:thm:estimationerror:3:5} is concave in $u$, due to the decomposition \eqref{eq:proof:thm:estimationerror:3:2}. 

Notice that the new objective function in \eqref{eq:proof:thm:estimationerror:3:5} is convex in $(w,s)$ and concave in $u$. Indeed, the concavity in $u$ holds, as explained above,  from the assumption that $\lambda  > M R_\theta^2 d/2 \geq  M\|\theta_0 + \sqrt{d}w\|^2/2$. Moreover, the convexity in $w$ can be easily seen from the convexity in $w$ of the term $1/(4\lambda)\|\theta_0 + \sqrt{d}w\|^2\|u\|^2$ and the linearity of the other terms. Finally, the convexity in $s$ can be concluded from the convexity of $F$, and the joint convexity in $(w,s)$ follows easily since $w$ and $s$ are decoupled.

Using Assumption~\ref{assump:0}, we can see that $\sqrt{d}A$ has entries i.i.d.\ $\mathcal N(0,1)$. Moreover, the objective function \eqref{eq:proof:thm:estimationerror:3:5} is convex-concave in $(w,u)$, with $\mathcal W, \mathcal S_u$ convex compact sets. As a consequence, problem \eqref{eq:proof:thm:estimationerror:3:5} is a (PO) problem, in the form \eqref{eq:cgmt:po}.

\paragraph{Step 2: We apply CGMT and obtain the modified (AO) problem.}

Since \eqref{eq:proof:thm:estimationerror:3:5} is a (PO) problem, we can associate to it the following (AO) problem
\begin{align}
\label{eq:proof:thm:estimationerror:3:6}
\begin{split}
    \min_{\substack{w \in \mathcal W \\ s \in \mathbb R^n}}\; \max_{u \in \mathcal S_u}\;   \frac{1}{\sqrt{n}} \|w\|g^\top u - \frac{1}{\sqrt{n}} \|u\|h^\top w +\frac{1}{\sqrt{n}} u^\top z + \frac{1}{4\lambda}\|\theta_0 + \sqrt{d}w &\|^2\|u\|^2 -\frac{1}{2M}\|u\|^2- \\& -\frac{1}{\sqrt{n}}s^\top u + \frac{1}{n} F(s),
\end{split}
\end{align}
which is tightly related to the (PO) problem in the high-dimensional regime, as explained in Section~\ref{sec:cgmt}. At this point, following the discussion presented in Section~\ref{sec:cgmt}, we consider the following modified (AO) problem,
\begin{align}
\label{eq:proof:thm:estimationerror:3:7}
\begin{split}
    \max_{0 \leq \beta \leq K_\beta}\; \min_{\substack{w \in \mathcal W\\ s \in \mathbb R^n}}\; \max_{\|u\|= \beta}\;  \frac{1}{\sqrt{n}} \|w\|g^\top u - \frac{1}{\sqrt{n}} \|u\|h^\top w +\frac{1}{\sqrt{n}} u^\top z + \frac{1}{4\lambda}\|\theta_0 + \sqrt{d}w\|^2\|u\|^2 -\frac{1}{2M}\|u\|^2- \\ -\frac{1}{\sqrt{n}}s^\top u + \frac{1}{n} F(s),
\end{split}
\end{align}
where we have separated the maximization over $u \in \mathcal S_u = \{u \in \mathbb R^n:\, \|u\|\leq K_\beta\}$ into the maximization over the magnitude of $u$, i.e., $0 \leq \beta \leq K_\beta$, and the maximization over $\|u\| = \beta$. Recall that, since the (AO) problem is not convex-concave (due to the random vectors $g$ and $h$), the exchange of the minimization over $(w,s)$ and the maximization over $\beta$ is not justified. Therefore the two problems \eqref{eq:proof:thm:estimationerror:3:6} and \eqref{eq:proof:thm:estimationerror:3:7} might not be equivalent. However, as seen in Fact~\ref{prop:cgmt}, the modified (AO) problem \eqref{eq:proof:thm:estimationerror:3:7} can be used to study the optimal solution of the (PO) problem \eqref{eq:proof:thm:estimationerror:3:5} in the asymptotic regime, based solely on the optimal value of the modified (AO) problem. We will do so after simplifying the problem, by reducing it to a scalar one, as shown in what follows.

\paragraph{Step 3: We consider the modified (AO) problem \eqref{eq:proof:thm:estimationerror:3:7}, and show that it can be reduced to a scalar problem, which involves only three scalar variables.}

We start by scalarizing problem \eqref{eq:proof:thm:estimationerror:3:7} over the magnitude of $u$, leading to the following problem
\begin{align}
\label{eq:proof:thm:estimationerror:3:8}
    \max_{0 \leq \beta \leq K_\beta}\; \min_{\substack{w \in \mathcal W\\ s \in \mathbb R^n}}\;   \frac{\beta}{\sqrt{n}}\| \|w\|g + z - s \| - \frac{\beta}{\sqrt{n}} h^\top w + \frac{\beta^2}{4 d \lambda_0}\|\theta_0 + \sqrt{d}w\|^2 -\frac{\beta^2}{2M} + \frac{1}{n} F(s),
\end{align}
where we have also used the fact that $\lambda = d \lambda_0$.

We now employ the square-root trick to rewrite the first term in the objective function of \eqref{eq:proof:thm:estimationerror:3:8} as follows
\begin{align*}
    \frac{1}{\sqrt{n}}\|\|w\|g + z - s \| = \inf_{\tau_1>0} \frac{\tau_1}{2} + \frac{1}{2n\tau_1}\|\|w\|g + z - s \|^2.
\end{align*}

Introducing this in \eqref{eq:proof:thm:estimationerror:3:8} and re-organizing the terms gives
\begin{align*}
    \max_{0 \leq \beta \leq K_\beta}\; \min_{\substack{w \in \mathcal W\\ s \in \mathbb R^n \\ \tau_1>0}}\;   \frac{\beta \tau_1}{2} -\frac{\beta^2}{2M} + \frac{\beta}{2n\tau_1}\| \|w\|g + z - s \|^2  + \frac{1}{n} F(s) - \frac{\beta}{\sqrt{n}} h^\top w + \frac{\beta^2}{4 d \lambda_0}\|\theta_0 + \sqrt{d}w\|^2,
\end{align*}
which allows us to introduce the Moreau envelope of $F$, i.e.,
\begin{align}
\label{eq:proof:thm:estimationerror:3:9}
    e_{F}(\alpha g + z,\frac{\tau_1}{\beta}) := \min_{s \in \mathbb R^n}\; \frac{\beta}{2\tau_1}\|\alpha g + z - s \|^2 + F(s),
\end{align}
giving rise to
\begin{align}
\label{eq:proof:thm:estimationerror:3:10}
    \max_{0 \leq \beta \leq K_\beta}\; \min_{\substack{w \in \mathcal W \\ \tau_1>0}}\;   \frac{\beta \tau_1}{2}  -\frac{\beta^2}{2M} + \frac{1}{n} e_{F}(\|w\| g + z,\frac{\tau_1}{\beta}) - \frac{\beta}{\sqrt{n}} h^\top w + \frac{\beta^2}{4 d \lambda_0}\|\theta_0 + \sqrt{d}w\|^2.
\end{align}

We now proceed with the aim of scalarizing the problem over the magnitude of $w$ (similarly to what we have done for $u$). In order to scalarize the problem over the magnitude of $w$,  instead of the constraint set $\mathcal W$, we would need to work with a set of the form $\mathcal S_w := \{w \in \mathbb R^d: \, \|w\| \leq K_\alpha\}$. Notice that if we choose $K_\alpha := R_\theta + \sigma_{\theta_0}$, then we have $\mathcal W \subset \mathcal S_w$ w.p.a.\ $1$ as $d \to \infty$. Therefore, from now on we impose such a constraint on $w$ (i.e., $w \in \mathcal S_w$), and we postpone the discussion about the implications that this constraint has on the error $\|\hat{\theta}_{\mathrm{DRE}}-\theta_0\|$ to the end of the proof.

Similarly to what was done for $u$, we can now separate the constraint $w \in \mathcal S_w$ into the two constraints $0 \leq \alpha \leq K_\alpha$ and $\|w\| = \alpha$, and by expressing $\|\theta_0 + \sqrt{d}w\|^2$ as $\|\theta_0\|^2 + d\|w\|^2 + 2\sqrt{d}\theta_0^\top w$, we can now scalarize the problem over the magnitude of $w$ and obtain
\begin{align}
\label{eq:proof:thm:estimationerror:3:11}
    \max_{0 \leq \beta \leq K_\beta}\; \inf_{\substack{0 \leq \alpha \leq K_\alpha \\ \tau_1>0}}\; \frac{\beta \tau_1}{2} -\frac{\beta^2}{2M} + \frac{1}{n} e_{F}(\alpha g + z,\frac{\tau_1}{\beta}) - \alpha \beta\left\|\frac{\beta}{2 \lambda_0} \frac{\theta_0}{\sqrt{d}} - \frac{h}{\sqrt{n}} \right\| + \frac{\beta^2}{4 \lambda_0}\left(\frac{\|\theta_0\|^2}{d} + \alpha^2\right).
\end{align}

We denote by $\mathcal O_{d}(\alpha,\tau_1,\beta)$ the objective function in problem \eqref{eq:proof:thm:estimationerror:3:11}, where the index $d$ recalls that $\mathcal O_{d}$ is parametrized by the dimension $d$.

\paragraph{Step 4: We show that $\mathcal O_{d}$ is continuous on its domain, jointly convex in $(\alpha,\tau_1)$, and concave in $\beta$.} 

Let's first concentrate on the continuity property. Notice that we only need to show that the Moreau envelope $e_{F}(\alpha g + z,\tau_1/\beta)$ is continuous, since all the other terms in the objective function are trivially continuous. Since $F$ is lower semicontinuous and convex (recall that $F(u) = \sum_{i=1}^n f(u_i)$), the continuity of the Moreau envelope $e_{F}(\alpha g + z,\tau_1/\beta)$ follows from \cite[Theorem~2.26(b)]{rockafellar2009variational}.

Let's now focus on the joint convexity in $(\alpha,\tau_1)$. Notice that the last two terms in \eqref{eq:proof:thm:estimationerror:3:11} are linear and quadratic in $\alpha$, and therefore trivially convex. Moreover, if we prove that the objective function on the right-hand side of \eqref{eq:proof:thm:estimationerror:3:9} is jointly convex in $(\alpha,\tau_1,s)$, then, after minimizing over $s \in \mathbb R^n$, the Moreau envelope $e_{F}(\alpha g + z,\tau_1/\beta)$ remains jointly convex in $(\alpha,\tau_1)$. But this is certainly the case since $\|\alpha g - s \|^2$ is jointly convex in $(\alpha,s)$, $1/\tau_1\|\alpha g - s \|^2$ is the perspective function of $\|\alpha g - s \|^2$, and therefore jointly convex in $(\alpha,\tau_1,s)$, and the shifted function $1/\tau_1\|\alpha g + z - s \|^2$ remains jointly convex. 

Finally, we will show that $\mathcal O_{d}(\alpha,\tau_1,\beta)$ is concave in $\beta$. Notice first that the objective function in \eqref{eq:proof:thm:estimationerror:3:10} is concave in $\beta$. Indeed, this follows from the concavity of the Moreau envelope $e_{F}(\alpha g + z,\tau_1/\beta)$ (since it is obtained from the minimization of affine functions in $\beta$), and the concavity of
\begin{align*}
    -\frac{\beta^2}{2M} + \frac{\beta^2}{4 d \lambda_0}\|\theta_0 + \sqrt{d}w\|^2,
\end{align*}
(due to the assumption that $\lambda_0 > M R_{\theta}^2/2$). Now, since $\mathcal O_{d}(\alpha,\tau_1,\beta)$ is the result of a minimization over these concave functions, it is concave in $\beta$.

As a consequence of the continuity and convexity-concavity, we can apply Sion's minimax principle to exchange the minimization and the maximization in \eqref{eq:proof:thm:estimationerror:3:11} to obtain 
\begin{align}
\label{eq:proof:thm:estimationerror:3:12}
    \inf_{\substack{0 \leq \alpha \leq K_\alpha \\ \tau_1>0}}\;\max_{0 \leq \beta \leq K_\beta}\; \frac{\beta \tau_1}{2} -\frac{\beta^2}{2M} + \frac{1}{n} e_{F}(\alpha g + z,\frac{\tau_1}{\beta}) - \alpha \beta\left\|\frac{\beta}{2 \lambda_0} \frac{\theta_0}{\sqrt{d}} - \frac{h}{\sqrt{n}} \right\| + \frac{\beta^2}{4 \lambda_0}\left(\frac{\|\theta_0\|^2}{d} + \alpha^2\right).
\end{align}

\paragraph{Step 5: We study the convergence in probability of $\mathcal O_{d}$.}

We have now arrived at the scalar formulation \eqref{eq:proof:thm:estimationerror:3:12}, which has the same optimal value as the modified (AO) problem \eqref{eq:proof:thm:estimationerror:3:7}. Therefore, following the reasoning presented in Fact~\ref{prop:cgmt}, the next step is to study the convergence in probability of its optimal value. To do so, we start by studying the convergence in probability of its objective function $\mathcal O_{d}$.

Notice first that the last two terms of $\mathcal O_{d}$ converge in probability as follows.
\begin{align}
    \label{eq:proof:thm:estimationerror:3:13}    
    \alpha \beta\left\|\frac{\beta}{2 \lambda_0} \frac{\theta_0}{\sqrt{d}} - \frac{h}{\sqrt{n}} \right\| &\overset{P}{\to} \alpha \beta \sqrt{\frac{\beta^2 \sigma_{\theta_0}^2}{4 \lambda_0^2} + \rho}, \\
    \label{eq:proof:thm:estimationerror:3:14} 
    \frac{\beta^2}{4 \lambda_0}\left(\frac{\|\theta_0\|^2}{d} + \alpha^2\right) &\overset{P}{\to} \frac{\beta^2}{4 \lambda_0} \left( \sigma_{\theta_0}^2 + \alpha^2 \right),
\end{align}
where \eqref{eq:proof:thm:estimationerror:3:13} can be recovered using
\begin{align*}
    \frac{\theta_0^\top h}{d} \overset{P}{\to} 0,
\end{align*}
(see Lemma~\ref{lemma:triangular:array}) and the continuous mapping theorem (which states that continuous functions preserve limits in probability, when their arguments are sequences of random variables). 

We will now study the convergence in probability of the normalized Moreau envelope $e_{F}(\alpha g + z,{\tau_1}/{\beta})/n$. Since $F(u):=\sum_{i=1}^{n}f(u_i)$, \eqref{eq:proof:thm:estimationerror:3:9} reduces to
\begin{align*}
    \frac{1}{n} e_{F}(\alpha g + z,\frac{\tau_1}{\beta}) = \frac{1}{n} \sum_{i=1}^{n} \;\min_{s_i \in \mathbb R}\; \frac{\beta}{2 \tau_1} (\alpha G + Z - s_i)^2 + f(s_i) = \frac{1}{n} \sum_{i=1}^{n} \; e_{f}(\alpha G + Z;\frac{\tau_1}{\beta}),
\end{align*}
and its limit (in probability) can be immediately recovered from the weak law of large numbers as
\begin{align}
    \label{eq:proof:thm:estimationerror:3:15}
    \frac{1}{n}  e_{F}(\alpha g + z,\frac{\tau_1}{\beta}) \overset{P}{\to} \mathbb E_{\mathcal N(0,1) \otimes \mathbb P_Z}\left[ e_{f}(\alpha G + Z;\frac{\tau_1}{\beta})\right],
\end{align}
which is precisely the expected Moreau envelope $\mathcal F(\alpha,\tau_1/\beta)$ defined in \eqref{eq:L_*}. The quantity under the expectation in \eqref{eq:proof:thm:estimationerror:3:15} is absolutely integrable, and therefore the expected Moreau envelope $\mathcal F$ is well defined, as explained in what follows. For every $\beta \geq 0$ and $\tau_1>0$ we have
\begin{align*}
    \left|e_{f}(\alpha G + Z;\frac{\tau_1}{\beta})\right| = \min_{v \in \mathbb R}\; \frac{\beta}{2 \tau_1} \left( \alpha G + Z - v \right)^2 + f(v) &\leq \frac{\beta}{2 \tau_1} \left( \alpha G + Z \right)^2 + f(0) \\ &= \frac{\beta}{2 \tau_1} (\alpha G + Z)^2
\end{align*}
which is integrable due to the fact that both $G$ and $Z$ have finite second moment. In the first and last equality we have used the fact that $f(0) = \min_{v \in \mathbb R} f(v) = 0$, which can be shown to hold true as follows. Since $L(0) = \min_{v \in \mathbb R} L(v) = 0$ (from Assumption~\ref{assump:cgmt}\ref{assump:ell:0}), we have that $L^*(0) = \min_{v \in \mathbb R} L^*(v) = 0$, and from $f^*(\cdot) = L^*(\cdot) - 1/(2M)(\cdot)^2$, with $L^*$ that is $1/M$-strongly convex, we have that $f^*(0) = \min_{v \in \mathbb R} f^*(v) = 0$, which results in the desired $f(0) = \min_{v \in \mathbb R} f(v) = 0$. Moreover, for $\tau_1 \to 0^+$,  
\begin{align*}
    \lim_{\tau_1 \to 0^+}\; \left|e_{f}(\alpha G + Z;\frac{\tau_1}{\beta})\right| = \lim_{\tau_1 \to 0^+}\; \min_{v \in \mathbb R}\; \frac{\beta}{2 \tau_1} \left( \alpha G + Z - v \right)^2 + f(v) =  f(\alpha G + Z),
\end{align*}
whose expectation is finite by assumption. In particular, the second equality follows from \cite[Theorem~1.25]{rockafellar2009variational}.

Therefore, from \eqref{eq:proof:thm:estimationerror:3:13}-\eqref{eq:proof:thm:estimationerror:3:15}, we have that $\mathcal O_{d}(\alpha,\tau_1,\beta)$ converges in probability to the function
\begin{align}
\label{eq:proof:thm:estimationerror:3:16:0}
    \mathcal O(\alpha,\tau_1,\beta) := \frac{\beta \tau_1}{2} - \frac{\beta^2}{2M} + \mathcal F(\alpha,\tau_1/\beta) - \alpha \beta \sqrt{\frac{\beta^2 \sigma_{\theta_0}^2}{4 \lambda_0^2} + \rho} + \frac{\beta^2}{4 \lambda_0} \left( \sigma_{\theta_0}^2 + \alpha^2 \right).
\end{align}

Notice that $\mathcal O$ is precisely the objective function of the minimax problem \eqref{eq:est:3:minimax} from the statement of Theorem~\ref{thm:estimationerror:3}. Since $\mathcal O$ is the pointwise limit (in probability, for each ($\alpha, \tau_1, \beta$)) of the sequence of objective functions $\{\mathcal O_{d}\}_{d \in \mathbb N}$, which are convex-concave, and convexity is preserved by pointwise limits (see Lemma~\ref{lemma:convexity:lemma}), we have that $\mathcal O$ is jointly convex in $(\alpha,\tau_1)$ and concave in $\beta$. 

Consider the asymptotic minimax optimization problem
\begin{align}
\label{eq:proof:thm:estimationerror:3:16}
    \inf_{\substack{0 \leq \alpha \leq K_\alpha \\ \tau_1>0}}\; \max_{0 \leq \beta \leq K_\beta}\; \mathcal O(\alpha, \tau_1, \beta).
\end{align}

\paragraph{Step 6: We show that the asymptotic problem \eqref{eq:proof:thm:estimationerror:3:16} has an unique minimizer $\alpha_{\star}$.}

As explained in Section~\ref{sec:cgmt}, the uniqueness of the optimal solution $\alpha_{\star}$ is fundamental in the convergence analysis required by Fact~\ref{prop:cgmt}. In order to prove this, we will show that the function $\max_{0 \leq \beta \leq K_\beta} \inf_{\tau_1>0} \mathcal O(\alpha, \tau_1, \beta)$ is strictly convex in $\alpha$ (this function is equal to $\inf_{\tau_1>0} \max_{0 \leq \beta \leq K_\beta} \mathcal O(\alpha, \tau_1, \beta)$ by Sion's minimax theorem). First, notice that $\mathcal O(\alpha, \tau_1, \beta)$ can be written as the sum of a function which is jointly convex in $(\alpha,\tau_1)$ and a function which is strictly convex in $\alpha$. Therefore, after minimizing over $\tau_1$, we have that $\mathcal O^{\tau_1}(\alpha, \beta) := \inf_{\tau_1 > 0} \mathcal O(\alpha, \tau_1, \beta)$ is the sum of a convex function in $\alpha$ (since partial minimization of jointly convex functions is convex) and the (same as before) strictly convex function in $\alpha$. Now, from the strict convexity of $\mathcal O^{\tau_1}$ in $\alpha$ it follows that for any $\lambda \in (0,1)$, $\alpha^{(1)} \neq \alpha^{(2)}$, and $\beta$ we have that
\begin{align*}
    \mathcal O^{\tau_1}(\lambda\alpha^{(1)} + (1-\lambda)\alpha^{(2)},\beta) &< \lambda\mathcal O^{\tau_1}(\alpha^{(1)},\beta) + (1-\lambda) \mathcal O^{\tau_1}(\alpha^{(2)},\beta)\\ &\leq \lambda \max_{0 \leq \beta \leq K_\beta} \;\mathcal O^{\tau_1}(\alpha^{(1)}) + (1-\lambda)\max_{0 \leq \beta \leq K_\beta} \; \mathcal O^{\tau_1} (\alpha^{(2)}).
\end{align*}

We can now take the maximimum over $\beta \in [0,K_\beta]$ on the left-hand side, and since this will be attained (due to the upper-semicontinuity in $\beta$ of $\mathcal O^{\tau_1}$, and the compactness of $[0,K_\beta]$), we can conclude that
\begin{align*}
    \max_{0 \leq \beta \leq K_\beta}\; \mathcal O^{\tau_1}(\lambda\alpha^{(1)} + (1-\lambda)\alpha^{(2)},\beta) < \lambda \max_{0 \leq \beta \leq K_\beta} \;\mathcal O^{\tau_1}(\alpha^{(1)}) + (1-\lambda)\max_{0 \leq \beta \leq K_\beta} \; \mathcal O^{\tau_1} (\alpha^{(2)}).
\end{align*}

This shows that $\max_{0 \leq \beta \leq K_\beta} \inf_{\tau_1 > 0} \mathcal O(\alpha, \tau_1, \beta)$ is strictly convex in $\alpha$, and therefore the optimal solution $\alpha_\star$ is unique.

\paragraph{Step 7: We anticipate how the uniqueness of $\alpha_{\star}$, together with Fact~\ref{prop:cgmt}, can be used to conclude the proof.}

The uniqueness of $\alpha_{\star}$ is a key element in the convergence analysis required by Fact~\ref{prop:cgmt}, as explained in what follows. We first define, for arbitrary $\eta>0$, the sets $\mathcal S_\eta := \{\alpha \in [0,K_\alpha]:\, |\alpha-\alpha_{\star}|<\eta\}$, with $\alpha_{\star}$ the unique solution of \eqref{eq:proof:thm:estimationerror:3:16} and $K_\alpha := R_\theta + \sigma_{\theta_0}$, and $\mathcal S_\eta^c := [0,K_\alpha]\setminus \mathcal S_\eta$. Since $\alpha_{\star}$ is the unique solution of \eqref{eq:proof:thm:estimationerror:3:16}, we have that
\begin{align}
\label{eq:proof:thm:estimationerror:3:17}
    \inf_{\substack{0 \leq \alpha \leq K_\alpha\\ \tau_1>0}}\; \max_{0 \leq \beta \leq K_\beta}\; \mathcal O(\alpha, \tau_1, \beta) < \inf_{\substack{\alpha \in \mathcal S_\eta^c \\ \tau_1>0}}\; \max_{0 \leq \beta \leq K_\beta}\; \mathcal O(\alpha, \tau_1, \beta)
\end{align}

Now, due to \eqref{eq:proof:thm:estimationerror:3:17}, if we prove that the optimal value of the (scalarized version of the) modified (AO) problem \eqref{eq:proof:thm:estimationerror:3:7} satisfies
\begin{align}
\label{eq:proof:thm:estimationerror:3:18}
    \inf_{\substack{0 \leq \alpha \leq K_\alpha \\ \tau_1>0}}\; \max_{0 \leq \beta \leq K_\beta}\; \mathcal O_{d}(\alpha, \tau_1, \beta) \overset{P}{\to} \inf_{\substack{0 \leq \alpha \leq K_\alpha \\ \tau_1>0}}\; \max_{0 \leq \beta \leq K_\beta}\; \mathcal O(\alpha, \tau_1, \beta),
\end{align}
and that, when additionally restricted to $\alpha \in \mathcal S_\eta^c$, for arbitrary $\eta>0$, it satisfies
\begin{align}
\label{eq:proof:thm:estimationerror:3:19}
    \inf_{\substack{\alpha \in \mathcal S_\eta^c \\ \tau_1>0}}\; \max_{0 \leq \beta \leq K_\beta}\; \mathcal O_{d}(\alpha, \tau_1, \beta) \overset{P}{\to} \inf_{\substack{\alpha \in \mathcal S_\eta^c \\ \tau_1>0}}\; \max_{0 \leq \beta \leq K_\beta}\; \mathcal O(\alpha, \tau_1, \beta),
\end{align}
we can directly conclude the desired result \eqref{eq:est:3:error} from Fact~\ref{prop:cgmt}. Therefore, in order to finish the proof, we only need to prove the two convergences in probability \eqref{eq:proof:thm:estimationerror:3:18} and \eqref{eq:proof:thm:estimationerror:3:19}. It will be easier to prove the two convergences in probability in the following equivalent form (by Sion's minimax theorem)
\begin{align*}
    \min_{\alpha}\; \max_{0 \leq \beta \leq K_\beta}\;\inf_{\tau_1>0}\; \mathcal O_{d}(\alpha, \tau_1, \beta) \overset{P}{\to} \min_{\alpha}\; \max_{0 \leq \beta \leq K_\beta}\;\inf_{\tau_1>0}\; \mathcal O(\alpha, \tau_1, \beta).
\end{align*}

Moreover, since for $\beta = 0$ we have $\mathcal O_d \equiv 0$, for all $d \in \mathbb N$, and $\mathcal O \equiv 0$ (this follows easily from $f(0) = \min_{v \in \mathbb R} f(v) = 0$), in the remaining of the proof we restrict our attention, without loss of generality, to the nontrivial case $\beta>0$.

The two convergences in probability \eqref{eq:proof:thm:estimationerror:3:18} and \eqref{eq:proof:thm:estimationerror:3:19} are a consequence of Lemma~\ref{lemma:conv:opt}, as explained in what follows.

\paragraph{Step 8: We prove the two convergences in probability \eqref{eq:proof:thm:estimationerror:3:18} and \eqref{eq:proof:thm:estimationerror:3:19}, and conclude the proof.}

First, for fixed $(\alpha,\beta)$, $\{\mathcal O_{d}(\alpha,\cdot,\beta)\}_{d \in \mathbb N}$ is a sequence of random real-valued convex functions, converging in probability (pointwise, for every $\tau_1>0$) to the function $\mathcal O(\alpha,\cdot,\beta)$. Moreover, 
\begin{align*}
    \lim_{\tau_1 \to \infty}\; \mathcal O(\alpha,\tau_1,\beta) = \lim_{\tau_1 \to \infty}\; \frac{\beta \tau_1}{2} + \mathcal F(\alpha,\tau_1/\beta) + \text{const.} \geq \lim_{\tau_1 \to \infty}\; \frac{\beta \tau_1}{2},
\end{align*}
which is equal to $+\infty$. In particular, the inequality follows from the fact that $\mathcal F(\alpha,\tau_1/\beta) \geq 0$.

As a consequence, we can apply statement (iii) of Lemma~\ref{lemma:conv:opt} to conclude that
\begin{align*}
    \inf_{\tau_1>0}\; \mathcal O_{d}(\alpha, \tau_1, \beta) \overset{P}{\to} \inf_{\tau_1>0}\; \mathcal O(\alpha, \tau_1, \beta).
\end{align*}

We now define $\mathcal O^{\tau_1}(\alpha, \beta):= \inf_{\tau_1>0}\; \mathcal O(\alpha, \tau_1, \beta)$ and $\mathcal O_{d}^{\tau_1}(\alpha, \beta):= \inf_{\tau_1>0}\; \mathcal O_{d}(\alpha, \tau_1, \beta)$. For fixed $\alpha$, $\{\mathcal O_{d}^{\tau_1}(\alpha, \cdot)\}_{d \in \mathbb N}$ is a sequence of random, real-valued concave functions (since they are defined as a minimization of concave functions), converging in probability (pointwise, for every $\beta \geq 0$) to the function $\mathcal O{\tau_1}(\alpha, \cdot)$. Therefore, we can apply statement (ii) of Lemma~\ref{lemma:conv:opt} to conclude that
\begin{align}
\label{eq:proof:thm:estimationerror:3:alpha:0}
    \sup_{0 < \beta \leq K_\beta}\; \mathcal O_{d}^{\tau_1}(\alpha, \beta) \overset{P}{\to} \sup_{0 < \beta \leq K_\beta}\; \mathcal O^{\tau_1}(\alpha, \beta).
\end{align}

We will now show that the constraint $0 < \beta \leq K_\beta$ can be relaxed to $\beta > 0$, without loss of generality. Recall that $K_\beta$ was inserted for convenience, in equation \eqref{eq:proof:thm:estimationerror:3:1}, to be able to use the CGMT (which required $u$ to live in a compact set). Now, notice that
\begin{align*}
    \lim_{\beta \to \infty}\; \mathcal O^{\tau_1}(\alpha, \beta) &\leq \lim_{\beta \to \infty}\; \mathcal O^{\bar{\tau}_1}(\alpha, \beta) \\ &= \lim_{\beta \to \infty}\; \frac{\beta \bar{\tau}_1}{2} - \frac{\beta^2}{2M} + \mathcal F(\alpha,\bar{\tau}_1/\beta) - \alpha \beta \sqrt{\frac{\beta^2 \sigma_{\theta_0}^2}{4 \lambda_0^2} + \rho} + \frac{\beta^2}{4 \lambda_0} \left( \sigma_{\theta_0}^2 + \alpha^2 \right) \\ &= \lim_{\beta \to \infty}\; - \frac{\beta^2}{2M} - \frac{\alpha\sigma_{\theta_0}\beta^2 }{2 \lambda_0} + \frac{\beta^2}{4 \lambda_0} \left( \sigma_{\theta_0}^2 + \alpha^2 \right) + O(\beta) \\ &= \lim_{\beta \to \infty}\; \beta^2 \left(- \frac{1}{2M} + \frac{(\sigma_{\theta_0} - \alpha)^2}{4 \lambda_0}\right) + O(\beta) = -\infty
\end{align*}
for some constant $\bar{\tau}_1$ (let's say, for example, equal to $1$). Here, $O(\beta)$ encapsulates the remaining terms, which grow (at most) linearly in $\beta$. The inequality above follows from the fact that $\mathcal O^{\tau_1}$ is defined as a minimization over $\tau_1>0$, and the last equality follows from the fact that
\begin{align*}
    -\frac{1}{2M} + \frac{(\sigma_{\theta_0} - \alpha)^2}{4 \lambda_0} < 0,
\end{align*}
since $\lambda_0 > M R_\theta^2/2$, and $\alpha \leq K_\alpha = R_\theta + \sigma_{\theta_0}$. The assumption $\mathbb E_{\mathcal N(0,1) \otimes \mathbb P_Z}\left[ f(\alpha G + Z)\right] < \infty$ is very important here, since it guarantees that
\begin{align*}
    \lim_{\beta \to \infty}\; \mathcal F(\alpha,\frac{\bar\tau_1}{\beta}) = \lim_{\beta \to \infty}\; \mathbb E_{\mathcal N(0,1) \otimes \mathbb P_Z}\left[ e_{f}(\alpha G + Z;\frac{\bar\tau_1}{\beta})\right] &= \mathbb E_{\mathcal N(0,1) \otimes \mathbb P_Z}\left[\lim_{\beta \to \infty}\; e_{f}(\alpha G + Z;\frac{\bar\tau_1}{\beta})\right] \\ &= \mathbb E_{\mathcal N(0,1) \otimes \mathbb P_Z}\left[f(\alpha G + Z)\right] < \infty.
\end{align*}
Here, the second equality follows from the monotone convergence theorem, and the last equality follows from \cite[Theorem~1.25]{rockafellar2009variational}.

Therefore, there exists some sufficiently large $K_{\beta}$ such that
\begin{align*}
    \sup_{\beta > 0}\; \mathcal O^{\tau_1}(\alpha, \beta) = \sup_{0 < \beta \leq K_{\beta}}\; \mathcal O^{\tau_1}(\alpha, \beta),
\end{align*}
leading to
\begin{align}
\label{eq:proof:thm:estimationerror:3:alpha:0:0}
    \sup_{0 < \beta \leq K_\beta}\; \mathcal O_{d}^{\tau_1}(\alpha, \beta) \overset{P}{\to} \sup_{\beta > 0}\; \mathcal O^{\tau_1}(\alpha, \beta).
\end{align}

Notice that the convergence \eqref{eq:proof:thm:estimationerror:3:alpha:0:0} can also be proven using statement (iii) of Lemma~\ref{lemma:conv:opt}, since $\lim_{\beta \to \infty}\; \mathcal O^{\tau_1}(\alpha, \beta) = -\infty$.

We now define $\mathcal O^{\beta,\tau_1}(\alpha):= \sup_{\beta > 0}\; \mathcal O^{\tau_1}(\alpha, \beta)$ and $\mathcal O_{d}^{\beta,\tau_1}(\alpha):= \sup_{0 < \beta \leq K_\beta}\; \mathcal O_{d}^{\tau_1}(\alpha, \beta)$. Each of these functions is convex in $\alpha$, since they were obtained by first minimizing over $\tau_1$ a jointly convex function in $(\alpha,\tau_1)$, and then maximizing over $\beta$ a convex function in $\alpha$. 

For the final step of the proof, we will distinguish between the two cases $\alpha_{\star}=0$ and $\alpha_{\star}>0$. This is due to the Convexity Lemma~\ref{lemma:convexity:lemma}, which stands at the core of the convergence analysis, and which requires the domain of the functions of interest to be open. We first consider the case $\alpha_{\star}=0$. Since \eqref{eq:proof:thm:estimationerror:3:alpha:0} is valid for any $\alpha \geq 0$, it is in particular valid for $\alpha = 0$, and thus we have that
\begin{align}
\label{eq:proof:thm:estimationerror:3:alpha:1}
    \mathcal O_{d}^{\beta,\tau_1}(0) \overset{P}{\to} \mathcal O^{\beta,\tau_1}(0).
\end{align}
Moreover, using statement (i) of Lemma~\ref{lemma:conv:opt}, for any $K>0$, we have that 
\begin{align}
\label{eq:proof:thm:estimationerror:3:alpha:2}
    \min_{K \leq\alpha\leq K_\alpha}\; \mathcal O_{d}^{\beta,\tau_1}(\alpha) \overset{P}{\to} \min_{K \leq \alpha\leq K_\alpha}\; \mathcal O^{\beta,\tau_1}(\alpha).
\end{align}
The result for $\alpha_{\star}=0$ now follows from \eqref{eq:proof:thm:estimationerror:3:alpha:1} and \eqref{eq:proof:thm:estimationerror:3:alpha:2} and the fact that 
\begin{align*}
    \mathcal O^{\beta,\tau_1}(0) < \min_{K \leq \alpha\leq K_\alpha}\; \mathcal O^{\beta,\tau_1}(\alpha),
\end{align*}
for any $K > 0$, due to the uniqueness of $\alpha_{\star}$.

We now focus on the case $\alpha_{\star} > 0$. We need to prove \eqref{eq:proof:thm:estimationerror:3:18} and \eqref{eq:proof:thm:estimationerror:3:19}, which are equivalent to the following two convergences in probability
\begin{align}
\label{eq:proof:thm:estimationerror:3:20}
    \inf_{0<\alpha\leq K_\alpha}\; \mathcal O_{d}^{\beta,\tau_1}(\alpha) \overset{P}{\to} \inf_{0<\alpha\leq K_\alpha}\; \mathcal O^{\beta,\tau_1}(\alpha),
\end{align}
and
\begin{align}
\label{eq:proof:thm:estimationerror:3:21}
    \inf_{\alpha \in \mathcal S_\eta^c}\; \mathcal O_{d}^{\beta,\tau_1}(\alpha) \overset{P}{\to} \inf_{\alpha \in \mathcal S_\eta^c}\; \mathcal O^{\beta,\tau_1}(\alpha),
\end{align}
respectively. Since $\{\mathcal O_{d}^{\beta,\tau_1}(\cdot)\}_{d \in \mathbb N}$ is a sequence of random, real-valued convex functions, converging in probability (pointwise, for every $\alpha > 0$) to the function $\mathcal O^{\beta,\tau_1}(\cdot)$, \eqref{eq:proof:thm:estimationerror:3:20} follows immediately from  statement (ii) of Lemma~\ref{lemma:conv:opt}. Moreover, since $\mathcal S_\eta^c = (0,\alpha_{\star}-\eta]\cup[\alpha_{\star}+\eta,K_\alpha]$, \eqref{eq:proof:thm:estimationerror:3:21} follows from statement (i) of Lemma~\ref{lemma:conv:opt}, for the interval $[\alpha_{\star}+\eta,K_\alpha]$, and statement (ii) of the same lemma, for the interval $(0,\alpha_{\star}-\eta]$. This concludes the proof of the two convergences in probability  \eqref{eq:proof:thm:estimationerror:3:18} and \eqref{eq:proof:thm:estimationerror:3:19}.

The last step of the proof is to understand the implications of relaxing the constraint $w \in \mathcal W$ to $w \in \mathcal S_w:=\{w \in \mathbb R^d:\;\|w\|\leq K_\alpha:= R_\theta + \sigma_{\theta_0}\}$. Recall that this was a necessary step for scalarizing the problem over the magnitude of $w$. Since $0 \in \mathcal W$ and $\mathcal W \subset \mathcal S_w$ w.p.a.\ $1$, we have that 
\begin{align*}
    \lim_{d \to \infty} \frac{\|\hat{\theta}_{\mathrm{DRE}}-\theta_0\|^2}{d} \leq \alpha_\star,
\end{align*}
where $\alpha_\star$ is the optimal solution of the relaxed problem (i.e., with constraint $w \in \mathcal S_w$). However, since $\{w \in \mathbb R^d:\;\|w\|\leq R_\theta - \sigma_{\theta_0}\} \subset \mathcal W \cap \mathcal S_w$, we have that if $\alpha_\star \leq R_\theta - \sigma_{\theta_0}$, then
\begin{align*}
    \lim_{d \to \infty} \frac{\|\hat{\theta}_{\mathrm{DRE}}-\theta_0\|^2}{d} = \alpha_\star.
\end{align*}
This concludes the proof.
\end{proof}

\begin{proof}[Proof of Corollary~\ref{cor:regdro:squared:loss}]
The proof can be easily obtained following the same lines as Theorem~\ref{thm:estimationerror:3}, and therefore we only provide a sketch, highlighting only the main differences from the proof of Theorem~\ref{thm:estimationerror:3}.

After expressing the dual formulation \eqref{eq:dro:dual:uni:3} in vector form, introducing the change of variable $w=(\theta-\theta_0)/\sqrt{d}$, and motivating the restriction of the variable $u$ to the compact set $\mathcal S_u$, we arrive at the minimax formulation \eqref{eq:proof:thm:estimationerror:3:1}, 
\begin{align*}
    \min_{w \in \mathcal S_w}\; \max_{u \in \mathcal S_u} \; - \frac{1}{\sqrt{n}} u^\top (\sqrt{d}A)w +\frac{1}{\sqrt{n}} u^\top z + \frac{1}{4\lambda}\|\theta_0 + \sqrt{d}w\|^2\|u\|^2-\frac{1}{n}\sum_{i=1}^{n}L^*(u_i \sqrt{n})
\end{align*}

Now, since $L(u)=u^2$, we have that $L^*(u) = u^2/4$, and therefore $M = 2$. Consequently, this formulation can be rewritten as
\begin{align*}
    \min_{w \in \mathcal S_w}\; \max_{u \in \mathcal S_u} \; - \frac{1}{\sqrt{n}} u^\top (\sqrt{d}A)w +\frac{1}{\sqrt{n}} u^\top z + \frac{1}{4\lambda}\|\theta_0 + \sqrt{d}w\|^2\|u\|^2-\frac{\|u\|^2}{4}.
\end{align*}

Since this problem is convex-concave, and both $w$ and $u$ live in convex, compact sets, this is in the form of a (PO) problem, and we can now apply the CGMT, and arrive at the modified (AO) problem,
\begin{align*}
    \max_{0 \leq \beta \leq K_\beta}\; \min_{w \in \mathcal S_w}\; \max_{\|u\|= \beta}\;  \frac{1}{\sqrt{n}} \|w\|g^\top u - \frac{1}{\sqrt{n}} \|u\|h^\top w +\frac{1}{\sqrt{n}} u^\top z + \frac{1}{4\lambda}\|\theta_0 + \sqrt{d}w\|^2\|u\|^2 -\frac{1}{2M}\|u\|^2.
\end{align*}

Now, as in the proof of Theorem~\ref{thm:estimationerror:3}, we can first scalarize over $u$ and subsequently over $w$,
\begin{align*}
    \max_{0 \leq \beta \leq K_\beta}\; \min_{0 \leq \alpha \leq K_\alpha}\; \frac{\beta}{\sqrt{n}}\|\alpha g +z\|-\frac{\beta^2}{2M} - \alpha \beta\left\|\frac{\beta}{2 \lambda_0} \frac{\theta_0}{\sqrt{d}} - \frac{h}{\sqrt{n}} \right\| + \frac{\beta^2}{4 \lambda_0}\left(\frac{\|\theta_0\|^2}{d} + \alpha^2\right),
\end{align*}
and subsequently use Sion's minimax principle to exchange the minimization and the maximization, and obtain
\begin{align*}
    \min_{0 \leq \alpha \leq K_\alpha}\;\max_{0 \leq \beta \leq K_\beta}\;  \frac{\beta}{\sqrt{n}}\|\alpha g +z\|-\frac{\beta^2}{2M} - \alpha \beta\left\|\frac{\beta}{2 \lambda_0} \frac{\theta_0}{\sqrt{d}} - \frac{h}{\sqrt{n}} \right\| + \frac{\beta^2}{4 \lambda_0}\left(\frac{\|\theta_0\|^2}{d} + \alpha^2\right),
\end{align*}

The optimal value of this minimax problem can be shown to converge in probability, as $d \to \infty$, to
\begin{align*}
    \beta\sqrt{\alpha^2 + \sigma_Z}-\frac{\beta^2}{2M} - \alpha \beta \sqrt{\frac{\beta^2 \sigma_{\theta_0}^2}{4 \lambda_0^2} + \rho} + \frac{\beta^2}{4 \lambda_0} \left( \sigma_{\theta_0}^2 + \alpha^2 \right).
\end{align*}
Then, following the same lines as in the proof of Theorem~\ref{thm:estimationerror:3}, the result can be concluded.
\end{proof}

\section{Technical preliminary lemmas}
\label{appendix:preliminary:results}

\begin{lemma}
\label{lemma:triangular:array}
Let $h \in \mathbb R^d$ be a random vector with entries i.i.d.\ standard normal, and $\theta_0 \in \mathbb R^d$ be a random vector, independent from $h$, which satisfies $\|\theta_0\|^2/d \overset{P}{\to} \sigma_{\theta_0}^2$, for $\sigma_{\theta_0}>0$. Then,
\begin{align*}
    \frac{\theta_0^\top h}{d} \overset{P}{\to} 0.
\end{align*}
\end{lemma}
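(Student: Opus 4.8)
The plan is to condition on $\theta_0$ and exploit the Gaussianity of $h$ to show that $\theta_0^T h / d$ has vanishing variance. Since $\theta_0$ and $h$ are independent, conditionally on $\theta_0$ the quantity $\theta_0^T h$ is a scalar Gaussian with mean $0$ and variance $\|\theta_0\|^2$. Hence, conditionally, $\theta_0^T h/d \sim \mathcal N(0, \|\theta_0\|^2/d^2)$. The idea is to bound the (unconditional) probability $\text{Pr}(|\theta_0^T h/d| > \epsilon)$ by splitting on the event $\{\|\theta_0\|^2/d \leq \sigma_{\theta_0}^2 + 1\}$ and its complement.

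First I would fix $\epsilon > 0$ and introduce the event $\mathcal A_d := \{\|\theta_0\|^2/d \leq \sigma_{\theta_0}^2+1\}$. By Assumption~\ref{assump:cgmt}\ref{assump:theta0}, $\|\theta_0\|^2/d \overset{P}{\to} \sigma_{\theta_0}^2$, so $\text{Pr}(\mathcal A_d^c) \to 0$ as $d \to \infty$. On $\mathcal A_d$, using the conditional Gaussianity and Chebyshev's (or the Gaussian tail) inequality,
\begin{align*}
    \text{Pr}\left( \left| \frac{\theta_0^T h}{d} \right| > \epsilon \;\middle|\; \theta_0 \right) \leq \frac{\|\theta_0\|^2}{d^2 \epsilon^2} \leq \frac{\sigma_{\theta_0}^2+1}{d \epsilon^2},
\end{align*}
which tends to $0$ as $d \to \infty$. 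Combining the two pieces via $\text{Pr}(|\theta_0^T h/d| > \epsilon) \leq \text{Pr}(\mathcal A_d^c) + \mathbb E[\mathbf 1_{\mathcal A_d}\, \text{Pr}(|\theta_0^T h/d|>\epsilon \mid \theta_0)]$ and letting $d \to \infty$ gives the claim.

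There is no real obstacle here; the only point requiring a little care is handling the randomness of $\theta_0$ simultaneously with that of $h$ — one cannot directly say "the variance is $\sigma_{\theta_0}^2/d^2$" because $\|\theta_0\|^2/d$ is itself random and only converges in probability (it need not have a finite expectation, since Assumption~\ref{assump:cgmt}\ref{assump:theta0} is a convergence-in-probability statement). The truncation onto $\mathcal A_d$ sidesteps this cleanly. An equivalent route would be to write $\theta_0^T h/d = (\|\theta_0\|/\sqrt d)\cdot (\tilde g/\sqrt d)$ where, conditionally on $\theta_0$, $\tilde g := \theta_0^T h/\|\theta_0\|$ is a standard normal; then $\|\theta_0\|/\sqrt d$ is bounded in probability while $\tilde g/\sqrt d \overset{P}{\to} 0$, and the product of a bounded-in-probability sequence with one converging to $0$ in probability converges to $0$ in probability (Slutsky-type argument). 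Either version is short; I would present the truncation argument for self-containedness.
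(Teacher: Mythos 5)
Your proof is correct, and it takes a genuinely different route from the paper's. The paper works unconditionally: it computes $\mathbb E[\theta_0^T h] = 0$ and $\text{Var}(\theta_0^T h) = \mathbb E[\|\theta_0\|^2]$, then asserts that $\|\theta_0\|^2/d \overset{P}{\to} \sigma_{\theta_0}^2$ implies $\mathbb E[\|\theta_0\|^2]/d \to \sigma_{\theta_0}^2$, so that $\text{Var}(\theta_0^T h)/d^2 \to 0$ and Chebyshev concludes. You instead condition on $\theta_0$ (so that $\theta_0^T h$ is exactly $\mathcal N(0,\|\theta_0\|^2)$), apply the conditional Chebyshev bound, and truncate onto the high-probability event $\{\|\theta_0\|^2/d \leq \sigma_{\theta_0}^2+1\}$. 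The caveat you flag is exactly the weak point of the paper's version: convergence in probability of $\|\theta_0\|^2/d$ does not by itself give convergence of $\mathbb E[\|\theta_0\|^2]/d$ (that requires uniform integrability, and the hypothesis does not even guarantee $\mathbb E[\|\theta_0\|^2]<\infty$). Your truncation argument, or equivalently the Slutsky-type factorization $\theta_0^T h/d = (\|\theta_0\|/\sqrt{d})\cdot(\tilde g/\sqrt{d})$ with $\tilde g$ standard normal, proves the lemma under the stated hypothesis alone, so it is strictly more robust than the paper's argument; the only thing the paper's route buys is brevity (and it is fine whenever $\theta_0$ has second moments with $\mathbb E[\|\theta_0\|^2]/d$ convergent, e.g.\ i.i.d.\ square-integrable entries).
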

\begin{proof}
First notice that $\mathbb E[\theta_0^\top h] = 0$ due to the zero mean of $h$, and the fact that $\theta_0$ and $h$ are independent. Now let $\mathbb V(\theta_0^\top h)$ denote the variance of $\theta_0^\top h$. Using $\mathbb E[h_i^2] = 1$, $\mathbb E[h_i h_j] = 0$, for $i \neq j$, it can be easily checked that the variance of $\theta_0^\top h$ satisfies $\mathbb{V}(\theta_0^\top h) = \mathbb E [\|\theta_0\|^2]$. Consequently, using the assumption $\|\theta_0\|^2/d \overset{P}{\to} \sigma_{\theta_0}^2$, we have that $\mathbb E [\|\theta_0\|^2]/d \to \sigma_{\theta_0}^2$, and therefore $\mathbb{V}(\theta_0^\top h)/(d^2) \to 0$. The result now follows easily from \cite[Theorem~2.2.6.]{durrett2019probability}. 
\end{proof}

\begin{lemma}[Convexity lemma]
\label{lemma:convexity:lemma}
Let $\{\mathcal O_d:\mathcal X \to \mathbb R\}_{d \in \mathbb N}$ be a sequence of random functions, defined on an open and convex subset $\mathcal X$ of $\mathbb R^d$, which are convex w.p.a.\ $1$ as $d\to \infty$. Moreover, let $\mathcal O:\mathcal X \to \mathbb R$ be a deterministic function for which $\mathcal O_d(x) \overset{P}{\to} \mathcal O(x)$, for all $x \in \mathcal X$. Then, $\mathcal O$ is convex and the convergence is uniform over each compact subset $\mathcal K$ of $\mathcal X$, i.e.,
\begin{align*}
    \sup_{x \in \mathcal K}|\mathcal O_d(x) - \mathcal O(x)| \overset{P}{\to} 0.
\end{align*}
\end{lemma}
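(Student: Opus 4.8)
The plan is to run the classical deterministic argument — pointwise convergence of convex functions forces local uniform convergence — while keeping track of the fact that here convexity of $\mathcal O_d$ and the pointwise limits hold only with probability approaching one, so every estimate must be produced from \emph{finitely many} high-probability events. We regard $\mathcal X$ as living in a Euclidean space of fixed dimension (the situation in every application of this lemma in the paper), and, having fixed a compact $\mathcal K\subset\mathcal X$, we may replace it by its convex hull, which is still a compact subset of the convex set $\mathcal X$; so without loss of generality $\mathcal K$ is compact and convex.

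First I would show $\mathcal O$ is convex: for fixed $x,y\in\mathcal X$ and $\lambda\in[0,1]$ the inequality $\mathcal O_d(\lambda x+(1-\lambda)y)\le\lambda\mathcal O_d(x)+(1-\lambda)\mathcal O_d(y)$ holds w.p.a.\ $1$, and since the three evaluations converge in probability to $\mathcal O$ at the respective points, it passes to the limit; thus $\mathcal O$ is a finite convex function on $\mathcal X$. Next I would obtain a deterministic bound $|\mathcal O_d|\le B$, valid w.p.a.\ $1$, on a fixed compact neighbourhood $\mathcal K^{2\delta}$ of $\mathcal K$, where $\delta>0$ is small enough that $\mathcal K^{4\delta}$ is still a compact subset of $\mathcal X$. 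For the upper bound, choose finitely many points $v_1,\dots,v_N\in\mathcal X$ such that every $x\in\mathcal K^{4\delta}$ lies in the convex hull of some subset of the $v_j$ (e.g.\ vertices of a finite collection of small cubes inside $\mathcal X$ covering $\mathcal K^{4\delta}$); convexity of $\mathcal O_d$ then gives $\mathcal O_d(x)\le\max_j\mathcal O_d(v_j)$, and since $\mathcal O_d(v_j)\overset{P}{\to}\mathcal O(v_j)$ for the finitely many $j$, we get $\mathcal O_d\le B_1$ on $\mathcal K^{4\delta}$ w.p.a.\ $1$. For the lower bound, take a finite $\delta$-net $\{x_i\}\subset\mathcal K^{2\delta}$: any $x\in\mathcal K^{2\delta}$ has some $x_i$ with $\|x-x_i\|\le\delta$, and writing $x_i=\tfrac12 x+\tfrac12(2x_i-x)$ with $2x_i-x\in\mathcal K^{3\delta}\subset\mathcal K^{4\delta}$, convexity and the upper bound yield $\mathcal O_d(x)\ge 2\mathcal O_d(x_i)-B_1$; using $\mathcal O_d(x_i)\overset{P}{\to}\mathcal O(x_i)$ for the finitely many $i$ gives $\mathcal O_d\ge-B_2$ on $\mathcal K^{2\delta}$ w.p.a.\ $1$, so $|\mathcal O_d|\le B:=\max(B_1,B_2)$ there.

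The boundedness then upgrades to an equi-Lipschitz bound by the standard fact that a convex function with $|\,\cdot\,|\le B$ on a ball of radius $2\delta$ is $\tfrac{2B}{\delta}$-Lipschitz on the concentric ball of radius $\delta$; since $\mathcal K$ is convex and every point of $\mathcal K$ has its $2\delta$-ball inside $\mathcal K^{2\delta}$, subdividing segments gives a single deterministic constant $\ell$ with $\mathrm{Lip}(\mathcal O_d|_{\mathcal K})\le\ell$ w.p.a.\ $1$, and passing to the limit, $\mathrm{Lip}(\mathcal O|_{\mathcal K})\le\ell$ too. To finish, given $\epsilon>0$ cover $\mathcal K$ by finitely many balls of radius $\epsilon/(3\ell)$ with centres $y_1,\dots,y_m\in\mathcal K$; for any $x\in\mathcal K$ with nearest centre $y_k$,
\[
|\mathcal O_d(x)-\mathcal O(x)|\le|\mathcal O_d(x)-\mathcal O_d(y_k)|+|\mathcal O_d(y_k)-\mathcal O(y_k)|+|\mathcal O(y_k)-\mathcal O(x)|\le\tfrac{2\epsilon}{3}+\max_{1\le k\le m}|\mathcal O_d(y_k)-\mathcal O(y_k)|,
\]
and the last maximum is $\le\epsilon/3$ w.p.a.\ $1$. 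Hence $\sup_{x\in\mathcal K}|\mathcal O_d(x)-\mathcal O(x)|\le\epsilon$ w.p.a.\ $1$, and since $\epsilon$ was arbitrary, $\sup_{x\in\mathcal K}|\mathcal O_d(x)-\mathcal O(x)|\overset{P}{\to}0$.

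The argument is not conceptually hard — its deterministic skeleton is classical — and the only genuine point of care, which I would regard as the ``main obstacle,'' is the probabilistic bookkeeping: because convexity and convergence hold only w.p.a.\ $1$, the proof must be organised so that each of $B_1$, $B_2$, $\ell$, and the final bound is obtained by intersecting \emph{finitely many} events of probability tending to one, which is exactly why finite convex-hull representations and finite $\epsilon$-nets are used in place of a compactness argument applied directly to the sequence $\{\mathcal O_d\}$.
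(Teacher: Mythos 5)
Your proof is correct, but it follows a different route from the paper's. The paper adapts Pollard's convexity-lemma argument directly: after the same limiting argument for convexity of $\mathcal O$, it covers $\mathcal K$ by small cubes, gets the upper bound on $\mathcal O_d(x)-\mathcal O(x)$ by writing $x$ as a convex combination of cube vertices (a finite set on which convergence is simultaneous), and gets the lower bound by writing a vertex $x_0$ as a convex combination of $x$ and neighbouring vertices, which yields a bound of the form $-4(d+1)\epsilon$ and is closed with the remark that the dimension is finite. You instead interpose two classical steps: a uniform bound $|\mathcal O_d|\le B$ on a compact neighbourhood (upper bound via convex hulls of finitely many points, lower bound via the midpoint trick), then the standard ``bounded convex $\Rightarrow$ locally Lipschitz'' estimate to get an equi-Lipschitz constant $\ell$ w.p.a.\ $1$, and finally a finite $\epsilon/(3\ell)$-net plus the triangle inequality. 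Both proofs treat the probabilistic bookkeeping identically — every estimate is an intersection of finitely many events of probability tending to one — and both implicitly assume the ambient dimension of $\mathcal X$ is fixed (which is the situation in all applications, where the arguments are a handful of scalars), since the number of vertices or net points is what must stay finite. What your route buys is a final bound whose constants do not degrade with the dimension of $\mathcal X$ (the paper's lower-bound constant grows linearly in it, which its proof dismisses only by fixing $d$), and an explicit equicontinuity statement that makes the Lipschitz continuity of the limit transparent; what the paper's route buys is brevity, avoiding the boundedness-and-Lipschitz machinery altogether. One small point to make airtight in your write-up: when you place the covering cubes ``inside $\mathcal X$,'' justify it by noting that the compact set $\mathcal K^{4\delta}$ has positive distance to the complement of the open set $\mathcal X$, so sufficiently small cubes (and hence their vertices) indeed stay in $\mathcal X$.
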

\begin{proof}
This is a slight generalization of the standard Convexity Lemma (see \cite[Section~6]{pollard1991asymptotics}), which assumes that the functions $\{\mathcal O_d\}_{d \in \mathbb N}$ are convex (instead of convex w.p.a.\ $1$). A similar proof, which we include for completeness, can be used to conclude the result in this slightly more general case, as shown in what follows.

We start by showing that the function $\mathcal O$ is convex. If $\mathcal O$ was not convex, then there would exist $x_1,x_2 \in \mathcal X$ and $\lambda \in (0,1)$ such that $\mathcal O(\lambda x_1 + (1-\lambda)x_2) - \lambda \mathcal O(x_1) - (1-\lambda)\mathcal O(x_2) = \epsilon$, for some $\epsilon > 0$. Now, since $\mathcal O_d$ converges in probability pointwise to $\mathcal O$, we have that 
\begin{align*}
    \left|\mathcal O_d(\lambda x_1 + (1-\lambda) x_2)- \lambda \mathcal O_d(x_1) - (1-\lambda)\mathcal O_d(x_2) - \mathcal O(\lambda x_1 + (1-\lambda) x_2) + \lambda \mathcal O(x_1) + (1-\lambda)\mathcal O(x_2)\right| \leq \frac{\epsilon}{2}
\end{align*}
holds w.p.a.\ $1$. Therefore, 
\begin{align*}
    \mathcal O_d(\lambda x_1 + (1-\lambda)x_2) - \lambda \mathcal O_d(x_1) - (1-\lambda)\mathcal O_d(x_2) \geq \frac{\epsilon}{2}
\end{align*}
holds w.p.a.\ $1$. However, this contradicts the fact that $\{\mathcal O_d\}_{d \in \mathbb N}$ are convex w.p.a.\ $1$. Consequently, $\mathcal O$ is a convex function.

We now proceed to proving the uniform convergence on compact subsets. For this, it is enough to consider the case where $\mathcal K$ is a cube with edges parallel to the coordinate directions $e_1,\ldots,e_d$, since every compact subset can be covered by finitely many such cubes. Now, fix $\epsilon > 0$. Since $\mathcal O$ is convex, we know that it is continuous on the interior of its domain. Moreover, from the Heine-Cantor theorem we know that every continuous function on a compact set is uniformly continuous. Therefore, we can partition $\mathcal K$ in cubes of side $\delta$ such that that $\mathcal O$ varies by less than $\epsilon$ on each cube of side $2 \delta$. Moreover, we expand $\mathcal K$ by adding another layer of these $\delta$-cubes around each side, and we call this new cube $\mathcal K^\delta$. Notice that we may pick $\delta$ small enough such that $K^\delta \subset \mathcal X$.

Now, let $\mathcal V$ be the set of all vertices of all $\delta$-cubes in $\mathcal K^\delta$. Since $\mathcal V$ is a finite set, the convergence in probability is uniform over $\mathcal V$, i.e.,
\begin{align}
\label{eq:lemma:convexity:lemma:1}
    \max_{x \in \mathcal V} |\mathcal O_d(x) - \mathcal O(x)| \overset{P}{\to} 0.
\end{align}
Each $x \in \mathcal K$ will belong to a $\delta$-cube with vertices $\{x_i\}_{i \in \mathcal I}$ in $\mathcal K^\delta$, where $\mathcal I$ is a finite set of indices. Therefore, $x$ can be written as a convex combination of these vertices, i.e., $x = \sum_{i \in \mathcal I} \lambda_i x_i$. Since $\mathcal O_d$ is convex w.p.a.\ $1$, we have that $\mathcal O_d(x) \leq \sum_{i \in \mathcal I} \lambda_i \mathcal O_d(x_i)$ holds w.p.a.\ $1$. Therefore,
\begin{align*}
    \mathcal O_d(x) \leq \max_{i \in \mathcal I} \mathcal O_d(x_i) 
    &\leq  \max_{i \in \mathcal I} |\mathcal O_d(x_i) - \mathcal O(x_i) + \mathcal O(x_i) - \mathcal O(x) + \mathcal O(x)|
    \\ &\leq \max_{i \in \mathcal I} |\mathcal O_d(x_i) - \mathcal O(x_i)| + \max_{i \in \mathcal I} |\mathcal O(x_i) - \mathcal O(x)| + \mathcal O(x)
\end{align*}
holds w.p.a.\ $1$. Now, from \eqref{eq:lemma:convexity:lemma:1} and the fact that $\mathcal O$ varies by less than $\epsilon$ on the $\delta$-cube with vertices $\{x_i\}_{i \in \mathcal I}$, we have that the following upper bound holds w.p.a.\ $1$
\begin{align*}
    \sup_{x \in \mathcal K} \mathcal O_d(x) - \mathcal O(x) \leq 2 \epsilon.
\end{align*}

We will now prove the lower bound. Each $x \in \mathcal K$ lies into a $\delta$-cube with a vertex $x_0$, and can be written as $x = x_0 + \sum_{i =1}^d \lambda_i e_i$, with $0 \leq \lambda_i < \delta$ for all $i=1,\ldots,d$. We now define $x_i:= x_0-\lambda_i e_i$, and notice that $x_i \in \mathcal V$. Then, $\theta_0$ can be written as the convex combination $x_0 = \beta x + \sum_{i=1}^d \beta_i x_i$, with $\beta = {\delta}/\left({\delta + \sum_{j=1}^d \lambda_j}\right)$ and $\beta_i = {\lambda_i}/\left({\delta + \sum_{j=1}^d \lambda_j}\right)$. Notice that $\beta \geq 1/(d+1)$.

Since $\mathcal O_d$ is convex w.p.a.\ $1$, we have that $\mathcal O_d(x_0) \leq \beta \mathcal O_d(x) + \sum_{i=1}^d \beta_i \mathcal O_d(x_i)$ holds w.p.a.\ $1$. Therefore,
\begin{align*}
    \beta \mathcal O_d(x) &\geq \mathcal O_d(x_0) - \sum_{i=1}^d \beta_i \mathcal O_d(x_i)
    \\ &=
    \left(\mathcal O_d(x_0) - \mathcal O(x_0)\right) + \left( \mathcal O(x_0) - \sum_{i=1}^d \beta_i \mathcal O(x_i) \right) - \sum_{i=1}^d \beta_i \left(\mathcal O_d(x_i) - \mathcal O(x_i) \right)
    \\ &\geq \left(\mathcal O_d(x_0) - \mathcal O(x_0)\right) + \left( \mathcal O(x) -\epsilon - \sum_{i=1}^d \beta_i (\mathcal O(x_i)+\epsilon) \right) - \sum_{i=1}^d \beta_i \left(\mathcal O_d(x_i) - \mathcal O(x_i) \right)
\end{align*}
holds w.p.a.\ $1$. Now, from \eqref{eq:lemma:convexity:lemma:1} and the fact that $\mathcal O$ is convex, we have w.p.a.\ $1$ that
\begin{align*}
    \beta \mathcal O_d(x) \geq - \epsilon + \beta \mathcal O(x) -\epsilon - 2 \sum_{i=1}^d \beta_i \epsilon,
\end{align*}
and therefore
\begin{align*}
    \inf_{x \in \mathcal K} \mathcal O_d(x) - \mathcal O(x) \geq - \frac{4 \epsilon}{\beta} \geq - 4 (d + 1)\epsilon
\end{align*}
holds w.p.a.\ $1$. Since $d$ is finite, this concludes the proof.
\end{proof}

\begin{lemma}[Convergence of convex w.p.a.\ $1$ optimization problems]
\label{lemma:conv:opt}
Let $\{\mathcal O_d:(0,\infty) \to \mathbb R\}_{d \in \mathbb N}$ be a sequence of random functions, which are convex w.p.a.\ $1$ as $d \to \infty$. Moreover, let $\mathcal O:(0,\infty) \to \mathbb R$ be a deterministic function for which $\mathcal O_d(x) \overset{P}{\to} \mathcal O(x)$, for all $x \in \mathcal (0,\infty)$. Then, 
\begin{enumerate}
\item[(i)] for any $K_1, K_2 > 0$,
\begin{align*}
    \min_{K_1 \leq x \leq K_2} \mathcal O_d(x) \overset{P}{\to} \min_{K_1 \leq x \leq K_2} \mathcal O(x).
\end{align*}
\item[(ii)] for any $K > 0$,
\begin{align*}
    \inf_{0 < x \leq K} \mathcal O_d(x) \overset{P}{\to} \inf_{0 < x \leq K} \mathcal O(x).
\end{align*}
\item[(iii)] Moreover, if $\lim_{x \to \infty} \mathcal O(x) = +\infty$,
\begin{align*}
    \inf_{x>0} \mathcal O_d(x) \overset{P}{\to} \inf_{x>0} \mathcal O(x).
\end{align*}
\end{enumerate}
\end{lemma}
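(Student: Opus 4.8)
The plan is to derive all three parts from a single analytic input, the Convexity Lemma (Lemma~\ref{lemma:convexity:lemma}): on every compact subinterval of $(0,\infty)$ one has $\sup|\mathcal O_d-\mathcal O|\overset{P}{\to}0$, and the limit $\mathcal O$ is convex, hence continuous on $(0,\infty)$, bounded below near any finite endpoint by a supporting line, and monotone on a one-sided neighbourhood of such an endpoint. I will prove (i) directly, reduce (ii) to (i) plus a ``boundary-layer'' lower estimate near $0$, and reduce (iii) to (ii) plus a tail lower estimate near $+\infty$ that uses the coercivity hypothesis. For (i), fix $0<K_1\le K_2$. By Lemma~\ref{lemma:convexity:lemma}, $\Delta_d:=\sup_{x\in[K_1,K_2]}|\mathcal O_d(x)-\mathcal O(x)|\overset{P}{\to}0$, and both $\mathcal O$ and (on the probability-$1$ limiting event) $\mathcal O_d$ are continuous on the compact interval $[K_1,K_2]$ and attain their minima there. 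Comparing a minimizer $x_d^\star$ of $\mathcal O_d$ with a minimizer $x^\star$ of $\mathcal O$ gives $\min\mathcal O_d\le\mathcal O_d(x^\star)\le\mathcal O(x^\star)+\Delta_d$ and symmetrically $\min\mathcal O\le\mathcal O_d(x_d^\star)+\Delta_d$, so $|\min_{[K_1,K_2]}\mathcal O_d-\min_{[K_1,K_2]}\mathcal O|\le\Delta_d\overset{P}{\to}0$, which is (i).

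For (ii), I will establish the two one-sided bounds $\inf_{(0,K]}\mathcal O_d\le\inf_{(0,K]}\mathcal O+\epsilon$ and $\inf_{(0,K]}\mathcal O_d\ge\inf_{(0,K]}\mathcal O-\epsilon$ w.p.a.\ $1$, for arbitrary $\epsilon>0$. The upper bound is easy: since $\delta\mapsto\min_{[\delta,K]}\mathcal O$ is nonincreasing in $\delta$ with $\lim_{\delta\downarrow0}\min_{[\delta,K]}\mathcal O=\inf_{(0,K]}\mathcal O$, choose $\delta$ small so that $\min_{[\delta,K]}\mathcal O<\inf_{(0,K]}\mathcal O+\epsilon/2$; then by (i) we get $\min_{[\delta,K]}\mathcal O_d<\inf_{(0,K]}\mathcal O+\epsilon$ w.p.a.\ $1$, and $\inf_{(0,K]}\mathcal O_d\le\min_{[\delta,K]}\mathcal O_d$.

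The lower bound in (ii) is the crux, and the only place where convexity of the $\mathcal O_d$ is used beyond Lemma~\ref{lemma:convexity:lemma}, since uniform convergence is unavailable on $(0,\delta)$, a set that accumulates at the excluded point $0$. Here I propagate a lower bound inward from two interior anchor points: for $0<x\le\delta<2\delta\le K$, slope-monotonicity of the convex function $\mathcal O_d$ gives $\frac{\mathcal O_d(\delta)-\mathcal O_d(x)}{\delta-x}\le\frac{\mathcal O_d(2\delta)-\mathcal O_d(\delta)}{\delta}$, and since $(\delta-x)/\delta\in[0,1)$ this rearranges to $\mathcal O_d(x)\ge\min\{\mathcal O_d(\delta),\,2\mathcal O_d(\delta)-\mathcal O_d(2\delta)\}$. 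The remaining ingredient is the deterministic fact $\liminf_{\delta\downarrow0}\min\{\mathcal O(\delta),\,2\mathcal O(\delta)-\mathcal O(2\delta)\}\ge\inf_{(0,K]}\mathcal O$: writing $\ell:=\lim_{x\downarrow0}\mathcal O(x)\in(-\infty,+\infty]$ (the limit exists because $\mathcal O$ is eventually monotone near $0$ and is bounded below there by a supporting line at any fixed interior point), if $\ell<\infty$ then both quantities converge to $\ell\ge\inf_{(0,K]}\mathcal O$, while if $\ell=+\infty$ then $\mathcal O$ is nonincreasing near $0$, so $2\mathcal O(\delta)-\mathcal O(2\delta)\ge\mathcal O(\delta)\to+\infty$. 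Thus, choosing $\delta<K/2$ with $\min\{\mathcal O(\delta),2\mathcal O(\delta)-\mathcal O(2\delta)\}>\inf_{(0,K]}\mathcal O-\epsilon$, the pointwise convergences $\mathcal O_d(\delta)\to\mathcal O(\delta)$ and $\mathcal O_d(2\delta)\to\mathcal O(2\delta)$ yield $\inf_{0<x\le\delta}\mathcal O_d(x)\ge\inf_{(0,K]}\mathcal O-C\epsilon$ w.p.a.\ $1$ for an absolute constant $C$, while (i) gives $\min_{[\delta,K]}\mathcal O_d\ge\inf_{(0,K]}\mathcal O-\epsilon$ w.p.a.\ $1$; combining the two regions proves the lower bound, hence (ii).

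For (iii), a convex function with $\mathcal O(x)\to+\infty$ is strictly increasing on $[x_1,\infty)$ for some $x_1$ (otherwise its right derivative would stay $\le0$ and $\mathcal O$ would be bounded above), so fixing any $K\ge x_1$ we have $\mathcal O(2K)>\mathcal O(K)$ and $\inf_{x>0}\mathcal O=\inf_{(0,2K]}\mathcal O$. By pointwise convergence, $\mathcal O_d(2K)>\mathcal O_d(K)$ w.p.a.\ $1$, and then convexity of $\mathcal O_d$ forces $\mathcal O_d(x)\ge\mathcal O_d(2K)\ge\inf_{(0,2K]}\mathcal O_d$ for every $x\ge2K$; hence $\inf_{x>0}\mathcal O_d=\inf_{(0,2K]}\mathcal O_d$ w.p.a.\ $1$, and applying (ii) on $(0,2K]$ finishes the proof. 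The single genuinely delicate step is the boundary-layer lower estimate in the previous paragraph — propagating a lower bound in from two interior anchors via slope-monotonicity and verifying, through the structure of $\mathcal O$ near $0$, that this bound is asymptotically tight — whereas the tail estimate at $+\infty$ is comparatively painless because coercivity makes $\mathcal O_d$ eventually increasing with high probability.
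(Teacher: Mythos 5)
Your proof is correct, and its overall architecture coincides with the paper's: part (i) from the uniform convergence on compacts supplied by the Convexity Lemma, part (ii) from (i) plus a convexity-based lower bound on the boundary layer near $0$, and part (iii) from (ii) plus coercivity forcing $\mathcal O_d$ to be eventually increasing w.p.a.\ $1$. Where you genuinely diverge is in the one delicate step, the lower bound in (ii). The paper splits into two cases according to whether the near-minimizers of $\mathcal O_d$ stay in some $[x_0,K]$ or drift to $0$, and in the second case extrapolates from two anchor points $\bar x_1<\bar x_2$ chosen so that $\mathcal O(\bar x_1),\mathcal O(\bar x_2)$ are within $\epsilon/3$ of the infimum \emph{and} lie above the $d$-dependent near-minimizer $\bar x^{(d)}$ --- which makes the anchors themselves $d$-dependent and renders the subsequent appeal to pointwise convergence at $\bar x_1,\bar x_2$ somewhat awkward. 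You instead fix the anchors at $\delta,2\delta$ once and for all, derive the uniform bound $\mathcal O_d(x)\ge\min\{\mathcal O_d(\delta),\,2\mathcal O_d(\delta)-\mathcal O_d(2\delta)\}$ on $(0,\delta]$ from slope monotonicity, and push the entire difficulty into the deterministic statement $\liminf_{\delta\downarrow0}\min\{\mathcal O(\delta),2\mathcal O(\delta)-\mathcal O(2\delta)\}\ge\inf_{(0,K]}\mathcal O$, which you verify correctly via the (possibly infinite) limit of the convex function $\mathcal O$ at $0^+$. This buys you a case-free argument with only two fixed evaluation points where pointwise convergence is invoked, and it implicitly delivers the finiteness of $\inf_{(0,K]}\mathcal O$ that the paper establishes separately; the only blemish is the claim that $\delta\mapsto\min_{[\delta,K]}\mathcal O$ is nonincreasing (it is nondecreasing in $\delta$), which does not affect the limit you actually use.
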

\begin{proof}
This is a generalization of \cite[Lemma~10]{thrampoulidis2018precise}, which assumes that the functions $\{\mathcal O_d\}_{d \in \mathbb N}$ are convex (instead of convex w.p.a.\ $1$). This result builds upon the uniform convergence in probability presented in the Convexity Lemma~\ref{lemma:convexity:lemma}.

\begin{itemize}
\item[(i)] The proof of Assertion (i) is a simple application of the Convexity Lemma, as explained in what follows. Let $x_\star$ and $\{x_\star^{(d)}\}_{d \in \mathbb N}$ be the minimizers of $\mathcal O$ and  $\{\mathcal O_d\}_{d \in \mathbb N}$ on $[K_1,K_2]$, respectively. Since $\mathcal O_d \overset{P}{\to} \mathcal O$ pointwise on $(0,\infty)$, from the Convexity Lemma we know that $\mathcal O_d \overset{P}{\to} \mathcal O$ uniformly on the compact set $[K_1,K_2]$, i.e., $\sup_{x \in [K_1,K_2]} |\mathcal O_d(x) - \mathcal O(x)|\leq\epsilon$ w.p.a.\ $1$. In particular, this holds true for the points $x_\star, \{x_\star^{(d)}\}_{d \in \mathbb N}$. Therefore,
\begin{align*}
    -\epsilon \leq \mathcal O_d(x_\star^{(d)}) - \mathcal O(x_\star^{(d)}) \leq \min_{K_1 \leq x \leq K_2} \mathcal O_d(x) - \min_{K_1 \leq x \leq K_2} \mathcal O(x) \leq \mathcal O_d(x_\star) - \mathcal O(x_\star) \leq \epsilon,
\end{align*}
w.p.a.\ $1$, proving Assertion (i).

\item[(ii)] First notice that $\mathcal O$ is convex by Lemma~\ref{lemma:convexity:lemma}. Now, since $\mathcal O$ is real-valued (on $(0,\infty)$) and convex, we have that $\inf_{0 < x \leq K} \mathcal O(x) > -\infty$. Moreover, since $\{\mathcal O_d\}_{d \in \mathbb N}$ are real-valued (on $(0,\infty)$) and convex w.p.a.\ $1$, we have that $\inf_{0 < x \leq K} \mathcal O_d(x) > -\infty$ w.p.a.\ $1$. We first prove this statement for $\mathcal O$. Suppose that $\inf_{0 < x \leq K} \mathcal O(x) = -\infty$. Then, there exists a sequence $\{x_n\}_{n \in \mathbb N}>0$ such that $\mathcal O(x_n) \to -\infty$ as $x_n \to 0$. Consequently, for any $x \in (0,K)$ there exists a sequence $\{\lambda_n\}_{n \in \mathbb N} \in (0,1)$, with $\lambda_n \geq (K-x)/K>0$, satisfying $x = \lambda_n x_n + (1-\lambda_n) K$. From the convexity of $\mathcal O$, we have that
\begin{align*}
    \mathcal O(x) \leq \lambda_n \mathcal O(x_n) + (1-\lambda_n) \mathcal O(K),
\end{align*}
which, for $x_n \to 0$, implies that $\mathcal O(x) = -\infty$ for all $x \in (0,K)$, leading to a contradiction (since $\mathcal O$ is assumed real-valued). This reasoning readily extends to the sequence $\{\mathcal O_d\}_{d \in \mathbb N}$, for which we have that $\inf_{0 < x \leq K} \mathcal O_d(x) > -\infty$ w.p.a.\ $1$.

In order to prove the desired result we have to show that for all $\epsilon > 0$,
\begin{align*}
    |\inf_{0 < x \leq K} \mathcal O_d(x) - \inf_{0 < x \leq K} \mathcal O(x)| \leq \epsilon
\end{align*}
w.p.a.\ $1$ as $d \to \infty$. We start by proving the upper bound $\inf_{0 < x \leq K} \mathcal O_d(x) - \inf_{0 < x \leq K} \mathcal O(x) \leq \epsilon$. Let $\bar{x}>0$ be such that $\mathcal O(\bar{x})-\inf_{0 < x \leq K} \mathcal O(x) \leq \epsilon/2$. From the pointwise convergence of $\mathcal O_d$ to $\mathcal O$ we have that $|O_d(\bar{x}) - \mathcal O(\bar{x})|\leq\epsilon/2$ w.p.a.\ $1$. Consequently,
\begin{align*}
    \inf_{0 < x \leq K} \mathcal O_d(x) - \inf_{0 < x \leq K} \mathcal O(x) \leq \mathcal O_d(\bar{x}) - \mathcal O(\bar{x}) + \frac{\epsilon}{2} \leq \epsilon,
\end{align*}
w.p.a.\ $1$, showing that the upper bound holds.

Now, we focus on proving the lower bound $\inf_{0 < x \leq K} \mathcal O_d(x) - \inf_{0 < x \leq K} \mathcal O(x) > -\epsilon$, and we will do this by considering two cases. First, we consider the case where all the infimums $\inf_{0 < x \leq K} \mathcal O_d(x)$ are attained, and the minimizers are lower bounded by some $x_0>0$, and show that the result immediately follow from the Convexity Lemma~\ref{lemma:convexity:lemma}. Let $\{x_\star^{(d)}\}_{d \in \mathbb N}>0$ be the minimizers of $\{\mathcal O_d\}_{d \in \mathbb N}$ on $(0,K]$, and suppose that they belong to $[x_0,K]$. Now, since $\mathcal O_d \overset{P}{\to} \mathcal O$ pointwise on $(0,\infty)$, from the Convexity Lemma we know that $\mathcal O_d \overset{P}{\to} \mathcal O$ uniformly on the compact set $[x_0,K]$, i.e., $\sup_{x \in [x_0,K]} |\mathcal O_d(x) - \mathcal O(x)|\leq\epsilon$ w.p.a.\ $1$. In particular, this holds true for the points $\{x_\star^{(d)}\}_{d \in \mathbb N}$. Therefore,
\begin{align*}
    \inf_{0 < x \leq K} \mathcal O_d(x) - \inf_{0 < x \leq K} \mathcal O(x) \geq \mathcal O_d(x_\star^{(d)})-\mathcal O(x_\star^{(d)}) \geq -\epsilon
\end{align*}
w.p.a.\ $1$, which concludes the proof in this case.

Secondly, we consider the case where the infimums are attained in the limit of $x \to 0$. Let $\bar{x}^{(d)}>0$ be such that $\mathcal O_d(\bar{x}^{(d)})- \inf_{0 < x \leq K} \mathcal O_d(x) < \epsilon/3$. Moreover, let $\bar{x}_2>\bar{x}_1>\bar{x}^{(d)}$ be such that $\mathcal O(\bar{x}_1) - \inf_{0 < x \leq K} \mathcal O(x)<\epsilon/3$ and $\mathcal O(\bar{x}_2) - \inf_{0 < x \leq K} \mathcal O(x)<\epsilon/3$. Then, w.p.a.\ $1$, we have
\begin{align*}
    \inf_{0 < x \leq K} \mathcal O_d(x) > \mathcal O_d(\bar{x}^{(d)}) - \frac{\epsilon}{3} &\geq \frac{1}{\lambda_d} \mathcal O_d(\bar{x}_1) - \frac{1-\lambda_d}{\lambda_d} \mathcal O_d(\bar{x}_2)- \frac{\epsilon}{3}\\ &> \frac{1}{\lambda_d} \left(\mathcal O(\bar{x}_1) -\frac{\epsilon}{3}\right) - \frac{1-\lambda_d}{\lambda_d} \left(\mathcal O(\bar{x}_2) + \frac{\epsilon}{3}\right) - \frac{\epsilon}{3} \\ &> \frac{1}{\lambda_d} \left(\inf_{0 < x \leq K} \mathcal O(x) -\frac{2\epsilon}{3}\right) - \frac{1-\lambda_d}{\lambda_d} \left(\inf_{0 < x \leq K} \mathcal O(x) + \frac{2\epsilon}{3}\right) - \frac{\epsilon}{3} \\ &= \inf_{0 < x \leq K} \mathcal O(x) - \epsilon,
\end{align*}
proving the desired lower bound. In particular, the second inequality holds w.p.a.\ $1$, and follows from the convexity w.p.a.\ $1$ of $\mathcal O_d$, with $\lambda_d > (\bar{x}_2-\bar{x}_1)/\bar{x}_2$, while the third inequality follows from the pointwise convergence of $\mathcal O_d$ to $\mathcal O$, and it holds w.p.a.\ $1$. This concludes the proof.

\item[(iii)] Since $\mathcal O$ is real-valued, there exists $C>0$ such that $\inf_{x>0} \mathcal O(x) \leq C$. Moreover, since $\lim_{x \to \infty} \mathcal O(x) = +\infty$, there exists $K_1 >0$ such that $\mathcal O(x) > C$ for all $x > K_1$. Therefore, 
\begin{align*}
    \inf_{x>0} \mathcal O(x) = \inf_{0<x \leq K_1} \mathcal O(x).
\end{align*}
Now, fix $\epsilon > 0$, and let $K_2>K_1$ be such that $\mathcal O(K_2) > \mathcal O(K_1) + 3\epsilon$. Notice that this is possible since $\lim_{x \to \infty} \mathcal O(x) = +\infty$. From the pointwise convergence in probability, we have that both $|\mathcal O_d(K_1)-\mathcal O(K_1)| \leq \epsilon$ and $|\mathcal O_d(K_2)-\mathcal O(K_2)| \leq \epsilon$ hold w.p.a.\ $1$. Therefore, $\mathcal O_d(K_1) < \mathcal O_d(K_2)$ holds w.p.a.\ $1$. Now, using this and the fact that $\mathcal O_d$ is convex w.p.a\ $1$, we can conclude that $\mathcal O_d(x) > \mathcal O_d(K_2)$ w.p.a\ $1$, for all $x > K_2$. Consequently, 
\begin{align*}
    \inf_{x>0} \mathcal O_d(x) = \inf_{0<x \leq K_2} \mathcal O_d(x).
\end{align*}
holds w.p.a.\ $1$. Finally, letting $K = \max\{K_1,K_2\}$, we have that
\begin{align*}
    \inf_{x>0} \mathcal O_d(x) = \inf_{0<x \leq K} \mathcal O_d(x) \overset{P}{\to} \inf_{0<x \leq K} \mathcal O(x) = \inf_{x>0} \mathcal O(x),
\end{align*}
where the convergence in probability follows from Assertion~(ii). This concludes the proof.
\end{itemize}
\end{proof}

\end{document}